\documentclass[final,12pt]{colt2022}
\usepackage{xcolor}
\usepackage{pifont}
\usepackage{makecell}
\usepackage{hhline}
\usepackage{colortbl}
\usepackage{wrapfig,lipsum}
\usepackage{float}    
\usepackage{mathtools}  
\usepackage{hyperref}
\usepackage{cleveref}
\usepackage{bookmark}
\usepackage{enumerate}
\usepackage{paralist}
\usepackage{xspace}
\usepackage{caption}
\usepackage{bbm}
\usepackage{bm}
\usepackage{lipsum}
\usepackage{algorithm}
\usepackage{algorithmic}
\usepackage{color}
\usepackage{microtype}
\usepackage{lscape}
\usepackage{multirow}
\usepackage{graphicx}
\usepackage{caption}
\usepackage{subcaption}

\newcommand{\stoptocwriting}{%
  \addtocontents{toc}{\protect\setcounter{tocdepth}{-5}}}
\newcommand{\resumetocwriting}{%
  \addtocontents{toc}{\protect\setcounter{tocdepth}{\arabic{tocdepth}}}}

\newcommand{\vertiii}[1]{{\left\vert\kern-0.25ex\left\vert\kern-0.25ex\left\vert #1
		\right\vert\kern-0.25ex\right\vert\kern-0.25ex\right\vert}}

\makeatletter


\newcommand{\fro}{\mathrm{F}}

\newcommand{\mX}{\vect{X}}
\newcommand{\mY}{\vect{Y}}

\newcommand{\id}{\cI}

\makeatletter
\def\Hy@raisedlink@left#1{%
    \ifvmode
        #1%
    \else
        \Hy@SaveSpaceFactor
        \llap{\smash{%
        \begingroup
            \let\HyperRaiseLinkLength\@tempdima
            \setlength\HyperRaiseLinkLength\HyperRaiseLinkDefault
            \HyperRaiseLinkHook
        \expandafter\endgroup
        \expandafter\raise\the\HyperRaiseLinkLength\hbox{%
            \Hy@RestoreSpaceFactor
            #1%
            \Hy@SaveSpaceFactor
        }%
        }}%
        \Hy@RestoreSpaceFactor
        \penalty\@M\hskip\z@ 
    \fi
}
\makeatother

\makeatletter
\newcommand\newtarget[2]{\Hy@raisedlink@left{\hypertarget{#1}{}}#2}
\makeatother

\newcommand\linkofproof[1]{\textbf{of {#1}. }\newtarget{proof:#1}}

\DeclareMathOperator{\dist}{dist}

\DeclareMathOperator{\Var}{Var}
\DeclareMathOperator{\poly}{\mathrm{poly}}

\newcommand{\rank}{\mathrm{rank}}

\newcommand{\proj}{\cP}

\newcommand{\inner}[2]{\left\langle #1, #2 \right\rangle}

\newcommand{\norm}[1]{\left\lVert#1\right\rVert}
\newcommand{\const}{\Gamma}

\newcommand{\tr}{{\operatorname{tr}}}
\newcommand{\diag}{{\operatorname{Diag}}}


%
{\medskip}


\newcommand{\cB}{\mathcal{B}}

\newcommand{\cI}{\mathcal{I}}

\newcommand{\cM}{\mathcal{M}}
\newcommand{\cN}{\mathcal{N}}
\newcommand{\cO}{\mathcal{O}}
\newcommand{\cP}{\mathcal{P}}

\newcommand{\cR}{\mathcal{R}}
\newcommand{\cS}{\mathcal{S}}

\newcommand{\cV}{\mathcal{V}}

\newcommand{\bE}{\mathbb{E}}

\newcommand{\bP}{\mathbb{P}}

\newcommand{\bR}{\mathbb{R}}

\newcommand{\projection}{\cR_{\Omega}}
\newcommand{\event}{E_{\mathrm{good}}}
\newtheorem{lemma}[theorem]{Lemma}

\newtheorem{condition}{Condition}


\usepackage{amsmath,amsfonts,bm}

\DeclareMathOperator{\col}{col}
\newcommand*{\rom}[1]{%
\textup{\uppercase\expandafter{\romannumeral#1}}%
}









\def\eqref#1{equation~\ref{#1}}









\def\1{\bm{1}}







\def\vzero{{\bm{0}}}

\def\va{{\bm{a}}}

\def\vp{{\bm{p}}}

\def\vu{{\bm{u}}}
\def\vv{{\bm{v}}}

\def\vx{{\bm{x}}}
\def\vy{{\bm{y}}}


\def\mA{{\bm{A}}}
\def\mB{{\bm{B}}}
\def\mC{{\bm{C}}}
\def\mD{{\bm{D}}}
\def\mE{{\bm{E}}}

\def\mG{{\bm{G}}}
\def\mH{{\bm{H}}}
\def\mI{{\bm{I}}}
\def\mJ{{\bm{J}}}

\def\mL{{\bm{L}}}
\def\mM{{\bm{M}}}

\def\mO{{\bm{O}}}
\def\mP{{\bm{P}}}

\def\mR{{\bm{R}}}
\def\mS{{\bm{S}}}

\def\mU{{\bm{U}}}
\def\mV{{\bm{V}}}

\def\mX{{\bm{X}}}
\def\mY{{\bm{Y}}}
\def\mZ{{\bm{Z}}}

\def\mLambda{{\bm{\Lambda}}}
\def\mSigma{{\bm{\Sigma}}}
\def\mDelta{{\bm{\Delta}}}

\def\mXi{{\bm{\Xi}}}
\def\mOmega{{\bm{\Omega}}}

\DeclareMathAlphabet{\mathsfit}{\encodingdefault}{\sfdefault}{m}{sl}
\SetMathAlphabet{\mathsfit}{bold}{\encodingdefault}{\sfdefault}{bx}{n}





\def\emA{{A}}
\def\emB{{B}}

\def\emU{{U}}
\def\emV{{V}}

\def\emX{{X}}









\DeclareMathOperator*{\argmin}{arg\,min}

\title[GD Converges for Matrix Completion]{Convergence of Gradient Descent with Small Initialization for Unregularized Matrix Completion}
\usepackage{times}



\coltauthor{%
 \Name{Jianhao Ma} \Email{jianhao@umich.edu}\\
 \Name{Salar Fattahi} \Email{fattahi@umich.edu}\\
 \addr University of Michigan, Ann Arbor
}
\allowdisplaybreaks
\begin{document}

\maketitle

\begin{abstract}
  We study the problem of symmetric matrix completion, where the goal is to reconstruct a positive semidefinite matrix $\mX^\star \in \bR^{d\times d}$ of rank-$r$, parameterized by $\mU\mU^{\top}$, from only a subset of its observed entries. We show that the vanilla gradient descent (GD) with small initialization provably converges to the ground truth $\mX^\star$ without requiring any explicit regularization. This convergence result holds true even in the over-parameterized scenario, where the true rank $r$ is unknown and conservatively over-estimated by a search rank $r'\gg r$. The existing results for this problem either require explicit regularization, a sufficiently accurate initial point, or exact knowledge of the true rank $r$. 

In the over-parameterized regime where $r'\geq r$, we show that, with $\widetilde\Omega(dr^9)$ observations, GD with an initial point $\norm{\mU_0} \leq \epsilon$ converges near-linearly to an $\epsilon$-neighborhood of $\mX^\star$. Consequently, smaller initial points result in increasingly accurate solutions. Surprisingly, neither the convergence rate nor the final accuracy depends on the over-parameterized search rank $r'$, and they are only governed by the true rank $r$. In the exactly-parameterized regime where $r'=r$, we further enhance this result by proving that GD converges at a faster rate to achieve an arbitrarily small accuracy $\epsilon>0$, provided the initial point satisfies $\norm{\mU_0} = O(1/d)$. At the crux of our method lies a novel \textit{weakly-coupled leave-one-out analysis}, which allows us to establish the global convergence of GD, extending beyond what was previously possible using the classical leave-one-out analysis.
\end{abstract}

\begin{keywords}
Matrix completion, implicit regularization, leave-one-out analysis
\end{keywords}
\stoptocwriting
\section{Introduction}
Matrix completion is a fundamental problem in the field of machine learning, where the objective is to reconstruct a positive semidefinite (PSD) matrix of rank-$r$, denoted as $\mX^{\star}\in \bR^{d\times d}$, from only a subset of its observed entries. The most natural approach to solve this problem involves minimizing the following mean squared error:
\begin{equation}\tag{MC}\label{MC}
    \min_{\mU\in \bR^{d\times r'}}f(\mU)=\frac{1}{4p}\norm{\proj_{\Omega}(\mU\mU^{\top}-\mX^{\star})}_{\fro}^2.
\end{equation}
Here, $p$ represents the probability of observing each entry in $\mX^{\star}$, $\Omega$ denotes the set of observed entries, and $\proj_{\Omega}$ shows the projection operation onto the set of matrices supported by $\Omega$. When the true rank $r$ is unknown, it is often over-estimated by the search rank $r'\geq r$, leading to what is referred to as \textit{over-parameterized} matrix completion.

A prominent application of matrix completion is in collaborative filtering \citep{gleich2011rank}. Additionally, it has applications in other areas, including image reconstruction \citep{hu2018generalized}, fast kernel matrix approximation \citep{graepel2002kernel,paisley2010nonparametric}, and more recently, in teaching arithmetic to transformers~\citep{lee2023teaching}.

Perhaps the most natural approach for solving the above optimization is the (vanilla) gradient descent (GD): given an initial point $\mU_0$ and a fixed step-size $\eta>0$, generate a sequence of iterates $\{\mU_t\}_{t=1}^T$ according to $\mU_{t+1} = \mU_t-\eta\nabla f(\mU_t)$. Despite its simplicity and desirable practical performance (see Figure~\ref{fig::performance} and the experiments in~\citep{zheng2016convergence}), the conditions under which the GD converges globally to the ground truth $\mX^\star$ have remained a long-standing mystery.

A line of research has been devoted to studying gradient-based algorithms with \textit{explicit regularization}~\citep{sun2016guaranteed,zheng2016convergence,jain2013low, ge2016matrix}. These methods typically incorporate either an $\ell_{2, \infty}$-norm regularizer or a projection step to constrain the iterates within a set with $\ell_{2, \infty}$-norm bounds to promote \textit{incoherence} (see Definition~\ref{condition_incoherence}). However, the use of $\ell_{2, \infty}$-norm regularization or projection techniques often introduce more tuning
parameters, and has been found to be unnecessary in practice \citep{zheng2016convergence,ma2018implicit}.
 
On the other hand, the convergence of GD without explicit regularization was initially tackled by \citet{ma2018implicit} in the context of symmetric matrix completion, and subsequently extended by \citet{chen2020nonconvex} to asymmetric settings. However, these studies consider a very special case of matrix completion where GD is initialized sufficiently close to the ground truth, and the rank of the ground truth $r$ is known. In practice, however, GD converges even if it is initialized far from the ground truth and the rank is over-parameterized $r'\gg r$ (see Figure~\ref{fig::performance}). 

Therefore, the following question still remains open:
\begin{quote}
    \textit{Why does GD with a small initialization efficiently converge to the ground truth of~\ref{MC} in the absence of explicit regularization, even in the general rank-$r$ case where $r' \geq r \geq 1$?}
\end{quote}

\begin{figure}\centering
    \begin{centering}
    \subfigure[]{
    {\includegraphics[width=0.43\linewidth]{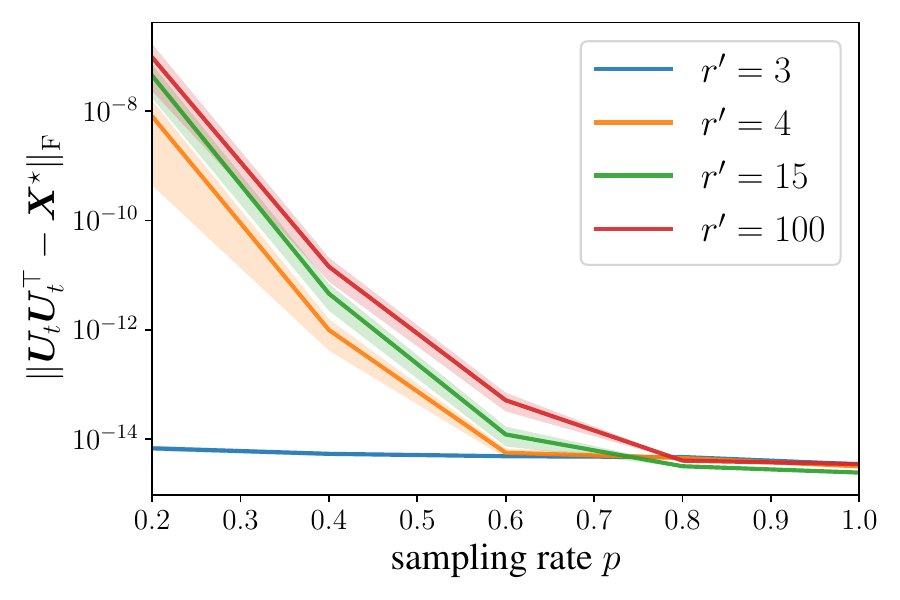}}\label{fig::sampling_rate}}
    \subfigure[]{
    {\includegraphics[width=0.43\linewidth]{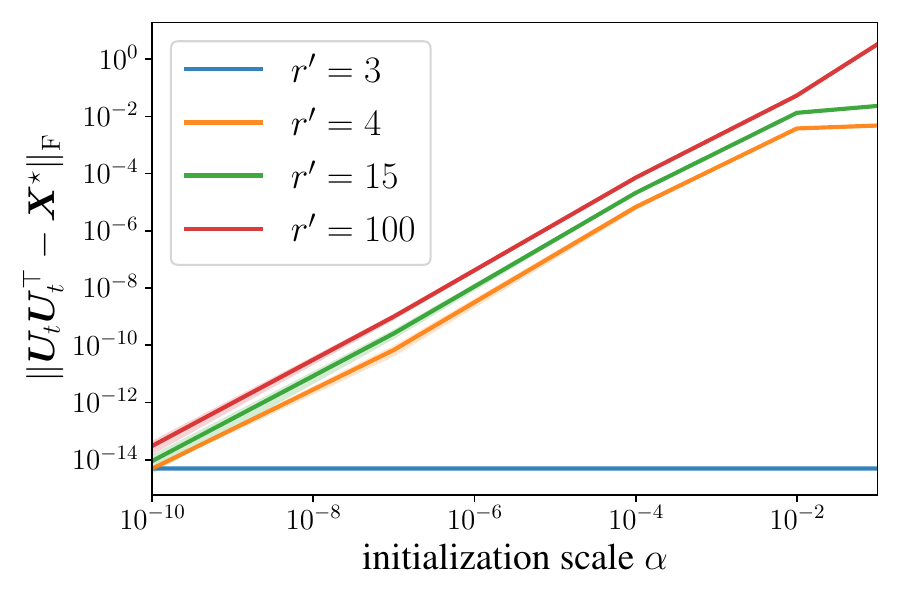}}\label{fig::initialization}}
    \end{centering}
    \caption{\footnotesize The performance of vanilla GD with small initialization and constant step-size on \ref{MC} without any explicit regularization. In all experiments, we set the dimension $d=100$ and the true rank $r=3$. $(a)$: Higher sampling rates improve the final accuracy of GD for the over-parameterized \ref{MC}, but have no impact on the exactly-parameterized \ref{MC}. $(b)$: Increasing the initialization scale hampers the final accuracy of GD for the over-parameterized \ref{MC}, but has no impact on the exactly-parameterized \ref{MC}.}
    \label{fig::performance}
\end{figure}

Recently, \citet{kim2022rank} answered the above question for the special case of rank-$1$ symmetric matrix completion with $r'=r=1$, but their proposed approach does not extend to the general rank-$r$ case. It is also worth noting that the above question has been addressed for another class of matrix factorization problems satisfying a norm-preserving property called \textit{restricted isometry property} (RIP). Problems that satisfy this property include matrix sensing~\citep{li2018algorithmic, stoger2021small, ma2023global} and phase retrieval~\citep{chen2019gradient}. However, a significant challenge arises with matrix completion, as it {\it does not} satisfy the restricted isometry property. Consequently, the existing methodologies built upon RIP are not applicable to matrix completion.

\subsection{Summary of Contributions}
In this work, we provide a complete answer to the aforementioned question.
A comparison of our results with other studies on matrix completion can be found in Table~\ref{table::main}. The key contributions of our work are as follows:
\begin{itemize}
\item [-] {\bf Convergence of GD with small initialization in over-parameterized regime:} When the rank of the ground truth $r$ is unknown and over-parameterized by $r'\geq r\geq 1$, we prove that GD with small initialization converges to the ground truth at a near-linear rate. Surprisingly, neither the convergence rate nor the final accuracy depends on the over-parameterized search rank $r'$, and they are only governed by the true rank $r$. In particular, given an initial point that satisfies $\norm{\mU_0}\leq \epsilon$ for some $\epsilon>0$ and a sampling rate of $p=\widetilde{\Omega}(r^9\log^6(1/\epsilon)/d)$, GD converges to $\epsilon$-neighborhood of $\mX^\star$ in $O(\log^4(1/\epsilon))$ iterations. Therefore, a smaller initial point or a larger sampling rate can improve the final error of GD. The empirical observation presented in Figure~\ref{fig::performance} provides further support for this result.

\item [-] {\bf Improved results in the exactly-parameterized regime:} We show that GD enjoys an improved convergence when the rank of the ground truth $r$ is known and $r'=r\geq 1$. In particular, given an initial point that satisfies $\norm{\mU_0}\leq O(1/d)$ and a sampling rate of $p=\widetilde{\Omega}(r^9/d)$, GD converges to $\epsilon$-neighborhood of $\mX^\star$ in $O(\log(1/\epsilon))$ iterations for any arbitrarily small $\epsilon>0$. A key distinction from the over-parameterized setting is that the final error of GD remains unaffected by the initialization scale or the sample size, provided that they meet certain thresholds. This is also evident in Figure~\ref{fig::performance}. When $r=O(1)$, the resulting sample complexity is information-theoretically optimal (modulo logarithmic factors).

    \item [-] {\bf Weakly-coupled leave-one-out analysis:} In order to establish the implicit regularization of GD for~\ref{MC}, an effective approach relies on a decoupling mechanism known as leave-one-out analysis, a trick rooted in probability and random matrix theory. However, the current theory based on this technique is only effective when the iterates are sufficiently close to the ground truth. 
    At the crux of our technical analysis lies an extension of the classical leave-one-out analysis to the global setting, which we call \textit{weakly-coupled leave-one-out analysis}. In essence, our proposed method relaxes the requirement for the initial iterates to be sufficiently close to the ground truth, making it particularly suitable for the global convergence analysis of GD. 
\end{itemize}

\colorlet{shadecolor}{gray!20}
\begin{table*}[t] 
    \centering
    \small
    \resizebox{0.95\linewidth}{!}{%
        \renewcommand{\arraystretch}{1.5}
        \begin{tabular}{|c|c|c|c|c|c|}
            \hline 
            \textbf{Algorithm} &   \textbf{Sample complexity}& \textbf{Computational complexity} & \textbf{Global} & \textbf{Exact} & \textbf{Over-param.} \\
            
            \hhline{|=|=|=|=|=|=|}
            \citep{ma2018implicit} &  $dr^3\log^3(d)$ & $\kappa^2\log\left(\frac{1}{\epsilon}\right)$ & \ding{56} & \ding{52} & \ding{56} \\
            \citep{ma2018implicit} &  $dr^3\log^3(d)$ & $\kappa^2\log\left(\frac{1}{\epsilon}\right)$ & \ding{56} & \ding{52} & \ding{56} \\
            \hline
            \citep{chen2020nonconvex} &  $dr^2\log(d)$ & $\kappa^2\log\left(\frac{1}{\epsilon}\right)$ & \ding{56} & \ding{52} & \ding{56} \\
            \hline
             \citep{kim2022rank}&   $d\log^{22}(d)$ & $\log\left(\frac{1}{\epsilon}\right)$& \ding{52} & \ding{81} & \ding{56} \\
            \hline
            \rowcolor{shadecolor} \Gape[0pt][2pt]
            Ours (Theorem~\ref{thm::main-exact-param}) & $dr^9\log^8(d)$ & $\kappa^4\log\left(\frac{1}{\epsilon}\right)$& \ding{52} & \ding{52} & \ding{56} \\
            \hline
            \rowcolor{shadecolor} \Gape[0pt][2pt]
            Ours (Theorem~\ref{thm::main}) & $dr^9\log^2(d)\log^6\left(\frac{1}{\epsilon}\right)$ & $\kappa^4\log^4\left(\frac{1}{\epsilon}\right)$ & \ding{52} & \ding{52} & \ding{52} \\
            \hline

        \end{tabular}
    }
    \caption{\footnotesize Comparisons between different algorithms for matrix completion without explicit regularization. \ding{81} The result only holds for $r'=r=1$.
    }
    \label{table::main}
\end{table*}

\paragraph*{Notations.} We use bold uppercase letters $\mX, \mY$ to denote matrices and bold lowercase letters $\vx, \vy$ to denote vectors. For vectors, we use $\norm{\cdot}$ to denote $\ell_2$-norm, and for matrices we use $\norm{\cdot}$ and $\norm{\cdot}_{\fro}$ to denote operator norm and Frobenius norm, respectively. For matrix $\mX\in \bR^{d_1\times d_2}$, we denote by $\emX_{i, j}$ the $(i, j)$-th element of $\mX$, $\mX_{i,\cdot}$ the $i$-th row, and $\mX_{\cdot, j}$ the $j$-th column. The $\ell_{2, \infty}$-norm of $\mX$, denoted as $\norm{\mX}_{2,\infty}$, is defined as $\max_{i}\norm{\mX_{i,\cdot}}$. Additionally, we define the singular values of $\mX\in \bR^{d_1\times d_2}$ as $\sigma_{1}(\mX)\geq \sigma_2(\mX)\geq \cdots\geq \sigma_{\min\{d_1, d_2\}}(\mX)\geq 0$. The set of all orthogonal matrices is denoted by $\cO_{d_1\times d_2}:=\{\mO\in \bR^{d_1\times d_2}:\mO^{\top}\mO=\mI\}$. For two matrices $\mX, \mY\in \bR^{d_1\times d_2}$, we define their Procrustes distance as $\dist(\mX, \mY)=\min_{\mO\in \cO_{d_2\times d_2}}\norm{\mX-\mY\mO}_{\fro}$. The projection matrix onto the column space of an orthogonal matrix $\mV\in \cO_{d_1\times d_2}$ is defined as $\proj_{\mV} := \mV\mV^\top$.

We use the notation $f(n)\lesssim g(n)$ or $f(n)=O(g(n))$ when a constant $C>0$ exists such that $f(n) \leq Cg(n)$ for sufficiently large $n$. Conversely, $f(n)\gtrsim g(n)$ or $f(n)=\Omega(g(n))$ implies the existence of a constant $C>0$ such that $f(n) \geq Cg(n)$ for sufficiently large $n$. Moreover, we use the notations $\widetilde O(\cdot)$ and $\widetilde \Omega(\cdot)$ to hide logarithmic dependencies on the dimension or other parameters of the problem. Additionally, we use $f(n)\asymp g(n)$ or $f(n) = \Theta(g(n))$ when $f(n)\lesssim g(n)$ and $f(n)\gtrsim g(n)$. 

\section{Problem Setup and Main Results}
Suppose that the singular value decomposition (SVD) of $\mX^\star$ is given by $\mX^{\star}=\mV^{\star}\mSigma^{\star}\mV^{\star \top}$, where $\mV^{\star}\in \cO_{d\times r}$ and $\mSigma^{\star}$ is an $r\times r$ diagonal matrix with diagonal elements in descending order $\sigma_1^{\star} \geq \cdots \geq \sigma_r^{\star}> 0$. We denote the condition number of $\mX^\star$ as $\kappa=\sigma_1^{\star}/\sigma_r^{\star}$. 
Upon defining the symmetrized operator $\projection=\frac{1}{2p}({\proj_{\Omega}+\proj_{\Omega}^{\top}})$, the update rule for GD can be written as
\begin{equation}\label{eq_GD}
    \mU_{t+1}=\mU_t-\eta \nabla f(\mU_t)=\mU_t-\eta \projection(\mU_t\mU_t^{\top}-\mX^{\star})\mU_t, \qquad \mU_0=\alpha \mZ\tag{GD}
\end{equation}
where $\alpha>0$ is the initialization scale and $\mZ$ is the initialization matrix satisfying $\norm{\mZ}=1$. We assume that $\mZ$ satisfies the following \textit{alignment condition}.
\begin{condition}[Alignment]
    \label{condition::small-initialization}
    We say the matrix $\mZ\in \bR^{d\times r'}$ with $\norm{\mZ}=1$ satisfies the alignment condition if there exists a universal constant $c_0>0$ such that
    \begin{equation}
        \sigma_{r}(\proj_{\mV^{\star}}\mZ)\geq c_0.\tag{alignment condition}
        \label{eq::alignment-condition}
    \end{equation}
\end{condition}
Intuitively, this condition necessitates that the initialization matrix should have a non-negligible alignment with the column space of the ground truth. The following lemma reveals that this alignment condition is satisfied for common initialization strategies with overwhelming probability.
\begin{lemma}[Sufficient condition for alignment]\label{lem_init}
    The following statements are satisfied:
    \begin{itemize}
        \item \textbf{Gaussian initialization.} Given $0.5d\leq r'\leq d$ and $\mZ=\mG/\norm{\mG}$, where $\mG$ is a standard Gaussian matrix, \ref{eq::alignment-condition} holds with $c_0=0.1$ with probability at least $1-\exp\{-\Omega(d)\}$.
        \item \textbf{Orthogonal initialization.} Given $r'=d$ and $\mZ=\mO$ for any $\mO\in \cO_{d\times d}$, \ref{eq::alignment-condition} is satisfied with $c_0=1$. 
        \item \textbf{Spectral initialization.} Let $\mV\mSigma \mV^\top$ be the eigendecomposition of the best rank-$r'$ approximation of $\projection(\mX^{\star})$ measured in Frobenius norm. Given $r\leq r'\leq d$ and $\mZ=\mU/\norm{\mU}$, where $\mU=\mV\mSigma^{1/2}$, \ref{eq::alignment-condition} holds with $c_0=\frac{1}{2\kappa}$ with probability at least $1-\frac{1}{d^3}$.
    \end{itemize}
\end{lemma}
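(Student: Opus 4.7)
I would prove the three initialization schemes separately, each reducing the alignment bound to a standard concentration or linear-algebra tool. In all three cases the key factorization is $\proj_{\mV^\star}\mZ = \mV^\star (\mV^\star)^\top \mZ$, and because $\mV^\star$ has orthonormal columns, $\sigma_r(\proj_{\mV^\star}\mZ) = \sigma_r((\mV^\star)^\top \mZ)$; the task thus reduces to controlling $\sigma_r((\mV^\star)^\top\mZ)$ from below and $\|\mZ\|$ from above in each regime.

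For the \emph{Gaussian} case, rotation invariance does the work: $\mG' := (\mV^\star)^\top \mG \in \bR^{r\times r'}$ is itself a standard Gaussian matrix. Gordon's theorem yields $\sigma_r(\mG') \ge \sqrt{r'}-\sqrt{r}-t$, while the classical operator-norm bound gives $\|\mG\| \le \sqrt{d}+\sqrt{r'}+t$, each with probability $\ge 1-2\exp(-t^2/2)$. Taking $t$ a small constant multiple of $\sqrt{d}$ and using $r'\ge d/2$ (so that $\sqrt{r'}-\sqrt{r}\gtrsim \sqrt{d}$) yields $\sigma_r(\proj_{\mV^\star}\mZ) = \sigma_r(\mG')/\|\mG\| \ge 0.1$ with probability at least $1-\exp(-\Omega(d))$. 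For the \emph{orthogonal} case everything is deterministic: $(\mV^\star)^\top \mO \in \bR^{r\times d}$ has orthonormal rows because $\mO\mO^\top = \mI_d$ and $(\mV^\star)^\top \mV^\star = \mI_r$, so all $r$ of its singular values equal $1$, giving $c_0 = 1$ immediately and matching $\|\mO\|=1$.

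The \emph{spectral} case is the only substantive one. Its engine is a matrix-Bernstein bound applied to the sum of the independent, zero-mean contributions of each Bernoulli-sampled entry, exploiting the incoherence of $\mX^\star$ to control the per-summand operator norm; under the paper's sampling assumptions this yields $\|\projection(\mX^\star)-\mX^\star\|\le \sigma_r^\star/8$ with probability $\ge 1-1/d^3$. Since $\mX^\star$ has rank $r\le r'$ and $\mU\mU^\top$ is the best rank-$r'$ approximation of $\projection(\mX^\star)$, this upgrades to $\|\mU\mU^\top - \mX^\star\|\le 2\|\projection(\mX^\star)-\mX^\star\|\le \sigma_r^\star/4$. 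Weyl's inequality then delivers (i) $\|\mU\|^2 = \sigma_1(\mU\mU^\top)\le (5/4)\sigma_1^\star$, and (ii) setting $\mW:=(\mV^\star)^\top\mU$, the matrix $\mW\mW^\top = (\mV^\star)^\top\mU\mU^\top\mV^\star$ differs from $\mSigma^\star$ in spectral norm by at most $\sigma_r^\star/4$, so $\sigma_r(\mW)^2\ge (3/4)\sigma_r^\star$. Combining,
\[
\sigma_r(\proj_{\mV^\star}\mZ) \;=\; \frac{\sigma_r(\mW)}{\|\mU\|} \;\ge\; \sqrt{\frac{(3/4)\sigma_r^\star}{(5/4)\sigma_1^\star}} \;=\; \sqrt{\frac{3}{5\kappa}} \;\ge\; \frac{1}{2\kappa},
\]
using $\kappa\ge 1$ in the last step. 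The only nontrivial ingredient is the spectral-norm concentration of $\projection(\mX^\star)$; everything else is deterministic linear algebra, and the $d^{-3}$ failure probability propagates directly from the matrix-Bernstein tail.
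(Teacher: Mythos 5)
Your proposal is correct and follows essentially the same route as the paper's proof: the Gaussian case via rotation invariance and standard singular-value concentration for Gaussian matrices, the orthogonal case by the deterministic identity $\sigma_r((\mV^\star)^\top\mO)=1$, and the spectral case via operator-norm concentration of $\projection(\mX^\star)$ around $\mX^\star$ followed by Weyl/Eckart--Young perturbation bounds (the paper simply cites this concentration as a known corollary rather than rederiving it from matrix Bernstein, and both arguments implicitly use $r\ll d$ in the Gaussian case). No substantive differences or gaps.
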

Before proceeding to the main theorem, we introduce two crucial conditions on the ground truth $\mX^{\star}$ and the random observation set $\Omega$.
\begin{condition}[Incoherence]\label{condition_incoherence}
    The rank-$r$ PSD matrix $\mX^{\star}\in \bR^{d\times d}$ with eigendecomposition $\mX^{\star}=\mV^{\star}\mSigma^{\star}\mV^{\star\top}$ is $\mu$-incoherent for some $\mu\geq 1$ if $\norm{\mV^{\star}}_{2, \infty}= \sqrt{\frac{\mu}{d}}\norm{\mV^{\star}}_{\fro}=\sqrt{\frac{\mu r}{d}}$.
\end{condition}

\begin{condition}[Random sampling model]\label{cond_random}
    Each entry of $\mX^{\star}$ is observed independently with probability $p$. In other words, $\bP((i, j)\in \Omega)=p$ independently for all $1\leq i, j\leq d$.
\end{condition}

\citet{chen2015incoherence} shows that the incoherence condition is necessary for the recovery of the ground truth. 
Intuitively, the incoherence condition with $\mu=O(1)$ entails that none of the columns of $\mV^\star$ have significant alignment with the standard basis vectors. Such a condition ensures that the ground truth is far from being sparse. We note that successful recovery of a sparse $\mX^\star$ is only achievable in a near-ideal scenario where $p\approx 1$ and nearly the entirety of $\mX^\star$ is observed. On the other hand, when $\mu = O(1)$, the recovery is possible even when $p$ scales with $\widetilde\Omega(\poly(r)/d)$ \citep{chen2015incoherence}. 

With the aforementioned conditions in place, we can now present our main result, which establishes the global convergence of GD on~\ref{MC} for the over-parameterized setting.

\begin{sloppypar}
    \begin{theorem}[Convergence of GD for over-parameterized~\ref{MC}]
    \label{thm::main} Consider~\ref{MC} with search rank $r\leq r'\leq d$.
    Suppose that Conditions~\ref{condition_incoherence} and~\ref{cond_random} are satisfied with a sampling rate of $p\gtrsim \frac{\kappa^6\mu^4r^9\log^6(\frac{1}{\alpha})\log^2(d)}{d}$. Consider the iterates of~\ref{eq_GD} with the step-size $\eta\asymp \frac{\mu r}{\sqrt{pd}\sigma^{\star}_{1}}$ and the initial point $\mU_0=\alpha \mZ$, where $0<\alpha\leq \sqrt{\frac{\sigma_1^{\star}}{d}}$ and $\mZ$ satisfies Condition~\ref{condition::small-initialization}.
    With probability at least $1-\frac{2}{d^3}$, after $T\lesssim\frac{1}{\eta\sigma^{\star}_{r}}\log\left(\frac{1}{\alpha}\right)$ iterations, we have 
    \begin{equation}\nonumber
        \norm{\mU_{T}\mU_{T}^{\top}-\mX^{\star}}_{\fro}\lesssim \sqrt{\frac{\sigma^{\star}_{1}\kappa^2\mu r^2}{p}}\alpha.
    \end{equation}
\end{theorem}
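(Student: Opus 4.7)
The plan is to track the iterates $\mU_t$ through a signal-error decomposition along the column space of the ground truth. Let $\mA_t = \mV^{\star\top}\mU_t \in \bR^{r\times r'}$ be the signal component and $\mB_t = (\mI - \proj_{\mV^\star})\mU_t$ the orthogonal error, so that $\mU_t\mU_t^\top$ recovers $\mX^\star$ once $\mA_t\mA_t^\top$ reaches $\mSigma^\star$ and $\norm{\mB_t}$ stays small. I would run a joint induction on $t\leq T$ maintaining the following invariants: (i) $\sigma_r(\mA_t)$ grows geometrically at rate roughly $(1+\eta\sigma_r^\star)$ until it saturates at $\Theta(\sqrt{\sigma_r^\star})$; (ii) $\norm{\mB_t}$ stays of order $\alpha\cdot\poly(\kappa,\mu,r)$; (iii) the $\ell_{2,\infty}$ incoherence bound $\norm{\mU_t}_{2,\infty}\lesssim\sqrt{\mu r/d}\,\norm{\mU_t}$; and (iv) closeness of $\mU_t$ to leave-one-out surrogates $\{\mU_t^{(l)}\}_{l=1}^d$.

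Invariant (i) follows from expanding $\mV^{\star\top}\mU_{t+1} = \mV^{\star\top}\mU_t - \eta\mV^{\star\top}\projection(\mU_t\mU_t^\top-\mX^\star)\mU_t$ and using that on the signal subspace the gradient approximately acts as the population gradient $\mA_t(\mSigma^\star-\mA_t\mA_t^\top-\mB_t^\top\mB_t)$, up to deviation terms controlled by invariant (iii). Invariant (ii) follows analogously: on the orthogonal subspace the deterministic leading term vanishes, leaving only the deviation $(\projection-\cI)(\cdot)$ applied to low-rank incoherent matrices, which is bounded in operator norm by $\widetilde O\bigl(\sqrt{\mu r/(pd)}\bigr)\cdot\norm{\mU_t\mU_t^\top-\mX^\star}_\fro$ via a Bernstein-type concentration inequality. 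Together these give $\sigma_r(\mA_t)^2\asymp\sigma_r^\star$ after $T\lesssim\frac{1}{\eta\sigma_r^\star}\log(1/\alpha)$ steps, at which point the Frobenius error $\norm{\mU_T\mU_T^\top-\mX^\star}_\fro$ is dominated by the cross term $\norm{\mA_T}\cdot\norm{\mB_T}_\fro$ and the quadratic term $\norm{\mB_T}_\fro^2$, both of which are controlled by $\sqrt{\sigma_1^\star\kappa^2\mu r^2/p}\,\alpha$, matching the claim.

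The main obstacle --- and the reason the classical leave-one-out argument does not suffice --- is propagating the incoherence invariant (iii) throughout the trajectory. The standard leave-one-out construction couples $\mU_t$ to a surrogate $\mU_t^{(l)}$ that is statistically independent of the $l$-th row of observations and relies on \emph{local} contraction near the ground truth to keep $\norm{\mU_t-\mU_t^{(l)}\mO_t}$ exponentially small. In our regime the iterates spend most of the trajectory far from any global optimum, no such contraction is available, and naive coupling bounds blow up. The \emph{weakly-coupled} leave-one-out analysis circumvents this by demanding only that $\norm{\mU_t-\mU_t^{(l)}\mO_t}$ remain polynomially bounded in $t$ --- still enough to propagate (iii) via concentration of $(\projection-\cI)$ on low-rank incoherent matrices --- and by exploiting that during the alignment phase the dominant dynamics live on the low-dimensional signal $\mA_t$, whose incoherence is inherited directly from $\mV^\star$. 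Combining these ingredients with a union bound over $l\in[d]$ and $t\leq T$ then yields the stated sample complexity $p\gtrsim\widetilde\Omega(\kappa^6\mu^4 r^9\log^6(1/\alpha)\log^2(d)/d)$ and completes the proof.
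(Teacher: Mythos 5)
Your high-level architecture (a signal--residual split plus a leave-one-out argument to propagate incoherence) matches the paper's, but two of your invariants do not survive as stated. First, the static decomposition along $\mV^\star$ breaks invariant (ii). Writing
$\mB_{t+1}=\mB_t-\eta\,\mB_t\mU_t^{\top}\mU_t+\eta\,\proj_{\mV^\star}^{\perp}\bigl(\projection-\id\bigr)(\mDelta_t)\,\mU_t$,
the last term is an \emph{additive} perturbation of size $\eta\,\norm{(\id-\projection)(\mDelta_t)}\,\norm{\mU_t}$ that is not proportional to $\norm{\mB_t}$; accumulated over the $T\asymp\frac{1}{\eta\sigma_r^\star}\log(1/\alpha)$ growth iterations it leaves an error floor of order $\widetilde O\bigl(\sigma_1^{\star}\mu r/\sqrt{pd}\bigr)$ that does not shrink as $\alpha\to 0$, so you cannot conclude a final accuracy proportional to $\alpha$. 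The paper avoids this by projecting onto a \emph{dynamic} basis $\mV_t$ obtained by orthonormalizing $(\mI+\eta\mM_t)\mV_t$; then $\proj_{\mV_{t+1}}^{\perp}(\mI+\eta\mM_t)\mS_t=0$ exactly, the residual obeys the purely multiplicative recursion $\norm{\mE_{t+1}}\leq\bigl(1+\eta\norm{\mM_t\proj_{\mV_t}^{\perp}}\bigr)\norm{\mE_t}$, and $\norm{\mE_T}=O(\alpha)$ follows.

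Second, your leave-one-out step names the obstacle but not its resolution. ``Demanding only polynomial boundedness'' of $\norm{\mU_t-\mU_t^{(l)}\mO_t}$ is not something you can simply demand: with the standard surrogate $\mM_t^{(l)}=\cR_{\Omega^{(l)}}\bigl(\mX^{\star}-\mU_t^{(l)}\mU_t^{(l)\top}\bigr)$, the one-step coupling error involves $\norm{\mM_t-\mM_t^{(l)}}$, which can be $\Omega(\sigma_1^{\star})$ far from the solution because the singular values of $\mU_t$ and $\mU_t^{(l)}$ need not stay aligned, and the proximal error then compounds exponentially over $T$ steps. The paper's fix is twofold: it runs the leave-one-out sequences on the orthonormal factors $\widetilde\mV_t^{(l)}$ (which have unit singular values), and it substitutes the \emph{true} $\mSigma_t=\mV_t^{\top}\mU_t\mU_t^{\top}\mV_t$ into the surrogate dynamics so that $\bigl\|\mM_t-\widetilde\mM_t^{(l)}\bigr\|=o(1)$. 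This sacrifices exact independence from the $l$-th row and column, and the resulting weak statistical coupling is then handled by an adaptive $\epsilon$-net over the $r\times r$ matrices $\mSigma_t$. Without these ingredients your invariants (iii)--(iv) are not actually established; with them, your argument essentially becomes the paper's.
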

\end{sloppypar}

 A few observations are in order based on the above theorem.

\paragraph{Computational complexity.} The initialization scale $\alpha$ governs the final accuracy of GD. Therefore, to ensure that $\norm{\mU_{T}\mU_{T}^{\top}-\mX^{\star}}_{\fro}\leq \epsilon$, it suffices to set the initialization scale to $\alpha\lesssim \sqrt{\frac{p}{\sigma_1^{\star}\kappa^2\mu r^2}}\epsilon$. Moreover, assuming that $\max\{\kappa, \mu, r\} = O(1)$ and $\epsilon\leq {1}/{d}$, this accuracy is achieved within $\widetilde O\left(\log^4\left(\frac{1}{\epsilon}\right)\right)$ iterations, which scales only poly-logarithmically with $1/\epsilon$. 

\paragraph{Effect of over-parameterization.} The level of over-parameterization in the search rank $r'$ does not have any impact on either the sample complexity or the convergence of GD.  As a result, our results hold even if $d^2p\ll dr'$. In such cases,~\ref{MC} has many global minima, some of which may not satisfy $\mU\mU^\top\approx \mX^\star$. This sheds light on the implicit regularization of the vanilla GD with small initialization toward low-rank solutions when applied to~\ref{MC}.

\paragraph{Sample complexity.} The required sample complexity is given by $d^2p\gtrsim dr^9\kappa^6\mu^4\log^6\left(\frac{1}{\alpha}\right)\log^2\left(d\right)$, which is optimal with respect to the dimension $d$ up to a logarithmic factor. This contrasts with the direct extension of the approach by \citet{kim2022rank}, which necessitates a sample size on the order of $d^{1+\Theta(\kappa-1)}$. Moreover, Theorem~\ref{thm::main} highlights that the sampling rate $p$ must scale with $\log^6\left(\frac{1}{\epsilon}\right)$ to attain an accuracy level of $\epsilon$, which in turn leads to a mild dependency of the final error on the sampling rate. In other words, given a fixed sampling rate $p$, GD achieves an accuracy in the order of $\exp(-\Omega(pd))$. Our empirical findings in Figure~\ref{fig::performance} corroborate this result. 
Our next theorem relaxes this restriction in the exact-parameterization regime, showing that it is possible to attain a sample complexity that does not depend on the desired accuracy level or the initialization scale. 
\begin{sloppypar}
    \begin{theorem}[Convergence of GD for exactly-parameterized~\ref{MC}]
    \label{thm::main-exact-param}
    Consider~\ref{MC} with search rank $r'=r$.
    Suppose that Conditions~\ref{condition_incoherence} and~\ref{cond_random} are satisfied with a sampling rate of $p\gtrsim \frac{\kappa^6\mu^4r^9\log^8(d)}{d}$. Consider the iterates of~\ref{eq_GD} with the step-size $\eta\asymp \frac{\mu r}{\sqrt{pd}\sigma^{\star}_{1}}$ and the initial point $\mU_0=\alpha \mZ$, where $\alpha\asymp \frac{\sigma_r^{\star}}{\kappa^{1.5} d}$ and $\mZ$ satisfies Condition~\ref{condition::small-initialization}. Given any accuracy $\epsilon>0$, with probability at least $1-O\big(\frac{1}{d^3}\big)$ and after $T\lesssim\frac{1}{\eta\sigma^{\star}_{r}}\log\left(\frac{1}{\epsilon}\right)$ iterations, we have 
    \begin{equation}
        \norm{\mU_{T}\mU_{T}^{\top}-\mX^{\star}}_{\fro} \leq \epsilon.
    \end{equation}
\end{theorem}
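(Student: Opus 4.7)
The plan is to chain the global convergence analysis of Theorem~\ref{thm::main} with a classical local linear convergence argument. We split the trajectory of~\ref{eq_GD} into a \emph{warm-up phase}, which brings the iterate into a small neighborhood of $\mX^\star$, and a \emph{local refinement phase}, whose contraction rate is independent of $\alpha$ and delivers arbitrary target accuracy in $O(\log(1/\epsilon))$ additional steps. Throughout, let $\mU^\star := \mV^\star(\mSigma^\star)^{1/2}$.

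\textbf{Warm-up.} With the prescribed $\alpha\asymp \sigma_r^\star/(\kappa^{1.5} d)$ and $p\gtrsim \kappa^6\mu^4 r^9\log^8(d)/d$, the hypotheses of Theorem~\ref{thm::main} are satisfied: indeed, $\log^6(1/\alpha)\lesssim \log^6 d$, which is absorbed into the stronger $\log^8 d$ factor assumed here. Applying Theorem~\ref{thm::main}, with high probability, after $T_1\asymp (1/\eta\sigma_r^\star)\log(1/\alpha)$ iterations,
\begin{equation*}
    \|\mU_{T_1}\mU_{T_1}^\top - \mX^\star\|_\fro \;\lesssim\; \sqrt{\sigma_1^\star\kappa^2\mu r^2/p}\,\alpha \;\lesssim\; \sigma_r^\star/\kappa,
\end{equation*}
where the last inequality plugs in the chosen $\alpha$ and $p$. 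By the standard conversion from the Frobenius distance of the products to the Procrustes distance of the factors (using $r'=r$, so that $\sigma_r(\mU^\star)=\sqrt{\sigma_r^\star}$), this yields $\dist(\mU_{T_1},\mU^\star)\ll \sqrt{\sigma_r^\star}$, placing $\mU_{T_1}$ inside the basin of attraction of a local analysis.

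\textbf{Local refinement.} Once inside this basin, we invoke a local linear-rate guarantee in the spirit of~\citet{ma2018implicit} and~\citet{chen2020nonconvex}: provided the iterate satisfies the incoherence bound $\|\mU_t\|_{2,\infty}\lesssim \sqrt{\mu r \sigma_1^\star/d}$, a single step of~\ref{eq_GD} contracts as
\begin{equation*}
    \dist^2(\mU_{t+1},\mU^\star)\;\leq\;(1-c\,\eta\sigma_r^\star)\,\dist^2(\mU_t,\mU^\star)
\end{equation*}
for some absolute constant $c>0$. Because $r'=r$, the classical leave-one-out auxiliary sequences $\{\mU_t^{(l)}\}$---each constructed by replacing the $l$-th row/column of $\projection$ by its expectation---can be controlled via matrix Bernstein under the prescribed sample rate; a joint induction on $t$ then forces $\|\mU_t\|_{2,\infty}$ to remain uniformly bounded. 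Iterating the contraction for $T_2\asymp (1/\eta\sigma_r^\star)\log(1/\epsilon)$ additional steps yields $\|\mU_T\mU_T^\top - \mX^\star\|_\fro\leq \epsilon$ at $T=T_1+T_2\asymp (1/\eta\sigma_r^\star)\log(1/\epsilon)$.

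\textbf{Main obstacle.} The delicate step is the hand-off between the two phases: Theorem~\ref{thm::main} certifies only a Frobenius/spectral-norm bound on $\mU_{T_1}\mU_{T_1}^\top-\mX^\star$, whereas seeding the classical local analysis demands both a uniform row-wise bound $\|\mU_{T_1}\|_{2,\infty}\lesssim \sqrt{\mu r \sigma_1^\star/d}$ and a Procrustes-aligned error already below the local contraction radius. Extracting these quantities out of the warm-up phase requires tracking row-wise incoherence throughout the weakly-coupled leave-one-out analysis of Theorem~\ref{thm::main}, rather than relying only on the aggregate bounds that it ultimately states. Once the hand-off is in place, Phase~2 is essentially standard, and its constants depend only on $r$, $\mu$, $\kappa$---explaining why the final accuracy becomes independent of the initialization scale $\alpha$, in sharp contrast with the over-parameterized setting.
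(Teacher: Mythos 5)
Your proposal follows essentially the same route as the paper: run the over-parameterized analysis (Theorem~\ref{thm::main}) with $\alpha\asymp\sigma_r^\star/(\kappa^{1.5}d)$ as a warm-up, convert the resulting Frobenius error to a Procrustes distance (the paper uses Lemma~\ref{lem::dist-fro-tu2016}) to verify the seeding conditions of the local linear-convergence result of \citet{ma2018implicit} (Theorem~\ref{thm::local-linear-convergence-exact-param}), and then iterate the local contraction for $O\bigl(\tfrac{1}{\eta\sigma_r^\star}\log(1/\epsilon)\bigr)$ steps. The hand-off issue you flag—establishing the leave-one-out proximity conditions~(\ref{eq_U0l}) at $t_0$ from the warm-up phase—is exactly the step the paper itself treats tersely, so your identification of it as the delicate point is accurate.
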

\end{sloppypar}

We next outline the key distinctions between the two aforementioned theorems. In the exactly-parameterized regime, neither the sampling rate $p$ nor the initialization scale $\alpha$ affect the final error $\epsilon$, as long as they meet certain thresholds. In contrast, a smaller initialization scale or a larger sampling rate improves the final error in the over-parameterized regime. Moreover, the convergence rate of GD improves from $O\left(\log^4\left(\frac{1}{\epsilon}\right)\right)$ to $O\left(\log\left(\frac{1}{\epsilon}\right)\right)$. This is because the required sampling rate is smaller in the exactly-parameterized regime, allowing the algorithm to adopt a more aggressive step-size.

\section{Proof Outline}
In this section, we present the key ideas underpinning our proof techniques. We begin in Section~\ref{sec::dynamical-signal-residual-decomposition} with a dynamic signal-residual decomposition. Next, in Section~\ref{sec::incoherence-dynamic}, we introduce the weakly-coupled leave-one-out analysis which, together with our dynamic signal-residual decomposition, completes the proof for Theorem~\ref{thm::main}. Section~\ref{sec::exact-param} explains how these techniques can be further refined to yield improved results for the exactly-parameterized regime. Throughout this section, we occasionally omit the consideration of higher-order terms involving the step-size $\eta$. We note that while this omission serves to streamline the presentation, our rigorous proofs in the appendix carefully account for these higher-order terms.

\subsection{Dynamic Signal-residual Decomposition}
\label{sec::dynamical-signal-residual-decomposition}
We employ a dynamic projection scheme akin to that described by \citet{li2018algorithmic}, which decomposes the iterates $\mU_t$ into two distinct components: a low-rank signal part, $\mS_t$, and a residual part, $\mE_t$. This decomposition is represented as follows:
\begin{equation}\nonumber
    \mU_t=\mS_t+\mE_t, \quad \text{where} \quad \mS_t=\proj_{\mV_t}\mU_t, \text{ and } \mE_t=\proj^{\perp}_{\mV_t}\mU_t.
\end{equation}
Upon defining $\mM_t=\projection(\mX^{\star}-\mU_t\mU_t^{\top})$, the dynamic orthonormal matrix $\mV_t$ is recursively defined as:
\begin{equation}\nonumber
    \mV_{t+1}=\mZ_{t+1}\left(\mZ_{t+1}^{\top}\mZ_{t+1}\right)^{-1/2} \quad \text{where} \quad \mZ_{t+1}=\left(\mI+\eta\mM_t\right)\mV_t \text{ and }\mV_0=\mV^{\star}.
\end{equation}
Define the error matrix as $\mDelta_t:= \mX^{\star}-\mU_t\mU_t^{\top}$. Our goal is to show that $\norm{\mDelta_t}_{\fro}$ decreases efficiently to $O(\alpha)$. To show the benefit of the proposed dynamic signal-residual decomposition in achieving this goal, we start by stating the one-step dynamic of the error matrix:
\begin{equation}\label{loss_update}
    \norm{\mDelta_{t+1}}_{\fro}^2 = \norm{\mDelta_{t+1}}_{\fro}^2 - 4\eta\inner{\mDelta_t}{\mM_t\mU_t\mU_t^{\top}}+O(\eta^2).
\end{equation}
Therefore, to establish the convergence of $\norm{\mDelta_t}_{\fro}$, it suffices to provide a reasonable lower-bound for $\inner{\mDelta_t}{\mM_t\mU_t\mU_t^{\top}}$. This can be achieved via the following descent lemma:
\begin{sloppypar}
    \begin{lemma}[Descent lemma, informal]\label{lem_descent}
    Suppose that $\frac{\sqrt{\sigma^\star_r}}{2}\leq \sigma_r(\mS_t)$, $\norm{\mS_t}\leq 2\sqrt{\sigma_1^\star}$, and $\norm{\mV_t-\mV^\star}\leq 0.1$. Then, we have 
    \begin{equation}\nonumber
        \inner{\mDelta_t}{\mM_t\mU_t\mU_t^{\top}}\geq \frac{\sigma_r^\star}{15}\norm{\mDelta_t}_{\fro}^2-O\!\left(\sqrt{\frac{\sigma_r^{\star 3}\mu r^2}{p}}\norm{\mE_t}+\sqrt{r}\sigma_1^{\star}\norm{(\id-\projection)(\mDelta_t)}\right)\!\norm{\mDelta_t}_{\fro}\!.
    \end{equation}
\end{lemma}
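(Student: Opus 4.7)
The plan is to perform a signal-residual decomposition on both the noise and the iterate, then bound each piece. Writing $\mM_t = \mDelta_t - (\id-\projection)(\mDelta_t)$ and $\mU_t\mU_t^\top = \mS_t\mS_t^\top + \mS_t\mE_t^\top + \mE_t\mS_t^\top + \mE_t\mE_t^\top$, one expands
\[
\inner{\mDelta_t}{\mM_t\mU_t\mU_t^\top} = \underbrace{\norm{\mDelta_t\mS_t}_\fro^2}_{(\mathrm{A})} \;-\; \underbrace{\inner{\mDelta_t\mS_t\mS_t^\top}{(\id-\projection)(\mDelta_t)}}_{(\mathrm{B})} \;+\; \underbrace{\inner{\mDelta_t}{\mM_t(\mU_t\mU_t^\top-\mS_t\mS_t^\top)}}_{(\mathrm{C})}.
\]
Piece $(\mathrm{A})$ is the leading signal quadratic and will produce the $\sigma_r^\star/15$ term; $(\mathrm{B})$ will generate the $\sqrt{r}\sigma_1^\star\norm{(\id-\projection)(\mDelta_t)}$ error; and $(\mathrm{C})$ will generate the $\sqrt{\sigma_r^{\star 3}\mu r^2/p}\norm{\mE_t}$ error.

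\textbf{Signal term (A).} Writing $\mS_t = \mV_t(\mV_t^\top\mU_t)$ with $\sigma_r(\mV_t^\top\mU_t) = \sigma_r(\mS_t) \geq \sqrt{\sigma_r^\star}/2$, a trace-cyclic manipulation gives $(\mathrm{A}) \geq (\sigma_r^\star/4)\norm{\mDelta_t\proj_{\mV_t}}_\fro^2$. The key step is converting $\norm{\mDelta_t\proj_{\mV_t}}_\fro^2$ into $\norm{\mDelta_t}_\fro^2$, which I would do via the identity
\[
\mDelta_t\proj_{\mV_t}^\perp = \mX^\star(\proj_{\mV^\star} - \proj_{\mV_t}) - \mU_t\mE_t^\top,
\]
derived from $\mX^\star\proj_{\mV^\star}^\perp = 0$ and $\mU_t^\top\proj_{\mV_t}^\perp = \mE_t^\top$. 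The first summand is rank-$r$ with operator norm $\lesssim \sigma_1^\star\norm{\mV_t-\mV^\star}\leq 0.1\sigma_1^\star$; the second is controlled by $\norm{\mU_t}\norm{\mE_t}$ and is absorbed into the $\mE_t$-error bucket. Together with $\norm{\mV_t-\mV^\star}\leq 0.1$, this yields $\norm{\mDelta_t\proj_{\mV_t}}_\fro^2 \gtrsim \norm{\mDelta_t}_\fro^2$ up to small errors, producing the constant $\sigma_r^\star/15$.

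\textbf{Error terms (B) and (C).} For $(\mathrm{B})$, since $\mS_t\mS_t^\top$ has rank at most $r$, so does $\mDelta_t\mS_t\mS_t^\top$, and trace-spectral duality $|\inner{\mA}{\mB}|\leq \norm{\mA}_*\norm{\mB}_{op}$ gives $|(\mathrm{B})| \leq \sqrt{r}\norm{\mDelta_t\mS_t\mS_t^\top}_\fro\norm{(\id-\projection)(\mDelta_t)} \leq 4\sqrt{r}\sigma_1^\star\norm{\mDelta_t}_\fro\norm{(\id-\projection)(\mDelta_t)}$, matching the second error. For $(\mathrm{C})$, each summand of $\mU_t\mU_t^\top-\mS_t\mS_t^\top$ carries an operator-norm factor of $\mE_t$ after a trace rearrangement. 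The coefficient $\sqrt{\sigma_r^{\star 3}\mu r^2/p}$ arises from a standard incoherence-based spectral bound on $\projection$ applied to the rank-$r$ matrices built from $\mV^\star$ and $\mS_t$ (of the form $\norm{\projection(\mX^\star - \mS_t\mS_t^\top)}\lesssim \sqrt{\mu r/p}\cdot$(spectral norm of the argument)), combined with $\norm{\mS_t}\leq 2\sqrt{\sigma_1^\star}$; the quadratic $\mE_t\mE_t^\top$ piece is higher-order in $\mE_t$ and absorbed.

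\textbf{Main obstacle.} The most delicate step is the conversion in $(\mathrm{A})$ from the $\mV_t$-restricted Frobenius norm to the full Frobenius norm of $\mDelta_t$, without generating a $\sqrt{r'}$ dependence in the over-parameterized regime. The saving grace is that the \emph{dynamic} projector $\mV_t$ is rank-$r$ and close to $\mV^\star$, so the off-signal remainder $\mX^\star(\proj_{\mV^\star}-\proj_{\mV_t})$ is rank-$r$ and small, and the only other contribution is $\mU_t\mE_t^\top$, which is controllable purely by the spectral norm $\norm{\mE_t}$. The rank-$r'$ block $\mE_t\mE_t^\top$ in $(\mathrm{C})$ is the other source of care, and is tolerated only because the claim's right-hand side uses spectral (not Frobenius) norm of $\mE_t$, letting its contribution be treated as a higher-order remainder.
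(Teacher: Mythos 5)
Your decomposition and your treatment of the error buckets (B) and (C) are sound and essentially parallel the paper's argument (the paper keeps $\norm{\mDelta_t\mU_t}_{\fro}^2$ intact and peels off $\mLambda_t=\mU_t\mU_t^\top-\mS_t\mS_t^\top$ afterwards, but that difference is cosmetic). The genuine gap is in the one step you yourself flag as the main obstacle: the conversion of $\norm{\mDelta_t\proj_{\mV_t}}_{\fro}^2$ into a constant fraction of $\norm{\mDelta_t}_{\fro}^2$. Your identity $\mDelta_t\proj_{\mV_t}^{\perp}=\mX^{\star}(\proj_{\mV^{\star}}-\proj_{\mV_t})-\mU_t\mE_t^{\top}$ is correct, but bounding the first summand by its operator norm $\lesssim\sigma_1^{\star}\norm{\mV_t-\mV^{\star}}\leq 0.1\sigma_1^{\star}$ (hence Frobenius norm $\lesssim\sqrt{r}\,\sigma_1^{\star}$) only shows $\norm{\mDelta_t\proj_{\mV_t}^{\perp}}_{\fro}$ is small \emph{in absolute terms}, not small \emph{relative to} $\norm{\mDelta_t}_{\fro}$. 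Feeding this into $\norm{\mDelta_t\proj_{\mV_t}}_{\fro}^2=\norm{\mDelta_t}_{\fro}^2-\norm{\mDelta_t\proj_{\mV_t}^{\perp}}_{\fro}^2$ leaves an additive loss of order $\sigma_r^{\star}\cdot\sqrt{r}\,\sigma_1^{\star}\cdot\norm{\mDelta_t}_{\fro}$ that is not of the allowed form (it does not vanish with $\norm{\mE_t}$ or $\norm{(\id-\projection)(\mDelta_t)}$), and it swamps the main term $\frac{\sigma_r^{\star}}{15}\norm{\mDelta_t}_{\fro}^2$ precisely in the regime where the descent lemma is needed, namely when $\norm{\mDelta_t}_{\fro}\ll\sigma_1^{\star}$.

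What is actually needed is the multiplicative statement $\norm{\mX^{\star}\proj_{\mV_t}^{\perp}}_{\fro}\leq c\,\norm{\mX^{\star}-\mS_t\mS_t^{\top}}_{\fro}$ for some $c<1$, so that the loss is a constant \emph{fraction} of $\norm{\mDelta_t}_{\fro}^2$ (plus $\mE_t$-errors) rather than an absolute quantity. This is exactly the content of the paper's Lemma~\ref{lem::lower-bound-1}, proved by an explicit algebraic identity in $\mP=\mV^{\star\top}\mV_t$: writing both $\norm{(\mX^{\star}-\mS_t\mS_t^{\top})\mV_t}_{\fro}^2$ and $\norm{\mX^{\star}-\mS_t\mS_t^{\top}}_{\fro}^2$ in terms of $\mSigma_t$, $\mSigma^{\star}$ and $\mP$, the inequality reduces to $\tr\bigl(\mSigma^{\star}(\mI-\mP\mP^{\top})\mSigma^{\star}\,(3\mP\mP^{\top}-2\mI)\bigr)\geq 0$, which uses $\norm{\mV_t-\mV^{\star}}\leq 0.1$ only through $\sigma_r(\mP)\geq 0.9$. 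Note also that the paper applies this comparison to the \emph{cleaned} matrix $\mX^{\star}-\mS_t\mS_t^{\top}$ (whose column space lies in $\col(\mV^{\star})+\col(\mV_t)$), having already stripped off $\mLambda_t$ at the cost of $O(\sqrt{r\sigma_1^{\star}})\norm{\mE_t}$; applying it to $\mDelta_t$ directly would fail because of the $\mE_t\mE_t^{\top}$ block. Without this lemma or an equivalent, your proof of the leading term does not close.
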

\end{sloppypar}
By combining the descent lemma with \Cref{loss_update}, we arrive at the following expression:
\begin{equation}\label{eq_loss_decrement}
    \norm{\mDelta_{t+1}}_{\fro}\leq \left(1-\frac{\eta\sigma_r^\star}{10}\right)\norm{\mDelta_t}_{\fro}
    +O(\eta)\left(\sqrt{\frac{\sigma_r^{\star 3}\mu r^2}{p}}\norm{\mE_t}+\sqrt{r}\sigma_1^{\star}\norm{(\id-\projection)(\mDelta_t)}\right)+O(\eta^2).
\end{equation}
The above inequality holds once the conditions $\frac{\sqrt{\sigma^\star_r}}{2}\leq \sigma_r(\mS_t)\leq \norm{\mS_t}\leq 2\sqrt{\sigma_1^\star}$ and $\norm{\mV_t-\mV^\star}\leq 0.1$ are met.
These conditions entail that during the initial phase of the algorithm, $\sigma_r(\mS_t)$ must undergo a fast growth, whereas $\mV_t$ must remain close to $\mV^\star$. Under these conditions, GD enters a fast linear convergence phase, provided that $\norm{(\id-\projection)(\mDelta_t)}\ll \norm{\mDelta_t}_F$. In fact, we can readily establish that $\norm{(\id-\projection)(\mDelta_t)}\leq c\norm{\mDelta_t}$ for some $c>0$. However, the challenge lies in ensuring that $c$ remains sufficiently small so as not to negate the effect of a constant factor improvement $1-\frac{\eta\sigma_r^\star}{10}$ in \Cref{eq_loss_decrement}. This phase continues until GD reaches an error level controlled by $\norm{\mE_t}$. Therefore, to prove the convergence of GD, we need to establish the following properties:
\begin{itemize}
    \item {\bf Fast growth of $\mS_t$:} Recall that $\max\{\norm{\mS_0},\norm{\mE_0}\}=O(\alpha)$. We need to ensure efficient growth of $\sigma_r(\mS_t)$ from $O(\alpha)$ to $\frac{\sqrt{\sigma^\star_r}}{2}$, while keeping $\norm{\mS_t}$ below $2\sqrt{\sigma_1^\star}$.
    \item {\bf Slow growth of $\mE_t$:} We need to show that while the signal term grows rapidly, the residual term $\mE_t$ grows at a much slower rate. Specifically, we will demonstrate that $T=O\left(\frac{1}{\eta\sigma_r^{\star}}\log(\frac{1}{\alpha})\right)$ suffices to ensure $\frac{\sqrt{\sigma^\star_r}}{2}\leq \sigma_r(\mS_t)\leq \norm{\mS_t}\leq 2\sqrt{\sigma_1^\star}$ while keeping $\norm{\mE_t}=O(\alpha)$.
    \item {\bf Small values of $\norm{(\id-\projection)(\mDelta_t)}$ and $\norm{\mV_t-\mV^\star}$:} Equally important is maintaining control over $\norm{(\id-\projection)(\mDelta_t)}$ and $\norm{\mV_t-\mV^\star}$. While $\norm{\mV_t-\mV^\star}\leq 0.1$ is needed as a crucial condition for~\Cref{eq_loss_decrement}, the value of $\norm{(\id-\projection)(\mDelta_t)}$ directly controls the convergence rate of GD.
\end{itemize}

To establish the above properties, we provide the one-step dynamics of $\mS_t$, $\mE_t$, and $(\id-\projection)(\mDelta_t)$.
\begin{lemma}[One-step dynamics, informal]\label{lem::onestep}
    Under the conditions of Theorem~\ref{thm::main}, the following inequalities hold with an overwhelming probability:
    \begin{align}
        \norm{\mS_{t+1}}\leq& \left(1+\eta\left(\sigma_1^\star-\norm{\mS_t}^2+O\left(\sigma_1^\star\norm{\mV_t-\mV^\star}+\norm{(\id-\projection)(\mDelta_t)}\right)\right)\right)\norm{\mS_t},\label{eq_St}\\
        \sigma_r(\mS_{t+1})\geq& \left(1+\eta\left(\sigma_r^\star-\sigma_r^2(\mS_t)-O\left(\sigma_1^\star\norm{\mV_t-\mV^\star}+\norm{(\id-\projection)(\mDelta_t)}\right)\right)\right)\sigma_r(\mS_t)\nonumber\\
        &+O(\eta)\cdot\left(\sigma_1^\star\norm{\mV_t-\mV^\star}+\norm{(\id-\projection)(\mDelta_t)}\right)\norm{\mE_t},\label{eq_St_lb}\\
        \norm{\mE_{t+1}}\leq& \left(1+O(\eta)\cdot\left(\sigma_1^\star\norm{\mV_t-\mV^\star}+\norm{(\id-\projection)(\mDelta_t)}\right)\right)\norm{\mE_t},\label{eq_Et}\\
        \norm{(\id-\projection)(\mDelta_t)}\leq& O\Bigg(\sqrt{\frac{d}{p}}\norm{\mDelta_t}\left(\norm{\mV^\star}_{2,\infty}^2+\norm{\mV_t}_{2,\infty}^2\right)+\sqrt{\frac{\sigma_1^\star\mu r}{p}}\norm{\mE_t}\Bigg).\label{eq_IR}
    \end{align}
\end{lemma}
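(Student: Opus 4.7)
The plan is to first exploit the QR-type definition of $\mV_{t+1}$, which guarantees that $(\mI + \eta \mM_t)$ maps $\col(\mV_t)$ into $\col(\mV_{t+1})$, i.e.\ $\proj_{\mV_{t+1}}^\perp (\mI + \eta \mM_t) \proj_{\mV_t} = 0$. Combined with the orthogonality $\mS_t^\top \mE_t = 0$, this yields the exact identities
\begin{equation*}
\mS_{t+1} = (\mI + \eta \mM_t)\mS_t + \proj_{\mV_{t+1}}(\mI + \eta \mM_t)\mE_t, \qquad \mE_{t+1} = \proj_{\mV_{t+1}}^\perp (\mI + \eta \mM_t)\mE_t,
\end{equation*}
together with the simplifications $\mU_t\mU_t^\top \mS_t = \mU_t\mS_t^\top \mS_t$ and $\mU_t\mU_t^\top \mE_t = \mU_t \mE_t^\top \mE_t$. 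Writing $\mM_t = \mDelta_t + (\projection - \id)(\mDelta_t)$ and Taylor-expanding in $\eta$ will isolate three error sources: the subspace misalignment $\norm{\mV_t - \mV^\star}$, the sampling error $\norm{(\id - \projection)(\mDelta_t)}$, and quadratic-in-$\mE_t$ (or $\eta^2$) contributions, which are higher order under the inductive hypothesis that $\norm{\mE_t}$ stays small.

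\textbf{Signal and residual dynamics (\ref{eq_St})--(\ref{eq_Et}).} For~(\ref{eq_St}) and~(\ref{eq_St_lb}), I would pick the top (resp.\ bottom) singular vector pairs $(\vu,\vv)$ of $\mS_{t+1}$ and expand $\vu^\top \mS_{t+1}\vv$ to first order in $\eta$; the leading behaviour $(1 + \eta \vu^\top \mX^\star \vu - \eta\sigma_i(\mS_t)^2)\sigma_i(\mS_t)$ reproduces the classical matrix-factorization saturation dynamics, while the correction $O(\sigma_1^\star \norm{\mV_t-\mV^\star})$ tracks the fact that $\vu^\top \mX^\star \vu$ attains $\sigma_1^\star$ only when $\vu\in\col(\mV^\star)$. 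The additive $\norm{\mE_t}$ term in~(\ref{eq_St_lb}) arises from the coupling $\proj_{\mV_{t+1}}(\mI + \eta \mM_t)\mE_t$. For~(\ref{eq_Et}), the triangle inequality gives $\norm{\mE_{t+1}} \leq \norm{\mE_t} + \eta \norm{\mM_t \mE_t}$; then the identity $\mX^\star \mE_t = \mV^\star \mSigma^\star \mV^{\star\top}(\mI - \mV_t\mV_t^\top)\mU_t$ together with the standard $\sin\Theta$ bound $\norm{(\mI - \mV_t\mV_t^\top)\mV^\star} \leq \norm{\mV_t - \mV^\star\mO^\star}$ yields $\norm{\mX^\star \mE_t} \lesssim \sigma_1^\star \norm{\mV_t-\mV^\star}\norm{\mE_t}$, while $\mU_t \mE_t^\top \mE_t$ contributes only $O(\sqrt{\sigma_1^\star}\norm{\mE_t}^2)$ and is absorbable.

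\textbf{Concentration bound (\ref{eq_IR}) --- the main obstacle.} Bounding $\norm{(\id - \projection)(\mDelta_t)}$ is the principal difficulty, since $\mDelta_t$ is random and correlated with $\Omega$ through the entire trajectory. My plan is to split
\begin{equation*}
\mDelta_t = (\mX^\star - \mS_t\mS_t^\top) - (\mS_t\mE_t^\top + \mE_t\mS_t^\top + \mE_t\mE_t^\top).
\end{equation*}
The first piece has rank at most $2r$ with row/column space contained in $\col(\mV^\star)\cup\col(\mV_t)$, so its entrywise maximum is bounded by $\norm{\mX^\star - \mS_t\mS_t^\top}(\norm{\mV^\star}_{2,\infty}^2 + \norm{\mV_t}_{2,\infty}^2)$; combined with a standard matrix-Bernstein bound for $(\id - \projection)$ on low-rank incoherent matrices, this yields the $\sqrt{d/p}\norm{\mDelta_t}(\norm{\mV^\star}_{2,\infty}^2 + \norm{\mV_t}_{2,\infty}^2)$ contribution. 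For the residual pieces, one factor $\mS_t$ is incoherent (with $\norm{\mS_t}_{2,\infty} \lesssim \sqrt{\sigma_1^\star \mu r/d}$) while $\mE_t$ is only operator-norm bounded, and a one-sided-incoherence Bernstein bound yields the $\sqrt{\sigma_1^\star \mu r/p}\norm{\mE_t}$ term. The real obstacle is that $\mS_t$ and $\mE_t$ depend on $\Omega$ along the trajectory, so concentration cannot be applied pointwise; I would address this via a uniform bound over the set of admissible iterates satisfying the inductive incoherence $\norm{\mV_t}_{2,\infty} \lesssim \sqrt{\mu r/d}$, which is itself maintained by the weakly-coupled leave-one-out construction of Section~\ref{sec::incoherence-dynamic}.
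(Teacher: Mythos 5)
Your overall architecture matches the paper's: the exact recursions $\mS_{t+1}=(\mI+\eta\mM_t)\mS_t+\proj_{\mV_{t+1}}(\mI+\eta\mM_t)\mE_t$ and $\mE_{t+1}=\proj_{\mV_{t+1}}^{\perp}(\mI+\eta\mM_t)\mE_t$, the splitting $\mM_t=\mDelta_t+(\projection-\id)(\mDelta_t)$ and $\mDelta_t=(\mX^\star-\mS_t\mS_t^\top)-\mLambda_t$, and the two-part treatment of \eqref{eq_IR} (incoherent low-rank piece versus the $\mLambda_t$ piece) are exactly what the paper does in Proposition~\ref{prop_onestep} and Lemmas~\ref{lem::helper-lemma-lambda-t}--\ref{lem::helper-lemma-delta-t}. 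Your bound on $\norm{\mE_{t+1}}$ via $\norm{\mM_t\mE_t}\leq\norm{\mM_t\proj_{\mV_t}^\perp}\norm{\mE_t}$ and the decomposition of $\mX^\star\mE_t$ through $\norm{\mV_t-\mV^\star}$ is also the paper's route.

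There are two places where your plan diverges in a way that matters. First, the lower bound \eqref{eq_St_lb}: expanding $\vu^\top\mS_{t+1}\vv$ at the bottom singular pair gives the right heuristic, but an additive perturbation bound $\sigma_r(\mA+\eta\mB)\geq\sigma_r(\mA)-\eta\norm{\mB}$ destroys the growth term $+\eta\sigma_r^\star\sigma_r(\mS_t)$, and a rigorous first-order singular-value expansion requires controlling the remainder when singular values cluster. The paper's device is a \emph{multiplicative} factorization: it writes $\mV_t^\top(\mI+\eta\mM_t)\mS_t=\mB_t\mC_t$ with $\mC_t=\mV_t^\top\mS_t(\mI-\eta\mS_t^\top\mS_t)$, so that $\sigma_r(\mC_t)=(1-\eta\sigma_r^2(\mS_t))\sigma_r(\mS_t)$ exactly, and then lower-bounds $\sigma_r(\mB_t)\geq 1+0.9\eta\sigma_r^\star$ using $\sigma_r(\mB\mC)\geq\sigma_r(\mB)\sigma_r(\mC)$ (Lemmas~\ref{lem::lower-bound-sigma-min} and~\ref{lem::lower-bound-sigma-min-2}). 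Without some such factorization your sketch does not deliver the stated form. Second, for \eqref{eq_IR} you identify adaptivity of $\mS_t,\mE_t$ to $\Omega$ as "the real obstacle" and propose a uniform bound over admissible iterates; in fact no covering argument is needed here. The inequality $\lvert\langle(\id-\projection)(\mA\mC^\top),\mB\mD^\top\rangle\rvert\leq\norm{\tfrac{\mOmega+\mOmega^\top}{2p}-\mJ}\norm{\mA}_{2,\infty}\norm{\mB}_{\fro}\norm{\mC}_{\fro}\norm{\mD}_{2,\infty}$ is deterministic, so conditioning once on the event $\norm{\tfrac{\mOmega+\mOmega^\top}{2p}-\mJ}\leq\const\sqrt{d/p}$ yields \eqref{eq_IR} for \emph{every} iterate simultaneously, with $\norm{\mV_t}_{2,\infty}$ simply appearing as a parameter in the bound (its smallness is established separately by the leave-one-out analysis). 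The covering/adaptivity issue genuinely arises only in Proposition~\ref{prop::incoherence-dynamic-1}, where a row-wise Frobenius-norm bound is needed, not for the operator-norm bound here; your proposed net over iterates would work but is substantially heavier than necessary.
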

 
Next, we provide a high-level overview of how Lemma~\ref{lem::onestep} can be used to establish the aforementioned properties. To this goal, we only focus on the initial phase of the algorithm, where both $\mS_t$ and $\mE_t$ are small. 
A more formal analysis for the entire trajectory is provided in Appendix~\ref{sec::proof-signal-residual-decomposition}.

To use Lemma~\ref{lem::onestep}, it suffices to control two key quantities: $\norm{\mV_t}_{2,\infty}$ and $\norm{\mV_t-\mV^\star}$. To illustrate this, let us assume that ${\mV_t}$ inherits the incoherence of ${\mV^\star}$, that is, $\norm{\mV_t}_{2,\infty}\leq O(\sqrt{{\mu r}/{d}})\ll 1$ for all $1\leq t\leq T$. Then, \Cref{eq_IR} suggests that $\norm{(\id-\projection)(\mDelta_t)}\leq c\norm{\mDelta_t}+O(\alpha)$, where $c=O(\mu r/\sqrt{pd})\ll 1$, thereby ensuring the necessary control over $\norm{(\id-\projection)(\mDelta_t)}$. 

On the other hand, the small values of $\norm{(\id-\projection)(\mDelta_t)}$ and $\norm{\mV_t-\mV^\star}$ play crucial roles in controlling the behavior of the signal and residual terms. To illustrate this, let us assume that at a certain point, $\norm{\mS_t}\geq 1.5\sqrt{\sigma_1^\star}$. Given that we have considered $\norm{(\id-\projection)(\mDelta_t)}$ and $\norm{\mV_t-\mV^\star}$ to be small, \Cref{eq_St} simplifies to $\norm{\mS_{t+1}}\leq (1-\eta\Omega(\sigma_1^\star))\norm{\mS_{t}}+O(\eta\sigma_1^\star\norm{\mE_t})$, effectively preventing $\norm{\mS_{t+1}}$ from further growth. With a similar reasoning, \Cref{eq_St_lb} simplifies to $\sigma_r(\mS_{t+1})\geq (1+\eta\Omega(\sigma_r^\star))\sigma_r(\mS_t)$. Here, we have leveraged the assumption that $\sigma_r^2(\mS_t)\ll \sigma_r^\star$ during the initial phase. This implies that $\sigma_r(\mS_{t})$ grows at a rate of $1+\eta\Omega(\sigma_r^\star)$. In contrast, \Cref{eq_Et} implies that $\norm{\mE_t}$ grows at a rate of $1+\eta O(\norm{(\id-\projection)(\mDelta_t)}+\sigma_1^{\star}\norm{\mV_t-\mV^\star})$, which is significantly slower than the growth rate of $\sigma_r(\mS_{t})$ because $\max\{\norm{(\id-\projection)(\mDelta_t)}, \sigma_1^{\star}\norm{\mV_t-\mV^\star}\}\ll \sigma_r^\star$. It is due to this discrepancy in the growth rates of $\norm{\mS_t}$ and $\norm{\mE_t}$ that GD enters the local linear convergence rate and achieves a final error of $O(\alpha)$. 

\subsection{Refined Leave-one-out Analysis with Weak Coupling}
\label{sec::incoherence-dynamic}
Indeed, it is not immediately evident why both $\norm{\mV_t}_{2,\infty}$ and $\norm{\mV_t-\mV^\star}$ would remain small. In fact, our initial intuition might suggest the opposite: recall that GD takes an aggressively large step size. Consequently, even a single GD update has the potential to disrupt the incoherence of $\mV_t$. Our key contribution is to establish that such disruption does not occur, even when the iterates are arbitrarily far from the ground truth. In essence, we show that despite the gradient update $\norm{\eta\projection(\mDelta_t)\mU_t}$ potentially having a magnitude of $\widetilde\Omega(1)$, its impact on $\mV_t$ is distributed fairly evenly across its elements. As a result, it has minimal influence on $\norm{\mV_t}_{2,\infty}$ and $\norm{\mV_t-\mV^\star}$.

We start by showing that a small $\norm{\mV_t}_{2,\infty}$ implies a small $\norm{\mV_t-\mV^\star}$.
\begin{sloppypar}
    \begin{lemma}[Small $\norm{\mV_t}_{2,\infty}$ implies small $\norm{\mV_t-\mV^\star}_{\fro}$, informal]
    \label{lem::fro-norm-control}
    Suppose that $\norm{\mS_t}\leq 2\sqrt{\sigma_1^\star}$ and $\norm{\mV_t-\mV^\star}\leq \frac{1}{2\kappa}$. With an overwhelming probability, we have
    \begin{equation}\nonumber
        \norm{\mV_{t+1}-\mV^\star}_{\fro}\leq \norm{\mV_{t}-\mV^\star}_{\fro}+O(\eta)\cdot\sigma_1^{\star}\sqrt{\frac{dr}{p}}\left(\norm{\mV^\star}^2_{2,\infty}+\norm{\mV_t}^2_{2,\infty}\right)+O(\eta^2).
    \end{equation}
\end{lemma}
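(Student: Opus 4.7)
The plan is to linearize the polar-decomposition update for $\mV_t$, apply the Frobenius triangle inequality against $\mV^\star$, and then control the one-step perturbation by a sampling-operator concentration inequality that exposes the advertised incoherence scaling. First I would Taylor-expand $\mV_{t+1}=(\mI+\eta\mM_t)\mV_t(\mZ_{t+1}^{\top}\mZ_{t+1})^{-1/2}$ in $\eta$. Since $\mV_t^{\top}\mV_t=\mI$ and $\mM_t$ is symmetric, $\mZ_{t+1}^{\top}\mZ_{t+1}=\mI+2\eta\,\mV_t^{\top}\mM_t\mV_t+O(\eta^{2})$ and $(\mZ_{t+1}^{\top}\mZ_{t+1})^{-1/2}=\mI-\eta\,\mV_t^{\top}\mM_t\mV_t+O(\eta^{2})$, so
\begin{equation*}
\mV_{t+1}=\mV_t+\eta\,\proj_{\mV_t}^{\perp}\mM_t\mV_t+O(\eta^{2}),
\end{equation*}
where the $O(\eta^2)$ remainder is controlled once we know $\norm{\mM_t}=O(\sigma_1^{\star})$ on the event of interest, a consequence of $\norm{\mV_t-\mV^\star}\leq 1/(2\kappa)$, $\norm{\mS_t}\leq 2\sqrt{\sigma_1^{\star}}$, and the inductively-maintained smallness of $\mE_t$ and of $\norm{(\projection-\id)(\mDelta_t)}$ from Lemma~\ref{lem::onestep}. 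The Frobenius triangle inequality then reduces the lemma to the estimate $\norm{\proj_{\mV_t}^{\perp}\mM_t\mV_t}_{\fro}\lesssim \sigma_1^{\star}\sqrt{dr/p}\bigl(\norm{\mV^\star}_{2,\infty}^{2}+\norm{\mV_t}_{2,\infty}^{2}\bigr)$ with overwhelming probability.

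To produce this estimate, I would decompose $\mM_t=\mDelta_t+(\projection-\id)(\mDelta_t)$ and expand $\mDelta_t=\mX^\star-\mU_t\mU_t^{\top}$ with $\mU_t=\mS_t+\mE_t$ and $\mS_t=\mV_t\mR_t$ for $\mR_t=\mV_t^{\top}\mU_t$. The deterministic piece evaluates to $\proj_{\mV_t}^{\perp}\mDelta_t\mV_t=\proj_{\mV_t}^{\perp}\mV^\star\mSigma^{\star}\mV^{\star\top}\mV_t-\mE_t\mR_t^{\top}$, both of whose terms are absorbed into the $O(\eta^{2})$ error: the first is a higher-order object in $\mV_t-\mV^\star$ once $\mV_t$ is aligned with $\mV^\star$ via the optimal Procrustes rotation (making $\proj_{\mV_t}^{\perp}\mV^\star$ pick up an extra factor of $\norm{\mV_t-\mV^\star}$), and the second is of size $\sqrt{r\sigma_1^\star}\norm{\mE_t}$, which is small by the inductive control of $\norm{\mE_t}$. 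The random piece is handled by the standard concentration
\begin{equation*}
\norm{(\projection-\id)(\mA\mB^{\top})}\lesssim\sqrt{\tfrac{d\log d}{p}}\,\norm{\mA}_{2,\infty}\norm{\mB}_{2,\infty}
\end{equation*}
valid for deterministic $d\times k$ factors $\mA,\mB$, combined with the elementary bound $\norm{\proj_{\mV_t}^{\perp}\mC\mV_t}_{\fro}\leq \sqrt{r}\,\norm{\mC}$. Feeding in the rank-$O(r)$ factorizations of $\mX^\star=\mV^\star\mSigma^\star\mV^{\star\top}$ and of the pieces of $\mU_t\mU_t^{\top}$, and using $\norm{\mS_t}_{2,\infty}\leq \norm{\mV_t}_{2,\infty}\norm{\mS_t}\lesssim \sqrt{\sigma_1^\star}\,\norm{\mV_t}_{2,\infty}$, yields exactly the product-of-incoherence-norms structure in the advertised bound.

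The chief obstacle is that this concentration inequality requires $\mA$ and $\mB$ to be statistically independent of $\Omega$, whereas both $\mV_t$ and the relevant factors of $\mDelta_t$ are adapted to $\Omega$. The resolution is the weakly-coupled leave-one-out construction sketched in Section~\ref{sec::incoherence-dynamic}: introduce row-wise surrogate iterates $\mV_t^{(i)}$ whose $i$-th row is decoupled from the $i$-th row and column of $\Omega$; transfer the incoherence and closeness-to-$\mV^\star$ bounds from the surrogates to $\mV_t$ via row-wise perturbation estimates; and apply the plug-and-play concentration to the surrogate rather than to $\mV_t$ directly. This has to be propagated inductively in concert with all of the bounds maintained in Lemma~\ref{lem::onestep}, and is where the bulk of the technical effort behind Lemma~\ref{lem::fro-norm-control} resides.
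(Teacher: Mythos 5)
Your skeleton (Taylor-expand the polar update, triangle inequality, split $\mM_t$ into $\mDelta_t$ plus the sampling error, bound the sampling error by a $\sqrt{d/p}\times$ incoherence-squared estimate) matches the paper's, but there is a genuine gap in how you treat the deterministic drift term. You claim that $\eta\,\proj_{\mV_t}^{\perp}\mV^{\star}\mSigma^{\star}\mV^{\star\top}\mV_t=\eta\,\proj_{\mV_t}^{\perp}(\mV^{\star}-\mV_t)\mSigma^{\star}\mV^{\star\top}\mV_t$ is ``absorbed into the $O(\eta^2)$ error'' because it carries a factor of $\norm{\mV_t-\mV^{\star}}$. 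It is not $O(\eta^2)$: its Frobenius norm is of order $\eta\sigma_1^{\star}\norm{\mV_t-\mV^{\star}}_{\fro}$, so bounding it by the triangle inequality yields $\norm{\mV_{t+1}-\mV^{\star}}_{\fro}\leq(1+\eta\sigma_1^{\star})\norm{\mV_t-\mV^{\star}}_{\fro}+\cdots$, and over $T\asymp\frac{1}{\eta\sigma_r^{\star}}\log(1/\alpha)$ iterations the factor $(1+\eta\sigma_1^{\star})^{T}$ blows up like $(1/\alpha)^{\Theta(\kappa)}$ (and even plugging in the inductive bound on $\norm{\mV_t-\mV^{\star}}_{\fro}$ as an additive increment overshoots the target by $\kappa\log(1/\alpha)$, so the induction does not close). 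The lemma's coefficient of exactly $1$ on $\norm{\mV_t-\mV^{\star}}_{\fro}$ is essential, and the paper obtains it by a Pythagorean argument: decomposing $\mV^{\star}-\mV_t-\eta\proj_{\mV_t}^{\perp}(\mV^{\star}-\mV_t)\mSigma^{\star}\mV^{\star\top}\mV_t$ into its $\proj_{\mV_t}$ and $\proj_{\mV_t}^{\perp}$ components, the drift only multiplies the latter by $\mI-\eta\mSigma^{\star}\mV^{\star\top}\mV_t$, whose operator norm is at most $1$ precisely because $\norm{\mV_t-\mV^{\star}}\leq\frac{1}{2\kappa}$ (this is where that hypothesis is actually used). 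So the map is non-expansive, not merely ``higher-order.'' This contraction mechanism is the key idea your proposal is missing.

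A secondary, less damaging issue: the independence obstruction you raise, and the leave-one-out machinery you invoke to fix it, are not needed for this lemma. The bound $\norm{(\id-\projection)(\mU\mV^{\top})}\leq\norm{\frac{\mOmega+\mOmega^{\top}}{2p}-\mJ}\norm{\mU}_{2,\infty}\norm{\mV}_{2,\infty}$ is deterministic and uniform over all factors; the only probabilistic input is the single event $\norm{\frac{\mOmega+\mOmega^{\top}}{2p}-\mJ}\leq\const\sqrt{d/p}$, so no decoupling from $\Omega$ is required even though $\mV_t$ is adapted to it. The leave-one-out analysis enters only later, to establish the incoherence bound $\norm{\mV_t}_{2,\infty}\lesssim\sqrt{\mu r/d}$, which in this informal statement simply appears inside the right-hand side rather than as something to be proved.
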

\end{sloppypar}
Recall that, due to the incoherence of $\mV^\star$, we have $\norm{\mV^\star}_{2,\infty}=\sqrt{{\mu r}/{d}}$. Now, suppose we can further establish that $\mV_t$ enjoys a similar incoherence property. In such a case, the aforementioned lemma leads to $\norm{\mV_{t}-\mV^\star}\leq \norm{\mV_t-\mV^\star}_{\fro}\leq O\left(T\cdot\eta\sigma_1^{\star}\sqrt{\frac{\mu^2 r^3}{pd}}\right)+O(T\cdot\eta^2)$ for every $1\leq t\leq T$. Given the provided bounds on $T$ and $p$, this automatically establishes that $\norm{\mV_{t}-\mV^\star}$ remains small throughout the iterations. Therefore, it suffices to control the incoherence of $\norm{\mV_t}_{2,\infty}$.

Controlling $\norm{\mV_t}_{2, \infty}$, which necessitates estimating the $\ell_2$-norm of each row, requires a more fine-grained analysis than what is needed for the Frobenius norm. The primary challenge lies in the intricate correlations between the orthogonal matrix $\mV_t$ and the random observation set $\Omega$, which preclude the straightforward application of classical concentration inequalities. To effectively decouple these correlations, we introduce a technique called \textit{weakly-coupled leave-one-out analysis}. Before introducing our proposed methodology, it is essential to grasp the core principles of the classical leave-one-out analysis.

\paragraph{Local leave-one-out analysis.}  
When the search rank is exactly parameterized ($r=r'$) and the initial point is sufficiently close to the ground truth $\mU_0\mU_0^\top\approx \mX^\star$, \citet{ma2018implicit} established the incoherence of the iterates via the following leave-one-out sequences $\bigl\{\mU_t^{(l)}\bigr\}_{t=0}^{T}$ for each $1\leq l\leq d$:
\begin{equation}
    \dist\left(\mU_0,\mU_0^{(l)}\right) \approx 0, \quad\text{and}\quad \mU_{t+1}^{(l)}=\left(\mI-\eta \cR_{\Omega^{(l)}}\left(\mU_{t}^{(l)}\mU_{t}^{(l)\top}-\mX^{\star}\right)\right)\mU_{t}^{(l)},
    \label{eq::classic-leave-one-out}
\end{equation}
where $\cR_{\Omega^{(l)}}$ is the leave-one-out projection operator defined by
\begin{equation}\nonumber
    \cR_{\Omega^{(l)}}=\frac{1}{2p}\left({\proj_{\Omega^{(l)}}+\proj_{\Omega^{(l)}}^{\top}}\right),\quad\text{and}\quad[\proj_{\Omega^{(l)}}(\mX)]_{i, j}=\begin{cases}
        p\emX_{i, j} & \text{if $i= l$ or $j= l$},\\
        \emX_{i, j} & \text{if $(i, j)\in \Omega$, $i\neq l$, and $j\neq l$},\\
        0 & \text{otherwise}.
    \end{cases}
\end{equation}
The sole distinction between the projection operators $\cR_{\Omega^{(l)}}$ and $\cR_{\Omega}$ is in their $l$-th row and $l$-th column: in contrast to $\cR_{\Omega}$, the $l$-th row and $l$-th column of $\cR_{\Omega^{(l)}}$ are \textit{deterministically} set to match the corresponding values of $\mX$. This seemingly minor adjustment yields two important consequences: first, it ensures that $\mU_t\approx \mU_t^{(l)}$, and second, it guarantees that the behavior of $\mU_t^{(l)}$ remains independent of the random measurements in the $l$-th row and $l$-th column. This decoupling technique is the key to controlling the deviation of $\norm{\mU_t}_{2, \infty}$.
To formalize this intuition, let us define $\mU^\star = \mV^\star\mSigma^{\star 1/2}$. One can write
\begin{equation}
    \begin{aligned}
        \norm{\mU_t}_{2, \infty}&=\max_{1\leq l\leq d}\left\{\norm{\left[\mU_t\mH_t^{(l)}\right]_{l,\cdot}}\right\}\\
        &\leq \max_{1\leq l\leq d}\left\{\norm{\left[\mU^{\star}-\mU_t^{(l)}\mR_t^{(l)}\right]_{l,\cdot}}+\norm{\left[\mU_t\mH_t^{(l)}-\mU_t^{(l)}\mR_t^{(l)}\right]_{l,\cdot}}+\norm{\left[\mU^{\star}\right]_{l,\cdot}}\right\}\\
        &\leq \underbrace{\max_{1\leq l\leq d}\left\{\norm{\left(\mU^{\star}-\mU_t^{(l)}\mR_t^{(l)}\right)_{l,\cdot}}\right\}}_{\text{leave-one-out error}}+\underbrace{\max_{1\leq l\leq d}\left\{\norm{\mU_t\mH_t^{(l)}-\mU_t^{(l)}\mR_t^{(l)}}_{\fro}\right\}}_{\text{proximal error}}+\sqrt{\frac{\sigma_{1}^{\star}\mu r}{d}}.
    \end{aligned}
\end{equation}
Here $\mR_t^{(l)}$ and $\mH_t^{(l)}$ are orthogonal matrices defined as $\mR_t^{(l)}=\argmin_{\mO\in\cO_{r\times r}}\big\|\mU_t^{(l)}\mO-\mU^{\star}\big\|_{\fro}$ and $\mH_t^{(l)}=\argmin_{\mO\in\cO_{r\times r}}\big\|\mU_t^{(l)}\mO-\mU_t^{(l)}\mR_t^{(l)}\big\|_{\fro}$. Although it may not be immediately obvious, it can be shown that the $l$-th row of the matrix $\mU^{\star}-\mU_t^{(l)}\mR_t^{(l)}$ is purely \textit{deterministic}. Consequently, it becomes possible to efficiently control the leave-one-out error.
To tackle the proximal error, recall that the initial point $\mU_0\mU_0^\top$ is assumed to be close to $\mX^\star$. Within this region, the local landscape exhibits restricted strong convexity. This ensures that the true iterates $\mU_t$ and the leave-one-out versions $\mU_t^{(l)}$ become increasingly close, leading to a small proximal error. By combining these two arguments, we can guarantee the incoherence of the true iterates. Furthermore, the incoherence of $\mU_t$ automatically implies the incoherence of $\mV_t$, given that $\sqrt{\sigma_r^\star}\norm{\mV_t}_{2,\infty}\lesssim \norm{\mU_t}_{2,\infty}$ when $\mU_t\mU_t^\top\approx \mX^\star$. For more details, we refer interested readers to the discussions in \citep{ma2018implicit}.

\begin{sloppypar}
    While the classical leave-one-out analysis provides precise \textit{local} guarantees within the \textit{exactly-parameterized} regimes, we shed light on its limitations when applied {\it globally} in the \textit{over-parameterized} settings. A significant challenge arises from the discrepancy of the singular values of $\mU_t$ and $\mU_t^{(l)}$: although they may remain close to the singular values of $\mU^\star$ in the local regime, they can undergo substantial changes when positioned far from the true solution. Consequently, the original measure of proximal error based on $\dist(\mU_t, \mU_t^{(l)})$ loses its effectiveness as a reliable metric.
\end{sloppypar} 

Instead, recall that we only require controlling $\mV_t$, which unlike $\mU_t$, has \textit{unit} singular values. This motivates us to switch to a more stable metric---the divergence between the left column spaces of $\mU_t$ and $\mU_t^{(l)}$. 
However, an additional complication is that these left column spaces may also not align perfectly due to over-parameterization. Fortunately, by resorting to our proposed dynamic signal-residual decomposition, we can show that the iterates $\mU_t$ are well-approximated by the low-rank signal $\mU_t\approx \mS_t$. Therefore, it suffices to focus on controlling the discrepancy in the column spaces of $\mS_t$ and $\mS_t^{(l)}$, i.e., $\dist(\mV_t, \mV_t^{(l)})$. However, the new proximal error $\dist(\mV_t, \mV_t^{(l)})$ can still grow exponentially.
To explain the root cause of this exponential growth, we employ matrix Taylor expansion to derive the first-order approximations for $\mV_{t+1}$ and $\mV_{t+1}^{(l)}$:
\begin{equation}
    \mV_{t+1}\approx\mV_t+\eta\proj_{\mV_t}^{\perp} \mM_t\mV_t\quad \text{and}\quad \mV_{t+1}^{(l)}\approx\mV_t^{(l)}+\eta\proj_{\mV_t^{(l)}}^{\perp} \mM_t^{(l)}\mV_t^{(l)},
\end{equation}
where we define $\mM_t^{(l)}=\cR_{\Omega^{(l)}}\big(\mX^{\star}-\mU_t^{(l)}\mU_t^{(l)\top}\big)$.
To effectively control the proximal error, it is crucial to establish an upper bound for $\big\|\mM_t-\mM^{(l)}_t\big\|$. This distance tends to concentrate around $\big\|\mU_t\mU_t^{\top}-\mU_t^{(l)}\mU_t^{(l)\top}\big\|\approx \big\|\mS_t\mS_t^{\top}-\mS_t^{(l)}\mS_t^{(l)\top}\big\|$ when the sampling rate $p$ is sufficiently large. However, as previously noted, the singular values of $\mS_t$ and $\mS_t^{(l)}$ may diverge. This misalignment can lead to $\big\|\mM_t-\mM^{(l)}_t\big\|=\Omega(\sigma_1^{\star})$ in the worst case. Hence, the proximal error can grow exponentially.

\paragraph{Weakly-coupled leave-one-out analysis.} To remedy the alignment challenges identified earlier, we propose the following refined leave-one-out sequences $\big\{\widetilde \mV_t^{(l)}\big\}_{t=0}^{T}$:
\begin{equation}
    \widetilde\mV^{(l)}_{t+1}= \widetilde\mZ^{(l)}_{t+1}\left(\widetilde\mZ^{(l)\top}_{t+1}\widetilde\mZ^{(l)}_{t+1}\right)^{-1/2} \quad \text{where} \quad \widetilde\mZ^{(l)}_{t+1}=\left(\mI+\eta\widetilde\mM^{(l)}_t\right)\widetilde\mV^{(l)}_t \text{ and }\widetilde\mV^{(l)}_{0}=\mV^{\star}.
\end{equation}
In this context, $\widetilde\mM^{(l)}_t$ is defined as:
\begin{equation}
    \widetilde\mM^{(l)}_t=\cR_{\Omega^{(l)}}\left(\mX^{\star}-\widetilde\mV_{t}^{(l)}\mSigma_t\widetilde\mV_{t}^{(l)\top}\right)\quad \text{where }\mSigma_t=\mV_t^{\top}\mU_t\mU_t^{\top}\mV_t\in \bR^{r\times r}.
    \label{eq::M_t}
\end{equation}
Compared to the original $\mM_t^{(l)}$, we replace $\mSigma_t^{(l)}=\mV_t^{(l)\top}\mU_t^{(l)}\mU_t^{(l)\top}\mV_t^{(l)}$ by $\mSigma_t=\mV_t^{\top}\mU_t\mU_t^{\top}\mV_t$ in the definition of $\widetilde\mM^{(l)}_t$. Our analysis indicates that this adjustment significantly improves the control over the distance $\big\|\mM_t-\widetilde \mM^{(l)}_t\big\|=o(1)$ when the sampling rate $p$ is sufficiently large. Hence, the proximal error grows at a much slower rate. 

Despite their promise, these refined leave-one-out sequences do introduce a trade-off: the statistical independence inherent in the original leave-one-out sequences is compromised due to the inclusion of $\mSigma_t$. In other words, the $l$-th leave-one-out sequence $\big\{\widetilde \mV_t^{(l)}\big\}_{t=0}^{T}$ is no longer independent of the random measurements in the $l$-th row and $l$-th column. Nonetheless, we demonstrate that the resulting correlation is relatively weak, primarily because $\mSigma_t$ is a comparatively small $r\times r$ matrix. To control this statistical coupling, we employ a novel \textit{adaptive covering argument}, which can be of independent interest. This approach effectively mitigates the statistical coupling while incurring a mild increase in the required sample complexity, which remains only polynomial in $r$.

To formalize our arguments, we can decompose the refined leave-one-out sequences $\norm{\mV_t}_{2, \infty}$ as:
    \begin{equation}
        \label{eq::decomposition-of-V_t}
        \begin{aligned}
            \norm{\mV_t}_{2, \infty}&\leq \norm{\mV^{\star}-\mV_t}_{2, \infty}+\norm{\mV^{\star}}_{2, \infty}\\
            &=\max_{1\leq l\leq d}\left\{\norm{\left(\mV^{\star}-\mV_t\right)_{l,\cdot}}\right\}+\norm{\mV^{\star}}_{2, \infty}\\
            &\leq \underbrace{\max_{1\leq l\leq d}\left\{\norm{\left(\mV^{\star}-\widetilde \mV_t^{(l)}\right)_{l,\cdot}}\right\}}_{\text{refined leave-one-out error (Proposition~\ref{prop::dynamic-of-v})}}+\underbrace{\max_{1\leq l\leq d}\left\{\norm{\mV_t-\widetilde \mV_t^{(l)}}_{\fro}\right\}}_{\text{refined proximal error (Proposition~\ref{prop::incoherence-dynamic})}}+\sqrt{\frac{\mu r}{d}}.
        \end{aligned}
    \end{equation}
    Next, we characterize the dynamic of the refined leave-one-out error.
    \begin{proposition}[Refined leave-one-out error]
        \label{prop::dynamic-of-v}
        Suppose that $p\gtrsim \frac{\log(d)}{d}$ and $\norm{\mV^{\star}-\mV_t}\leq \frac{1}{2\kappa}$. With probability at least $1-\frac{1}{d^3}$, for any $1\leq t\leq T\lesssim\frac{1}{\eta\sigma_{r}^{\star}}\log\left(\frac{1}{\alpha}\right)$ and $1\leq l\leq d$, we have
        \begin{equation}\nonumber
            \norm{\left(\mV^{\star}- \widetilde\mV_{t+1}^{(l)}\right)_{l,\cdot}}\leq \left(1-0.5\eta\sigma_{r}^{\star}\right)\norm{\left(\mV^{\star}-\widetilde\mV_t^{(l)}\right)_{l,\cdot}}+O(\eta)\cdot\sigma_{1}^{\star}\frac{\kappa\mu^{1.5} r^{2}\log\left(\frac{1}{\alpha}\right)}{\sqrt{pd^2}}.
        \end{equation}
    \end{proposition}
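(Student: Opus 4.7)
The plan is to exploit the defining feature of the refined leave-one-out construction: because $\cR_{\Omega^{(l)}}$ deterministically restores the $l$-th row and column of its symmetric argument, the $l$-th row of $\widetilde\mM_t^{(l)}$ from~\eqref{eq::M_t} carries no randomness from $\Omega$, so that
\[
[\widetilde\mM_t^{(l)}]_{l,\cdot} = [\mX^\star - \widetilde\mV_t^{(l)}\mSigma_t\widetilde\mV_t^{(l)\top}]_{l,\cdot}.
\]
First I would extract the $l$-th row of $\widetilde\mZ_{t+1}^{(l)} = (\mI + \eta\widetilde\mM_t^{(l)})\widetilde\mV_t^{(l)}$: using $[\mX^\star]_{l,\cdot} = [\mV^\star]_{l,\cdot}\mSigma^\star\mV^{\star\top}$ and $\widetilde\mV_t^{(l)\top}\widetilde\mV_t^{(l)} = \mI$ one obtains
\[
[\widetilde\mZ_{t+1}^{(l)}]_{l,\cdot} = [\widetilde\mV_t^{(l)}]_{l,\cdot}(\mI - \eta\mSigma_t) + \eta[\mV^\star]_{l,\cdot}\mSigma^\star\mV^{\star\top}\widetilde\mV_t^{(l)},
\]
and subtracting $[\mV^\star]_{l,\cdot}$ produces the clean one-step identity
\[
[\mV^\star - \widetilde\mZ_{t+1}^{(l)}]_{l,\cdot} = [\mV^\star - \widetilde\mV_t^{(l)}]_{l,\cdot}(\mI - \eta\mSigma_t) + \eta[\mV^\star]_{l,\cdot}\bigl(\mSigma_t - \mSigma^\star\mV^{\star\top}\widetilde\mV_t^{(l)}\bigr).
\]

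For the contraction factor, under the running invariants of Lemma~\ref{lem::onestep} (growth of $\sigma_r(\mS_t)$ past $\sqrt{\sigma_r^\star}/2$, $\norm{\mS_t}\leq 2\sqrt{\sigma_1^\star}$, and $\norm{\mV_t-\mV^\star}\leq 1/(2\kappa)$), the symmetric PSD matrix $\mSigma_t = \mV_t^\top\mU_t\mU_t^\top\mV_t$ satisfies $\lambda_{\min}(\mSigma_t)\gtrsim \sigma_r^\star$ and $\lambda_{\max}(\mSigma_t)\lesssim \sigma_1^\star$. Combined with the step-size $\eta\asymp \mu r/(\sqrt{pd}\sigma_1^\star)$, this yields $\norm{\mI - \eta\mSigma_t}\leq 1-0.5\eta\sigma_r^\star$. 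The orthogonalization factor $(\widetilde\mZ_{t+1}^{(l)\top}\widetilde\mZ_{t+1}^{(l)})^{-1/2} = \mI - \eta\,\widetilde\mV_t^{(l)\top}\widetilde\mM_t^{(l)}\widetilde\mV_t^{(l)} + O(\eta^2)$ contributes an additional right multiplication on the $l$-th row; its leading $O(\eta)$ piece $\widetilde\mV_t^{(l)\top}\widetilde\mM_t^{(l)}\widetilde\mV_t^{(l)} \approx \widetilde\mV_t^{(l)\top}\mX^\star\widetilde\mV_t^{(l)} - \mSigma_t$ is itself of the same small order as the error term identified below and can be absorbed there.

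The main obstacle is bounding $\norm{\mSigma_t - \mSigma^\star\mV^{\star\top}\widetilde\mV_t^{(l)}}$. Using $\mSigma^\star\mV^{\star\top}\widetilde\mV_t^{(l)} = \mV^{\star\top}\mX^\star\widetilde\mV_t^{(l)}$ and adding and subtracting intermediate terms,
\[
\mSigma_t - \mSigma^\star\mV^{\star\top}\widetilde\mV_t^{(l)} = (\mV_t - \mV^\star)^\top\mU_t\mU_t^\top\mV_t + \mV^{\star\top}(\mU_t\mU_t^\top - \mX^\star)\mV_t + \mV^{\star\top}\mX^\star(\mV_t - \widetilde\mV_t^{(l)}),
\]
which I would control through (i) the bound on $\norm{\mV_t-\mV^\star}_{\fro}$ from Lemma~\ref{lem::fro-norm-control}, (ii) the bound on $\norm{\mDelta_t}$ from the signal-residual decomposition of Section~\ref{sec::dynamical-signal-residual-decomposition}, and (iii) the refined proximal error $\norm{\mV_t - \widetilde\mV_t^{(l)}}_{\fro}$ from Proposition~\ref{prop::incoherence-dynamic}. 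Multiplying by the incoherence bound $\norm{[\mV^\star]_{l,\cdot}}\leq\sqrt{\mu r/d}$ yields the per-step error of order $O(\eta)\cdot\sigma_1^\star\kappa\mu^{1.5}r^2\log(1/\alpha)/\sqrt{pd^2}$; the $\log(1/\alpha)$ factor enters because the bounds in (i)--(iii) grow mildly along the trajectory of length $T\lesssim \log(1/\alpha)/(\eta\sigma_r^\star)$. The $1-1/d^3$ probability follows from a union bound over the concentration events supporting (i)--(iii); the subtle point is the coupled induction, since $\mV_t$ and the family $\{\widetilde\mV_t^{(l)}\}_{l=1}^d$ mutually depend on one another and all invariants must be propagated simultaneously across time.
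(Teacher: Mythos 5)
Your starting point is the right one—the $l$-th row of $\widetilde\mM_t^{(l)}$ is deterministic, and your row identity
$[\mV^{\star}-\widetilde\mZ_{t+1}^{(l)}]_{l,\cdot}=[\mV^{\star}-\widetilde\mV_t^{(l)}]_{l,\cdot}(\mI-\eta\mSigma_t)+\eta[\mV^{\star}]_{l,\cdot}(\mSigma_t-\mSigma^{\star}\mV^{\star\top}\widetilde\mV_t^{(l)})$
is algebraically correct. But the way you then split it into ``contraction'' and ``error'' does not work, for two linked reasons. First, you obtain the factor $1-0.5\eta\sigma_r^{\star}$ from $\lambda_{\min}(\mSigma_t)\gtrsim\sigma_r^{\star}$, which requires $\sigma_r(\mS_t)\geq\sqrt{\sigma_r^{\star}}/2$. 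That is \emph{not} a running invariant for all $1\leq t\leq T$: the proposition must hold throughout the trajectory, including the initial phase where $\mU_t=O(\alpha)$ and hence $\mSigma_t=\mV_t^{\top}\mU_t\mU_t^{\top}\mV_t=O(\alpha^2)$, so $\norm{\mI-\eta\mSigma_t}\approx 1$ and no contraction is produced. Second, your error term contains $\mV^{\star\top}(\mU_t\mU_t^{\top}-\mX^{\star})\mV_t=-\mV^{\star\top}\mDelta_t\mV_t$, which is $\Theta(\sigma_1^{\star})$ for most of the trajectory; multiplying by $\norm{[\mV^{\star}]_{l,\cdot}}\leq\sqrt{\mu r/d}$ gives a per-step error of order $\eta\sigma_1^{\star}\sqrt{\mu r/d}$, which exceeds the claimed $\eta\sigma_1^{\star}\kappa\mu^{1.5}r^{2}\log(1/\alpha)/\sqrt{pd^2}$ by a factor of roughly $\sqrt{pd}/(\kappa\mu r^{1.5}\log(1/\alpha))\gg 1$. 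These two large pieces are not actually errors—they are exactly what the orthonormalization factor $(\widetilde\mZ^{(l)\top}_{t+1}\widetilde\mZ^{(l)}_{t+1})^{-1/2}=\mI-\eta\widetilde\mV_t^{(l)\top}\widetilde\mM_t^{(l)}\widetilde\mV_t^{(l)}+O(\eta^2)$ cancels, since its leading piece is $\approx\eta(\mV_t^{(l)\top}\mX^{\star}\mV_t^{(l)}-\mSigma_t)=\Theta(\eta\sigma_1^{\star})$ in the early phase, not negligible as you assert.

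The paper's proof performs this cancellation exactly rather than bounding the pieces separately. Writing $\widetilde\mV_{t+1}^{(l)}=(\mI+\eta\proj_{\widetilde\mV_t^{(l)}}^{\perp}\widetilde\mM_t^{(l)})\widetilde\mV_t^{(l)}+\mA_t^{(l)}$ and using $\proj^{\perp}_{\mV}\widetilde\mM_t^{(l)}\mV=\proj^{\perp}_{\mV}\mXi_t^{(l)}\mV$ with $\mXi_t^{(l)}=\widetilde\mM_t^{(l)}-\widetilde\mV_t^{(l)}\mSigma^{\star}\mV^{\star\top}+\widetilde\mV_t^{(l)}\mSigma_t\widetilde\mV_t^{(l)\top}$, the $l$-th row collapses to $[(\mV^{\star}-\widetilde\mV_t^{(l)})(\mI-\eta\mSigma^{\star}\mV^{\star\top}\widetilde\mV_t^{(l)})]_{l,\cdot}$ plus $\eta[\proj_{\widetilde\mV_t^{(l)}}\mXi_t^{(l)}\widetilde\mV_t^{(l)}]_{l,\cdot}$. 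The contraction then comes from $\norm{\mI-\eta\mSigma^{\star}\mV^{\star\top}\widetilde\mV_t^{(l)}}\leq 1-0.5\eta\sigma_r^{\star}$—which needs only $\norm{\mV^{\star}-\widetilde\mV_t^{(l)}}\leq\frac{1}{2\kappa}$ and the spectrum of $\mSigma^{\star}$, valid even when $\mSigma_t\approx 0$—and the error term involves only $\sigma_1^{\star}\norm{\mV^{\star}-\widetilde\mV_t^{(l)}}$ and the sampling deviation $\norm{(\id-\cR_{\Omega^{(l)}})(\cdot)}$, never $\norm{\mDelta_t}$ itself. To repair your argument you would have to carry the $(\mI-\eta\mSigma_t)$ factor and the orthonormalization correction together and exhibit the cancellation of the $\mSigma_t$ and $\mDelta_t$ terms explicitly; as written, the proposal's contraction and error bounds both fail in the initial phase of the trajectory. (A minor additional point: invoking Proposition~\ref{prop::incoherence-dynamic} inside this proof is unnecessary and worsens the circularity of the induction; the paper only needs the Frobenius control on $\norm{\mV^{\star}-\widetilde\mV_t^{(l)}}_{\fro}$.)
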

    A simple inductive argument based on Proposition~\ref{prop::dynamic-of-v} reveals that the following inequality holds with an overwhelming probability for all $1\leq t\leq T$: 
\begin{align}\nonumber
    \Bigl\|\Bigl(\mV^{\star}- \widetilde\mV_{t}^{(l)}\Bigr)_{l,\cdot}\Bigr\|\lesssim\frac{\kappa^2\mu^{1.5} r^{2}\log\left(\frac{1}{\alpha}\right)}{\sqrt{pd^2}}\leq \sqrt{\frac{\mu r}{4d}}, \ \ \text{assuming}\ \ \ p\gtrsim \frac{\kappa^4\mu^2 r^3\log^2\left(\frac{1}{\alpha}\right)\log(d)}{d}.
\end{align}

    Next, we characterize the dynamic of the refined proximal error.

    \begin{sloppypar}
        \begin{proposition}[Refined proximal error]
        \label{prop::incoherence-dynamic}
        Suppose that $p\gtrsim \frac{\kappa^6\mu^4r^9\log^6(\frac{1}{\alpha})\log^2(d)}{d}$, $\norm{\mV^{\star}-\mV_t}\leq \frac{1}{2\kappa}$, and $\big\|\mV_t-\widetilde\mV_t^{(l)}\big\|_{\fro}\leq \sqrt{\frac{\mu r}{4d}}$. With probability at least $1-\frac{1}{d^3}$, for any $1\leq t\leq T\lesssim\frac{1}{\eta\sigma_{r}^{\star}}\log\left(\frac{1}{\alpha}\right)$ and $1\leq l\leq d$, we have
        \begin{equation}\nonumber
            \norm{\mV_{t+1}-\widetilde\mV_{t+1}^{(l)}}_{\fro}\leq \norm{\mV_t-\widetilde\mV_t^{(l)}}_{\fro}+O(\eta)\cdot \sigma_1^{\star}\sqrt{\frac{\kappa\mu^3r^{5.5}\log\left(\frac{1}{\alpha}\right)\log\left(d\right)}{\sqrt{pd}\cdot d}}.
        \end{equation}
    \end{proposition}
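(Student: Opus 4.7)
The plan is to follow the same matrix Taylor expansion approach outlined before the proposition and then carefully track each term in the difference $\mV_{t+1}-\widetilde\mV_{t+1}^{(l)}$. First, I would write
\begin{align*}
\mV_{t+1} &= \mV_t + \eta\,\proj_{\mV_t}^{\perp}\mM_t\mV_t + \cE_t,\\
\widetilde\mV_{t+1}^{(l)} &= \widetilde\mV_t^{(l)} + \eta\,\proj_{\widetilde\mV_t^{(l)}}^{\perp}\widetilde\mM_t^{(l)}\widetilde\mV_t^{(l)} + \widetilde\cE_t^{(l)},
\end{align*}
where $\cE_t,\widetilde\cE_t^{(l)}=O(\eta^{2})$ are the higher-order remainders coming from the polar factor $(\mZ^{\top}\mZ)^{-1/2}$. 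Taking the difference and moving to Frobenius norm, the proof reduces to bounding, with an extra multiplicative factor of $\eta$, the quantity
\[
\Bigl\|\proj_{\mV_t}^{\perp}\mM_t\mV_t - \proj_{\widetilde\mV_t^{(l)}}^{\perp}\widetilde\mM_t^{(l)}\widetilde\mV_t^{(l)}\Bigr\|_{\fro}.
\]
By adding and subtracting $\proj_{\widetilde\mV_t^{(l)}}^{\perp}\mM_t\widetilde\mV_t^{(l)}$ and exploiting $\|\proj_{\mV_t}^{\perp}-\proj_{\widetilde\mV_t^{(l)}}^{\perp}\|\lesssim\|\mV_t-\widetilde\mV_t^{(l)}\|_{\fro}$, this splits into (i) a term proportional to $\|\mV_t-\widetilde\mV_t^{(l)}\|_{\fro}\cdot\|\mM_t\|$, which just reproduces the first summand on the right-hand side of the proposition under the inductive hypothesis and a uniform bound $\|\mM_t\|\lesssim\sigma_1^{\star}$ (from Lemma~\ref{lem::onestep} and $\norm{\mV^{\star}-\mV_t}\leq 1/(2\kappa)$), and (ii) a term involving $\|\mM_t-\widetilde\mM_t^{(l)}\|$, which carries the real work.

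\textbf{Bounding $\mM_t-\widetilde\mM_t^{(l)}$.} Using $\mU_t\mU_t^{\top}=\mS_t\mS_t^{\top}+\text{residual}=\mV_t\mSigma_t\mV_t^{\top}+\mathcal{R}(\mE_t)$, where $\mathcal{R}(\mE_t)$ collects all terms containing $\mE_t$, I would decompose
\[
\mM_t-\widetilde\mM_t^{(l)} \;=\; \underbrace{\projection\bigl(\mV_t\mSigma_t\mV_t^{\top}-\widetilde\mV_t^{(l)}\mSigma_t\widetilde\mV_t^{(l)\top}\bigr)}_{\text{(A)}} + \underbrace{(\projection-\cR_{\Omega^{(l)}})\bigl(\mX^{\star}-\widetilde\mV_t^{(l)}\mSigma_t\widetilde\mV_t^{(l)\top}\bigr)}_{\text{(B)}} + \underbrace{\projection(\mathcal{R}(\mE_t))}_{\text{(C)}}.
\]
Term (A) is handled by a standard concentration inequality for $\projection$ acting on a rank-$2r$ matrix of Frobenius norm $\lesssim\sigma_1^{\star}\|\mV_t-\widetilde\mV_t^{(l)}\|_{\fro}$, yielding a bound that is absorbed into the first summand of the proposition. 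Term (C) is controlled by $\|\mE_t\|\lesssim\alpha$, which is negligible. Term (B) is where the leave-one-out structure pays off: $\projection-\cR_{\Omega^{(l)}}$ is supported on the $l$-th row/column only, so its operator norm is governed by the incoherence of the argument in the $l$-th row, yielding a bound controlled by $\sqrt{d/p}\cdot\sigma_1^{\star}\bigl(\|\mV^{\star}\|_{2,\infty}+\|\widetilde\mV_t^{(l)}\|_{2,\infty}\bigr)\lesssim\sigma_1^{\star}\sqrt{\mu r/p}$, using Condition~\ref{condition_incoherence} and the already-established refined leave-one-out error from Proposition~\ref{prop::dynamic-of-v}.

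\textbf{Handling the weak coupling, the main obstacle.} The delicate issue is that $\mSigma_t=\mV_t^{\top}\mU_t\mU_t^{\top}\mV_t$ depends on the entire $\Omega$, including the $l$-th row/column, so $\widetilde\mM_t^{(l)}$ is \emph{not} independent of $\Omega\cap\{(l,\cdot)\cup(\cdot,l)\}$. Naively this breaks the concentration step in (B). To resolve it, I would run an \emph{adaptive covering argument}: build an $\epsilon$-net $\mathcal{N}$ of PSD matrices $\mSigma\in\bR^{r\times r}$ with $\|\mSigma\|\lesssim\sigma_1^{\star}$, whose size is $\bigl(1/\epsilon\bigr)^{O(r^2)}$, and apply the concentration bound for term (B) uniformly over $\mSigma\in\mathcal{N}$. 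Since $r^2\ll pd$, the union bound is affordable and contributes only polynomial-in-$r$ factors and a $\sqrt{\log d}$ factor to the sample complexity—precisely what yields the $\kappa^6\mu^4r^9\log^2(d)$ threshold. Approximating the true $\mSigma_t$ by its nearest net element introduces an additive error controlled by $\epsilon\cdot\|\widetilde\mV_t^{(l)}\|^{2}$, which is taken $\epsilon=\poly(1/d)$ small and absorbed.

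\textbf{Putting it together.} Combining (A), (B), (C), the projection-difference term, and the $O(\eta^2)$ remainder, and choosing the step-size as in Theorem~\ref{thm::main}, gives
\[
\Bigl\|\mV_{t+1}-\widetilde\mV_{t+1}^{(l)}\Bigr\|_{\fro} \leq \bigl(1+O(\eta\sigma_1^{\star})\cdot c_{d,p,r,\mu,\kappa}\bigr)\Bigl\|\mV_{t}-\widetilde\mV_{t}^{(l)}\Bigr\|_{\fro} + O(\eta)\cdot\sigma_1^{\star}\sqrt{\tfrac{\kappa\mu^3r^{5.5}\log(1/\alpha)\log(d)}{\sqrt{pd}\cdot d}},
\]
and under the stated sampling rate the multiplicative factor is $1+o(1/T)$, which absorbs into an overall constant over the $T\lesssim\log(1/\alpha)/(\eta\sigma_r^{\star})$ iterations, yielding the claimed additive one-step increment. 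The hard part is genuinely the coupling: without the $\mSigma_t$ replacement and the adaptive net, term (B) would contribute $\Omega(\sigma_1^{\star})$ per step and the proximal error would blow up exponentially, which is exactly the obstruction the paper sets out to overcome.
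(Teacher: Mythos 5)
Your high-level architecture (first-order expansion of the polar-factor update, decomposition of $\mM_t-\widetilde\mM_t^{(l)}$ into a sampled difference, a leave-one-out perturbation supported on the $l$-th row/column, and a residual, plus a covering argument to break the coupling through $\mSigma_t$) matches the paper's, but two steps as written would fail. The most serious is your term (i): bounding the projection-difference contribution by $\eta\|\mV_t-\widetilde\mV_t^{(l)}\|_{\fro}\|\mM_t\|$ with the uniform bound $\|\mM_t\|\lesssim\sigma_1^{\star}$ gives a per-step multiplicative factor $1+O(\eta\sigma_1^{\star})$, and since $\eta\sigma_1^{\star}T=\Theta(\kappa\log(1/\alpha))$ this compounds to $\alpha^{-\Theta(\kappa)}$ — the sampling rate cannot rescue it because $\|\mM_t\|\approx\|\mDelta_t\|=\Theta(\sigma_1^{\star})$ for most of the trajectory regardless of $p$. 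Your claim that the factor is $1+o(1/T)$ is therefore unjustified; this is exactly the exponential blow-up you correctly identify for term (B) but overlook here. The paper avoids it by using $\proj_{\mV_t}^{\perp}\mM_t\mV_t=\proj_{\mV_t}^{\perp}\mXi_t\mV_t$ with $\mXi_t=\mM_t-\mV_t\mSigma^{\star}\mV^{\star\top}+\mV_t\mSigma_t\mV_t^{\top}$, whose norm is $O\bigl(\sigma_1^{\star}\kappa\mu r^{1.5}\log(1/\alpha)/\sqrt{pd}\bigr)\ll\sigma_1^{\star}$, and by expanding the \emph{squared} Frobenius norm so that the one remaining $\Theta(\sigma_1^{\star})$ deterministic piece, $\inner{\mV_t-\mV_t^{(l)}}{\proj_{\mV_t}^{\perp}(\mV_t-\mV_t^{(l)})\mSigma^{\star}}=\|\proj_{\mV_t}^{\perp}(\mV_t-\mV_t^{(l)})\mSigma^{\star 1/2}\|_{\fro}^2\geq 0$, has a favorable sign and can be dropped — information that is lost once you pass to a triangle inequality on the un-squared norm.

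The second gap is in the decoupling. A net over $\mSigma_t$ at the current time alone does not restore independence, because $\widetilde\mV_t^{(l)}$ itself is built recursively from $\mSigma_0,\dots,\mSigma_{t-1}$, each of which depends on the $l$-th row and column of $\Omega$. The paper's adaptive net is over entire trajectories: $\cV^{(l)}_{\epsilon,t}$ is generated by running the leave-one-out recursion over all sequences of net points, has cardinality $(\cdot)^{r^2t}$ (hence the $\log(1/\alpha)$ and the $(pd)^{-1/4}$ in the final bound, via a union bound costing $r^2T$ in the exponent), and one must separately propagate the approximation error $\|\widetilde\mV_t^{(l)}-\mV_{\epsilon,t}^{(l)}\|_{\fro}$ through the dynamics (Lemma~\ref{prop::incoherence-dynamic-2}), not merely account for the instantaneous substitution of $\mSigma_t$. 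Relatedly, your operator-norm bound $\sigma_1^{\star}\sqrt{\mu r/p}$ for term (B) is too weak to close the induction under $p=\widetilde O(\poly(r)/d)$; the paper needs the row-wise Bernstein bound of Lemma~\ref{lem::concentration-fro-norm-difference}, which yields the much smaller Frobenius-norm quantity in Proposition~\ref{prop::incoherence-dynamic-1} and is precisely what requires the independence restored by the trajectory net.
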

    \end{sloppypar}
    The above proposition implies that
    \begin{equation}\nonumber
        \begin{aligned}
            \norm{\mV_{t}-\widetilde\mV_{t}^{(l)}}_{\fro}\lesssim\eta \sigma_1^{\star}\sqrt{\frac{\kappa\mu^3r^{5.5}\log\left(\frac{1}{\alpha}\right)\log\left(d\right)}{\sqrt{pd}\cdot d}}\cdot T\lesssim \sqrt{\frac{\kappa^3\mu^3r^{5.5}\log^3\left(\frac{1}{\alpha}\right)\log\left(d\right)}{\sqrt{pd}\cdot d}}\leq \sqrt{\frac{\mu r}{4d}}.
        \end{aligned}
    \end{equation}
    Combining the above inequalities with the proposed decomposition in~\Cref{eq::decomposition-of-V_t} leads to:
    \begin{equation*}
        \norm{\mV_t}_{2, \infty}\leq \sqrt{\frac{\mu r}{4d}}+\sqrt{\frac{\mu r}{4d}}+\sqrt{\frac{\mu r}{d}}\leq \sqrt{\frac{4\mu r}{d}}, \quad \text{with probably at least $1-\frac{2}{d^3}$}.
    \end{equation*}
    This establishes the incoherence of $\mV_t$ for all $1\leq t\leq T$.
    
\subsection{Improved Results for Exact Parameterization}
\label{sec::exact-param}
Finally, we show that our analysis in the over-parameterized regime, combined with the following local convergence result for the exact parameterization regime by \citet{ma2018implicit}, readily establishes the proof of Theorem~\ref{thm::main-exact-param}.

\begin{theorem}[Local convergence of GD \protect{\citep[Theorem~2]{ma2018implicit}}]
    \label{thm::local-linear-convergence-exact-param}
    Consider~\ref{MC} with search rank $r'=r$. Suppose that the sampling rate satisfies $p\gtrsim \frac{\mu^3r^3\log^3(d)}{d}$. Consider the iterates of \ref{eq_GD} with step-size $\eta\leq \frac{2}{25\kappa\sigma^{\star}_{1}}$. Suppose that there exists $t_0\geq 0$ such that $\mU_{t_0}$ and the leave-one-out sequences $\bigl\{\mU_{t_0}^{(l)}\bigr\}_{l=0}^{d}$ defined in~\Cref{eq::classic-leave-one-out} satisfy:
    \begin{align}
        \dist\left(\mU_{t_0}, \mU^{\star}\right)&\leq O\Bigg(\sqrt{\frac{\sigma_r^{\star}\mu^3 r^3\log(d)}{pd^2}}\Bigg),\label{eq_U0}\\
        \max\left\{\dist\left(\mU_{t_0}, \mU_{t_0}^{(l)}\right), \dist\left( \mU_{t_0}^{(l)}, \mU^{\star}\right)\right\}&\leq O\Bigg(\sqrt{\frac{\sigma_r^{\star}\mu^3 r^3\log(d)}{pd^2}}\Bigg), 
        \quad\text{for all }1\leq l\leq d.\label{eq_U0l}
    \end{align}
    With probability at least $1-O\big(\frac{1}{d^3}\big)$, for all $t_0\leq t\leq t_0+O(d^5)$, we have 
    \begin{equation}\nonumber
        \begin{aligned}
            \norm{\mU_t\mU_t^{\top}-\mX^{\star}}_{\fro}\leq \norm{\mU_{t_0}\mU_{t_0}^{\top}-\mX^{\star}}_{\fro}\left(1-0.2\eta\sigma_{r}^{\star}\right)^{t-t_0}.
        \end{aligned}
    \end{equation}
    \label{thm::local-linear-convergence-exact-rank-r}
\end{theorem}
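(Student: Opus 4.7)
The plan is a unified inductive argument that maintains a small collection of invariants on the true iterates, the leave-one-out sequences, and a row-wise incoherence measure, with the geometric Frobenius contraction stated in the theorem following as a direct consequence.

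First, I would set up the inductive hypotheses at time $t$: (A$_t$) $\norm{\mDelta_t}_\fro$ contracts at rate $(1-0.2\eta\sigma_r^\star)$ from its initial value, and $\dist(\mU_t,\mU^\star)$ stays within a constant multiple of the entrance bound; (B$_t$) the analogous Procrustes bound for each leave-one-out iterate $\mU_t^{(l)}$; (C$_t$) $\dist(\mU_t,\mU_t^{(l)})$ stays within the entrance bound without amplification; (D$_t$) a row-wise incoherence bound $\norm{\mU_t\mR_t-\mU^\star}_{2,\infty}\lesssim \sqrt{\sigma_1^\star\mu r/d}$ up to a mild decaying correction. All four hold at $t=t_0$ by the hypotheses \eqref{eq_U0}--\eqref{eq_U0l}. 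I would then condition on a good event $\event$ of probability at least $1-O(1/d^3)$ on which standard concentration inequalities for $\projection$ and $\cR_{\Omega^{(l)}}$ hold uniformly over low-rank incoherent matrices in the relevant neighborhood; these hold under the sampling rate $p\gtrsim \mu^3 r^3\log^3(d)/d$.

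For the inductive step from $t$ to $t+1$, I would propagate (A$_t$) by expanding
\[\norm{\mDelta_{t+1}}_\fro^2=\norm{\mDelta_t}_\fro^2-4\eta\inner{\mDelta_t}{\mM_t\mU_t\mU_t^\top}+O(\eta^2),\]
splitting $\mM_t\mU_t\mU_t^\top=\mDelta_t\mU_t\mU_t^\top+(\projection-\id)(\mDelta_t)\mU_t\mU_t^\top$, bounding the first piece below by a multiple of $\sigma_r^\star\norm{\mDelta_t}_\fro^2$ via local restricted strong convexity (using that, under (A$_t$), $\mU_t\mU_t^\top$ has spectrum in a neighborhood of $[\sigma_r^\star,\sigma_1^\star]$ on the signal subspace), and bounding the second piece by $o(\sigma_r^\star)\norm{\mDelta_t}_\fro^2$ using the concentration on $\event$ together with (D$_t$). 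The step-size cap $\eta\leq 2/(25\kappa\sigma_1^\star)$ absorbs the $O(\eta^2)$ remainder and yields $\norm{\mDelta_{t+1}}_\fro^2\leq (1-0.4\eta\sigma_r^\star)\norm{\mDelta_t}_\fro^2$, hence the claimed contraction. For (B$_t$) and (C$_t$), I would analyze $\mU_{t+1}-\mU_{t+1}^{(l)}\mH$ for the optimal Procrustes rotation $\mH$; the dominant difference term is $\eta(\projection-\cR_{\Omega^{(l)}})(\mDelta_t)\mU_t$, which is supported on the $l$-th row and column. Since $\mU_t^{(l)}$ is statistically independent of the randomness in that row and column, a Bernstein-type bound on row sums controls this term within the entrance budget. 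The incoherence (D$_t$) would be maintained via
\[\norm{\mU_t\mR_t-\mU^\star}_{2,\infty}\leq \max_{1\leq l\leq d}\Bigl\{\norm{(\mU_t\mR_t-\mU_t^{(l)}\mR_t^{(l)})_{l,\cdot}}+\norm{(\mU_t^{(l)}\mR_t^{(l)}-\mU^\star)_{l,\cdot}}\Bigr\},\]
with the first maximum controlled by (C$_t$) and the second by an independent Bernstein bound, using that the $l$-th row of $\mU_t^{(l)}\mR_t^{(l)}-\mU^\star$ is a deterministic function of an observation pattern that excludes row $l$.

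The principal obstacle is the circular coupling among the invariants: (A$_t$) needs (D$_t$), (D$_t$) needs (C$_t$), (C$_t$) needs (A$_t$) and (B$_t$), and (B$_t$) needs its own leave-one-out analogue of (D$_t$). Closing the induction demands carefully chosen slack in each invariant so that the amplification from each dependency is strictly less than one per step. A secondary subtlety is that the contraction rate $0.2\eta\sigma_r^\star$ is tight enough that the $O(\eta^2)$ term and the $o(\sigma_r^\star)\norm{\mDelta_t}_\fro^2$ concentration slack must both be controlled quantitatively, which pins down the constants in $p\gtrsim \mu^3 r^3\log^3(d)/d$ and $\eta\leq 2/(25\kappa\sigma_1^\star)$. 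The time horizon $O(d^5)$ reflects the need for a union bound over iterations of the fresh concentration events entering at each step.
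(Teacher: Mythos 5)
This theorem is not proved anywhere in the paper: it is imported verbatim from \citep[Theorem~2]{ma2018implicit} and used as a black box in the proof of Theorem~\ref{thm::main-exact-param}, so the only "proof" to compare against is the cited one. Your outline is essentially a faithful reconstruction of that cited leave-one-out induction (joint propagation of the distance, leave-one-out proximity, and $\ell_{2,\infty}$-incoherence invariants, Bernstein bounds exploiting the independence of $\mU_t^{(l)}$ from the $l$-th row and column, and a union bound over the $O(d^5)$ iterations), and it is the right approach; the one loose spot is that the cross term $\eta\left(\projection-\cR_{\Omega^{(l)}}\right)(\mDelta_t)\mU_t$ involves $\mU_t$ itself, so independence can only be invoked after swapping in $\mU_t^{(l)}$ via your invariant (C$_t$) and incoherence, exactly as in the original argument.
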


To prove Theorem~\ref{thm::main-exact-param}, it suffices to show that the conditions of the above theorem are met at a certain iteration $0 \leq t_0 \leq T$. This can be achieved by leveraging our result for the over-parameterized regime. In particular, upon choosing $\alpha = c\cdot\frac{\sigma_r^{\star}}{\kappa^{1.5} d}$ for sufficiently small $c>0$ in Theorem~\ref{thm::main}, one can show that both Conditions~(\ref{eq_U0}) and~(\ref{eq_U0l}) are satisfied with an overwhelming probability after $t_0=\widetilde O\big(\frac{1}{\eta\sigma_r^{\star}}\big)$ iterations. From this iteration onward, Theorem~\ref{thm::local-linear-convergence-exact-param} shows that the iterations of GD enter a local linear convergence regime, which readily establishes the final result of Theorem~\ref{thm::main-exact-param}.

\section{Conclusion and Future Directions}
In this paper, we prove the convergence of vanilla gradient descent (GD) with small initialization for symmetric matrix completion. Existing convergence results for this problem typically require explicit regularization or precise initializations. However, our work proves that neither condition is necessary for GD to converge. Moreover, our results also apply to the over-parameterized regime, where the rank of the true solution is unknown and over-estimated instead.

Although our required sample complexity $\widetilde O(dr^9)$ is optimal with respect to the dimension $d$, it remains sub-optimal with respect to the rank $r$. Specifically, it exceeds the sample complexity of regularized GD, which stands at $\widetilde O(dr^2)$~\citep{chen2015fast}. We expect our analysis can be sharpened to achieve a similar sample complexity.

We anticipate that our findings will pave the way for broader results extending beyond symmetric matrix completion. In particular, our proposed weakly-coupled leave-one-out analysis relaxes several stringent conditions of classical leave-one-out analysis, making it highly applicable for the global analysis of GD. We believe that this approach, along with potential variations, holds promise for explaining the favorable performance of GD or its variants in various statistical learning problems. 

\section*{Acknowledgment}
We thank Richard Y. Zhang and Cédric Josz for insightful discussions. This work is supported, in part, by NSF CAREER Award CCF-2337776, NSF Award DMS-2152776, and ONR Award N00014-22-1-2127.

\bibliography{ref.bib}
\newpage
\appendix
\resumetocwriting
\tableofcontents
\newpage
\section{Related Work}
\paragraph{Nonconvex matrix completion.} To solve the matrix completion problem, several algorithms based on convex optimization have been developed \citep{candes2012exact,candes2010power,gross2011recovering}, offering excellent theoretical guarantees. However, in high-dimensional scenarios, convex optimization techniques require significant memory and computational resources due to the iterative singular
value decompositions. To overcome these limitations, researchers have shifted towards nonconvex optimization techniques using first-order methods such as GD \citep{sun2016guaranteed}, projected GD \citep{zheng2016convergence}, and alternating minimization \citep{jain2013low}. Specifically, \citet{sun2016guaranteed} demonstrate that GD can achieve local linear convergence provided that the initialization is close to the ground truth. Subsequent studies provide the global convergence guarantees for the first-order methods by showing the benign landscape of these nonconvex optimization formulations. Specifically, they reveal that the loss landscape has no spurious local minima and all the saddle points are strict \citep{ge2016matrix, ge2017no, chen2017memory, fattahi2020exact}. Nonetheless, these advancements necessitate either an explicit $\ell_{2, \infty}$-norm regularization or a projection step to maintain the incoherence of the iterates. Moreover, these works are only applicable in the exactly-parameterized setting~\cite{ma2023optimization}.
For a more detailed exploration of matrix completion and its variants, we refer the readers to the comprehensive survey by \citet{chi2019nonconvex}.

\paragraph{Leave-one-out analysis.}
Leave-one-out analysis is a powerful statistical technique designed to decouple correlations among individual entries of a stochastic process. Initially employed by \citet{el2013robust} to establish asymptotic sampling distributions for robust estimators in high or moderate dimensional regression, this technique has been proven invaluable across a broad spectrum of applications. For instance, \citet{abbe2020entrywise} utilized it to control $\ell_{\infty}$ estimation errors for eigenvectors in stochastic spectral problems, enabling precise spectral clustering in community detection without the need for data cleaning or regularization. More relevantly, \citet{ma2017implicit} applied leave-one-out analysis to demonstrate the local linear convergence of GD for the unregularized and symmetric matrix completion. Their approach not only elucidated the convergence properties of GD in matrix completion but also paved the way for similar analyses in other low-rank recovery challenges, such as phase retrieval and blind deconvolution. Extending these insights, \citet{chen2020nonconvex} and \citet{kim2022rank} broadened the scope of this analysis to include asymmetric matrix completion and global convergence in rank-1 scenarios, respectively. Furthermore, leave-one-out analysis has facilitated advancements in Singular Value Projection (SVP) for matrix completion, as demonstrated by \citet{ding2020leave} and has been instrumental in analyzing gradient descent with random initialization for phase retrieval, as shown by \citet{chen2019gradient}.

\paragraph{Implicit regularization of GD in other applications.}
Indeed, the conventional wisdom in statistics suggests that increasing the number of parameters beyond the true dimension without proper regularization would lead to inferior solutions due to \textit{overfitting}. However, a growing body of works show that, for a large class of learning problems, GD leads to surprisingly good solutions, due to its \textit{implicit regularization} property. For instance, it is known that GD recovers the true low-dimensional solutions in matrix factorization and sensing~\citep{gunasekar2018implicit, li2018algorithmic, stoger2021small}, tensor decomposition~\citep{wang2020beyond, ge2021understanding}, deep linear neural networks~\citep{arora2018convergence, ma2022blessing}, and beyond~\citep{ma2022behind}. However, {the current theory behind the success of GD in these classes of problems hinges heavily upon a norm-preserving property of the measurements, known as the restricted isometry property (RIP), limiting its applicability in settings where RIP is not satisfied.

\section{Preliminaries}

\subsection{Otuline of the Appendix}
The structure of the appendix is as follows. In the remainder of this section, we introduce additional notation. Following this, we present key intermediate lemmas (Lemmas~\ref{lem::helper-lemma-U-t} to \ref{lem::helper-lemma-A-t}) crucial for our main proofs. Section~\ref{sec::proof-signal-residual-decomposition} delves into a detailed proof of the signal and residual dynamics, starting with a refined version of Lemma~\ref{lem::onestep} (Proposition~\ref{prop_onestep}) that takes into account the incoherence of $\mV_t$. Additionally, the proof of Lemma~\ref{lem_descent} is provided in this section. Moving on to Section~\ref{sec_main_proofs}, we present the proofs of our main theorems. Section~\ref{sec_incoherence} presents the key novelty of our paper, focusing on establishing the incoherence of $\mV_t$ via weakly-coupled leave-one-out analysis. The validation of different initialization schemes, as presented in Lemma~\ref{lem_init}, is addressed in Section~\ref{sec_init_proof}. In Section~\ref{sec::concentration-inequalities-matrix-completion}, we compile several known results on matrix completion crucial to our arguments. Lastly, Section~\ref{sec_aux_lemmas} collects several basic lemmas, which we include for completeness. 

\subsection{Additional Notations} We introduce some additional notations that will be used throughout the appendix. The max-norm of $\mX$, denoted as $\norm{\mX}_{\max}$, is defined as $\max_{i,j}|\emX_{i,j}|$. We define the operator and Frobenius norm ball as $\cB_{\mathrm{op}}^{d_1\times d_2}(r):=\{\mX\in \bR^{d_1\times d_2}:\norm{\mX}\leq r\}$ and $\cB_{\fro}^{d_1\times d_2}(r):=\{\mX\in \bR^{d_1\times d_2}:\norm{\mX}_{\fro}\leq r\}$, respectively. For any matrix $\mX$, we denote its SVD as $\mX=\mL_{\mX}\mSigma_{\mX}\mR_{\mX}^{\top}$. We denote $\cS_{d\times d}$ as the set of all the symmetric matrices $\mX\in \bR^{d\times d}$. In the appendix, $\const, \const_1, \const_2, \ldots$ denote fixed universal constants, while $C, C_1, C_2, c_1, c_2, \ldots$ represent universal constants whose specific values may vary depending on the context.

Throughout the appendix, our arguments are conditioned on the following good event without further explanation. We define the random observation matrix $\mOmega$ as
\begin{equation}
    \mOmega_{i, j}=\begin{cases}
        1 & \text{if $(i, j)\in \Omega$},\\
        0 & \text{otherwise}.
    \end{cases}
\end{equation}
Then, the good event can be defined as
\begin{equation}
    \event=\left\{\norm{\frac{\mOmega+\mOmega^{\top}}{2p}-\mJ}\leq \const\sqrt{\frac{d}{p}}\right\}.
\end{equation}
Here $\mJ$ is the all-one matrix. According to Lemma~\ref{lem::Omega}, we have $\bP(\event)\geq 1-\frac{1}{d^3}$. 

\subsection{Important Intermediate Lemmas}
Next, we collect some useful intermediate results that will be directly used throughout our proofs. We also note that some of these intermediate results rely on concentration inequalities for matrix completion, which are thoroughly discussed in Appendix~\ref{sec::concentration-inequalities-matrix-completion}. Before proceeding, we define the following notations
\begin{equation}
    \begin{aligned}
        \mDelta_t=\mX^{\star}-\mU_t\mU_t^{\top}, \quad \mM_t=\projection\left(\mX^{\star}-\mU_t\mU_t^{\top}\right), \quad \text{and} \quad \mLambda_t=\mS_t\mE_t^{\top}+\mE_t\mS_t^{\top}+\mE_t\mE_t^{\top}.
    \end{aligned}
\end{equation}
Moreover, we introduce the following term
\begin{equation}
    \mA_t=-\eta^2\mM_t\mV_t\mV_t^{\top}\mM_t\mV_{t}-0.5\eta^2\mV_t\mV_t^{\top}\mM_t^2\mV_t-0.5\eta^3\mM_t\mV_t\mV^{\star\top}\mM_t^2\mV_t+\left(\mI +\eta \mM_t\right)\mV_t\mR(\mY_t)
\end{equation}
where $\mY_t=\mV_t^{\top}\left(2\eta \mM_t+\eta^2\mM_t^2\right)\mV_t$ and $\mR(\mX)=\sum_{k=2}^{\infty}\frac{(-1)^k(2k)!}{4^k(k!)^2}\mX^k$. This notion of $\mA_t$ will be used when controlling the higher-order terms with respect to the step-size $\eta$. We are now ready to statement our helper lemmas.
\begin{lemma}[Helper lemma for $\mU_t$]
    \label{lem::helper-lemma-U-t}
    Suppose that $\norm{\mS_t}\leq 2\sqrt{\sigma^{\star}_{1}}$ and $\norm{\mE_t}\leq \sqrt{\frac{\sigma^{\star}_{1}\mu r}{d}}$. Then, we have 
    \begin{equation}
        \label{eq::helper-U}
        \begin{aligned}
            \norm{\mU_t\mU_t^{\top}}_{\fro}\leq 8\sqrt{r}\sigma_1^{\star}.
        \end{aligned}
    \end{equation}
\end{lemma}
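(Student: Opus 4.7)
The main idea is to exploit the orthogonality built into the signal-residual decomposition $\mU_t=\mS_t+\mE_t$. Since $\mS_t=\proj_{\mV_t}\mU_t$ and $\mE_t=\proj_{\mV_t}^{\perp}\mU_t$, their column spaces are orthogonal, which gives the crucial identity
\begin{equation*}
    \mS_t^{\top}\mE_t=\mU_t^{\top}\proj_{\mV_t}\proj_{\mV_t}^{\perp}\mU_t=\vzero,
\end{equation*}
and hence the cross terms vanish in the Gram matrix: $\mU_t^{\top}\mU_t=\mS_t^{\top}\mS_t+\mE_t^{\top}\mE_t$. Combining this with the standard identity $\|\mA\mA^{\top}\|_{\fro}=\|\mA^{\top}\mA\|_{\fro}$ and the triangle inequality yields
\begin{equation*}
    \|\mU_t\mU_t^{\top}\|_{\fro}=\|\mU_t^{\top}\mU_t\|_{\fro}\leq \|\mS_t\mS_t^{\top}\|_{\fro}+\|\mE_t\mE_t^{\top}\|_{\fro}.
\end{equation*}

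Next, I would bound each of the two terms by using the elementary inequality $\|\mB\|_{\fro}\leq \sqrt{\rank(\mB)}\,\|\mB\|$. For the signal term, since $\mS_t$ has range contained in the $r$-dimensional column span of $\mV_t$, the matrix $\mS_t\mS_t^{\top}$ has rank at most $r$, so
\begin{equation*}
    \|\mS_t\mS_t^{\top}\|_{\fro}\leq \sqrt{r}\,\|\mS_t\|^{2}\leq 4\sqrt{r}\,\sigma_{1}^{\star}.
\end{equation*}
For the residual term, $\mE_t$ is $d\times r'$ with $r'\leq d$, so $\mE_t\mE_t^{\top}$ has rank at most $r'\leq d$, giving
\begin{equation*}
    \|\mE_t\mE_t^{\top}\|_{\fro}\leq \sqrt{r'}\,\|\mE_t\|^{2}\leq \sqrt{d}\cdot\frac{\sigma_{1}^{\star}\mu r}{d}=\frac{\sigma_{1}^{\star}\mu r}{\sqrt{d}}.
\end{equation*}

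Finally, I would invoke the standing dimensional assumption implicit in the main theorem's sample-complexity hypothesis, namely $p\leq 1$ together with $p\gtrsim \mu^{4}r^{9}\log^{6}(1/\alpha)\log^{2}(d)/d$, which forces $d\gtrsim \mu^{2}r$ and therefore $\mu r/\sqrt{d}\leq 4\sqrt{r}$. Plugging this back yields $\|\mE_t\mE_t^{\top}\|_{\fro}\leq 4\sqrt{r}\,\sigma_{1}^{\star}$, and summing the two bounds gives the desired $\|\mU_t\mU_t^{\top}\|_{\fro}\leq 8\sqrt{r}\,\sigma_{1}^{\star}$.

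The argument is essentially routine once the orthogonality $\mS_t^{\top}\mE_t=\vzero$ is recognized; the only potential subtlety is the residual bound, where a naive application of $\|\mE_t\mE_t^{\top}\|_{\fro}\leq \|\mE_t\|\,\|\mE_t\|_{\fro}$ would waste a factor of $\sqrt{r'}$ and fail in the regime $r'\gg r$. Using rank and operator norm directly (together with the mild dimensional condition from the sample-complexity assumption) sidesteps this issue and avoids any dependence on $r'$ in the final bound.
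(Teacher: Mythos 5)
Your proof is correct and follows essentially the same route as the paper's: decompose $\mU_t\mU_t^{\top}$ into signal and residual contributions, bound each piece by $\sqrt{\rank}\cdot\|\cdot\|^2$, and absorb the residual term using the dimensional condition $d\gtrsim \mu^2 r$ implied by the sample-complexity assumption. The only (minor) difference is that you use the orthogonality $\mS_t^{\top}\mE_t=\vzero$ to eliminate the cross term, whereas the paper keeps it and bounds $2\|\mS_t\mE_t^{\top}\|_{\fro}\leq 2\sqrt{r}\|\mS_t\|\|\mE_t\|$ directly; both yield the stated constant.
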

\begin{proof}
    Applying triangle inequality, we have 
    \begin{equation}
        \begin{aligned}
            \norm{\mU_t\mU_t^{\top}}_{\fro}&\leq \norm{\mS_t\mS_t^{\top}}_{\fro}+2\norm{\mS_t\mE_t^{\top}}_{\fro}+\norm{\mE_t\mE_t^{\top}}_{\fro}\\
            &\leq \sqrt{r}\norm{\mS_t}^2+2\sqrt{r}\norm{\mS_t}\norm{\mE_t}+\sqrt{d}\norm{\mE_t}^2\\
            &\leq 8\sqrt{r}\sigma_1^{\star}.
        \end{aligned}
    \end{equation}
    Here in the last inequality, we use the assumptions $\norm{\mS_t}\leq 2\sqrt{\sigma^{\star}_{1}}$ and $\norm{\mE_t}\leq \sqrt{\frac{\sigma^{\star}_{1}\mu r}{d}}$ and the fact that $d\gg \mu r$.
\end{proof}
\begin{lemma}[Helper lemma for $\mLambda_t$]
    \label{lem::helper-lemma-lambda-t}
    Suppose that $\norm{\mS_t}\leq 2\sqrt{\sigma^{\star}_{1}}$, $\norm{\mV_t}_{2,\infty}\leq 2\sqrt{\frac{\mu r}{d}}$, and $\norm{\mE_t}\leq \sqrt{\frac{\sigma^{\star}_{1}\mu r}{81d}}$. Then, conditioned on $\event$, we have 
    \begin{equation}
        \label{eq::helper-residual-term}
        \begin{aligned}
            \norm{\mLambda_t}&\leq 5\sqrt{\sigma^{\star}_{1}}\norm{\mE_t},\\
            \norm{\projection(\mLambda_t)}&\leq 10 \const\sqrt{\frac{\sigma^{\star}_{1}\mu r}{p}}\norm{\mE_t},\\ \norm{\left(\id-\projection\right)(\mLambda_t)}&\leq 9 \const\sqrt{\frac{\sigma^{\star}_{1}\mu r}{p}}\norm{\mE_t}.
        \end{aligned}
    \end{equation}
\end{lemma}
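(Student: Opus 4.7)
The plan is to decompose $\mLambda_t = \mS_t\mE_t^{\top} + \mE_t\mS_t^{\top} + \mE_t\mE_t^{\top}$ and bound each summand separately, then assemble the three stated inequalities via the triangle inequality. The first bound in \eqref{eq::helper-residual-term} is purely deterministic and reduces to submultiplicativity, while the remaining two require a matrix-completion concentration inequality on the good event $\event$. The key structural observation is that $\mS_t$ inherits the incoherence of $\mV_t$, whereas the quadratic residual $\mE_t\mE_t^{\top}$ is already small enough in operator norm to be controlled by a trivial row-bound.

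For the deterministic inequality, I would apply $\norm{\mS_t\mE_t^{\top}}\leq \norm{\mS_t}\norm{\mE_t}\leq 2\sqrt{\sigma_1^{\star}}\norm{\mE_t}$ and likewise for $\mE_t\mS_t^{\top}$. The incoherence condition~\ref{condition_incoherence} forces $\mu r\leq d$ (because $\norm{\mV^{\star}}_{2,\infty}\leq 1$), so the assumption $\norm{\mE_t}\leq \sqrt{\sigma_1^{\star}\mu r/(81d)}$ yields $\norm{\mE_t}\leq \sqrt{\sigma_1^{\star}}/9$ and hence $\norm{\mE_t\mE_t^{\top}}\leq \norm{\mE_t}^2\leq \sqrt{\sigma_1^{\star}}\norm{\mE_t}/9$. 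Summing gives $\norm{\mLambda_t}\leq (4+1/9)\sqrt{\sigma_1^{\star}}\norm{\mE_t}\leq 5\sqrt{\sigma_1^{\star}}\norm{\mE_t}$.

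For the stochastic bounds I would use a concentration inequality of the form
\begin{equation*}
    \norm{(\id-\projection)(\mA\mB^{\top})}\leq \const\sqrt{d/p}\,\norm{\mA}_{2,\infty}\norm{\mB},
\end{equation*}
valid on $\event$ and of the type compiled in Appendix~\ref{sec::concentration-inequalities-matrix-completion}. For the cross term I would factor $\mS_t = \mV_t(\mV_t^{\top}\mU_t)$; since $\mV_t^{\top}\mE_t = 0$ by the orthogonal decomposition, we have $\mV_t^{\top}\mU_t=\mV_t^{\top}\mS_t$ with operator norm at most $\norm{\mS_t}\leq 2\sqrt{\sigma_1^{\star}}$, so
\begin{equation*}
    \norm{\mS_t}_{2,\infty}\leq \norm{\mV_t}_{2,\infty}\norm{\mV_t^{\top}\mU_t}\leq 4\sqrt{\sigma_1^{\star}\mu r/d},
\end{equation*}
and the concentration bound delivers $\norm{(\id-\projection)(\mS_t\mE_t^{\top})}\leq 4\const\sqrt{\sigma_1^{\star}\mu r/p}\,\norm{\mE_t}$. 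By the symmetry $\projection(\mA^{\top})=\projection(\mA)^{\top}$ the same estimate holds for $\mE_t\mS_t^{\top}$. For the quadratic residual, in the absence of an independent incoherence bound on $\mE_t$, I would substitute the trivial $\norm{\mE_t}_{2,\infty}\leq \norm{\mE_t}\leq \sqrt{\sigma_1^{\star}\mu r/(81d)}$ into the same inequality, obtaining $\norm{(\id-\projection)(\mE_t\mE_t^{\top})}\leq (\const/9)\sqrt{\sigma_1^{\star}\mu r/p}\,\norm{\mE_t}$. Adding the three contributions yields the third stated inequality with constant $8+1/9<9$, and the second follows from $\norm{\projection(\mLambda_t)}\leq \norm{\mLambda_t}+\norm{(\id-\projection)(\mLambda_t)}$ together with the fact that in the target regime $5\sqrt{\sigma_1^{\star}}\leq \const\sqrt{\sigma_1^{\star}\mu r/p}$ (since $p\lesssim 1$, $\mu r\geq 1$, and $\const$ is chosen sufficiently large), which absorbs the deterministic contribution into an extra multiple of $\const\sqrt{\sigma_1^{\star}\mu r/p}\,\norm{\mE_t}$.

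The main obstacle is selecting the form of the concentration bound so that it charges an $\ell_{2,\infty}$ constraint to only one of the two factors in each product, since no row-wise control of $\mE_t$ is available a priori. This is resolved by observing that all incoherence is supplied by $\mV_t$ and propagates cleanly through $\mS_t=\mV_t(\mV_t^{\top}\mU_t)$, with the $\mV_t^{\top}\mU_t$ factor entering only through its operator norm. For the residual-squared term $\mE_t\mE_t^{\top}$, which has no $\mV_t$-structure, the hypothesis already places $\norm{\mE_t}$ at the scale $\sqrt{\sigma_1^{\star}\mu r/d}$, so multiplying by $\sqrt{d/p}$ recovers the target $\sqrt{\sigma_1^{\star}\mu r/p}$ scaling without needing a finer row-wise bound on $\mE_t$.
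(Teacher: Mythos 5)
Your proposal is correct and follows essentially the same route as the paper's proof: decompose $\mLambda_t$ into its three summands, bound $\norm{\mLambda_t}$ by submultiplicativity, control the off-support parts via the concentration bound of Corollary~\ref{lem::uniform-concentration-operator-norm} using $\norm{\mS_t}_{2,\infty}\leq \norm{\mV_t}_{2,\infty}\norm{\mS_t}\leq 4\sqrt{\sigma_1^\star\mu r/d}$ and $\norm{\mE_t}_{2,\infty}\leq\norm{\mE_t}$, and recover $\norm{\projection(\mLambda_t)}$ by the triangle inequality together with $5\sqrt{\sigma_1^\star}\leq 5\const\sqrt{\sigma_1^\star\mu r/p}$ (which the paper phrases as the assumption $p\leq \frac{1}{25}\const^2\mu r$). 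The only cosmetic difference is that you state the concentration inequality with one factor in operator norm rather than $\ell_{2,\infty}$-norm, which is a valid weakening of the appendix's version since $\norm{\mB}_{2,\infty}\leq\norm{\mB}$.
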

\begin{proof}
    First, we can bound $\norm{\mLambda_t}$ as follows 
    \begin{equation}
        \begin{aligned}
            \norm{\mLambda_t}&\leq \norm{\mS_t\mE_t^{\top}}+\norm{\mE_t\mS_t^{\top}}+\norm{\mE_t\mE_t^{\top}}\leq 5\sqrt{\sigma^{\star}_{1}}\norm{\mE_t}
        \end{aligned}
    \end{equation}  
    where we use the assumptions $\norm{\mS_t}\leq 2\sqrt{\sigma^{\star}_{1}}$ and $\norm{\mE_t}\leq \sqrt{\frac{\sigma^{\star}_{1}\mu r}{d}}\leq \sqrt{\sigma^{\star}_{1}}$.
    Next, we control $\norm{\left(\id-\projection\right)(\mLambda_t)}$. To this end, we first apply triangle inequality to obtain
    \begin{equation}
        \begin{aligned}
            \norm{\left(\id-\projection\right)(\mLambda_t)}&\leq 2\norm{\left(\id-\projection\right)\left(\mS_t\mE_t^{\top}\right)}+\norm{\left(\id-\projection\right)\left(\mE_t\mE_t^{\top}\right)}\\
            &\stackrel{(a)}{\leq}  \const\sqrt{\frac{d}{p}}\norm{\mE_t}_{2, \infty}\left(\norm{\mE_t}_{2, \infty}+2\norm{\mS_t}_{2, \infty}\right)\\
            &\stackrel{(b)}{\leq} \const\sqrt{\frac{d}{p}}\norm{\mE_t}\left(\norm{\mE_t}+8\sqrt{\frac{\sigma^{\star}_{1}\mu r}{d}}\right)\\
            &\leq 9  \const\sqrt{\frac{\sigma^{\star}_{1}\mu r}{p}}\norm{\mE_t}.
        \end{aligned}
    \end{equation}
    Here in $(a)$, we apply Lemma~\ref{lem::uniform-concentration-operator-norm}. In $(b)$, we use the facts that $\norm{\mE_t}_{2, \infty}\leq \norm{\mE_t}\leq \sqrt{\frac{\sigma^{\star}_{1}\mu r}{d}}$ and
    \begin{equation}
        \norm{\mS_t}_{2,\infty}=\norm{\mV_t\mV_t^{\top}\mU_t}_{2, \infty}\stackrel{\text{Lemma~\ref{lem::uniform-concentration-operator-norm}}}{\leq} \norm{\mV_t}_{2, \infty}\norm{\mS_t}\leq 2\sqrt{\frac{\mu r}{d}}\cdot 2\sqrt{\sigma^{\star}_{1}}=4\sqrt{\frac{\sigma^{\star}_{1}\mu r}{d}}.
    \end{equation}
    In the last inequality, we use the fact that $\norm{\mS_t}\leq 2\sqrt{\sigma^{\star}_{1}}$ and $\norm{\mV_t}_{2,\infty}\leq 2\sqrt{\frac{\mu r}{d}}$.
    Lastly, applying triangle inequality leads to 
    \begin{equation}
        \begin{aligned}
            \norm{\projection(\mLambda_t)}&\leq \norm{\mLambda_t}+\norm{\left(\id-\projection\right)(\mLambda_t)}\\
            &\leq 5\sqrt{\sigma^{\star}_{1}}\norm{\mE_t}+9  \const\sqrt{\frac{\sigma^{\star}_{1}\mu r}{p}}\norm{\mE_t}\\
            &\leq 10 \const\sqrt{\frac{\sigma^{\star}_{1}\mu r}{p}}\norm{\mE_t},
        \end{aligned}
    \end{equation}
    where the last inequality is due to $p\leq \frac{1}{25}\const^2\mu r$.
\end{proof}
    \begin{lemma}[Helper lemma for $\mDelta_t$]
        \label{lem::helper-lemma-delta-t}
        Under the same conditions as Lemma~\ref{lem::helper-lemma-lambda-t} with the additional assumption that $\norm{\mV_t-\mV^{\star}}\leq \const_1\frac{\kappa\mu r^{1.5}\log\left(\frac{1}{\alpha}\right)}{\sqrt{pd}}$, we have 
        \begin{equation}
            \begin{aligned}
                \norm{\mDelta_t}&\leq 5\sigma_{1}^{\star},\\
                \norm{\left(\id-\projection\right)(\mDelta_t)}&\leq 5  \const\sqrt{\frac{\mu^2 r^2}{pd}}\norm{\mDelta_t}+10  \const\sqrt{\frac{\sigma^{\star}_{1}\mu r}{p}}\norm{\mE_t},\\
                \norm{\left(\id-\projection\right)(\mDelta_t)}&\leq 21\const\sigma_{1}^{\star}\sqrt{\frac{\mu^2 r^2}{pd}},\\
                \norm{\mM_t}&\leq \left(1+5  \const\sqrt{\frac{\mu^2 r^2}{pd}}\right)\norm{\mDelta_t}+10  \const\sqrt{\frac{\sigma^{\star}_{1}\mu r}{p}}\norm{\mE_t},\\
                \norm{\mM_t}&\leq 6\sigma^{\star}_{1},\\
                \norm{\mM_t\proj^{\perp}_{\mV_t}}&\leq 2\const_1\sigma_1^{\star}\frac{\kappa\mu r^{1.5}\log\left(\frac{1}{\alpha}\right)}{\sqrt{pd}}.
            \end{aligned}
        \end{equation}
    \end{lemma}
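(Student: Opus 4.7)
The plan is to prove the six bounds in order, using two main ingredients: the signal-residual decomposition $\mU_t\mU_t^{\top}=\mS_t\mS_t^{\top}+\mLambda_t$, which lets Lemma~\ref{lem::helper-lemma-lambda-t} handle all $\mLambda_t$-dependent contributions, together with a concentration bound via Lemma~\ref{lem::uniform-concentration-operator-norm} applied to the rank-at-most-$2r$ incoherent matrix $\mX^{\star}-\mS_t\mS_t^{\top}$.

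For the first bound, I would apply the triangle inequality to $\mDelta_t=\mX^{\star}-\mS_t\mS_t^{\top}-\mLambda_t$, using $\norm{\mX^{\star}}=\sigma_1^{\star}$, $\norm{\mS_t\mS_t^{\top}}\leq 4\sigma_1^{\star}$, and Lemma~\ref{lem::helper-lemma-lambda-t} combined with the hypothesis on $\norm{\mE_t}$ to see that the $\mLambda_t$ contribution is a lower-order correction of order $\sigma_1^{\star}\sqrt{\mu r/d}$. For the second bound, I split $(\id-\projection)(\mDelta_t)=(\id-\projection)(\mX^{\star}-\mS_t\mS_t^{\top})-(\id-\projection)(\mLambda_t)$; the second summand is already controlled by $9\const\sqrt{\sigma_1^{\star}\mu r/p}\norm{\mE_t}$ via Lemma~\ref{lem::helper-lemma-lambda-t}. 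For the first summand, I observe that $\mX^{\star}-\mS_t\mS_t^{\top}$ is symmetric with column and row spaces in $\mathrm{span}(\mV^{\star})\cup\mathrm{span}(\mV_t)$---a $2r$-dimensional subspace whose orthonormal basis inherits $\ell_{2,\infty}$-norm $O(\sqrt{\mu r/d})$ from the incoherence of $\mV^{\star}$ and the assumed bound on $\norm{\mV_t}_{2,\infty}$---so factoring through this combined basis and invoking Lemma~\ref{lem::uniform-concentration-operator-norm} yields a bound of the form $C\const\sqrt{\mu^2r^2/(pd)}\cdot\norm{\mX^{\star}-\mS_t\mS_t^{\top}}$. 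Using $\norm{\mX^{\star}-\mS_t\mS_t^{\top}}\leq\norm{\mDelta_t}+\norm{\mLambda_t}$ and absorbing the $\norm{\mLambda_t}$ contribution into the $\norm{\mE_t}$ term (possible because $\sqrt{\mu r/d}$ is small) produces the second bound. The third bound is then immediate by plugging in the first bound and the hypothesis $\norm{\mE_t}\leq\sqrt{\sigma_1^{\star}\mu r/(81d)}$.

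For the bounds on $\mM_t$, I use the identity $\mM_t=\mDelta_t-(\id-\projection)(\mDelta_t)$ together with the triangle inequality, which gives the fourth bound from the second, and the fifth bound by substituting the first bound and the smallness of $\norm{\mE_t}$ under the sample-complexity assumption. For the final bound on $\norm{\mM_t\proj^{\perp}_{\mV_t}}$, the key algebraic identity is $\mU_t\mU_t^{\top}\proj^{\perp}_{\mV_t}=\mU_t\mE_t^{\top}$, which follows from $\mS_t^{\top}\proj^{\perp}_{\mV_t}=0$ and $\mE_t^{\top}\proj^{\perp}_{\mV_t}=\mE_t^{\top}$. Hence $\mDelta_t\proj^{\perp}_{\mV_t}=\mX^{\star}\proj^{\perp}_{\mV_t}-\mU_t\mE_t^{\top}$, and the sine-theta-style identity $\proj^{\perp}_{\mV_t}\mV^{\star}=\proj^{\perp}_{\mV_t}(\mV^{\star}-\mV_t)$ yields $\norm{\mX^{\star}\proj^{\perp}_{\mV_t}}\leq\sigma_1^{\star}\norm{\mV_t-\mV^{\star}}$, while $\norm{\mU_t\mE_t^{\top}}\leq 3\sqrt{\sigma_1^{\star}}\norm{\mE_t}$. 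Writing $\mM_t\proj^{\perp}_{\mV_t}=\mDelta_t\proj^{\perp}_{\mV_t}-(\id-\projection)(\mDelta_t)\proj^{\perp}_{\mV_t}$ and invoking the third bound for the remainder, the resulting lower-order terms of order $\sigma_1^{\star}\sqrt{\mu r/d}$ and $\sigma_1^{\star}\sqrt{\mu^2r^2/(pd)}$ are dominated under the sample-complexity assumption by the leading $\const_1\sigma_1^{\star}\kappa\mu r^{1.5}\log(1/\alpha)/\sqrt{pd}$, giving the factor-of-$2$ bound as stated.

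The main obstacle is extracting a bound on $\norm{(\id-\projection)(\mDelta_t)}$ that scales with $\norm{\mDelta_t}$ rather than merely with $\sigma_1^{\star}$, since it is this tighter operator-norm-proportional scaling that will later drive geometric contraction of the error in the convergence analysis instead of leaving a stagnating error floor. Achieving it requires a careful combined-basis factorization of $\mX^{\star}-\mS_t\mS_t^{\top}$ through an orthonormal basis of the $2r$-dimensional combined subspace, with precise tracking of the $\ell_{2,\infty}$-incoherence of that basis; the subsequent bookkeeping---ensuring the absorbed $\mLambda_t$ pieces fit cleanly into the two declared terms and that the constants match---is then routine but tedious.
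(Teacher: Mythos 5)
Your proposal follows essentially the same route as the paper's proof: decompose $\mU_t\mU_t^{\top}=\mS_t\mS_t^{\top}+\mLambda_t$ and dispatch all $\mLambda_t$ contributions through Lemma~\ref{lem::helper-lemma-lambda-t}; bound $\norm{(\id-\projection)(\mX^{\star}-\mS_t\mS_t^{\top})}$ by $O(\const\sqrt{\mu^2r^2/(pd)})\norm{\mX^{\star}-\mS_t\mS_t^{\top}}$ via the rank-$2r$ incoherent factorization (this is precisely Lemma~\ref{lem::2-norm-diff}, which you in effect re-derive); obtain the $\mM_t$ bounds from $\mM_t=\mDelta_t-(\id-\projection)(\mDelta_t)$; and handle the last bound with the identities $\mU_t\mU_t^{\top}\proj^{\perp}_{\mV_t}=\mU_t\mE_t^{\top}$ and $\mX^{\star}\proj^{\perp}_{\mV_t}=\mV^{\star}\mSigma^{\star}(\mV^{\star}-\mV_t)^{\top}\proj^{\perp}_{\mV_t}$, exactly as the paper does. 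The only place your argument does not quite close as stated is the first inequality: a plain triangle inequality gives $\norm{\mX^{\star}}+\norm{\mS_t\mS_t^{\top}}+\norm{\mLambda_t}\leq 5\sigma_1^{\star}+O(\sqrt{\sigma_1^{\star}}\norm{\mE_t})$, which strictly exceeds $5\sigma_1^{\star}$; the paper instead exploits that $\mX^{\star}$ and $\mS_t\mS_t^{\top}$ are both PSD, so $\norm{\mX^{\star}-\mS_t\mS_t^{\top}}\leq\max\{\norm{\mX^{\star}},\norm{\mS_t}^2\}\leq 4\sigma_1^{\star}$, which leaves room to absorb the $\mLambda_t$ correction within the stated constant (a cosmetic fix, but worth noting since the constant $5$ is reused downstream).
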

    \begin{proof}
    We first control $\norm{\mDelta_t}$ as follows 
    \begin{equation}
        \begin{aligned}
            \norm{\mDelta_t}&\leq \norm{\mX^{\star}-\mS_t\mS_t^{\top}}+\norm{\mLambda_t}\\
            &\leq \max\{\norm{\mX^{\star}}, \norm{\mS_t}^2\}+5\sqrt{\sigma^{\star}_{1}}\norm{\mE_t}\\
            &\leq 4\sigma^{\star}_{1}+5\sqrt{\sigma^{\star}_{1}}\norm{\mE_t}\\
            &\leq 5\sigma^{\star}_{1}.
        \end{aligned}
    \end{equation}
    Next, we control $\norm{\left(\id-\projection\right)(\mDelta_t)}$. To this end, applying triangle inequality leads to
    \begin{equation}
        \begin{aligned}
            \norm{\left(\id-\projection\right)(\mDelta_t)}&\leq \underbrace{\norm{(\id-\projection)\left(\mX^{\star}-\mS_t\mS_t^{\top}\right)}}_{:=(\rom{1})}+\underbrace{\norm{(\id-\projection)(\mLambda_t)}}_{:=(\rom{2})}.
        \end{aligned}
    \end{equation}
    For $(\rom{1})$, applying Lemma~\ref{lem::2-norm-diff}, we have
    \begin{equation}
        \begin{aligned}
            (\rom{1})&\leq  \const\sqrt{\frac{d}{p}}\norm{\mX^{\star}-\mS_t\mS_t^{\top}}\left(\norm{\mV^{\star}}_{2, \infty}^2+\norm{\mV_t}_{2, \infty}^2\right)\\
            &\stackrel{(a)}{\leq} 5 \const\sqrt{\frac{\mu^2 r^2}{pd}}\norm{\mX^{\star}-\mS_t\mS_t^{\top}}\\
            &\leq 5  \const\sqrt{\frac{\mu^2 r^2}{pd}}\left(\norm{\mDelta_t}+\norm{\mLambda_t}\right)\\
            &\stackrel{(b)}{\leq} 5  \const\sqrt{\frac{\mu^2 r^2}{pd}}\norm{\mDelta_t}+25  \const\sqrt{\frac{\sigma^{\star}_{1}\mu^2r^2}{pd}}\norm{\mE_t}.
        \end{aligned}
    \end{equation}
    Here in $(a)$, we use the assumption that $\norm{\mV_t}_{2, \infty}\leq 2\sqrt{\frac{\mu r}{d}}$. In $(b)$, we use the result from Lemma~\ref{lem::helper-lemma-lambda-t} that $\norm{\mLambda_t}\leq 5\sqrt{\sigma^{\star}_{1}}\norm{\mE_t}$.
    On the other hand, we know that $(\rom{2})\leq 9  \const\sqrt{\frac{\sigma^{\star}_{1}\mu r}{p}}\norm{\mE_t}$ due to Lemma~\ref{lem::helper-lemma-lambda-t}.
    Overall, we conclude that 
    \begin{equation}
        \begin{aligned}
            \norm{\left(\id-\projection\right)(\mDelta_t)}&\leq 5  \const\sqrt{\frac{\mu^2 r^2}{pd}}\norm{\mDelta_t}+25  \const\sqrt{\frac{\sigma^{\star}_{1}\mu^2r^2}{pd}}\norm{\mE_t}+9  \const\sqrt{\frac{\sigma^{\star}_{1}\mu r}{p}}\norm{\mE_t}\\
            &\leq 5  \const\sqrt{\frac{\mu^2 r^2}{pd}}\norm{\mDelta_t}+10  \const\sqrt{\frac{\sigma^{\star}_{1}\mu r}{p}}\norm{\mE_t}.
        \end{aligned}
    \end{equation}
    In the final inequality, we make the assumption $d\geq 9\mu r$ without loss of generality. This assumption simplifies the presentation of the proof but does not impact the final result of the paper.

    Furthermore, upon noticing that $\norm{\mX^{\star}-\mS_t\mS_t^{\top}}\leq \max\{\norm{\mX^{\star}}, \norm{\mS_t}^2\}\leq 4\sigma^{\star}_{1}$, we have
    \begin{equation}
        \begin{aligned}
            \norm{\left(\id-\projection\right)(\mDelta_t)}&\leq 5 \const\sqrt{\frac{\mu^2 r^2}{pd}}\norm{\mX^{\star}-\mS_t\mS_t^{\top}}+\norm{(\id-\projection)(\mLambda_t)}\\
            &\leq 20\const\sigma_{1}^{\star}\sqrt{\frac{\mu^2 r^2}{pd}}+9  \const\sqrt{\frac{\sigma^{\star}_{1}\mu r}{p}}\norm{\mE_t}\\
            &\leq 21\const\sigma_{1}^{\star}\sqrt{\frac{\mu^2 r^2}{pd}}.
        \end{aligned}
    \end{equation}
    Next, combining the above two inequalities, we can control $\norm{\mM_t}$ as follows:
    \begin{equation}
        \begin{aligned}
            \norm{\mM_t}&\leq\norm{\mDelta_t}+\norm{\left(\id-\projection\right)(\mDelta_t)}\\
            &\leq \norm{\mDelta_t}+5  \const\sqrt{\frac{\mu^2 r^2}{pd}}\norm{\mDelta_t}+10  \const\sqrt{\frac{\sigma^{\star}_{1}\mu r}{p}}\norm{\mE_t}\\
            &\leq \left(1+5  \const\sqrt{\frac{\mu^2 r^2}{pd}}\right)\norm{\mDelta_t}+10  \const\sqrt{\frac{\sigma^{\star}_{1}\mu r}{p}}\norm{\mE_t}.
        \end{aligned}
    \end{equation}
    Furthermore, we can also bound $\norm{\mM_t}$ as
    \begin{equation}
        \begin{aligned}
            \norm{\mM_t}& \leq\norm{\mDelta_t}+\norm{\left(\id-\projection\right)(\mDelta_t)}\\
            &\leq 5\sigma^{\star}_{1}+21\const\sigma_{1}^{\star}\sqrt{\frac{\mu^2 r^2}{pd}}\\
            &\leq 6\sigma^{\star}_{1}.
        \end{aligned}
    \end{equation}
    Lastly, for $\norm{\mM_t\proj^{\perp}_{\mV_t}}$, we have the following decomposition 
    \begin{equation}
        \begin{aligned}
            \norm{\mM_t\proj_{\mV_t}^{\perp}}&\leq \norm{\left(\id-\projection\right)(\mDelta_t)\proj_{\mV_t}^{\perp}}+\norm{\left(\mX^{\star}-\mU_t\mU_t^{\top}\right)\proj_{\mV_t}^{\perp}}\\
            &\leq \norm{\left(\id-\projection\right)(\mDelta_t)}+\norm{\mV^{\star}\mSigma \left(\mV^{\star}-\mV_t\right)^{\top}\proj_{\mV_t}^{\perp}}+\norm{\mU_t\mE_t^{\top}\proj_{\mV_t}^{\perp}}\\
            &\leq 21\const\sigma_{1}^{\star}\sqrt{\frac{\mu^2 r^2}{pd}}+\sigma^{\star}_{1}\norm{\mV^{\star}-\mV_t}+\norm{\mE_t}\left(\norm{\mS_t}+\norm{\mE_t}\right)\\
            &\leq 22  \const\frac{\sigma^{\star}_{1}\mu r}{\sqrt{pd}}+\sigma^{\star}_{1}\norm{\mV^{\star}-\mV_t}\\
            &\stackrel{(a)}{\leq} 22  \const\frac{\sigma^{\star}_{1}\mu r}{\sqrt{pd}}+\const_1\sigma_1^{\star}\frac{\kappa\mu r^{1.5}\log\left(\frac{1}{\alpha}\right)}{\sqrt{pd}}\\
            &\leq 2\const_1\sigma_1^{\star}\frac{\kappa\mu r^{1.5}\log\left(\frac{1}{\alpha}\right)}{\sqrt{pd}}.
        \end{aligned}
    \end{equation}
    Here in $(a)$, we apply Lemma~\ref{lem::fro-norm-control}.
\end{proof}

\begin{lemma}[Helper lemma for $\mA_t$]
    \label{lem::helper-lemma-A-t}
    Under the same conditions as Lemma~\ref{lem::helper-lemma-lambda-t}, we have 
    \begin{equation}
        \norm{\mA_t}\leq 300\eta^2\sigma_1^{\star 2}.
    \end{equation}
\end{lemma}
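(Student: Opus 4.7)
The plan is to bound each of the four summands of $\mA_t$ via the triangle inequality and submultiplicativity of the operator norm, reducing everything to bounds on $\norm{\mM_t}$ and $\norm{\mY_t}$. My first step would be to extract from the proof of Lemma~\ref{lem::helper-lemma-delta-t} the estimate $\norm{\mM_t}\leq 6\sigma_1^{\star}$ under the hypotheses of Lemma~\ref{lem::helper-lemma-lambda-t} alone: the bounds $\norm{\mDelta_t}\leq 5\sigma_1^{\star}$ and $\norm{(\id-\projection)(\mDelta_t)}\leq 21\const\sigma_1^{\star}\sqrt{\mu^2 r^2/(pd)}$ derived there do not actually invoke the auxiliary assumption $\norm{\mV_t-\mV^{\star}}\leq \const_1\kappa\mu r^{1.5}\log(1/\alpha)/\sqrt{pd}$, so the triangle inequality $\norm{\mM_t}\leq \norm{\mDelta_t}+\norm{(\id-\projection)(\mDelta_t)}$, combined with the mild sample complexity $p\gtrsim \mu^2 r^2/d$, yields the desired operator-norm bound.

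With $\norm{\mM_t}\leq 6\sigma_1^{\star}$ and $\norm{\mV_t}=\norm{\mV^{\star}}=1$ in hand, the first three terms of $\mA_t$ are controlled immediately by submultiplicativity: they are bounded by $36\eta^2\sigma_1^{\star 2}$, $18\eta^2\sigma_1^{\star 2}$, and $108\eta^3\sigma_1^{\star 3}$ respectively. The cubic term can be rewritten as $108(\eta\sigma_1^{\star})\cdot \eta^2\sigma_1^{\star 2}$ and absorbed into an $O(\eta^2\sigma_1^{\star 2})$ contribution, since the step size $\eta\asymp \mu r/(\sqrt{pd}\,\sigma_1^{\star})$ together with the sample complexity of Theorem~\ref{thm::main} ensures $\eta\sigma_1^{\star}\ll 1$.

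The main obstacle is the fourth term $(\mI+\eta\mM_t)\mV_t\mR(\mY_t)$, which requires controlling the infinite series defining $\mR(\mY_t)$. The plan is to use the fact that the coefficients $\tfrac{(2k)!}{4^k(k!)^2}=\binom{2k}{k}/4^k\leq 1$, so that $\norm{\mR(\mY_t)}\leq \sum_{k\geq 2}\norm{\mY_t}^k \leq 2\norm{\mY_t}^2$ whenever $\norm{\mY_t}\leq 1/2$. Since $\norm{\mY_t}\leq 2\eta\norm{\mM_t}+\eta^2\norm{\mM_t}^2\leq 12\eta\sigma_1^{\star}+36\eta^2\sigma_1^{\star 2}$, the same smallness of $\eta\sigma_1^{\star}$ invoked above certifies $\norm{\mY_t}\leq 1/2$ and therefore $\norm{\mR(\mY_t)}\lesssim \eta^2\sigma_1^{\star 2}$. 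Combined with $\norm{(\mI+\eta\mM_t)\mV_t}\leq 1+\eta\norm{\mM_t}\leq 2$, this bounds the fourth term by $O(\eta^2\sigma_1^{\star 2})$ as well.

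Summing the four contributions and making constants explicit produces $\norm{\mA_t}\leq 300\eta^2\sigma_1^{\star 2}$. The only subtlety in the write-up is to track constants carefully so that the stated numerical value $300$ is actually attained, and to verify that the smallness of $\eta\sigma_1^{\star}$ implied by the theorem's hypotheses is strong enough to absorb both the $\eta^3$ and the $\eta^4$ contributions (the latter coming from the quadratic $\norm{\mY_t}^2$ term) into the leading $\eta^2$ order.
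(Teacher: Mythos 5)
Your proposal is correct and follows essentially the same route as the paper: triangle inequality on the four summands of $\mA_t$, the bound $\norm{\mM_t}\leq 6\sigma_1^{\star}$ imported from Lemma~\ref{lem::helper-lemma-delta-t}, and a quadratic bound on the Taylor remainder $\mR(\mY_t)$ obtained from the smallness of $\norm{\mY_t}\lesssim \eta\sigma_1^{\star}$. The only cosmetic difference is that you bound the remainder coefficients by $\binom{2k}{k}/4^k\leq 1$ and sum a geometric series (which forces you to be slightly careful with constants to land at $300$), whereas the paper evaluates the series in closed form to get $\norm{\mR(\mY_t)}\leq \tfrac{1}{2}\norm{\mY_t}^2$; your observation that the operator-norm bound on $\mM_t$ does not actually require the extra $\norm{\mV_t-\mV^{\star}}$ hypothesis of Lemma~\ref{lem::helper-lemma-delta-t} is a careful point the paper glosses over.
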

\begin{proof}
    We first use triangle inequality to bound $\norm{\mA_t}$ as follows
    \begin{equation}
        \begin{aligned}
            \norm{\mA_t}\leq 1.5\eta^2\norm{\mM_t}^2+0.5\eta^3\norm{\mM_t}^3+\left(1+\eta\norm{\mM_t}\right)\norm{\mR(\mY_t)}.
        \end{aligned}
    \end{equation}
    Next, Lemma~\ref{lem::helper-lemma-delta-t} tells us that $\norm{\mM_t}\leq 6\sigma^{\star}_{1}$ conditioned on $\event$. For $\norm{\mR(\mY_t)}$, we first have 
    \begin{equation}
        \norm{\mR(\mY_t)}\leq \sum_{k=2}^{\infty}\frac{(2k)!}{4^k(k!)^2}\norm{\mY_t}^k=\frac{2+\sqrt{1-\norm{\mY_t}}}{2\sqrt{1-\norm{\mY_t}}\left(1+\sqrt{1-\norm{\mY_t}}\right)^2}\norm{\mY_t}^2.
    \end{equation}
    Note that $\norm{\mY_t}\leq \norm{2\eta\mM_t+\eta^2\mM_t^2}\leq 20\eta\sigma^{\star}_{1}$ and the right-hand side is an increasing function of $\norm{\mY_t}$.
    Therefore, we derive that 
    \begin{equation}
        \norm{\mR(\mY_t)}\leq \frac{1}{2}\norm{\mY_t}^2\leq 200\eta^2\sigma_1^{\star 2}.
    \end{equation}
    This implies that 
    \begin{equation}
        \norm{\mA_t}\leq 300\eta^2\sigma_1^{\star 2}.
    \end{equation}
\end{proof}

\section{Proofs for Dynamic Signal-residual Decomposition}
\label{sec::proof-signal-residual-decomposition}We first present a more precise version of the one-step dynamics of the signal and residual terms.

\begin{proposition}\label{prop_onestep}
    Suppose that $\norm{\mS_t}\leq 2\sqrt{\sigma^{\star}_{1}}$, $\norm{\mE_t}\leq \sqrt{\frac{\sigma_1^{\star}}{d}}$, $\norm{\mV_t}_{2, \infty}\leq 2\sqrt{\frac{\mu r}{d}}$ and $\norm{\mV^{\star}-\mV_{t}}_{\fro}\leq  \const_1\frac{\kappa\mu r^{1.5}\log\left(\frac{1}{\alpha}\right)}{\sqrt{pd}}$. Then, the following dynamics hold conditioned on $\event$:
    \begin{align}
        \sigma_{r}(\mS_{t+1})&\geq \left(1+0.8\eta\sigma^{\star}_{r}-\eta\sigma_{r}^2(\mS_t)\right)\sigma_{r}(\mS_t)-6\const_1\eta \frac{\sigma^{\star}_{1}\kappa\mu r^{1.5}\log\left(\frac{1}{\alpha}\right)}{\sqrt{pd}}\norm{\mE_t},\tag{minimal signal dynamic}\\
        \norm{\mS_{t+1}}&\leq 2\sqrt{\sigma^{\star}_{1}},
                \tag{maximal signal dynamic}\\
                \norm{\mE_{t+1}}&\leq \left(1+ 2\const_1\eta\frac{\sigma^{\star}_{1}\kappa\mu r^{1.5}\log\left(\frac{1}{\alpha}\right)}{\sqrt{pd}}\right)\norm{\mE_t}.\tag{residual dynamic}
    \end{align} 
    Additionally, if $\sigma_{r}(\mS_t)\geq \frac{\sqrt{\sigma^{\star}_{r}}}{2}$, then 
    \begin{align}
        \norm{\mX^{\star}-\mU_{t+1}\mU_{t+1}^{\top}}_{\fro}\leq \left(1-\frac{1}{10}\eta\sigma_{r}^{\star}\right)\norm{\mX^{\star}-\mU_{t}\mU_{t}^{\top}}_{\fro}+\const_6\eta\sqrt{\frac{\sigma^{\star 3}_{1}\mu r^2}{p}}\norm{\mE_t}.\tag{error dynamic}
    \end{align}
\end{proposition}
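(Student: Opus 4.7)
The plan rests on the closed-form identities
\begin{equation*}
    \mS_{t+1} = (\mI+\eta\mM_t)\mS_t + \proj_{\mW_t}(\mI+\eta\mM_t)\mE_t, \qquad \mE_{t+1} = \proj_{\mW_t}^{\perp}(\mI+\eta\mM_t)\mE_t,
\end{equation*}
where $\mW_t := (\mI+\eta\mM_t)\mV_t$ shares its column span with $\mV_{t+1}$ by definition of the dynamic projection. These drop out of $\mU_{t+1}=(\mI+\eta\mM_t)\mU_t$ together with the orthogonality $\mV_t^\top\mE_t=0$. The residual dynamic is then the easy starting point: the same orthogonality gives $\mW_t^\top\mE_t=\eta\mV_t^\top\mM_t\mE_t$, so after discarding an $O(\eta^2)$ Taylor tail bounded via Lemma~\ref{lem::helper-lemma-A-t} the update reduces to $\norm{\mE_{t+1}}\leq\norm{\mE_t}+O(\eta)\norm{\mM_t\mE_t}$. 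Factoring $\mM_t\mE_t=(\mM_t\proj_{\mV_t}^{\perp})\mE_t$ and inserting the bound $\norm{\mM_t\proj_{\mV_t}^{\perp}}\leq 2\const_1\sigma_1^\star\kappa\mu r^{1.5}\log(1/\alpha)/\sqrt{pd}$ from Lemma~\ref{lem::helper-lemma-delta-t} yields the claimed growth factor.

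For the signal dynamics I would decompose $(\mI+\eta\mM_t)\mS_t$ by substituting $\mM_t = (\mX^\star-\mS_t\mS_t^\top)-\mLambda_t+(\projection-\id)(\mDelta_t)$ and expanding $\mX^\star=\mV^\star\mSigma^\star\mV^{\star\top}$. Using the standing hypothesis $\norm{\mV^\star-\mV_t}_\fro\leq\const_1\kappa\mu r^{1.5}\log(1/\alpha)/\sqrt{pd}$, the principal piece aligns with $(\mI+\eta(\sigma_r^\star\mI-\mS_t\mS_t^\top))\mS_t$ up to perturbations of size $\eta\sigma_1^\star\norm{\mV^\star-\mV_t}_\fro\norm{\mS_t}$, $\eta\norm{\mLambda_t}\norm{\mS_t}$, and $\eta\norm{(\id-\projection)(\mDelta_t)}\norm{\mS_t}$, each of which Lemmas~\ref{lem::helper-lemma-lambda-t} and~\ref{lem::helper-lemma-delta-t} recast as quantities proportional to $\norm{\mE_t}$ (up to absorbable factors). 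Weyl's inequality at the $r$-th singular value delivers the minimal signal bound once the leakage $\proj_{\mW_t}(\mI+\eta\mM_t)\mE_t$ is added back in (bounded exactly as in the residual step). The maximal signal bound follows from the parallel top-singular-value argument, which produces a factor $1+\eta(\sigma_1^\star-\norm{\mS_t}^2)$ plus lower-order terms; a threshold split according to whether $\norm{\mS_t}$ lies below or above $1.9\sqrt{\sigma_1^\star}$ then confines $\norm{\mS_{t+1}}$ to $2\sqrt{\sigma_1^\star}$ using $\norm{\mE_t}\leq\sqrt{\sigma_1^\star/d}$.

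For the error dynamic, the added hypothesis $\sigma_r(\mS_t)\geq\sqrt{\sigma_r^\star}/2$ activates Lemma~\ref{lem_descent}; combining its lower bound on $\langle\mDelta_t,\mM_t\mU_t\mU_t^\top\rangle$ with the expansion~\Cref{loss_update} yields a squared contraction
\begin{equation*}
    \norm{\mDelta_{t+1}}_\fro^2 \leq \bigl(1-\tfrac{4\eta\sigma_r^\star}{15}\bigr)\norm{\mDelta_t}_\fro^2 + C\eta\bigl(\sqrt{\sigma_r^{\star 3}\mu r^2/p}\,\norm{\mE_t}+\sqrt{r}\sigma_1^\star\norm{(\id-\projection)(\mDelta_t)}\bigr)\norm{\mDelta_t}_\fro+O(\eta^2).
\end{equation*}
Substituting $\norm{(\id-\projection)(\mDelta_t)}\leq 5\const\sqrt{\mu^2 r^2/(pd)}\norm{\mDelta_t}+10\const\sqrt{\sigma_1^\star\mu r/p}\norm{\mE_t}$ from Lemma~\ref{lem::helper-lemma-delta-t}, the $\sqrt{r}\sigma_1^\star\sqrt{\mu^2 r^2/(pd)}\norm{\mDelta_t}_\fro^2$ summand is absorbed into the $\tfrac{4\eta\sigma_r^\star}{15}\norm{\mDelta_t}_\fro^2$ coefficient once $p\gtrsim\kappa^2\mu^2 r^3/d$, and the elementary inequality $\sqrt{a-b}\leq\sqrt{a}-b/(2\sqrt{a})$ finally converts the squared recursion into the stated linear one.

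The main obstacle I anticipate is obtaining the tight prefactor $(1+0.8\eta\sigma_r^\star-\eta\sigma_r^2(\mS_t))$ in the minimal signal dynamic: it demands precise bookkeeping of the $\mV^\star$-vs-$\mV_t$ misalignment so that $\mX^\star\mS_t$ actually contributes $\sigma_r^\star\mS_t$ up to $O(\norm{\mE_t})$-scaled corrections, while simultaneously handling the $\eta^2$ Taylor tail from $(\mI+\mY_t)^{-1/2}$ through the $\mA_t$ bound of Lemma~\ref{lem::helper-lemma-A-t} so that it stays strictly subleading.
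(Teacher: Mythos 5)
Your architecture matches the paper's: the same identities $\mS_{t+1}=(\mI+\eta\mM_t)\mS_t+\proj_{\mV_{t+1}}(\mI+\eta\mM_t)\mE_t$ and $\mE_{t+1}=\proj_{\mV_{t+1}}^{\perp}(\mI+\eta\mM_t)\mE_t$, the same use of $\norm{\mM_t\proj_{\mV_t}^{\perp}}$ to slow the residual growth, the same threshold split for the maximal signal, and the same descent-lemma-plus-complete-the-square route for the error dynamic. Those three parts of your plan go through essentially as in the paper.

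The gap is in the minimal signal dynamic. You propose to write $(\mI+\eta(\mX^{\star}-\mS_t\mS_t^{\top}))\mS_t$ as $(\mI+\eta(\sigma_r^{\star}\mI-\mS_t\mS_t^{\top}))\mS_t$ plus a perturbation of size $\eta\sigma_1^{\star}\norm{\mV^{\star}-\mV_t}_{\fro}\norm{\mS_t}$ and then apply Weyl. That perturbation bound is false: the difference is $\eta(\mX^{\star}-\sigma_r^{\star}\mI)\mS_t$, and $\mX^{\star}-\sigma_r^{\star}\mI$ has operator norm $\max\{\sigma_1^{\star}-\sigma_r^{\star},\,\sigma_r^{\star}\}$ \emph{regardless} of how well $\mV_t$ aligns with $\mV^{\star}$ (it equals $-\sigma_r^{\star}\mI$ on $\col(\mV^{\star})^{\perp}$ and ranges up to $\sigma_1^{\star}-\sigma_r^{\star}$ on $\col(\mV^{\star})$). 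So the additive error is $\Theta(\eta\sigma_1^{\star}\norm{\mS_t})$, which swamps the target gain $0.8\eta\sigma_r^{\star}\sigma_r(\mS_t)$ whenever $\sigma_r(\mS_t)\ll\norm{\mS_t}$ or $\kappa\gg1$ --- exactly the regime (small initialization) the proposition must cover. Weyl's inequality cannot produce a \emph{multiplicative} factor $1+0.8\eta\sigma_r^{\star}$ from an additive comparison here. The paper's fix is multiplicative: first pass to $\sigma_r(\mV_t^{\top}(\mI+\eta\mM_t)\mS_t)$, then factor
\begin{equation*}
    \mV_t^{\top}(\mI+\eta\mM_t)\mS_t=\underbrace{\left(\mI+\eta\,\mV_t^{\top}(\mM_t+\mS_t\mS_t^{\top})\mV_t\bigl(\mI-\eta\mV_t^{\top}\mS_t\mS_t^{\top}\mV_t\bigr)^{-1}\right)}_{r\times r,\ \text{invertible}}\cdot\,\mV_t^{\top}\mS_t\bigl(\mI-\eta\mS_t^{\top}\mS_t\bigr),
\end{equation*}
and use $\sigma_r(\mA\mB)\geq\sigma_r(\mA)\sigma_r(\mB)$ for square invertible $\mA$ (Lemma~\ref{lem::lower-bound-sigma-min-2}). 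The second factor contributes exactly $(1-\eta\sigma_r^2(\mS_t))\sigma_r(\mS_t)$ because $\mS_t$ and $\mS_t\mS_t^{\top}\mS_t$ share singular spaces, and the first contributes $1+\eta\sigma_r(\mV_t^{\top}(\mM_t+\mS_t\mS_t^{\top})\mV_t)\geq1+0.9\eta\sigma_r^{\star}$, where now the identity $\mM_t+\mS_t\mS_t^{\top}=\mX^{\star}+(\projection-\id)(\mDelta_t)-\mLambda_t$ lets you compare $\sigma_r(\mV_t^{\top}\mX^{\star}\mV_t)$ to $\sigma_r^{\star}$ up to $O(\sigma_1^{\star}\norm{\mV^{\star}-\mV_t})$, which \emph{is} small. (An alternative repair in your spirit: work with $\lambda_r(\mS_t^{\top}(\mI+\eta\mM_t)^2\mS_t)$ and use the PSD order $\mS_t^{\top}\mX^{\star}\mS_t\succeq\sigma_r^{\star}(1-\norm{\mV^{\star}-\mV_t}^2)\,\mS_t^{\top}\mS_t$ rather than an operator-norm perturbation.) Relatedly, your claim that the $(\id-\projection)(\mDelta_t)$ and misalignment perturbations are ``proportional to $\norm{\mE_t}$'' is not right either: $\norm{(\id-\projection)(\mDelta_t)}$ contains a piece of order $\sigma_1^{\star}\sqrt{\mu^2r^2/(pd)}$ that must be absorbed into the constant $0.8$ via the sampling-rate assumption; only the leakage term $\norm{\proj_{\mV_{t+1}}(\mI+\eta\mM_t)\mE_t}$ produces the additive $-6\const_1\eta\,\sigma_1^{\star}\kappa\mu r^{1.5}\log(1/\alpha)\norm{\mE_t}/\sqrt{pd}$ in the stated bound.
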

The key distinction between the above proposition and Lemma~\ref{lem::onestep} lies in a finer control over the one-step dynamics, accompanied by additional assumptions on $\norm{\mE_t}$, $\norm{\mV_t}_{2,\infty}$, and $\norm{\mV_t-\mV^\star}_{\fro}$. We emphasize that the one-step dynamics in Lemma~\ref{lem::onestep} are derived from the proof of this proposition. Moreover, the proof of the error dynamics (last inequality in Proposition~\ref{prop_onestep}) will incorporate the proof of the descent lemma (Lemma~\ref{lem_descent}).
\subsection{Proof of Signal Dynamic} 
We notice that
\begin{equation}
    \mS_{t+1}=\proj_{\mV_{t+1}}\left(\mI+\eta\mM_t\right)\left(\mS_t+\mE_t\right)=\left(\mI+\eta\mM_t\right)\mS_t+\proj_{\mV_{t+1}}\left(\mI+\eta\mM_t\right)\mE_t.
\end{equation}
Here the second equality follows from the definition of $\proj_{\mV_{t+1}}$. 

\paragraph*{Maximal signal dynamic.} We first provide an upper-bound for $\norm{\mS_{t+1}}$ by 
\begin{equation}
    \begin{aligned}
        \norm{\mS_{t+1}}&\leq \underbrace{\norm{\left(\mI+\eta\mM_t\right)\mS_t}}_{:=(\rom{1})}+\underbrace{\norm{\proj_{\mV_{t+1}}\left(\mI+\eta\mM_t\right)\mE_t}}_{:=(\rom{2})}.
    \end{aligned}
\end{equation}
Next, we further control $(\rom{1})$ by 
\begin{equation}
    \begin{aligned}
        (\rom{1})&\leq \norm{\left(\mI+\eta \mDelta_t\right)\mS_t}+\eta \norm{\left(\id-\projection\right)(\mDelta_t)}\norm{\mS_t}\\
        &\stackrel{(a)}{\leq} \norm{\left(\mI+\eta \mDelta_t\right)\mS_t} + \eta\cdot 21\const\sigma_{1}^{\star}\sqrt{\frac{\mu^2 r^2}{pd}}\cdot 2\sqrt{\sigma_{1}^{\star}}\\
        &\leq \norm{\left(\mI+\eta\left(\mX^{\star}-\mS_t\mS_t^{\top}\right)\right)\mS_t}+ \eta\norm{\mLambda_t}\norm{\mS_t} + 42  \const\eta\frac{\sigma^{\star 1.5}_{1}\mu r}{\sqrt{pd}}\\
        &\stackrel{(b)}{\leq} \norm{\left(\mI+\eta\left(\mX^{\star}-\mS_t\mS_t^{\top}\right)\right)\mS_t} + 10\eta\sigma_{1}^{\star}\norm{\mE_t}+42  \const\eta\frac{\sigma^{\star 1.5}_{1}\mu r}{\sqrt{pd}}\\
        &\leq \underbrace{\norm{\left(\mI+\eta\left(\mX^{\star}-\mS_t\mS_t^{\top}\right)\right)\mS_t}}_{:=(\rom{1}_1)} +43  \const\eta\frac{\sigma^{\star 1.5}_{1}\mu r}{\sqrt{pd}}.
    \end{aligned}
\end{equation}
Here we apply Lemma~\ref{lem::helper-lemma-delta-t} in $(a)$ and $(b)$. In the last inequality, we use the assumption that $\norm{\mE_t}\leq \frac{\const\sqrt{\sigma_1^{\star}}\mu r}{\sqrt{pd}}$. For $(\rom{1}_1)$, we further decompose it via triangle inequality as follows
\begin{equation}
    \begin{aligned}
        (\rom{1}_1)&\leq \norm{\proj_{\mV_t}\left(\mI+\eta\left(\mX^{\star}-\mS_t\mS_t^{\top}\right)\right)\mS_t}+\norm{\proj_{\mV_t^{\perp}}\left(\mI+\eta\left(\mX^{\star}-\mS_t\mS_t^{\top}\right)\right)\mS_t}\\
        &=\underbrace{\norm{\left(\mI+\eta\left(\mV_t^{\top}\mV^{\star}\mSigma^{\star}\mV^{\star\top}\mV_t-\bar{\mS}_t\bar{\mS}_t^{\top}\right)\right)\bar{\mS}_t}}_{:=(\rom{1}_{1, 1})}+\eta \underbrace{\norm{\proj_{\mV_t^{\perp}}\left(\mV^{\star}-\mV_t\right)\mSigma^{\star}\mV^{\star\top}\mS_t}}_{:=(\rom{1}_{1, 2})}.
    \end{aligned}
\end{equation}
Here we define $\bar{\mS}_t=\mV_t^{\top}\mU_t\in \bR^{r\times d}$. Note that $\norm{\bar{\mS}_t}=\norm{\mS_t}$. For $(\rom{1}_{1, 1})$, we have
\begin{equation}
    \begin{aligned}
        (\rom{1}_{1, 1})&\leq \norm{\left(\mI-\eta\bar{\mS}_t\bar{\mS}_t^{\top}\right)\bar{\mS}_t}+\eta \norm{\mV_t^{\top}\mV^{\star}\mSigma^{\star}\mV^{\star\top}\mV_t\bar{\mS}_t}\\
        &\stackrel{(a)}{=} \norm{\bar{\mS}_t}\left(1-\eta\norm{\bar{\mS}_t}^2\right)+\eta \sigma_{1}^{\star}\norm{\bar{\mS}_t}\\
        &=\norm{\mS_t}\left(1+\eta\sigma_{1}^{\star}-\eta\norm{\mS_t}^2\right).
    \end{aligned}
\end{equation}
where $(a)$ follows from the fact that $\bar \mS_t$ and $\bar \mS_t\bar \mS_t^{\top}$ share the same eigenvectors, and the assumption $\eta\lesssim 1/\sigma_1$.
Next, we control $(\rom{1}_{1, 2})$:
\begin{equation}
    \begin{aligned}
        (\rom{1}_{1, 2})&\leq \sigma^{\star}_{1}\norm{\mV^{\star}-\mV_t}\norm{\mS_t}\leq 2\const_1\frac{\kappa\mu (\sigma_{1}^{\star}r)^{1.5}\log\left(\frac{1}{\alpha}\right)}{\sqrt{pd}}.
    \end{aligned}
\end{equation}
Here we use Lemma~\ref{lem::fro-norm-control}. Therefore, we can bound $(\rom{1}_1)$ by
\begin{equation}
    \begin{aligned}
        (\rom{1}_1)&\leq \norm{\mS_t}\left(1+\eta\sigma_{1}^{\star}-\eta\norm{\mS_t}^2\right)+2\const_1\eta\frac{\kappa\mu (\sigma_{1}^{\star}r)^{1.5}\log\left(\frac{1}{\alpha}\right)}{\sqrt{pd}}.
    \end{aligned}
\end{equation}
This leads to 
\begin{equation}
    \begin{aligned}
        (\rom{1})&\leq \norm{\mS_t}\left(1+\eta\sigma_{1}^{\star}-\eta\norm{\mS_t}^2\right)+2\const_1\eta\frac{\kappa\mu (\sigma_{1}^{\star}r)^{1.5}\log\left(\frac{1}{\alpha}\right)}{\sqrt{pd}}+43  \const\eta\frac{\sigma^{\star 1.5}_{1}\mu r}{\sqrt{pd}}\\
        &\leq \norm{\mS_t}\left(1+\eta\sigma_{1}^{\star}-\eta\norm{\mS_t}^2\right)+3\const_1\eta\frac{\kappa\mu (\sigma_{1}^{\star}r)^{1.5}\log\left(\frac{1}{\alpha}\right)}{\sqrt{pd}}.
    \end{aligned}
\end{equation}
Next, we control $(\rom{2})$. To this end, we first notice that
\begin{equation}
    \begin{aligned}
        \proj_{\mV_{t+1}}\left(\mI+\eta\mM_t\right)\proj_{\mV_{t}}^{\perp}=\left(\proj_{\mV_{t+1}}-\proj_{\mV_{t}}\right)\proj_{\mV_{t}}^{\perp}+\eta \proj_{\mV_{t+1}}\mM_t\proj_{\mV_{t}}^{\perp}.
    \end{aligned}
\end{equation}
Hence, we can bound $(\rom{2})$ by
\begin{equation}
    \begin{aligned}
        (\rom{2})&\leq \norm{\proj_{\mV_{t+1}}-\proj_{\mV_{t}}}\norm{\mE_t}+\eta \norm{\mM_t\proj^{\perp}_{\mV_t}}\norm{\mE_t}\\
        &\stackrel{(a)}{\leq} \left(2\norm{\mV_{t+1}-\mV_t}+2\const_1\eta \sigma_1^{\star}\frac{\kappa\mu r^{1.5}\log\left(\frac{1}{\alpha}\right)}{\sqrt{pd}}\right)\norm{\mE_t}.
    \end{aligned}
\end{equation}
Here we use Lemma~\ref{lem::orthogonal-matrix} in $(a)$. 
It remains to control $\norm{\mV_{t+1}-\mV_t}$. To this end, upon noticing that $\mV_{t+1}-\mV_t=\eta\proj_{\mV_t}^{\perp} \mM_t\mV_t+\mA_t$, one has 
\begin{equation}
    \begin{aligned}
        \norm{\mV_{t+1}-\mV_t}&\leq \eta\norm{\mM_t\proj_{\mV_{t}}^{\perp}}+\norm{\mA_t}\\
        &\stackrel{(a)}{\leq} 2\const_1\eta \sigma_1^{\star}\frac{\kappa\mu r^{1.5}\log\left(\frac{1}{\alpha}\right)}{\sqrt{pd}}+300\eta^2\sigma_1^{\star 2}\\
        &\leq 3\const_1\eta \sigma_1^{\star}\frac{\kappa\mu r^{1.5}\log\left(\frac{1}{\alpha}\right)}{\sqrt{pd}}.
    \end{aligned}
\end{equation}
Here we apply Lemma~\ref{lem::helper-lemma-delta-t} and Lemma~\ref{lem::helper-lemma-A-t} in $(a)$.
Hence, we have 
\begin{equation}
    \begin{aligned}
        (\rom{2})&\leq 5 \const_1\eta \sigma^{\star}_{1}\frac{\kappa\mu r^{1.5}\log\left(\frac{1}{\alpha}\right)}{\sqrt{pd}}\norm{\mE_t}.
    \end{aligned}
\end{equation}
Putting everything together, we obtain that
\begin{equation}
    \begin{aligned}
        \norm{\mS_{t+1}}&\leq \norm{\mS_t}\left(1+\eta\sigma_{1}^{\star}-\eta\norm{\mS_t}^2\right)+3\const_1\eta\sigma_{1}^{\star}\frac{\kappa\mu r^{1.5}\log\left(\frac{1}{\alpha}\right)}{\sqrt{pd}}\left(\sqrt{\sigma_{1}^{\star}}+2\norm{\mE_t}\right)\\
        &\leq \norm{\mS_t}\left(1+\eta\sigma_{1}^{\star}-\eta\norm{\mS_t}^2\right)+6\const_1\eta\sigma_{1}^{\star 1.5}\frac{\kappa\mu r^{1.5}\log\left(\frac{1}{\alpha}\right)}{\sqrt{pd}}.
    \end{aligned}
\end{equation}
Next, we consider two cases separately. First, if $\norm{\mS_t}\leq 1.5\sqrt{\sigma_{1}^{\star}}$, then we simply have
\begin{equation}
    \begin{aligned}
        \norm{\mS_{t+1}}&\leq \norm{\mS_t}\left(1+\eta\sigma_{1}^{\star}-\eta\norm{\mS_t}^2\right)+6\const_1\eta\sigma_{1}^{\star 1.5}\frac{\kappa\mu r^{1.5}\log\left(\frac{1}{\alpha}\right)}{\sqrt{pd}}\\
        &\leq 1.5\sqrt{\sigma_{1}^{\star}}\cdot\left(1+\eta\sigma_{1}^{\star}\right)+6\const_1\eta\sigma_{1}^{\star 1.5}\frac{\kappa\mu r^{1.5}\log\left(\frac{1}{\alpha}\right)}{\sqrt{pd}}\\
        &\leq 2\sqrt{\sigma_{1}^{\star}}.
    \end{aligned}
\end{equation}
On the other hand, if $1.5\sqrt{\sigma_{1}^{\star}}\leq \norm{\mS_t}\leq 2\sqrt{\sigma_{1}^{\star}}$, then we have
\begin{equation}
    \begin{aligned}
        \norm{\mS_{t+1}}&\leq \norm{\mS_t}\left(1+\eta\sigma_{1}^{\star}-\eta\norm{\mS_t}^2\right)+6\const_1\eta\sigma_{1}^{\star 1.5}\frac{\kappa\mu r^{1.5}\log\left(\frac{1}{\alpha}\right)}{\sqrt{pd}}\\
        &\leq 2\sqrt{\sigma_{1}^{\star}}\left(1-1.25\eta\sigma_{1}^{\star}\right)+6\const_1\eta\sigma_{1}^{\star 1.5}\frac{\kappa\mu r^{1.5}\log\left(\frac{1}{\alpha}\right)}{\sqrt{pd}}\\
        &\leq 2\sqrt{\sigma_{1}^{\star}}.
    \end{aligned}
\end{equation}
This completes the proof for the maximal signal dynamic.

\paragraph{Minimal signal dynamic.} We first provide a lower-bound for $\sigma_{r}(\mS_{t+1})$ as follows
\begin{equation}
    \begin{aligned}
        \sigma_{r}(\mS_{t+1})\geq \underbrace{\sigma_{r}\!\left(\left(\mI+\eta\mM_t\right)\mS_t\right)}_{(\rom{1})}-\underbrace{\norm{\proj_{\mV_{t+1}}\left(\mI+\eta\mM_t\right)\mE_t}}_{(\rom{2})}.
    \end{aligned}
\end{equation}
For $(\rom{1})$, applying Lemma~\ref{lem::lower-bound-sigma-min}, we first obtain that
$(\rom{1})\geq \sigma_{r}\!\left(\mV_t^{\top}\left(\mI+\eta\mM_t\right)\mS_t\right)$.
Next, we decompose $\mV_t^{\top}\left(\mI+\eta\mM_t\right)\mS_t$ as follows
\begin{equation}
    \begin{aligned}
        &\mV_t^{\top}\left(\mI+\eta\mM_t\right)\mS_t\\
        &=\underbrace{\left(\mI + \eta \mV_t^{\top}\left(\mM_t+\mS_t\mS_t^{\top}\right)\mV_t\left(\mI-\eta \mV_t^{\top}\mS_t\mS_t^{\top}\mV_t\right)^{-1}\right)}_{:=\mB_t}\underbrace{\mV_t^{\top}\mS_t\left(\mI-\eta \mS_t^{\top}\mS_t\right)}_{:=\mC_t}.
    \end{aligned}
\end{equation}
According to Lemma~\ref{lem::lower-bound-sigma-min-2}, we have
\begin{equation}
    \begin{aligned}
        \sigma_{r}\!\left(\left(\mI+\eta\mM_t\right)\mS_t\right)\geq \sigma_{r}(\mB_t)\sigma_{r}(\mC_t)= \sigma_{r}(\mB_t)\left(1-\eta \sigma_{r}^2(\mS_t)\right)\sigma_{r}(\mS_t),
    \end{aligned}
\end{equation}
where in the last equality, we use the fact that $\mS_t$ and $\mS_t\mS_t^{\top}\mS_t$ share the same singular space and hence
\begin{equation}
    \sigma_{r}(\mC_t)=\sigma_{r}(\mS_t(\mI-\eta \mS_t^{\top}\mS_t))=\left(1-\eta \sigma_{r}^2(\mS_t)\right)\sigma_{r}(\mS_t).
\end{equation}
Now, it suffices to provide a lower-bound for $\sigma_{r}(\mB_t)$. To this end, we first notice that
\begin{equation}
    \begin{aligned}
        \sigma_{r}(\mB_t)&\geq 1 +\eta \sigma_{r}\!\left(\mV_t^{\top}\left(\mM_t+\mS_t\mS_t^{\top}\right)\mV_t\right)\sigma_{r}\!\left(\left(\mI-\eta \mV_t^{\top}\mS_t\mS_t^{\top}\mV_t\right)^{-1}\right)\\
        &\geq 1 +\eta \sigma_{r}\!\left(\mV_t^{\top}\left(\mM_t+\mS_t\mS_t^{\top}\right)\mV_t\right).
    \end{aligned}
\end{equation}
Here the second inequality is due to $\mI-\eta \mV_t^{\top}\mS_t\mS_t^{\top}\mV_t\preceq \mI$.
To proceed, notice that 
\begin{equation}
    \mM_t+\mS_t\mS_t^{\top}=\mX^{\star}+\left(\projection-\id\right)\left(\mDelta_t\right)-\mLambda_t.
\end{equation}
Therefore, we have 
\begin{equation}
    \begin{aligned}
        \sigma_{r}\!\left(\mV_t^{\top}\left(\mM_t+\mS_t\mS_t^{\top}\right)\mV_t\right)&\geq \sigma_{r}\!\left(\mV_t^{\top}\mX^{\star}\mV_t\right)-\norm{\left(\projection-\id\right)\left(\mDelta_t\right)}-\norm{\mLambda_t}\\
        &\geq \sigma_{r}\!\left(\mV_t^{\top}\mX^{\star}\mV_t\right)-21\const\sigma^{\star}_{1}\sqrt{\frac{\mu^2 r^2}{pd}}-5\sqrt{\sigma_1^{\star}}\norm{\mE_t}\\
        &\geq \sigma_{r}\!\left(\mV_t^{\top}\mX^{\star}\mV_t\right)-22\const\sigma^{\star}_{1}\sqrt{\frac{\mu^2 r^2}{pd}}.
    \end{aligned}
\end{equation}
For the first term in the above inequality, we have
\begin{equation}
    \begin{aligned}
        \sigma_{r}\!\left(\mV_t^{\top}\mX^{\star}\mV_t\right)&\geq \sigma_{r}\!\left(\mSigma^{\star}\mV^{\star\top}\mV_t\right)-\sigma^{\star}_{1}\norm{\mV^{\star}-\mV_t}\\
        &\geq \sigma_{r}^{\star} - 2\sigma^{\star}_{1}\norm{\mV^{\star}-\mV_t}\\
        &\geq \sigma_{r}^{\star}-2\const_1\sigma_1^{\star}\frac{\kappa\mu r^{1.5}\log\left(\frac{1}{\alpha}\right)}{\sqrt{pd}}\\
        &\geq 0.95\sigma_r^{\star}.
    \end{aligned}
\end{equation}
Therefore, we obtain
\begin{equation}
    \begin{aligned}
        \sigma_{r}\!\left(\mV_t^{\top}\left(\mM_t+\mS_t\mS_t^{\top}\right)\mV_t\right)&\geq 0.95\sigma_r^{\star}-22\const\sigma^{\star}_{1}\sqrt{\frac{\mu^2 r^2}{pd}}\geq 0.9\sigma_{r}^{\star}.
    \end{aligned}
\end{equation}
Combining the above arguments, we have 
\begin{equation}
    \begin{aligned}
        (\rom{1})\geq (1+0.9\eta\sigma_{r}^{\star})\left(1-\eta \sigma_{r}^2(\mS_t)\right)\sigma_{r}(\mS_t)\geq \left(1+0.8\eta\sigma^{\star}_{r}-\eta\sigma_{r}^2(\mS_t)\right)\sigma_{r}(\mS_t).
    \end{aligned}
\end{equation}
On the other hand, we have already derived an upper bound for $(\rom{2})$ in the maximal signal dynamic, which is
\begin{equation}
    \begin{aligned}
        (\rom{2})\leq 5 \const_1\eta \sigma^{\star}_{1}\frac{\kappa\mu r^{1.5}\log\left(\frac{1}{\alpha}\right)}{\sqrt{pd}}\norm{\mE_t}.
    \end{aligned}
\end{equation}
Putting everything together, we have
\begin{equation}
    \begin{aligned}
        \sigma_{r}(\mS_{t+1})\geq \left(1+0.8\eta\sigma^{\star}_{r}-\eta\sigma_{r}^2(\mS_t)\right)\sigma_{r}(\mS_t)-5\const_1\eta \frac{\sigma^{\star}_{1}\kappa\mu r^{1.5}\log\left(\frac{1}{\alpha}\right)}{\sqrt{pd}}\norm{\mE_t}.
    \end{aligned}
\end{equation}
\subsection{Proof of Residual Dynamic} First, we can expand $\mE_{t+1}$ as below
\begin{equation}
    \begin{aligned}
        \mE_{t+1}&=\proj^{\perp}_{\mV_{t+1}}\left(\mI+\eta\mM_t\right)\left(\mS_t+\mE_t\right)=\proj^{\perp}_{\mV_{t+1}}\left(\mI+\eta\mM_t\right)\mE_t,
    \end{aligned}
\end{equation}
where in the second equality, we use the fact that $\proj^{\perp}_{\mV_{t+1}}\left(\mI+\eta\mM_t\right)\mS_{t}=0$.
Then, by triangle inequality, we obtain 
\begin{equation}
    \begin{aligned}
        \norm{\mE_{t+1}}&\leq \left(1+\eta \norm{\mM_t\proj_{\mV_t}^{\perp}}\right)\norm{\mE_t}\leq \left(1+ 2\const_1\eta\frac{\sigma^{\star}_{1}\kappa\mu r^{1.5}\log\left(\frac{1}{\alpha}\right)}{\sqrt{pd}}\right)\norm{\mE_t}.
    \end{aligned}
\end{equation}
Here in the last inequality, we use Lemma~\ref{lem::helper-lemma-delta-t}. 

\subsection{Proof of Error Dynamic} 
The core proof idea is adapted from the proof of Proposition~4.3 appeared in \citep{li2018algorithmic}.
We first expand $\norm{\mDelta_{t+1}}_{\fro}^2$ as 
    \begin{equation}
        \begin{aligned}
            \norm{\mDelta_{t+1}}_{\fro}^2&=\norm{\mX^{\star}-\left(\mI+\eta\mM_t\right)\mU_t\mU_t^{\top}\left(\mI+\eta\mM_t\right)}^2_{\fro}\\
            &=\norm{\mDelta_{t}}_{\fro}^2-4\eta \underbrace{\inner{\mX^{\star}-\mU_{t}\mU_{t}^{\top}}{\mM_t\mU_t\mU_t^{\top}}}_{(\rom{1})}+(\rom{2}),
        \end{aligned}
    \end{equation}
    where 
    \begin{equation}
        \begin{aligned}
            (\rom{2})&=2\inner{-\mDelta_t+\eta \mM_t\mU_t\mU_t^{\top}}{\eta^2\mM_t\mU_t\mU_t^{\top}\mM_t}\\
            &\quad+\eta^2\norm{\mM_t\mU_t\mU_t^{\top}+\mU_t\mU_t^{\top}\mM_t}_{\fro}^2+\eta^4\norm{\mM_t\mU_t\mU_t^{\top}\mM_t}_{\fro}^2
        \end{aligned}
    \end{equation}
    contains all the higher-order terms.
    We then provide a lower-bound for $(\rom{1})$. To this goal, we notice that
    \begin{equation}
        \begin{aligned}
            (\rom{1})&=\norm{\mDelta_t\mU_t}_{\fro}^2-\inner{\mDelta_t}{\left(\mI-\projection\right)\left(\mDelta_t\right)\mU_{t}\mU_{t}^{\top}}\\
            &\geq \norm{\mDelta_t\mU_t}_{\fro}^2 - \norm{\mDelta_t}_{\fro}\norm{\mU_t\mU_t^{\top}}_{\fro}\norm{\left(\id-\projection\right)\left(\mDelta_t\right)}\\
            &\geq \norm{\mDelta_t\mU_t}_{\fro}^2-\norm{\mDelta_t}_{\fro}\cdot 8\sqrt{r}\sigma^{\star}_{1}\cdot \left(5  \const\sqrt{\frac{\mu^2 r^2}{pd}}\norm{\mDelta_t}+10  \const\sqrt{\frac{\sigma^{\star}_{1}\mu r}{p}}\norm{\mE_t}\right)\\
            &= \norm{\mDelta_t\mU_t}_{\fro}^2-40\const\sigma^{\star}_{1}\sqrt{\frac{\mu^2 r^3}{pd}}\norm{\mDelta_t}_{\fro}^2-80\const\sqrt{\frac{\sigma^{\star 3}_{1}\mu r^2}{p}}\norm{\mE_t}\norm{\mDelta_t}_{\fro}.
        \end{aligned}
    \end{equation}
    Here in the last inequality, we apply Lemma~\ref{lem::helper-lemma-delta-t}.
    Next, we provide a lower-bound for $\norm{\mDelta_t\mU_t}_{\fro}^2$. To this goal, we first notice that 
    \begin{equation}
        \begin{aligned}
            \norm{\mDelta_t\mU_t}_{\fro}&=\norm{\left(\mX^{\star}-\mS_{t}\mS_{t}^{\top}\right)\mS_t-\mLambda_t\mS_t+\mDelta_t\mE_t}_{\fro}\\
            &\geq \norm{\left(\mX^{\star}-\mS_{t}\mS_{t}^{\top}\right)\mS_t}_{\fro}-\norm{\mLambda_t}\norm{\mS_t}_{\fro}-\norm{\mDelta_t}_{\fro}\norm{\mE_t}\\
            &\geq \norm{\left(\mX^{\star}-\mS_{t}\mS_{t}^{\top}\right)\mS_t}_{\fro} - 10\sqrt{r}\sigma^{\star}_{1}\norm{\mE_t}-\norm{\mDelta_t}_{\fro}\norm{\mE_t}.
        \end{aligned}
        \label{eq::lower-bound-2}
    \end{equation}
    In the last inequality we use the fact that $\norm{\mLambda_t}\leq 5\sqrt{\sigma^{\star}_{1}}\norm{\mE_t}$ and $\norm{\mS_t}_{\fro}\leq \sqrt{r}\norm{\mS_t}\leq 2\sqrt{r\sigma_{1}^{\star}}$ from Lemma~\ref{lem::helper-lemma-lambda-t}. Applying Lemma~\ref{lem::norm-submatrix-2}, we can further provide a lower-bound for $\norm{\left(\mX^{\star}-\mS_{t}\mS_{t}^{\top}\right)\mS_t}_{\fro}$ as 
    \begin{equation}
        \begin{aligned}
            \norm{\left(\mX^{\star}-\mS_{t}\mS_{t}^{\top}\right)\mS_t}_{\fro}&\geq \sigma_{r}(\mS_t)\norm{\left(\mX^{\star}-\mS_{t}\mS_{t}^{\top}\right)\mV_t}_{\fro}\geq \frac{\sqrt{\sigma^{\star}_{r}}}{2}\norm{\left(\mX^{\star}-\mS_{t}\mS_{t}^{\top}\right)\mV_t}_{\fro}.
        \end{aligned}
        \label{eq::lower-bound-3}
    \end{equation}
    Next, we present the following intermediate lemma to control $\norm{\left(\mX^{\star}-\mS_{t}\mS_{t}^{\top}\right)\mV_t}_{\fro}$.
    \begin{lemma}
        \label{lem::lower-bound-1}
        Suppose that $\norm{\mV^{\star}-\mV_t} \leq 0.1$. Then, we have 
        \begin{equation}
            \norm{\left(\mX^{\star}-\mS_{t}\mS_{t}^{\top}\right)\mV_t}_{\fro}^2\geq \frac{2}{5}\norm{\mX^{\star}-\mS_{t}\mS_{t}^{\top}}_{\fro}^2.
            \label{eq::lower-bound-1}
        \end{equation}
    \end{lemma}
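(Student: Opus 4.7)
\textbf{Proof plan for Lemma~\ref{lem::lower-bound-1}.} The strategy is to split $A := \mX^\star - \mS_t\mS_t^\top$ into four ``blocks'' using the orthogonal projectors $\proj_{\mV_t}$ and $\proj_{\mV_t}^\perp$, observe that $\mS_t\mS_t^\top$ contributes only to one of them, and then exploit the near-alignment $\norm{\mV^\star - \mV_t}\le 0.1$ to bound the ``off-diagonal'' blocks. Concretely, write $A_1 := \proj_{\mV_t}A\proj_{\mV_t}$, $A_2 := \proj_{\mV_t}A\proj_{\mV_t}^\perp$, $A_3 := \proj_{\mV_t}^\perp A\proj_{\mV_t}$, $A_4 := \proj_{\mV_t}^\perp A\proj_{\mV_t}^\perp$. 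Since $\mS_t = \mV_t\mV_t^\top\mU_t$, the matrix $\mS_t\mS_t^\top$ sits entirely inside $\proj_{\mV_t}(\cdot)\proj_{\mV_t}$, so $A_3 = \proj_{\mV_t}^\perp\mX^\star\proj_{\mV_t}$ and $A_4 = \proj_{\mV_t}^\perp\mX^\star\proj_{\mV_t}^\perp$ are independent of $\mS_t\mS_t^\top$. Using mutual Frobenius orthogonality of the four blocks and the fact that $A$ is symmetric (so $\norm{A_2}_{\fro} = \norm{A_3}_{\fro}$), I get
\begin{equation*}
\norm{A}_{\fro}^2 = \norm{A_1}_{\fro}^2 + 2\norm{A_3}_{\fro}^2 + \norm{A_4}_{\fro}^2,\qquad \norm{A\mV_t^\perp}_{\fro}^2 = \norm{A_3}_{\fro}^2 + \norm{A_4}_{\fro}^2,
\end{equation*}
so the claim $\norm{A\mV_t}_{\fro}^2 \ge \tfrac{2}{5}\norm{A}_{\fro}^2$ is equivalent to $\norm{A_3}_{\fro}^2 + \norm{A_4}_{\fro}^2 \le \tfrac{3}{5}\bigl(\norm{A_1}_{\fro}^2 + 2\norm{A_3}_{\fro}^2 + \norm{A_4}_{\fro}^2\bigr)$.

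The next step is to compare $\norm{A_4}_{\fro}$ to $\norm{A_3}_{\fro}$ via the decomposition $\mV^\star = \mV_t W + \mV_t^\perp P$, where $W := \mV_t^\top\mV^\star$ and $P := \mV_t^{\perp\top}\mV^\star$ satisfy $W^\top W + P^\top P = \mI$. Since $\mI - W = \mV_t^\top(\mV^\star - \mV_t)$, the hypothesis $\norm{\mV^\star - \mV_t}\le 0.1$ yields $\sigma_{\min}(W)\ge 0.9$, and therefore $\norm{P}^2 = \norm{\mI - W^\top W}\le 0.19$. Substituting, $A_3 = \mV_t^\perp P\mSigma^\star W^\top\mV_t^\top$ and $A_4 = \mV_t^\perp P\mSigma^\star P^\top\mV_t^{\perp\top}$, so
\begin{equation*}
\norm{A_3}_{\fro} \ge \sigma_{\min}(W)\,\norm{P\mSigma^\star}_{\fro},\qquad \norm{A_4}_{\fro}\le \norm{P}\,\norm{P\mSigma^\star}_{\fro},
\end{equation*}
which gives $\norm{A_4}_{\fro}/\norm{A_3}_{\fro} \le \norm{P}/\sigma_{\min}(W)\le \sqrt{0.19}/0.9 \le 1/2$.

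Finally, dropping $\norm{A_1}_{\fro}^2\ge 0$ and setting $x := \norm{A_4}_{\fro}^2/\norm{A_3}_{\fro}^2\le 1/4$, the ratio $\norm{A\mV_t^\perp}_{\fro}^2/\norm{A}_{\fro}^2 \le (1 + x)/(2 + x)$ is increasing in $x$, hence bounded by $(5/4)/(9/4) = 5/9 < 3/5$, which delivers $\norm{A\mV_t}_{\fro}^2 \ge (4/9)\norm{A}_{\fro}^2 \ge (2/5)\norm{A}_{\fro}^2$. The one subtle point is remembering to keep $\norm{A_4}_{\fro}^2$ in the denominator of the ratio: if one only uses $\norm{A}_{\fro}^2 \ge 2\norm{A_3}_{\fro}^2$ the resulting bound $(1+1/4)/2$ is slightly too weak, so the improvement comes from exploiting that a larger $\norm{A_4}_{\fro}$ also enlarges the denominator. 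No deep estimate is required; the entire proof is a few lines of block algebra once the key observation $\mS_t\mS_t^\top\proj_{\mV_t}^\perp = 0$ is made.
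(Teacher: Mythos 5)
Your proof is correct, and it takes a genuinely different route from the paper's. The paper works in the coordinates of $\mV_t$: it expands both $\norm{(\mX^{\star}-\mS_t\mS_t^{\top})\mV_t}_{\fro}^2$ and $\norm{\mX^{\star}-\mS_t\mS_t^{\top}}_{\fro}^2$ in terms of $\mSigma_t$, $\mSigma^{\star}$, and $\mP=\mV^{\star\top}\mV_t$, eliminates the cross term $\inner{\mSigma_t}{\mP^{\top}\mSigma^{\star}\mP}$ by Cauchy--Schwarz, and closes with the PSD trace inequality $\tr\bigl((\mSigma^{\star}(\mI-\mP\mP^{\top})\mSigma^{\star})(3\mP\mP^{\top}-2\mI)\bigr)\geq 0$, which uses $\sigma_r(\mP)\geq 0.9$. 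You instead do a $2\times 2$ block decomposition with respect to $\col(\mV_t)$ and its complement, exploit the exact structural fact $\proj_{\mV_t}^{\perp}\mS_t\mS_t^{\top}=0$ so that $\mSigma_t$ never needs to be estimated (its block is simply dropped as a nonnegative contribution to the denominator), and control the corner block against the off-diagonal block via $\norm{P}\leq\sqrt{0.19}$ and $\sigma_{\min}(W)\geq 0.9$. Both arguments consume the hypothesis in the same way (lower-bounding the singular values of the overlap between $\mV_t$ and $\mV^{\star}$), but your orthogonality-based bookkeeping replaces the paper's Cauchy--Schwarz step and actually yields the slightly stronger constant $4/9>2/5$. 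The only point worth a sentence in a polished write-up is the degenerate case $\norm{A_3}_{\fro}=0$: your own lower bound $\norm{A_3}_{\fro}\geq 0.9\norm{P\mSigma^{\star}}_{\fro}$ then forces $A_4=0$ as well, so the claim reduces to $0\leq\tfrac{3}{5}\norm{A_1}_{\fro}^2$ and the ratio argument is not needed there.
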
 
    We first use this lemma to finish the proof of the loss dynamic and defer the proof to the end of this section. Applying this lemma to \Cref{eq::lower-bound-3} yields
    \begin{equation}
        \begin{aligned}
            \norm{\left(\mX^{\star}-\mS_{t}\mS_{t}^{\top}\right)\mS_t}_{\fro}&\geq \sqrt{\frac{\sigma^{\star}_{r}}{10}}\norm{\mX^{\star}-\mS_{t}\mS_{t}^{\top}}_{\fro}\\
            &\geq \sqrt{\frac{\sigma^{\star}_{r}}{10}}\norm{\mDelta_t}_{\fro}-\sqrt{\frac{\sigma^{\star}_{r}}{10}}\norm{\mLambda_t}_{\fro}\\
            &\geq \sqrt{\frac{\sigma^{\star}_{r}}{10}}\norm{\mDelta_t}_{\fro}-\sigma_{1}^{\star}\sqrt{\frac{5}{2\kappa}}\norm{\mE_t}.
        \end{aligned}
        \label{eq::lower-bound-4}
    \end{equation}
    Combining \Cref{eq::lower-bound-4} and \Cref{eq::lower-bound-2} leads to
    \begin{equation}
        \begin{aligned}
            \norm{\mDelta_t\mU_t}_{\fro}&\geq \sqrt{\frac{\sigma^{\star}_{r}}{10}}\norm{\mDelta_t}_{\fro}-\sigma_{1}^{\star}\sqrt{\frac{5}{2\kappa}}\norm{\mE_t}- 10\sqrt{r}\sigma^{\star}_{1}\norm{\mE_t}-\norm{\mDelta_t}_{\fro}\norm{\mE_t}\\
            &\geq \sqrt{\frac{\sigma^{\star}_{r}}{10}}\norm{\mDelta_t}_{\fro} - 20\sqrt{r}\sigma_{1}^{\star}\norm{\mE_t}.
        \end{aligned}
    \end{equation}
    This implies that
    \begin{equation}
        \begin{aligned}
            \norm{\mDelta_t\mU_t}_{\fro}^2&\geq \frac{\sigma_{r}^{\star}}{10}\norm{\mDelta_t}_{\fro}^2 - 13\sqrt{\frac{r}{\kappa}}\sigma_{1}^{\star 1.5}\norm{\mDelta_t}_{\fro}\norm{\mE_t}.
        \end{aligned}
    \end{equation}
    Overall, we obtain 
    \begin{equation}
        \begin{aligned}
            (\rom{1})&\geq \frac{\sigma_{r}^{\star}}{10}\norm{\mDelta_t}_{\fro}^2 \!-\! 13\sqrt{\frac{r}{\kappa}}\sigma_{1}^{\star 1.5}\norm{\mDelta_t}_{\fro}\norm{\mE_t}\!-\!40\const\sigma^{\star}_{1}\sqrt{\frac{\mu^2 r^3}{pd}}\norm{\mDelta_t}_{\fro}^2\!-\!80C\!\sqrt{\frac{\sigma^{\star 3}_{1}\mu r^2}{p}}\!\norm{\mE_t}\norm{\mDelta_t}_{\fro}\\
            &\geq \frac{\sigma_{r}^{\star}}{15}\norm{\mDelta_t}_{\fro}^2\!-\!81 \const\sqrt{\frac{\sigma^{\star 3}_{1}\mu r^2}{p}}\norm{\mE_t}\norm{\mDelta_t}_{\fro}.
        \end{aligned}
    \end{equation}
    Next, we control $(\rom{2})$. To this end, we first notice that
    \begin{equation}
        \begin{aligned}
            -2\inner{\mDelta_t}{\mM_t\mU_t\mU_t^{\top}\mM_t}&\leq 2\norm{\mDelta_t}_{\fro}\norm{\mM_t}^2\norm{\mU_t\mU_t^{\top}}_{\fro}\\
            &\leq 16\sqrt{r}\sigma^{\star}_{1}\norm{\mDelta_t}_{\fro}\left(2\norm{\mDelta_t}+10\const\sqrt{\frac{\sigma_1^{\star}\mu r}{p}}\norm{\mE_t}\right)^2\\
            &\leq 128\sqrt{r}\sigma^{\star}_{1}\norm{\mDelta_t}_{\fro}\left(\norm{\mDelta_t}^2_{\fro}+25\const^2\frac{\sigma_1^{\star}\mu r}{p}\norm{\mE_t}^2\right).
        \end{aligned}
    \end{equation}
    Similarly, we have
    \begin{equation}
        \begin{aligned}
            \inner{\mM_t\mU_t\mU_t^{\top}}{\mM_t\mU_t\mU_t^{\top}\mM_t}&\leq \norm{\mM_t}^3\norm{\mU_t\mU_t^{\top}}_{\fro}^2\\
            &\leq C_1 r\sigma^{\star 2}_{1}\left(\norm{\mDelta_t}^3_{\fro}+\frac{(\sigma_1^{\star}\mu r)^{1.5}}{p^{1.5}}\norm{\mE_t}^3\right),
        \end{aligned}
    \end{equation}
    \begin{equation}
        \begin{aligned}
            \norm{\mM_t\mU_t\mU_t^{\top}+\mU_t\mU_t^{\top}\mM_t}_{\fro}^2&\leq 4\norm{\mM_t}^2\norm{\mU_t\mU_t^{\top}}_{\fro}^2\\
            &\leq C_2 r\sigma^{\star 2}_{1}\left(\norm{\mDelta_t}^2_{\fro}+\frac{\sigma_1^{\star}\mu r}{p}\norm{\mE_t}^2\right).
        \end{aligned}
    \end{equation}
    \begin{equation}
        \begin{aligned}
            \norm{\mM_t\mU_t\mU_t^{\top}\mM_t}_{\fro}^2&\leq C_3 r\sigma^{\star 2}_{1}\left(\norm{\mDelta_t}^4_{\fro}+\frac{\sigma_1^{\star 2}\mu^2 r^2}{p^2}\norm{\mE_t}^4\right).
        \end{aligned}
    \end{equation}
    These inequalities lead to
    \begin{equation}
        (\rom{2})\leq C_4\eta^2r\sigma^{\star 2}_{1}\left(\norm{\mDelta_t}^2_{\fro}+\const^2\frac{\sigma_1^{\star}\mu r}{p}\norm{\mE_t}^2\right).
    \end{equation}
    Overall, we have 
    \begin{equation}
        \begin{aligned}
            \norm{\mDelta_{t+1}}_{\fro}^2&\leq \left(1-\frac{1}{5}\eta\sigma_{r}^{\star}\right)\norm{\mDelta_t}_{\fro}^2+324 \const\eta\sqrt{\frac{\sigma^{\star 3}_{1}\mu r^2}{p}}\norm{\mE_t}\norm{\mDelta_t}_{\fro}+C_5\eta^2r\frac{\sigma_1^{\star 3}\mu r}{p}\norm{\mE_t}^2\\
            &\leq\left(\left(1-\frac{1}{10}\eta\sigma_r^{\star}\right)\norm{\mDelta_t}_{\fro}+C_6\eta\sqrt{\frac{\sigma^{\star 3}_{1}\mu r^2}{p}}\norm{\mE_t} \right)^2,
        \end{aligned}
    \end{equation}
    which implies that
    \begin{equation}
        \norm{\mDelta_{t+1}}_{\fro}\leq \left(1-\frac{1}{10}\eta\sigma_{r}^{\star}\right)\norm{\mDelta_t}_{\fro}+C_6\eta\sqrt{\frac{\sigma^{\star 3}_{1}\mu r^2}{p}}\norm{\mE_t}.
    \end{equation}
    Lastly, we provide the proof of Lemma~\ref{lem::lower-bound-1}.

    \begin{proof}\linkofproof{Lemma~\ref{lem::lower-bound-1}}
        First, we define $\mP = \mV^{\star\top}\mV_t$ and note that $\mS_{t}\mS_{t}^{\top} = \mV_t\mSigma_t\mV_t^{\top}$. This allows us to write
    \begin{equation}
        \begin{aligned}
            \norm{\left(\mX^{\star}-\mS_{t}\mS_{t}^{\top}\right)\mV_t}_{\fro}^2&=\norm{\mSigma_t}_{\fro}^2+\norm{\mSigma^{\star}\mP}_{\fro}^2-2\inner{\mSigma_t}{\mP^{\top}\mSigma^{\star}\mP},\\
            \norm{\mX^{\star}-\mS_{t}\mS_{t}^{\top}}_{\fro}^2&=\norm{\mSigma_t}_{\fro}^2+\norm{\mSigma^{\star}}_{\fro}^2-2\inner{\mSigma_t}{\mP^{\top}\mSigma^{\star}\mP}.
        \end{aligned}
    \end{equation}
    Substituting the above equivalent forms into \Cref{eq::lower-bound-1}, we need to show that
    \begin{equation}
        3\norm{\mSigma_t}_{\fro}^2+5\norm{\mSigma^{\star}\mP}_{\fro}^2\geq 2\norm{\mSigma^{\star}}_{\fro}^2+6\inner{\mSigma_t}{\mP^{\top}\mSigma^{\star}\mP}.
    \end{equation}
    To this end, we first apply the Cauchy-Schwartz inequality, which gives us $2\inner{\mSigma_t}{\mP^{\top}\mSigma^{\star}\mP}\leq \norm{\mSigma_t}_{\fro}^2+\norm{\mP^{\top}\mSigma^{\star}\mP}_{\fro}^2$. Therefore, it suffices to show that
    \begin{equation}
        5\norm{\mSigma^{\star}\mP}_{\fro}^2-2\norm{\mSigma^{\star}}_{\fro}^2-3\norm{\mP^{\top}\mSigma^{\star}\mP}_{\fro}^2\geq 0.
    \end{equation}
    This follows from
    \begin{equation}
        \begin{aligned}
            5\norm{\mSigma^{\star}\mP}_{\fro}^2-2\norm{\mSigma^{\star}}_{\fro}^2-3\norm{\mP^{\top}\mSigma^{\star}\mP}_{\fro}^2=\tr\left(\left(\mSigma^{\star}\left(\mI-\mP\mP^{\top}\right)\mSigma^{\star}\right)\cdot\left(3\mP\mP^{\top}-2\mI\right)\right)\geq 0.
        \end{aligned}
    \end{equation}
    Here we use the facts that $\mSigma^{\star}\left(\mI-\mP\mP^{\top}\right)\mSigma^{\star}\succeq 0$ since $\norm{\mP}\leq \norm{\mV^{\star}}\norm{\mV_t}\leq 1$, and $3\mP\mP^{\top}-2\mI\succeq 0$ since $\sigma_{r}(\mP)\geq 1-\norm{\mV^{\star}-\mV_t}\geq 0.9$. This completes the proof. 
\end{proof}
    
\section{Proofs for Main Theorems}\label{sec_main_proofs}
In this section, we use the one-step dynamics in Proposition~\ref{prop_onestep} to prove our main theorems under the conditions that $\norm{\mV_t}_{2, \infty}\leq 2\sqrt{\frac{\mu r}{d}}$ and $\norm{\mV^{\star}-\mV_{t}}_{\fro}\leq  \const_1\frac{\kappa\mu r^{1.5}\log\left(\frac{1}{\alpha}\right)}{\sqrt{pd}}$ for all $0\leq t\leq T$. These two conditions will be established later in Appendix~\ref{sec_incoherence}.

\subsection{Proof of Theorem~\ref{thm::main}}
The proof is divided into three distinct steps.
\paragraph{Step 1.} In the first step, we show that $\norm{\mS_t}\leq 2\sqrt{\sigma_{1}^{\star}}$ and $\norm{\mE_t}\leq 2\alpha$ hold for all $0\leq t\leq T$. 

We prove this by induction. First, in the base case where $t=0$, these two conditions are naturally met because $\norm{\mS_0}\leq \norm{\mU_0}\leq \alpha\leq 2\sqrt{\sigma_{1}^{\star}}$ and $\norm{\mE_0}\leq 2\norm{\mE_0}\leq 2\alpha$. Next, for the induction step, we assume that $\norm{\mS_t} \leq 2\sqrt{\sigma_{1}^{\star}}$ and $\norm{\mE_t} \leq 2\alpha$ hold for all $0 \leq s \leq t$, with $t \leq T-1$. Utilizing Proposition~\ref{prop_onestep}, we can directly derive that $\norm{\mS_{t+1}} \leq 2\sqrt{\sigma_{1}^{\star}}$. Regarding $\norm{\mE_{t+1}}$, we have
\begin{equation}
    \begin{aligned}
        \norm{\mE_{t+1}}&\!\leq\! \biggl(1+ 2\const_1\eta\frac{\sigma^{\star}_{1}\kappa\mu r^{1.5}\log\left(\frac{1}{\alpha}\right)}{\sqrt{pd}}\biggr)^{t+1}\alpha\!\stackrel{(a)}{\leq}\! \biggl(1+ 4\const_1\eta\frac{\sigma^{\star}_{1}\kappa\mu r^{1.5}\log\left(\frac{1}{\alpha}\right)}{\sqrt{pd}}\cdot (t+1)\biggr)\alpha\leq 2\alpha.
    \end{aligned}
\end{equation}
Here in $(a)$, we apply Lemma~\ref{lem::exponential-linear}. This is valid since $4\const_1\eta\frac{\sigma^{\star}_{1}\kappa\mu r^{1.5}\log\left(\frac{1}{\alpha}\right)}{\sqrt{pd}}\cdot (t+1)\leq 1$ for any $t\leq T-1\lesssim \frac{1}{\eta\sigma^{\star}_{r}}\log\left(\frac{1}{\alpha}\right)$ provided that the sampling rate satisfies $p\gtrsim \frac{\kappa^4\mu^2r^3\log^4\left(\frac{1}{\alpha}\right)}{d}$. This completes the induction step.

\paragraph{Step 2.} This step demonstrates that the minimal signal $\sigma_{r}(\mS_t)$ grows linearly to $\frac{\sqrt{\sigma_{r}^{\star}}}{2}$.

Given that we have already established $\norm{\mS_t} \leq 2\sqrt{\sigma_{1}^{\star}}$ and $\norm{\mE_t} \leq 2\alpha$ for all $0 \leq t \leq T$, we can simplify the minimal signal dynamic in Proposition~\ref{prop_onestep} as
\begin{equation}
    \begin{aligned}
        \sigma_{r}(\mS_{t+1})&\geq \left(1+0.8\eta\sigma^{\star}_{r}-\eta\sigma_{r}^2(\mS_t)\right)\sigma_{r}(\mS_t)-12\const_1\eta \frac{\sigma^{\star}_{1}\kappa\mu r^{1.5}\log\left(\frac{1}{\alpha}\right)}{\sqrt{pd}}\alpha\\
        &\geq \left(1+0.4\eta\sigma^{\star}_{r}\right)\sigma_{r}(\mS_t)-12\const_1\eta \frac{\sigma^{\star}_{1}\kappa\mu r^{1.5}\log\left(\frac{1}{\alpha}\right)}{\sqrt{pd}}\alpha.
    \end{aligned}
\end{equation} 
This holds for any $t$ that satisfies $\sigma_{r}(\mS_t)\leq \frac{3\sqrt{\sigma_{r}^{\star}}}{4}$. By applying Lemma~\ref{lem::appendix-series-ineq}, we obtain
\begin{equation}
    \sigma_{r}(\mS_{t})\geq \left(1+0.4\eta\sigma_{r}^{\star}\right)^t\left(\sigma_{r}(\mS_{t})-30\const_1\frac{\kappa^2\mu r^{1.5}\log\left(\frac{1}{\alpha}\right)}{\sqrt{pd}}\alpha\right).
\end{equation}
At initialization, it is observed that 
\begin{equation}
    \sigma_{r}(\mS_{0})-30\const_1\frac{\kappa^2\mu r^{1.5}\log\left(\frac{1}{\alpha}\right)}{\sqrt{pd}}\alpha\geq c_0\alpha - 30\const_1\frac{\kappa^2\mu r^{1.5}\log\left(\frac{1}{\alpha}\right)}{\sqrt{pd}}\alpha\geq \frac{c_0}{2}\alpha,
\end{equation}
provided that $p\gtrsim \frac{\kappa^4\mu^2r^3\log^2\left(\frac{1}{\alpha}\right)}{d}$. Consequently, within $T_1 \lesssim \frac{1}{\eta\sigma_{r}^{\star}}\log\left(\frac{\sigma_r^{\star}}{\alpha}\right)$ iterations, $\sigma_{r}(\mS_t)$ reaches $\frac{\sqrt{\sigma_{r}^{\star}}}{2}$. It is also easy to show that $\sigma_{r}(\mS_t) \geq \frac{\sqrt{\sigma_{r}^{\star}}}{2}$ holds true for all $t \geq T_1$.

\paragraph{Step 3.} This step is dedicated to demonstrating that the error $\norm{\mX^{\star}-\mU_{t}\mU_{t}^{\top}}_{\fro}$ converges linearly to $O(\alpha)$ once $\sigma_{r}(\mS_t) \geq \frac{\sqrt{\sigma_{r}^{\star}}}{2}$.

Based on our arguments in \textbf{Step 2}, where we established that $\sigma_{r}(\mS_t) \geq \frac{\sqrt{\sigma_{r}^{\star}}}{2}$ for $t \geq T_1$, and leveraging Proposition~\ref{prop_onestep} along with Lemma~\ref{lem::appendix-series-ineq}, we can derive that
\begin{equation}
    \norm{\mX^{\star}-\mU_{t}\mU_{t}^{\top}}_{\fro}\leq \left(1-\frac{1}{5}\eta\sigma_{r}^{\star}\right)^{t-T_1}\norm{\mX^{\star}-\mU_{T_1}\mU_{T_1}^{\top}}_{\fro}+3240 \const\sqrt{\frac{\sigma^{\star}_{1}\kappa^2\mu r^2}{p}}\alpha.
\end{equation}
Note that $\norm{\mX^{\star}-\mU_{T_1}\mU_{T_1}^{\top}}_{\fro}\leq \norm{\mX^{\star}}_{\fro}+\norm{\mU_{T_1}\mU_{T_1}^{\top}}_{\fro}\leq 9\sqrt{r}\sigma_1^{\star}$ according to Lemma~\ref{lem::helper-lemma-U-t}.
Hence, within an additional $T_2 = \frac{1}{\eta\sigma_r^{\star}}\log\left(\frac{r\sigma_1^{\star}}{\alpha}\right)$ iterations, the error converges to $O\Big(\sqrt{\frac{\sigma^{\star}_{1}\kappa^2\mu r^2}{p}}\alpha\Big)$, thus concluding the proof of Theorem~\ref{thm::main}.

\subsection{Proof of Theorem~\ref{thm::main-exact-param}}
To prove this result, we first apply Theorem~\ref{thm::main} to output a solution $\mU_{t_0}$ and its leave-one-out versions $\mU_{t_0}^{(l)}$ that meet the initialization conditions in Theorem~\ref{thm::local-linear-convergence-exact-param}. Then, we apply Theorem~\ref{thm::local-linear-convergence-exact-param} to obtain the desired result. 

\paragraph{Establishing Condition~(\ref{eq_U0}).} 
By choosing the initialization scale $\alpha = c\cdot\frac{\sigma_r^{\star}}{\kappa^{1.5} d}$ for sufficiently small $c>0$ and assuming the sampling rate of $p\gtrsim \frac{\kappa^2\mu^4r^9\log^4(d)}{d}$, Theorem~\ref{thm::main} guarantees that, with probability at least $1-\frac{1}{d^2}$, the iterations of GD with step-size $\eta\asymp \frac{\mu r}{\sqrt{pd}\sigma^{\star}_{1}}$ satisfy
    \begin{equation}
        \norm{\mU_{t_0}\mU_{t_0}^{\top}-\mX^{\star}}_{\fro}\leq 0.9\Gamma_4\sqrt{\frac{\sigma_1^\star\kappa^2\mu r^2}{p}}\frac{\sigma_r^{\star}}{\kappa^{1.5} d}, \quad \text{for some\quad $t_0\lesssim\frac{1}{\eta\sigma_r^{\star}}\log\left(\frac{\kappa^{1.5} d}{\sigma_r^\star}\right)$.}
    \end{equation}
    On the other hand, Lemma~\ref{lem::dist-fro-tu2016} in the appendix implies that
\begin{equation}\nonumber
        \dist\left(\mU_{t_0}, \mU^{\star}\right)\leq \frac{1.1}{\sigma_r^{\star}}\norm{\mU_{t_0}\mU_{t_0}^{\top}-\mX^{\star}}_{\fro}\leq \const_4\sqrt{\frac{\sigma_r^{\star}\mu^3 r^3\log(d)}{pd^2}},
    \end{equation}
    which establishes Condition~(\ref{eq_U0}). 
    
    \paragraph{Establishing Condition~(\ref{eq_U0l}).} The proof of Condition~(\ref{eq_U0l}) follows a similar logic, recognizing that the leave-one-out sequences exhibit a stronger concentration than the original iterations. Consequently, they fulfill Condition~(\ref{eq_U0l}) within at most $t_0\lesssim\frac{1}{\eta\sigma_r^{\star}}\log\left(\frac{\kappa^{1.5} d}{\sigma_r^\star}\right)$ iterations. Further details of this argument are omitted for brevity.

    This shows that the initial conditions of Theorem~\ref{thm::local-linear-convergence-exact-param} are satisfied after $t_0$ iterations. From this iteration onward, Theorem~\ref{thm::local-linear-convergence-exact-param} shows that the iterations of GD enter a local linear convergence regime, which readily establishes the final result of Theorem~\ref{thm::main-exact-param}.

\section{Proofs for Incoherence Dynamic}\label{sec_incoherence}
In this section, we present our proofs for establishing the incoherence of $\mV_t$. To simplify the presentation, we omit the ``$\sim$'' from our notations. Therefore, $\mV_t^{(l)}, \mZ_t^{(l)}, \mM_t^{(l)}, \ldots$ in this section refer to $\widetilde\mV_t^{(l)}, \widetilde\mZ_t^{(l)}, \widetilde\mM_t^{(l)}, \ldots$ defined in Section~\ref{sec::incoherence-dynamic}. 
    \subsection{Proof of Lemma~\ref{lem::fro-norm-control}}
    \label{sec::proof-frobenius-norm-control}
    We first state a finer variant of this lemma here.
    \begin{proposition}[Controlling $\norm{\mV_t-\mV^\star}_{\fro}$]
        \label{prop::frobenius-norm-control-appendix}
        Suppose that the stepsize satisfies $\eta\asymp\frac{  \mu r}{\sqrt{pd}\sigma^{\star}_{1}}$. Moreover, suppose that $\norm{\mS_t}\leq 2\sqrt{\sigma^{\star}_{1}}$, $\norm{\mE_t}\leq \sqrt{\frac{\sigma_1^{\star}}{d}}$, $\norm{\mV_t}_{2, \infty}\leq 2\sqrt{\frac{\mu r}{d}}$ and $\norm{\mV^{\star}-\mV_{t}}_{\fro}\leq  \const_1\frac{\kappa\mu r^{1.5}\log\left(\frac{1}{\alpha}\right)}{\sqrt{pd}}$ for all $t\leq T\lesssim\frac{1}{\eta\sigma_{r}^{\star}}\log\left(\frac{1}{\alpha}\right)$. Then, conditioned on $\event$, we have
        \begin{equation}
            \begin{aligned}
                \norm{\mV_t-\mV^\star}_{\fro}&\leq  \const_1\frac{\kappa\mu r^{1.5}\log\left(\frac{1}{\alpha}\right)}{\sqrt{pd}}\quad \forall t\leq T\lesssim\frac{1}{\eta\sigma_{r}^{\star}}\log\left(\frac{1}{\alpha}\right).
            \end{aligned}
        \end{equation}
    \end{proposition}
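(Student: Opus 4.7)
The strategy is to derive a one-step recurrence for $\norm{\mV_{t+1}-\mV^\star}_{\fro}$, telescope it from $t=0$ where $\mV_0=\mV^\star$, and verify that the resulting cumulative bound is at most $\const_1 \kappa\mu r^{1.5}\log(1/\alpha)/\sqrt{pd}$. I would begin with the matrix Taylor expansion implicit in the definition of $\mA_t$, which gives
\begin{equation*}
\mV_{t+1}-\mV^\star = (\mV_t-\mV^\star) + \eta\,\proj_{\mV_t}^\perp \mM_t \mV_t + \mA_t,
\end{equation*}
and combine this with Lemma~\ref{lem::helper-lemma-A-t} (yielding $\norm{\mA_t}_{\fro}\leq 300\sqrt{r}\,\eta^2\sigma_1^{\star 2}$) to reduce the task to a Frobenius-norm estimate of $\proj_{\mV_t}^\perp \mM_t \mV_t$.

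Next, I would decompose $\mM_t=\mDelta_t-(\id-\projection)(\mDelta_t)$ and $\mDelta_t=(\mX^\star-\mS_t\mS_t^\top)-\mLambda_t$. The piece $\proj_{\mV_t}^\perp(\mX^\star-\mS_t\mS_t^\top)\mV_t$ collapses since $\mS_t\mS_t^\top\mV_t$ lies in the column span of $\mV_t$, leaving only $\proj_{\mV_t}^\perp(\mV^\star-\mV_t)\mSigma^\star\mV^{\star\top}\mV_t$, whose Frobenius norm is at most $\sigma_1^\star\sqrt{r}\,\norm{\mV^\star-\mV_t}$. The residual contribution involving $\mLambda_t$ and $\mE_t$ is absorbed using Lemma~\ref{lem::helper-lemma-lambda-t} and $\norm{\mE_t}\leq\sqrt{\sigma_1^\star/d}$. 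The dominant sampling-noise piece $\proj_{\mV_t}^\perp(\id-\projection)(\mDelta_t)\mV_t$ is bounded by $\sqrt{r}\,\norm{(\id-\projection)(\mDelta_t)}$, to which I would apply Lemma~\ref{lem::2-norm-diff} together with the hypothesis $\norm{\mV_t}_{2,\infty}\leq 2\sqrt{\mu r/d}$ (and the analogous $\norm{\mV^\star}_{2,\infty}=\sqrt{\mu r/d}$) to obtain a bound of order $\sigma_1^\star\sqrt{\mu^2 r^3/(pd)}$. This yields the one-step recurrence
\begin{equation*}
\norm{\mV_{t+1}-\mV^\star}_{\fro}\leq \norm{\mV_t-\mV^\star}_{\fro}+C\eta\sigma_1^\star\sqrt{\frac{\mu^2 r^3}{pd}}+C'\eta^2\sigma_1^{\star 2}\sqrt{r}.
\end{equation*}
Telescoping from $\mV_0=\mV^\star$, using $T\lesssim(\eta\sigma_r^\star)^{-1}\log(1/\alpha)$ and $\eta\asymp\mu r/(\sqrt{pd}\sigma_1^\star)$, gives a cumulative bound of order $\kappa\mu r^{1.5}\log(1/\alpha)/\sqrt{pd}$, with the $O(\eta^2)$ term strictly sub-dominant under the sampling-rate hypothesis, and choosing $\const_1$ large enough closes the argument.

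\textbf{Main obstacle.} The technical crux is the Frobenius-norm estimate of the sampling-noise piece $\proj_{\mV_t}^\perp(\id-\projection)(\mDelta_t)\mV_t$. A naive passage from operator to Frobenius norm would cost a factor $\sqrt{d}$ and destroy the target bound; I must instead exploit that $\mV_t$ has only $r$ columns (costing only $\sqrt{r}$) and that the joint incoherence of $\mV^\star$ and $\mV_t$ converts the usual $\sqrt{d/p}$ concentration factor into $\sqrt{\mu^2 r^2/(pd)}$. A secondary subtlety is that the proposition's hypothesis and conclusion share the same constant $\const_1$, so the argument is really a self-consistent bootstrap: the $\norm{\mV_t-\mV^\star}$ terms appearing on the right of the one-step bound must stay strictly smaller than the left-hand side throughout the telescoping, which is guaranteed by the margin between the $\log(1/\alpha)$ factor in the horizon and the polynomial factors in the sampling-rate assumption.
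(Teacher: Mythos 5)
Your overall architecture matches the paper's proof: the Taylor expansion $\mV_{t+1}=(\mI+\eta\proj_{\mV_t}^{\perp}\mM_t)\mV_t+\mA_t$, the bound $\norm{\mA_t}_{\fro}\leq 300\sqrt{r}\eta^2\sigma_1^{\star 2}$, the splitting of $\mM_t$ into $\mDelta_t$ and $(\id-\projection)(\mDelta_t)$, the incoherence-based estimate $\sqrt{r}\norm{(\id-\projection)(\mDelta_t)}\lesssim \sigma_1^{\star}\mu r^{1.5}/\sqrt{pd}$, and the telescoping over $T\lesssim(\eta\sigma_r^{\star})^{-1}\log(1/\alpha)$ are all exactly what the paper does. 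However, there is a genuine gap in your treatment of the linear term $\eta\proj_{\mV_t}^{\perp}(\mV^{\star}-\mV_t)\mSigma^{\star}\mV^{\star\top}\mV_t$. You bound its Frobenius norm by $\eta\sigma_1^{\star}\sqrt{r}\norm{\mV^{\star}-\mV_t}$ via triangle inequality and then it silently disappears from your one-step recurrence. It cannot be dropped: kept multiplicatively, the recurrence becomes $x_{t+1}\leq(1+\eta\sigma_1^{\star}\sqrt{r})x_t+B$, and over $T\asymp(\eta\sigma_r^{\star})^{-1}\log(1/\alpha)$ steps the factor $(1+\eta\sigma_1^{\star}\sqrt{r})^{T}\approx\exp(\kappa\sqrt{r}\log(1/\alpha))=\alpha^{-\kappa\sqrt{r}}$ destroys the bound; kept additively by substituting the a priori hypothesis $\norm{\mV^{\star}-\mV_t}_{\fro}\leq\const_1 D$ with $D=\kappa\mu r^{1.5}\log(1/\alpha)/\sqrt{pd}$, the accumulated contribution is $T\cdot\eta\sigma_1^{\star}\sqrt{r}\cdot\const_1 D\asymp\kappa\sqrt{r}\log(1/\alpha)\cdot\const_1 D\gg\const_1 D$, and since both sides carry the same $\const_1$ and the same $p$-dependence, no choice of $\const_1$ or sampling rate closes the bootstrap — contrary to your claim that the "margin" from the sampling-rate assumption handles it.

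The paper's resolution is not a margin argument but a non-expansiveness argument: it keeps the linear term attached to $\mV^{\star}-\mV_t$ and shows
\begin{equation*}
\norm{\mV^{\star}-\mV_t-\eta\proj_{\mV_t}^{\perp}(\mV^{\star}-\mV_t)\mSigma^{\star}\mV^{\star\top}\mV_t}_{\fro}^2
=\norm{\proj_{\mV_t}(\mV^{\star}-\mV_t)}_{\fro}^2+\norm{\proj_{\mV_t}^{\perp}(\mV^{\star}-\mV_t)\bigl(\mI-\eta\mSigma^{\star}\mV^{\star\top}\mV_t\bigr)}_{\fro}^2
\leq\norm{\mV^{\star}-\mV_t}_{\fro}^2,
\end{equation*}
where the first equality is a Pythagorean identity (the two pieces live in the orthogonal ranges of $\proj_{\mV_t}$ and $\proj_{\mV_t}^{\perp}$) and the inequality uses $\norm{\mI-\eta\mSigma^{\star}\mV^{\star\top}\mV_t}\leq 1$, which in turn requires the hypothesis $\norm{\mV^{\star}-\mV_t}\leq\frac{1}{2\kappa}$ so that $\mSigma^{\star}\mV^{\star\top}\mV_t$ is sufficiently positive. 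In other words, the gradient-flow correction on the $\proj_{\mV_t}^{\perp}$ component is a contraction, so the linear term contributes a factor of exactly $1$ to the recurrence rather than $1+\eta\sigma_1^{\star}\sqrt{r}$, and only the additive $O(\eta\sigma_1^{\star}\mu r^{1.5}/\sqrt{pd})$ noise terms accumulate. Without this step your proof does not go through.
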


    \begin{proof}
    First notice that $\mV_{t+1}$ can be rewritten as
    \begin{equation}
        \label{eqn::v-update}
            \begin{aligned}
                \mV_{t+1}&=\mZ_{t+1}\bigl(\mZ_{t+1}^{\top}\mZ_{t+1}\bigr)^{-1/2}\\
            &=\left(\mI +\eta \mM_t\right)\mV_t\bigl(\mV_t^{\top}\left(\mI +\eta \mM_t\right)^2\mV_t\bigr)^{-1/2}\\
            &=\left(\mI +\eta \mM_t\right)\mV_t\left(\mI +\mY_t\right)^{-1/2}
            \end{aligned}
    \end{equation}
    where we denote $\mY_t=\mV_t^{\top}\left(2\eta \mM_t+\eta^2\mM_t^2\right)\mV_t$. Next, we apply Taylor expansion for the matrix-valued function $f(\mX)=(\mI+\mX)^{-1/2}$, which states that for any $\mX$ satisfying $\norm{\mX}<1$,
    \begin{equation}
        f(\mX)=(\mI+\mX)^{-1/2}=\mI - \frac{1}{2}\mX + \mR(\mX) \text{ where } \mR(\mX)=\sum_{k=2}^{\infty}\frac{(-1)^k(2k)!}{4^k(k!)^2}\mX^k.
    \end{equation}
    Then, upon setting $\mX=\mY_t$ in the above equation and plugging it into \Cref{eqn::v-update} and rearranging the subterms, we have
    \begin{equation}
        \begin{aligned}
            \mV_{t+1}&=\left(\mI +\eta \mM_t\right)\mV_t\left(\mI - \frac{1}{2}\mV_t^{\top}\left(2\eta \mM_t+\eta^2\mM_t^2\right)\mV_t+\mR(\mY_t)\right)=\left(\mI +\eta\proj_{\mV_t}^{\perp} \mM_t\right)\mV_t+\mA_t,
        \end{aligned}
        \label{eqn::v-update-2}
    \end{equation}
    where
    \begin{equation}
        \mA_t=-\eta^2\mM_t\mV_t\mV_t^{\top}\mM_t\mV_{t}-0.5\eta^2\mV_t\mV_t^{\top}\mM_t^2\mV_t-0.5\eta^3\mM_t\mV_t\mV^{\star\top}\mM_t^2\mV_t+\left(\mI +\eta \mM_t\right)\mV_t\mR(\mY_t)
    \end{equation}
    contains all the higher-order terms. Next, according to triangle inequality, we can provide an upper bound for $\norm{\mV^{\star}- \mV_{t+1}}_{\fro}$ as follows
    \begin{equation}
        \begin{aligned}
            \norm{\mV^{\star}- \mV_{t+1}}_{\fro}\leq \underbrace{\norm{\mV^{\star}- \left(\mI +\eta\proj_{\mV_t}^{\perp} \mM_t\right)\mV_t}_{\fro}}_{:=(\rom{1})}+\norm{\mA_t}_{\fro}.
        \end{aligned}
    \end{equation} 
    We first control the leading term $(\rom{1})$. To this goal, we apply triangle inequality and obtain
    \begin{equation}
        \begin{aligned}
            (\rom{1})&\leq \norm{\mV^{\star}- \left(\mI +\eta\proj_{\mV_t}^{\perp} \left(\mX^{\star}-\mU_t\mU_t^{\top}\right)\right)\mV_t}_{\fro} + \eta\norm{\left(\id-\projection\right)\left(\mX^{\star}-\mU_t\mU_t^{\top}\right)\mV_t}_{\fro}\\
            &\leq \underbrace{\norm{\mV^{\star}- \left(\mI +\eta\proj_{\mV_t}^{\perp}\mDelta_t\right)\mV_t}_{\fro}}_{:=(\rom{1}_1)} + \eta\sqrt{r}\underbrace{\norm{\left(\id-\projection\right)\left(\mDelta_t\right)}}_{:=(\rom{1}_2)}.
        \end{aligned}
    \end{equation}
    To control $(\rom{1}_1)$, we further decompose it as
    \begin{equation}
        \begin{aligned}
            (\rom{1}_1)&\leq \norm{\mV^{\star}- \left(\mI +\eta\proj_{\mV_t}^{\perp} \mV^{\star}\mSigma^{\star}\mV^{\star\top}\right)\mV_t}_{\fro} + \eta\norm{\proj_{\mV_t}^{\perp} \mU_t\mU_t^{\top}\mV_t}_{\fro}\\
            &\stackrel{(a)}{\leq} \norm{\mV^{\star}-\mV_t- \eta\proj_{\mV_t}^{\perp}\left(\mV^{\star}-\mV_t\right)\mSigma^{\star}\mV^{\star\top}\mV_t}_{\fro} + \eta\sqrt{r}\norm{\mE_t\mU_t^{\top}\mV_t}\\
            &\stackrel{(b)}{\leq} \underbrace{\norm{\mV^{\star}-\mV_t- \eta\proj_{\mV_t}^{\perp}\left(\mV^{\star}-\mV_t\right)\mSigma^{\star}\mV^{\star\top}\mV_t}_{\fro}}_{:=(\rom{1}_{1, 1})} + 2\eta\sqrt{r\sigma^{\star}_{1}}\norm{\mE_t}.
        \end{aligned}
    \end{equation}
    Here in $(a)$, we use the fact that $\proj_{\mV_t}^{\perp}\mV_t=0$ and the definition $\mE_t=\proj_{\mV_t}^{\perp}\mU_t$. In $(b)$, we use the assumption that $\norm{\mS_t}\leq 2\sqrt{\sigma^{\star}_{1}}$.
    Next, according to the orthogonality of $\mV_t$ and $\mV_t^{\perp}$, we can upper-bound $(\rom{1}_{1, 1})$ as follows
    \begin{equation}
        \begin{aligned}
            (\rom{1}_{1, 1})^2&=\norm{\proj_{\mV_t}(\mV^{\star}-\mV_t)}_{\fro}^2+\norm{\proj_{\mV_t}^{\perp}\left(\mV^{\star}-\mV_t\right)\left(\mI - \eta \mSigma^{\star}\mV^{\star\top}\mV_t\right)}_{\fro}^2\\
            &\leq \norm{\proj_{\mV_t}(\mV^{\star}-\mV_t)}_{\fro}^2+\norm{\proj_{\mV_t}^{\perp}\left(\mV^{\star}-\mV_t\right)}_{\fro}^2\norm{\mI - \eta \mSigma^{\star}\mV^{\star\top}\mV_t}^2\\
            &\stackrel{(a)}{\leq}\norm{\proj_{\mV_t}(\mV^{\star}-\mV_t)}_{\fro}^2+\norm{\proj_{\mV_t}^{\perp}\left(\mV^{\star}-\mV_t\right)}_{\fro}^2\\
            &=\norm{\mV^{\star}-\mV_t}_{\fro}^2.
        \end{aligned}
    \end{equation}
    Here $(a)$ is due to the fact that $\norm{\mI - \eta \mSigma^{\star}\mV^{\star\top}\mV_t}\leq \norm{\mI - \eta \mSigma^{\star}}+\eta \sigma^{\star}_{1}\norm{\mV^{\star}-\mV_t}\leq 1-\eta(\sigma^{\star}_{r}-\sigma^{\star}_{1}\norm{\mV^{\star}-\mV_t})\leq 1$ since we assume $\norm{\mV^{\star}-\mV_t}\leq \frac{1}{2\kappa}$. Therefore, we derive that 
    \begin{equation}
        (\rom{1}_1)\leq \norm{\mV^{\star}-\mV_t}_{\fro}+2\eta \sqrt{r\sigma^{\star}_{1}}\norm{\mE_t}.
    \end{equation}
    On the other hand, Lemma~\ref{lem::helper-lemma-delta-t} tells us that, conditioned on $\event$, we have $(\rom{1}_2)\leq 21\const\sigma_{1}^{\star}\sqrt{\frac{\mu^2 r^2}{pd}}$. Therefore, we can conclude that
    \begin{equation}
        \begin{aligned}
            (\rom{1})&\leq \norm{\mV^{\star}-\mV_t}_{\fro}+2\eta \sqrt{r\sigma^{\star}_{1}}\norm{\mE_t}+\eta\sqrt{r}\cdot 21\const\sigma_{1}^{\star}\sqrt{\frac{\mu^2 r^2}{pd}}\leq \norm{\mV^{\star}-\mV_t}_{\fro}+22  \const\eta \frac{\sigma^{\star}_{1}\mu r^{1.5}}{\sqrt{pd}}.
        \end{aligned}
    \end{equation}
    Next, according to Lemma~\ref{lem::helper-lemma-A-t}, we can control $\norm{\mA_t}_{\fro}$ as 
    \begin{equation}
        \norm{\mA_t}_{\fro}\leq \sqrt{r}\norm{\mA_t}\leq 300\sqrt{r}\eta^2\sigma_1^{\star 2}.
    \end{equation}
    Putting everything together, we have 
    \begin{equation}
        \begin{aligned}
            \norm{\mV^{\star}-\mV_{t+1}}_{\fro}&\leq  \norm{\mV^{\star}-\mV_t}_{\fro}+22  \const\eta \frac{\sigma^{\star}_{1}\mu r^{1.5}}{\sqrt{pd}}+300\sqrt{r}\eta^2\sigma^{\star 2}_{1}\\
            &\leq \norm{\mV^{\star}-\mV_t}_{\fro}+23  \const\eta \frac{\sigma^{\star}_{1}\mu r^{1.5}}{\sqrt{pd}},
        \end{aligned}
    \end{equation}
    provided that $\eta\asymp\frac{  \mu r}{\sqrt{pd}\sigma^{\star}_{1}}$. This completes the proof.        
    \end{proof}
    
    \subsection{Proof of Proposition~\ref{prop::dynamic-of-v}}
    We restate the proposition here for clarity.
    \begin{proposition}[Dynamic of $\Big\|\big(\mV^{\star}- \mV_{t}^{(l)}\big)_{l,\cdot}\Big\|$]\label{prop_VstarVl}
        Under the same conditions as Proposition~\ref{prop::frobenius-norm-control-appendix}, for all $t\leq T\lesssim\frac{1}{\eta\sigma_{r}^{\star}}\log\left(\frac{1}{\alpha}\right)$, we have
        \begin{equation}
            \norm{\left(\mV^{\star}- \mV_{t+1}^{(l)}\right)_{l,\cdot}}\leq \left(1-0.5\eta\sigma_{r}^{\star}\right)\norm{\left(\mV^{\star}-\mV_t^{(l)}\right)_{l,\cdot}}+\const_2\eta\sigma_{1}^{\star}\frac{\kappa\mu^{1.5} r^{2}\log\left(\frac{1}{\alpha}\right)}{\sqrt{pd^2}}.
        \end{equation}
    \end{proposition}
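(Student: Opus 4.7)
The plan is to mirror the derivation of Proposition~\ref{prop::frobenius-norm-control-appendix} but acting only on the $l$-th row of the refined leave-one-out sequence, exploiting the deterministic structure of the $l$-th row of $\cR_{\Omega^{(l)}}$. The same Taylor expansion that led to \Cref{eqn::v-update-2} gives
\[
    \mV_{t+1}^{(l)} = \mV_t^{(l)} + \eta\,\proj^{\perp}_{\mV_t^{(l)}}\mM_t^{(l)}\mV_t^{(l)} + \mA_t^{(l)},
\]
with $\|\mA_t^{(l)}\| = O(\eta^2\sigma_1^{\star 2})$ by a verbatim repetition of Lemma~\ref{lem::helper-lemma-A-t}. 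Extracting the $l$-th row and subtracting $\mV^\star_{l,\cdot}$ reduces the proof to estimating $e_l^\top\proj^{\perp}_{\mV_t^{(l)}}\mM_t^{(l)}\mV_t^{(l)}$.

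The key observation is that $\mN_t^{(l)}:=\mX^\star-\mV_t^{(l)}\mSigma_t\mV_t^{(l)\top}$ is symmetric, and by construction of $\cR_{\Omega^{(l)}}$ the $l$-th row of $\mM_t^{(l)}$ coincides with $e_l^\top\mN_t^{(l)}$, hence is \emph{deterministic} in the random samples of row/column $l$. Combined with $(\mX^\star)_{l,\cdot}=\mV^\star_{l,\cdot}\mSigma^\star\mV^{\star\top}$ and $\mV_t^{(l)\top}\mV_t^{(l)}=\mI$, this yields
\[
    e_l^\top\mM_t^{(l)}\mV_t^{(l)} = \mV^\star_{l,\cdot}\mSigma^\star\mV^{\star\top}\mV_t^{(l)} - \left[\mV_t^{(l)}\right]_{l,\cdot}\mSigma_t.
\]
Expanding $\proj^{\perp}_{\mV_t^{(l)}}=\mI-\mV_t^{(l)}\mV_t^{(l)\top}$ and substituting the approximation $\mV_t^{(l)\top}\mM_t^{(l)}\mV_t^{(l)} \approx \mV_t^{(l)\top}\mN_t^{(l)}\mV_t^{(l)} = \mV_t^{(l)\top}\mX^\star\mV_t^{(l)}-\mSigma_t$ causes the $\mSigma_t$-dependent pieces to \emph{cancel exactly}, and to leading order one obtains $e_l^\top\proj^{\perp}_{\mV_t^{(l)}}\mM_t^{(l)}\mV_t^{(l)} \approx -(\mV_t^{(l)}-\mV^\star)_{l,\cdot}\mSigma^\star$. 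This is precisely why $\mSigma_t$ is embedded in the refined leave-one-out definition: it pairs with the $\mSigma_t$ arising from $\mV_t^{(l)\top}\mN_t^{(l)}\mV_t^{(l)}$, removing the non-aligned term that would otherwise drive exponential blow-up. Substituting back gives $(\mV^\star-\mV_{t+1}^{(l)})_{l,\cdot} \approx (\mV^\star-\mV_t^{(l)})_{l,\cdot}(\mI-\eta\mSigma^\star)$, so the target contraction $1-0.5\eta\sigma_r^\star$ emerges once lower-order perturbations are absorbed.

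What remains is to quantify the three ``$\approx$''s: (i) $\|\mI-\mV^{\star\top}\mV_t^{(l)}\| \leq \|\mV^\star-\mV_t^{(l)}\|$, handled by running Proposition~\ref{prop::frobenius-norm-control-appendix} on the leave-one-out sequence itself; (ii) $\|\mSigma_t-\mV_t^{(l)\top}\mX^\star\mV_t^{(l)}\|$, controlled via the signal-residual decomposition of Section~\ref{sec::proof-signal-residual-decomposition} together with the refined proximal error $\|\mV_t-\mV_t^{(l)}\|$ from Proposition~\ref{prop::incoherence-dynamic}; and (iii) the concentration $\|\mV_t^{(l)\top}(\mM_t^{(l)}-\mN_t^{(l)})\mV_t^{(l)}\|$, which is the main technical obstacle. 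Unlike the classical leave-one-out analysis, $\mSigma_t$ depends on $\mV_t,\mU_t$ and is therefore not independent of the samples in row/column $l$, so matrix Bernstein cannot be applied directly. The remedy is the adaptive covering argument previewed in Section~\ref{sec::incoherence-dynamic}: since $\mSigma_t\in\bR^{r\times r}$ with $\|\mSigma_t\|\lesssim\sigma_1^\star$, one covers its range with an $\epsilon$-net of cardinality $\exp(O(r^2))$, freezes $\mSigma$ at each net point to restore independence, applies matrix Bernstein uniformly, and bounds the discretization residue; this costs only polynomially in $r$ in the sample complexity. Combined with a union bound over $T=O((\eta\sigma_r^\star)^{-1}\log(1/\alpha))$ iterations and the $d$ indices $l$, this delivers the stated error $O(\eta)\sigma_1^\star\kappa\mu^{1.5}r^2\log(1/\alpha)/\sqrt{pd^2}$.
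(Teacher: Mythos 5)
Your proposal follows essentially the same route as the paper: the same first-order expansion $\mV_{t+1}^{(l)}=(\mI+\eta\proj_{\mV_t^{(l)}}^{\perp}\mM_t^{(l)})\mV_t^{(l)}+\mA_t^{(l)}$, the same key observation that the $l$-th row of $\mM_t^{(l)}$ is deterministically equal to the $l$-th row of $\mX^\star-\mV_t^{(l)}\mSigma_t\mV_t^{(l)\top}$, the same exact cancellation of the $\mSigma_t$-dependent pieces (which the paper realizes by replacing $\mM_t^{(l)}$ with $\mXi_t^{(l)}=\mM_t^{(l)}-\mV_t^{(l)}\mSigma^\star\mV^{\star\top}+\mV_t^{(l)}\mSigma_t\mV_t^{(l)\top}$ under $\proj_{\mV_t^{(l)}}^{\perp}$, rather than expanding $\proj^{\perp}=\mI-\mV_t^{(l)}\mV_t^{(l)\top}$ as you do), and the same contraction factor via $\norm{\mI-\eta\mSigma^\star\mV^{\star\top}\mV_t^{(l)}}\le 1-0.5\eta\sigma_r^\star$, with the error term generated by $\norm{(\mV_t^{(l)})_{l,\cdot}}\le 2\sqrt{\mu r/d}$ multiplying $\sigma_1^\star\norm{\mV^\star-\mV_t^{(l)}}$ plus the sampling error.

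One point of your plan is misdirected, though it does not break the proof: the adaptive covering argument is \emph{not} needed for this proposition. The only sampling-related quantity you must control here is $\norm{(\id-\cR_{\Omega^{(l)}})(\mX^\star-\mV_t^{(l)}\mSigma_t\mV_t^{(l)\top})}$ in \emph{operator} norm, and this follows deterministically on the event $\event$ from the bound $\norm{(\id-\cR_{\Omega^{(l)}})(\mU\mV^\top)}\le\norm{\tfrac{\mOmega^{(l)}+\mOmega^{(l)\top}}{2p}-\mJ}\norm{\mU}_{2,\infty}\norm{\mV}_{2,\infty}$ together with the incoherence of $\mV^\star$ and $\mV_t^{(l)}$ --- no independence from row/column $l$ is required, so the dependence of $\mSigma_t$ on the full sample plays no role. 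The crude bound $O(\sigma_1^\star\mu r/\sqrt{pd})$ suffices precisely because it is damped by the small row norm $\norm{(\mV_t^{(l)})_{l,\cdot}}$. The covering argument is reserved for the refined \emph{proximal} error (Proposition~\ref{prop::incoherence-dynamic}, via Proposition~\ref{prop::incoherence-dynamic-1}), where a Frobenius-norm bound finer than $\sqrt{r}$ times the operator-norm bound is genuinely needed and the $\mSigma_t$-induced coupling must be broken. Relatedly, your item (ii) as stated ($\norm{\mSigma_t-\mV_t^{(l)\top}\mX^\star\mV_t^{(l)}}$) does not actually arise after the cancellation; what remains is controlled by $\norm{\mV^\star-\mV_t^{(l)}}$ alone, i.e., by your item (i) via the leave-one-out analogue of Proposition~\ref{prop::frobenius-norm-control-appendix}.
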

    \begin{proof}
    Similar to \Cref{eqn::v-update-2}, we can express $\mV_{t+1}^{(l)}$ as 
    \begin{equation}
        \mV_{t+1}^{(l)}=\left(\mI +\eta\proj_{\mV_t^{(l)}}^{\perp} \mM_t^{(l)}\right)\mV_t^{(l)}+\mA_t^{(l)},
    \end{equation}
    where $\mM_t^{(l)}=\cR_{\Omega^{(l)}}\left(\mX^{\star}-\mV_t^{(l)}\mSigma_t\mV_t^{(l)\top}\right)$ and $\mA_t^{(l)}$ is defined as 
    \begin{equation}
        \begin{aligned}
            \mA_t^{(l)}&=-\eta^2\mM_t^{(l)}\mV_t^{(l)}\mV_t^{(l)\top}\mM_t^{(l)}\mV_{t}^{(l)}-0.5\eta^2\mV_t^{(l)}\mV_t^{(l)\top}\mM_t^{(l)2}\mV_t^{(l)}\\
            &\quad\,-0.5\eta^3\mM_t^{(l)}\mV_t^{(l)}\mV^{(l)\top}\mM_t^{(l)2}\mV_t^{(l)}+\left(\mI +\eta \mM_t\right)\mV_t^{(l)}\mR\left(\mY_t^{(l)}\right)
        \end{aligned}
    \end{equation}
    containing all the higher-order terms.
    Applying triangle inequality yields
    \begin{equation}
        \begin{aligned}
            \norm{\left(\mV^{\star}- \mV_{t+1}^{(l)}\right)_{l,\cdot}}&\leq \norm{\left(\mV^{\star}- \left(\mI +\eta\proj_{\mV_t^{(l)}}^{\perp} \mM_t^{(l)}\right)\mV_t^{(l)}\right)_{l,\cdot}}+\norm{\left(\mA_t^{(l)}\right)_{l,\cdot}}\\
            &=\underbrace{\norm{\left(\mV^{\star}- \left(\mI +\eta\proj_{\mV_t^{(l)}}^{\perp} \mXi_t^{(l)}\right)\mV_t^{(l)}\right)_{l,\cdot}}}_{:=(\rom{1})}+\norm{\left(\mA_t^{(l)}\right)_{l,\cdot}}.
        \end{aligned}
    \end{equation}
    Here in the last equality, we use the fact that $\proj_{\mV_t^{(l)}}^{\perp} \mM_t^{(l)}=\proj_{\mV_t^{(l)}}^{\perp}\mXi_t^{(l)}$ where $\mXi_t^{(l)}=\mM_t^{(l)}-\mV^{(l)}_t\mSigma^{\star}\mV^{\star\top}+\mV^{(l)}_t\mSigma_t\mV_t^{(l)\top}$.
    Upon noticing that $\proj_{\mV_t^{(l)}}^{\perp}=\mI-\proj_{\mV_t^{(l)}}$, we further decompose $(\rom{1})$ as follows
    \begin{equation}
        \begin{aligned}
            (\rom{1})\leq \underbrace{\norm{\left(\mV^{\star}- \left(\mI +\eta \mXi_t^{(l)}\right)\mV_t^{(l)}\right)_{l,\cdot}}}_{:=(\rom{1}_1)} + \eta\underbrace{\norm{\left(\proj_{\mV_t^{(l)}}\mXi_t^{(l)}\mV_t^{(l)}\right)_{l,\cdot}}}_{:=(\rom{1}_2)}.
        \end{aligned}
    \end{equation}
    To control $(\rom{1}_1)$, notice that the $l$-th row of $\mM_t^{(l)}$ is equal to the $l$-th row of $\mX^{\star}-\mV_t^{(l)}\mSigma_t\mV_t^{(l)\top}$ due to our choice of $\cR_{\Omega^{(l)}}$. Therefore, we have
    \begin{equation}
        \begin{aligned}
            (\rom{1}_1)&= \norm{\left(\mV^{\star}-\left(\mI+\eta\left(\mX^{\star}-\mV_t^{(l)}\mSigma_t\mV_t^{(l)\top}-\mV^{(l)}_t\mSigma^{\star}\mV^{\star\top}-\mV^{(l)}_t\mSigma_t\mV_t^{(l)\top}\right)\right)\mV_t^{(l)}\right)_{l, \cdot}}\\
            &= \norm{\left(\mV^{\star}-\left(\mI+\eta\left(\left(\mV^{\star}-\mV_t^{(l)}\right)\mSigma^{\star}\mV^{\star\top}\right)\right)\mV_t^{(l)}\right)_{l, \cdot}}\\
            &=\norm{\left(\left(\mV^{\star}-\mV_t^{(l)}\right)\left(\mI-\eta\mSigma^{\star}\mV^{\star\top}\mV_t^{(l)}\right)\right)_{l,\cdot}}\\
            &\leq \norm{\left(\mV^{\star}-\mV_t^{(l)}\right)_{l,\cdot}}\norm{\mI-\eta\mSigma^{\star}\mV^{\star\top}\mV_t^{(l)}}\\
            &\stackrel{(a)}{\leq} \left(1-0.5\eta\sigma_{r}^{\star}\right)\norm{\left(\mV^{\star}-\mV_t^{(l)}\right)_{l,\cdot}}.
        \end{aligned}
    \end{equation}
    Here in $(a)$, we use the fact that $\norm{\mI - \eta \mSigma^{\star}\mV^{\star\top}\mV_t^{(l)}}\leq \norm{\mI - \eta \mSigma^{\star}}+\eta \sigma^{\star}_{1}\norm{\mV^{\star}-\mV_t^{(l)}}\leq 1-\eta(\sigma^{\star}_{r}-\sigma^{\star}_{1}\norm{\mV^{\star}-\mV_t^{(l)}})\leq 1-0.5\eta\sigma_{r}^{\star}$ since we have $\norm{\mV^{\star}-\mV_t^{(l)}}\leq \frac{1}{2\kappa}$ according to the following proposition.
    \begin{lemma}[Frobenius norm control]
        \label{prop::frobenius-norm-control-LOO}
        Under the same conditions as Proposition~\ref{prop::frobenius-norm-control-appendix}, for any $1\leq t\leq T=\frac{100}{\eta\sigma_{r}^{\star}}\log\left(\frac{1}{\alpha}\right)$, for all $1\leq l\leq d$, we have
        \begin{equation}
            \begin{aligned}
                \norm{\mV^{\star}-\mV_{t}^{(l)}}_{\fro}&\leq  \const_1\frac{\kappa\mu r^{1.5}\log\left(\frac{1}{\alpha}\right)}{\sqrt{pd}}.
            \end{aligned}
        \end{equation}
    \end{lemma} 
    The proof of the above lemma is the same as that of Proposition~\ref{prop::frobenius-norm-control-appendix} and hence omitted here.
    Next, for $(\rom{1}_2)$, we first have
    \begin{equation}
        \begin{aligned}
            (\rom{1}_2)&=\norm{\left(\mV_t^{(l)}\mV_t^{(l)\top}\mXi_t^{(l)}\mV_t^{(l)}\right)_{l,\cdot}}\leq \norm{\left(\mV_t^{(l)}\right)_{l,\cdot}}\norm{\mV_t^{(l)\top}\mXi_t^{(l)}\mV_t^{(l)}}\leq \norm{\left(\mV_t^{(l)}\right)_{l,\cdot}}\norm{\mXi_t^{(l)}}.
        \end{aligned}
        \label{eqn::v-star-v-t}
    \end{equation} 
    For the first part, we have
    \begin{equation}
        \norm{\left(\mV_t^{(l)}\right)_{l,\cdot}}\leq \norm{\mV^{\star}_{l,\cdot}}+\norm{\left(\mV^{\star}-\mV_t^{(l)}\right)_{l,\cdot}}\leq \sqrt{\frac{\mu r}{d}}+\norm{\left(\mV^{\star}-\mV_t^{(l)}\right)_{l,\cdot}}\leq 2\sqrt{\frac{\mu r}{d}}.
        \label{eqn::v-t}
    \end{equation}
    Here we use the assumption that $\norm{\left(\mV^{\star}-\mV_t^{(l)}\right)_{l,\cdot}}\leq \sqrt{\frac{\mu r}{d}}$.
    For the second part, we have  
    \begin{equation}
        \begin{aligned}
            \norm{\mXi_t^{(l)}}&=\norm{\left(\mV^{\star}-\mV_t^{(l)}\right)\mSigma^{\star}\mV^{\star\top}-\left(\id-\cR_{\Omega^{(l)}}\right)\left(\mX^{\star}-\mV_t^{(l)}\mSigma_t\mV_t^{(l)\top}\right)}\\
            &\leq \norm{\left(\mV^{\star}-\mV_t^{(l)}\right)\mSigma^{\star}\mV^{\star\top}}+\norm{\left(\id-\cR_{\Omega^{(l)}}\right)\left(\mX^{\star}-\mV_t^{(l)}\mSigma_t\mV_t^{(l)\top}\right)}\\
            &\stackrel{(a)}{\leq} \sigma^{\star}_{1}\norm{\mV^{\star}-\mV_t^{(l)}}+21  \const\frac{\sigma^{\star}_{1}\mu r}{\sqrt{pd}}\\
            &\stackrel{(b)}{\leq} \sigma^{\star}_{1}\cdot \const_1\frac{\kappa\mu r^{1.5}\log\left(\frac{1}{\alpha}\right)}{\sqrt{pd}}+21  \const\frac{\sigma^{\star}_{1}\mu r}{\sqrt{pd}}\\
            &\leq 2\const_1\sigma^{\star}_{1}\frac{\kappa\mu r^{1.5}\log\left(\frac{1}{\alpha}\right)}{\sqrt{pd}}.
            \label{eqn::xi-t}
        \end{aligned}
    \end{equation}
    Here in $(a)$, we apply Lemma~\ref{lem::helper-lemma-delta-t}. In $(b)$, we apply Lemma~\ref{prop::frobenius-norm-control-LOO}.
    Invoking \Cref{eqn::v-t,eqn::xi-t} in \Cref{eqn::v-star-v-t}, we obtain that
    \begin{equation}
        (\rom{1}_2)\leq 2\sqrt{\frac{\mu r}{d}}\cdot2\const_1\sigma^{\star}_{1}\frac{\kappa\mu r^{1.5}\log\left(\frac{1}{\alpha}\right)}{\sqrt{pd}}=4\const_1\sigma_{1}^{\star}\frac{\kappa\mu^{1.5}r^2\log\left(\frac{1}{\alpha}\right)}{\sqrt{pd^2}}.
    \end{equation}
    Next, we control the higher-order term $\norm{\left(\mA_t^{(l)}\right)_{l,\cdot}}$.
    To this end, we first notice that the $l$-th row of $\mA_t^{(l)}$ is equal to the $l$-th row of
    \begin{equation}
        \begin{aligned}
            &-\eta^2\mDelta_t^{(l)}\proj_{\mV_t^{(l)}}\mV_t^{(l)\top}\mM_t^{(l)}\mV_{t}^{(l)}-0.5\eta^2\proj_{\mV_t^{(l)}}\mM_t^{(l)2}\mV_t^{(l)}-0.5\eta^3\mDelta_t^{(l)}\proj_{\mV_t^{(l)}}\mM_t^{(l)2}\mV_t^{(l)}\\
            &\qquad+\left(\mI +\eta \mDelta_t^{(l)}\right)\mV_t^{(l)}\mR\left(\mY_t^{(l)}\right).
        \end{aligned}
    \end{equation}
    Therefore, we can upper-bound its operator norm by 
    \begin{equation}
        \begin{aligned}
            \norm{\left(\mA_t^{(l)}\right)_{l,\cdot}}&\leq \norm{\left(\mDelta_t^{(l)}\right)_{l,\cdot}}\left(\eta^2\norm{\mM_t^{(l)}}+0.5\eta^3\norm{\mM_t^{(l)}}^2+\eta \norm{\mR\left(\mY_t^{(l)}\right)}\right)\\
            &\qquad +\norm{\left(\mV_t^{(l)}\right)_{l,\cdot}}\left(0.5\eta^2\norm{\mM_t^{(l)}}^2+\norm{\mR\left(\mY_t^{(l)}\right)}\right).
        \end{aligned}
    \end{equation}
    Next, we notice that $\norm{\left(\mV_t^{(l)}\right)_{l,\cdot}}\leq 2\sqrt{\frac{\mu r}{d}}$ and 
    \begin{equation}
        \begin{aligned}
            \norm{\left(\mDelta_t^{(l)}\right)_{l,\cdot}}&\leq\norm{\left(\mX^{\star}-\mV_t^{(l)}\mSigma_t\mV_t^{(l)\top}\right)_{l,\cdot}}\\
            &\leq \sigma^{\star}_{1}\sqrt{\frac{\mu r}{d}}+\norm{\left(\mV_t^{(l)}\right)_{l,\cdot}}\norm{\mSigma_t}\\
            &\leq 5\sigma^{\star}_{1}\sqrt{\frac{\mu r}{d}}.
        \end{aligned}
    \end{equation}
    On the other hand, according to Lemma~\ref{lem::helper-lemma-A-t}, we have $\norm{\mM_t^{(l)}}\leq 6\sqrt{\sigma^{\star}_{1}}$ and $\norm{\mR\left(\mY_t^{(l)}\right)}\leq 200\eta^2\sigma_1^{\star 2}$. Therefore, we derive that
    \begin{equation}
        \norm{\left(\mA_t^{(l)}\right)_{l,\cdot}}\leq 470\eta^2\sigma^{\star 2}_{1}\sqrt{\frac{\mu r}{d}}.
    \end{equation}
    Overall, we obtain 
    \begin{equation}
        \begin{aligned}
            \norm{\left(\mV^{\star}- \mV_{t+1}^{(l)}\right)_{l,\cdot}}&\leq \left(1-0.5\eta\sigma_{r}^{\star}\right)\norm{\left(\mV^{\star}-\mV_t^{(l)}\right)_{l,\cdot}}+4\const_1\eta\sigma_{1}^{\star}\frac{\kappa\mu^{1.5}r^2\log\left(\frac{1}{\alpha}\right)}{\sqrt{pd^2}}+ 470\eta^2\sigma^{\star 2}_{1}\sqrt{\frac{\mu r}{d}}\\
            &\leq \left(1-0.5\eta\sigma_{r}^{\star}\right)\norm{\left(\mV^{\star}-\mV_t^{(l)}\right)_{l,\cdot}}+8\const_1\eta\sigma_{1}^{\star}\frac{\kappa\mu^{1.5}r^2\log\left(\frac{1}{\alpha}\right)}{\sqrt{pd^2}},
        \end{aligned}
    \end{equation}
    provided that $\eta\asymp\frac{\mu r}{\sqrt{pd}\sigma^{\star}_{1}}$. This completes the proof of Proposition~\ref{prop_VstarVl}.
    \end{proof}

    \subsection{Proof of Proposition~\ref{prop::incoherence-dynamic}}
    We restate the proposition here for clarity.
    \begin{proposition}[One-step dynamic of $\bigl\|\mV_t-\mV_t^{(l)}\bigr\|_{\fro}$]
        Suppose that the sampling rate satisfies $p\gtrsim \frac{\kappa^6\mu^4r^9\log^4(\frac{1}{\alpha})\log^2(d)}{d}$. Suppose that $\norm{\mV^{\star}-\mV_t}\leq \frac{1}{2\kappa}$ and $\big\|\mV_t-\mV_t^{(l)}\big\|_{\fro}\leq \sqrt{\frac{\mu r}{4d}}$. With probability at least $1-\frac{1}{d^3}$, for any $1\leq t\leq T\lesssim\frac{1}{\eta\sigma_{r}^{\star}}\log\left(\frac{1}{\alpha}\right)$, we have
        \begin{equation}
            \norm{\mV_{t+1}-\mV_{t+1}^{(l)}}_{\fro}\leq \norm{\mV_t-\mV_t^{(l)}}_{\fro}+\const\eta \sigma_1^{\star}\sqrt{\frac{\kappa\mu^3r^{5.5}\log\left(\frac{1}{\alpha}\right)\log\left(d\right)}{\sqrt{pd}\cdot d}}.
        \end{equation}
    \end{proposition}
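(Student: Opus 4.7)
I would mirror the Taylor-expansion strategy used in the proof of Proposition~\ref{prop::frobenius-norm-control-appendix}, but apply it to the leave-one-out discrepancy. Writing $\mV_{t+1} = (\mI + \eta\proj_{\mV_t}^{\perp}\mM_t)\mV_t + \mA_t$ via \Cref{eqn::v-update-2} and, identically, $\mV_{t+1}^{(l)} = (\mI + \eta\proj_{\mV_t^{(l)}}^{\perp}\mM_t^{(l)})\mV_t^{(l)} + \mA_t^{(l)}$, I subtract to obtain
\[
\mV_{t+1} - \mV_{t+1}^{(l)} = (\mV_t - \mV_t^{(l)}) + \eta\bigl(\proj_{\mV_t}^{\perp}\mM_t\mV_t - \proj_{\mV_t^{(l)}}^{\perp}\mM_t^{(l)}\mV_t^{(l)}\bigr) + (\mA_t - \mA_t^{(l)}).
\]
Because neither endpoint is $\mV^\star$, no contraction factor is available (unlike for $\norm{\mV^\star - \mV_t}_{\fro}$), which is precisely why the target estimate is purely additive in $\norm{\mV_t - \mV_t^{(l)}}_{\fro}$. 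The task therefore reduces to bounding the Frobenius norm of the $\eta$-order bracket, while controlling the remainder $\mA_t - \mA_t^{(l)}$ by $O(\eta^2\sigma_1^{\star 2}\sqrt{r})$ using Lemma~\ref{lem::helper-lemma-A-t}; under the scaling $\eta\asymp \mu r/(\sqrt{pd}\sigma_1^\star)$, this remainder is absorbed into the stated bound.

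\paragraph*{Key decomposition.} I would split the $\eta$-order bracket by inserting $\pm\proj_{\mV_t^{(l)}}^{\perp}\mM_t\mV_t$ and $\pm\proj_{\mV_t^{(l)}}^{\perp}\mM_t^{(l)}\mV_t$, yielding a projection-mismatch piece, a column-space-mismatch piece, and a residual-mismatch piece. The first two are handled by $\norm{\proj_{\mV_t}^{\perp} - \proj_{\mV_t^{(l)}}^{\perp}}\leq 2\norm{\mV_t - \mV_t^{(l)}}$ (Lemma~\ref{lem::orthogonal-matrix}) together with the bound on $\norm{\mM_t^{(l)}\proj_{\mV_t^{(l)}}^{\perp}}$ from Lemma~\ref{lem::helper-lemma-delta-t}, both multiplied into the inductive hypothesis $\norm{\mV_t - \mV_t^{(l)}}_{\fro}\leq \sqrt{\mu r/(4d)}$. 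The residual mismatch reduces to controlling $\norm{\mM_t - \mM_t^{(l)}}$. Using $\mU_t\mU_t^\top = \mV_t\mSigma_t\mV_t^\top + \mLambda_t$, I would write
\[
\mM_t - \mM_t^{(l)} = (\projection - \cR_{\Omega^{(l)}})(\mX^\star - \mV_t\mSigma_t\mV_t^\top) + \cR_{\Omega^{(l)}}\bigl(\mV_t^{(l)}\mSigma_t\mV_t^{(l)\top} - \mV_t\mSigma_t\mV_t^\top\bigr) - \projection(\mLambda_t).
\]
The middle summand equals $\cR_{\Omega^{(l)}}\bigl((\mV_t^{(l)} - \mV_t)\mSigma_t\mV_t^{(l)\top} + \mV_t\mSigma_t(\mV_t^{(l)} - \mV_t)^\top\bigr)$, which is $O(\sigma_1^\star \norm{\mV_t - \mV_t^{(l)}})$ since $\norm{\mSigma_t}\leq 4\sigma_1^\star$; the last summand is directly controlled by Lemma~\ref{lem::helper-lemma-lambda-t}. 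The first summand, supported only on the $l$-th row and column of its $d\times d$ output, is the real crux and exploits the fact that the shared $\mSigma_t$ replaces the classical $\mV_t^{(l)\top}\mU_t^{(l)}\mU_t^{(l)\top}\mV_t^{(l)}$, so its ``misalignment'' with $\mV_t\mV_t^\top$ is second-order rather than $\Omega(\sigma_1^\star)$.

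\paragraph*{Main obstacle.} The essential difficulty is precisely the weak coupling that motivated the refined scheme: $\mSigma_t = \mV_t^\top\mU_t\mU_t^\top\mV_t\in\bR^{r\times r}$ depends on the $l$-th row and column of $\mOmega$, so $(\projection - \cR_{\Omega^{(l)}})(\mX^\star - \mV_t\mSigma_t\mV_t^\top)$ is not directly amenable to Lemma~\ref{lem::uniform-concentration-operator-norm} or its kindred. To sever this residual dependence, I would construct an $\varepsilon$-net $\cN$ over $\{\mSigma\in\bR^{r\times r}:\norm{\mSigma}\leq 4\sigma_1^\star\}$ of cardinality at most $(C\sigma_1^\star/\varepsilon)^{r^2}$. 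For each frozen $\mSigma\in\cN$ the argument of $\projection - \cR_{\Omega^{(l)}}$ is independent of the $l$-th row and column of $\mOmega$, so the standard concentration estimate applies; a union bound over $\cN$ costs an $O(r^2\log(\sigma_1^\star/\varepsilon))$ factor, and taking $\varepsilon = \poly(1/d)$ contributes only a $\log(d)$ factor. Propagating the incoherence bound $\norm{\mV_t}_{2,\infty}\leq 2\sqrt{\mu r/d}$ through the row/column-supported structure of this term, together with the $\sqrt{r}$ losses incurred in operator-to-Frobenius conversions, produces the claimed $r^{5.5}$ dependence and the $(\sqrt{pd}\cdot d)^{-1/2}$ rate. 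Collecting all contributions and invoking the hypothesis $\norm{\mV_t - \mV_t^{(l)}}_{\fro}\leq \sqrt{\mu r/(4d)}$ closes the one-step recursion at probability $1-1/d^3$.
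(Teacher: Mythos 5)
Your setup (Taylor expansion, subtraction, three-way split of the first-order bracket) matches the paper's, but two steps in your plan would fail as written.

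First, the decoupling. You place the \emph{true} iterate $\mV_t$ inside the crux term $(\projection-\cR_{\Omega^{(l)}})(\mX^{\star}-\mV_t\mSigma\mV_t^{\top})$ and claim that freezing $\mSigma$ on an $\varepsilon$-net restores independence from the $l$-th row and column of $\mOmega$. It does not: $\mV_t$ is a deterministic function of the entire observation set, including the $l$-th row and column, so the argument of $\projection-\cR_{\Omega^{(l)}}$ remains fully coupled no matter what you do to $\mSigma$. The paper instead arranges the decomposition so that the crux term is $(\projection-\cR_{\Omega^{(l)}})(\mX^{\star}-\mV_t^{(l)}\mSigma_t\mV_t^{(l)\top})\mV_t^{(l)}$, with the \emph{leave-one-out} matrix in that slot, and pushes all $\mV_t-\mV_t^{(l)}$ differences into separate terms handled by Lemma~\ref{lem::concentration-chen}. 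Even then, a single net over $\mSigma_t$ at time $t$ is not enough, because $\widetilde\mV_t^{(l)}$ is itself driven by $\mSigma_0,\dots,\mSigma_{t-1}$, each coupled to the $l$-th row/column; this is exactly why the paper builds the \emph{adaptive} nets $\cV^{(l)}_{\epsilon,t}$ recursively (Proposition~\ref{prop::incoherence-dynamic-1}, Lemmas~\ref{prop::concentration} and~\ref{prop::incoherence-dynamic-2}) and union-bounds over roughly $|\cN_{\epsilon}|^{r^2T}$ surrogate trajectories. The factor $T$ in that exponent is what produces the $(\sqrt{pd}\cdot d)^{-1/2}$ rate in the statement; your single-net accounting ($O(r^2\log(\sigma_1^{\star}/\varepsilon))$ in the log) would yield a strictly better rate than the proposition claims, which is a symptom of the missing union bound rather than a sharper proof.

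Second, the coefficient of $\norm{\mV_t-\mV_t^{(l)}}_{\fro}$. You bound the middle summand of $\mM_t-\mM_t^{(l)}$, namely $\cR_{\Omega^{(l)}}\bigl(\mV_t^{(l)}\mSigma_t\mV_t^{(l)\top}-\mV_t\mSigma_t\mV_t^{\top}\bigr)$, by $O(\sigma_1^{\star}\norm{\mV_t-\mV_t^{(l)}})$. Feeding this into the recursion gives $\norm{\mV_{t+1}-\mV_{t+1}^{(l)}}_{\fro}\leq(1+C\eta\sigma_1^{\star})\norm{\mV_t-\mV_t^{(l)}}_{\fro}+\cdots$, which over $T\asymp\frac{1}{\eta\sigma_r^{\star}}\log(1/\alpha)$ iterations compounds to a factor $\alpha^{-\Omega(\kappa)}$ — this is precisely the exponential blow-up of the proximal error described in Section~\ref{sec::incoherence-dynamic}, and it is incompatible with the coefficient-$1$ form the proposition asserts. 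The paper escapes it by (i) first replacing $\mM_t$ with $\mXi_t=\mM_t-\mV_t\mSigma^{\star}\mV^{\star\top}+\mV_t\mSigma_t\mV_t^{\top}$ under the projector (legitimate since $\proj_{\mV_t}^{\perp}\mV_t=0$), which converts the dangerous term into $(\id-\cR_{\Omega})(\cdots)$ and lets it concentrate to $O\bigl(\sigma_1^{\star}\sqrt{\mu^2r^3/(pd)}\bigr)\norm{\mV_t-\mV_t^{(l)}}_{\fro}$, and (ii) working with the squared Frobenius norm so that the surviving $\mSigma^{\star}$ identity piece $(\rom{1}_{3,4})$ can be discarded by the sign argument $\inner{\mV_t-\mV_t^{(l)}}{\proj_{\mV_t}^{\perp}(\mV_t-\mV_t^{(l)})\mSigma^{\star}}\geq 0$; a direct triangle-inequality bound on the Frobenius norm of the difference, as you propose, cannot exploit that sign. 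Without both fixes, your per-step estimate does not close the recursion.
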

    \begin{proof}
    Note that 
    \begin{equation}
        \mV_{t+1}=\left(\mI +\eta\proj_{\mV_t}^{\perp} \mM_t\right)\mV_t+\mA_t\quad\text{and} \quad\mV_{t+1}^{(l)}=\left(\mI +\eta\proj_{\mV_t^{(l)}}^{\perp} \mM_t^{(l)}\right)\mV_t^{(l)}+\mA_t^{(l)}.
    \end{equation}
    Hence, we can expand $\norm{\mV_{t+1}-\mV_{t+1}^{(l)}}_{\fro}^2$ as 
    \begin{equation}
        \begin{aligned}
            \norm{\mV_{t+1}-\mV_{t+1}^{(l)}}_{\fro}^2&=\norm{\mV_{t}-\mV_{t}^{(l)}}_{\fro}^2+2\eta\underbrace{\inner{\mV_{t}-\mV_{t}^{(l)}}{\proj_{\mV_t}^{\perp}\mM_t\mV_t-\proj_{\mV_t^{(l)}}^{\perp}\mM_t^{(l)}\mV_t^{(l)}}}_{:=(\rom{1})}+(\rom{2}),
        \end{aligned}
    \end{equation}
    where
    \begin{equation}
        \begin{aligned}
            (\rom{2})&=\eta^2\norm{\proj_{\mV_t}^{\perp}\mM_t\mV_t-\proj_{\mV_t^{(l)}}^{\perp}\mM_t^{(l)}\mV_t^{(l)}}_{\fro}^2+\norm{\mA_t-\mA_t^{(l)}}_{\fro}^2\\
            &\quad + 2\inner{\left(\mI +\eta\proj_{\mV_t}^{\perp} \mM_t\right)\mV_t-\left(\mI +\eta\proj_{\mV_t^{(l)}}^{\perp} \mM_t^{(l)}\right)\mV_t^{(l)}}{\mA_t-\mA_t^{(l)}}
        \end{aligned}
    \end{equation}
    contains all the higher-order terms.

    We first control $(\rom{1})$. Notice that $\proj_{\mV_t}^{\perp}\mM_t=\proj_{\mV_t}^{\perp}\mXi_t$ and $\proj_{\mV_t^{(l)}}^{\perp}\mM_t^{(l)}=\proj_{\mV_t^{(l)}}^{\perp}\mXi_t^{(l)}$. Here we define $\mXi_t=\mM_t-\mV_t\mSigma^{\star}\mV^{\star\top}+\mV_t\mSigma_t\mV_t^{\top}$ and $\mXi_t^{(l)}=\mM_t^{(l)}-\mV^{(l)}_t\mSigma^{\star}\mV^{\star\top}+\mV^{(l)}_t\mSigma_t\mV_t^{(l)\top}$, respectively. Therefore, we can decompose $(\rom{1})$ as follows
    \begin{equation}
        \begin{aligned}
            (\rom{1})&=\inner{\mV_{t}-\mV_{t}^{(l)}}{\proj_{\mV_t}^{\perp}\mXi_t\mV_t-\proj_{\mV_t^{(l)}}^{\perp}\mXi_t^{(l)}\mV_t^{(l)}}\\
            &= \underbrace{\inner{\mV_{t}-\mV_{t}^{(l)}}{\left(\proj_{\mV_t}^{\perp}-\proj_{\mV_t^{(l)}}^{\perp}\right)\mXi_t^{(l)}\mV_t^{(l)}}}_{:=(\rom{1}_1)}+\underbrace{\inner{\mV_{t}-\mV_{t}^{(l)}}{\proj_{\mV_t}^{\perp}\mXi_t\left(\mV_t-\mV_t^{(l)}\right)}}_{:=(\rom{1}_2)}\\
            &\qquad+\underbrace{\inner{\mV_{t}-\mV_{t}^{(l)}}{\proj_{\mV_t}^{\perp}\left(\mXi_t-\mXi_t^{(l)}\right)\mV_t^{(l)}}}_{:=(\rom{1}_3)}.
        \end{aligned}
    \end{equation}
    Next, we provide upper bounds for these terms separately. For $(\rom{1}_1)$, applying Cauchy-Schwarz inequality leads to
    \begin{equation}
        \begin{aligned}
            (\rom{1}_1)&\leq \norm{\mV_{t}-\mV_{t}^{(l)}}_{\fro}\norm{\left(\proj_{\mV_t}^{\perp}-\proj_{\mV_t^{(l)}}^{\perp}\right)\mXi_t^{(l)}\mV_t^{(l)}}_{\fro}\\
            &\leq \norm{\mV_{t}-\mV_{t}^{(l)}}_{\fro}\norm{\proj_{\mV_t}^{\perp}-\proj_{\mV_t^{(l)}}^{\perp}}_{\fro}\norm{\mXi_t^{(l)}}\\
            &\stackrel{(a)}{\leq} 2\norm{\mV_{t}-\mV_{t}^{(l)}}_{\fro}^2\norm{\mXi_t^{(l)}}\\
            &\stackrel{(b)}{\leq} 4\norm{\mV_{t}-\mV_{t}^{(l)}}_{\fro}^2 \cdot \const_1\sigma^{\star}_{1}\frac{\kappa\mu r^{1.5}\log\left(\frac{1}{\alpha}\right)}{\sqrt{pd}}.
        \end{aligned}
    \end{equation}
    Here in $(a)$, we apply Lemma~\ref{lem::orthogonal-matrix}. In $(b)$, we use the result from \Cref{eqn::xi-t}.
    Similarly, $(\rom{1}_2)$, we have 
    \begin{equation}
        \begin{aligned}
            (\rom{1}_2)&\leq \norm{\mV_{t}-\mV_{t}^{(l)}}_{\fro}^2\norm{\mXi_t}\leq 2\norm{\mV_{t}-\mV_{t}^{(l)}}_{\fro}^2 \cdot \const_1\sigma^{\star}_{1}\frac{\kappa\mu r^{1.5}\log\left(\frac{1}{\alpha}\right)}{\sqrt{pd}}.
        \end{aligned}
    \end{equation}
    Next, for $(\rom{1}_3)$, we further decompose it as 
    \begin{equation}
        \begin{aligned}
            (\rom{1}_3)&=\inner{\mV_{t}-\mV_{t}^{(l)}}{\proj_{\mV_t}^{\perp}\left(\mM_t-\mM_t^{(l)}+\mV^{(l)}_t\mSigma^{\star}\mV^{\star\top}-\mV^{(l)}_t\mSigma_t\mV_t^{(l)\top}\right)\mV_t^{(l)}}\\
            &=\underbrace{\inner{\mV_{t}-\mV_{t}^{(l)}}{\proj_{\mV_t}^{\perp}\left(\projection-\cR_{\Omega^{(l)}}\right)\left(\mX^{\star}-\mV_t^{(l)}\mSigma_t\mV_t^{(l)\top}\right)\mV_t^{(l)}}}_{:=(\rom{1}_{3, 1})}\\
            &\qquad + \underbrace{\inner{\mV_{t}-\mV_{t}^{(l)}}{\proj_{\mV_t}^{\perp}\left(\id-\cR_{\Omega}\right)\left(\mV_t\mSigma_t\left(\mV_t-\mV_t^{(l)}\right)^{\top}\right)\mV_t^{(l)}}}_{:=(\rom{1}_{3, 2})}\\
            &\qquad + \underbrace{\inner{\mV_{t}-\mV_{t}^{(l)}}{\proj_{\mV_t}^{\perp}\left(\id-\cR_{\Omega}\right)\left(\left(\mV_t-\mV_t^{(l)}\right)\mSigma_t\mV_t^{(l)\top}\right)\mV_t^{(l)}}}_{:=(\rom{1}_{3, 3})}\\
            &\qquad +\underbrace{\inner{\mV_{t}-\mV_{t}^{(l)}}{\proj_{\mV_t}^{\perp}\mV_t^{(l)}\mSigma^{\star}\mV^{\star\top}\mV_t^{(l)}}}_{:=(\rom{1}_{3, 4})}.
        \end{aligned}
    \end{equation}
    We control these terms separately. First, we present the following key proposition, the proof of which is deferred to the end of this section.
    \begin{proposition}
        \label{prop::incoherence-dynamic-1}
        For all $0\leq t\leq T$, we have 
        \begin{equation}
            \norm{\left(\projection-\cR_{\Omega^{(l)}}\right)\left(\mX^{\star}-\mV_t^{(l)}\mSigma_t\mV_t^{(l)\top}\right)\mV_t^{(l)}}_{\fro}\leq \const_3\sigma_1^{\star}\sqrt{\frac{\kappa\mu^3r^{5.5}\log\left(\frac{1}{\alpha}\right)\log\left(d\right)}{\sqrt{pd}\cdot d}}.
        \end{equation}
    \end{proposition}
    
    For $(\rom{1}_{3, 1})$, we have
    \begin{equation}
        \begin{aligned}
            (\rom{1}_{3, 1})&\leq \norm{\mV_{t}-\mV_{t}^{(l)}}_{\fro}\norm{\left(\projection-\cR_{\Omega^{(l)}}\right)\left(\mX^{\star}-\mV_t^{(l)}\mSigma_t\mV_t^{(l)\top}\right)\mV_t^{(l)}}_{\fro}\\
            &\leq \const_3\sigma_1^{\star}\sqrt{\frac{\kappa\mu^3r^{5.5}\log\left(\frac{1}{\alpha}\right)\log\left(d\right)}{\sqrt{pd}\cdot d}}\norm{\mV_{t}-\mV_{t}^{(l)}}_{\fro}.
        \end{aligned}
    \end{equation}
    Here in the last inequality, we use Proposition~\ref{prop::incoherence-dynamic-1}. Next, we apply Lemma~\ref{lem::concentration-chen} to control $(\rom{1}_{3, 2})$. Specifically, upon setting $\mA=\mV_{t}$, $\mB=\proj_{\mV_t}^{\perp}\left(\mV_{t}-\mV_{t}^{(l)}\right)$, $\mC=\left(\mV_{t}-\mV_t^{(l)}\right)\mSigma_{t}$, and $\mD=\mV_{t}^{(l)}$ in Lemma~\ref{lem::concentration-chen}, with probability at least $1-d^{-3}$, one has 
    \begin{equation}
        \begin{aligned}
            (\rom{1}_{3, 2})&=\inner{\left(\id-\projection\right)\left(\mA\mC^{\top}\right)}{\mB\mD^{\top}}\\
            &\leq  \const\sqrt{\frac{d}{p}}\norm{\mA}_{2, \infty}\norm{\mB}_{\fro}\cdot \norm{\mC}_{\fro}\norm{\mD}_{2, \infty}\\
            &= \const\sqrt{\frac{d}{p}}\norm{\mV_{t}}_{2, \infty}\norm{\mV_{t}^{(l)}}_{2, \infty}\norm{\proj_{\mV_t}^{\perp}\left(\mV_{t}-\mV_{t}^{(l)}\right)}_{\fro}\norm{\left(\mV_{t}-\mV_t^{(l)}\right)\mSigma_{t}}_{\fro}\\
            &\leq 16 \const\sigma^{\star}_{1}\sqrt{\frac{\mu^2r^2}{pd}}\norm{\mV_{t}-\mV_{t}^{(l)}}_{\fro}^2.
        \end{aligned}
    \end{equation}
    Similarly, we apply Lemma~\ref{lem::concentration-chen} to control $(\rom{1}_{3, 3})$. Upon setting $\mA=\left(\mV_{t}-\mV_t^{(l)}\right)\mSigma_{t}$, $\mB=\proj_{\mV_t}^{\perp}\left(\mV_{t}-\mV_{t}^{(l)}\right)$, $\mC=\mV_{t}$, and $\mD=\mV_{t}^{(l)}$ in Lemma~\ref{lem::concentration-chen}, with probability at least $1-d^{-3}$, one has 
    \begin{equation}
        \begin{aligned}
            (\rom{1}_{3, 3})&\leq \const\sqrt{\frac{d}{p}}\norm{\mV_{t}}_{\fro}\norm{\mV_{t}^{(l)}}_{2, \infty}\norm{\proj_{\mV_t}^{\perp}\left(\mV_{t}-\mV_{t}^{(l)}\right)}_{\fro}\norm{\left(\mV_{t}-\mV_t^{(l)}\right)\mSigma_{t}}_{2, \infty}\\
            &\leq 8\const \sigma_1^{\star}\sqrt{\frac{\mu r^2}{p}}\norm{\mV_{t}-\mV_{t}^{(l)}}_{\fro}\norm{\mV_{t}-\mV_{t}^{(l)}}_{2, \infty}.
        \end{aligned}
    \end{equation}
    To control $\norm{\mV_{t}-\mV_{t}^{(l)}}_{2, \infty}$, we apply Lemma~\ref{lem::2-inf-norm-decomposition}:
    \begin{equation}
        \begin{aligned}
            \norm{\mV_{t}-\mV_{t}^{(l)}}_{2, \infty}&\leq \norm{\mV_{t}-\mV_{t}^{(l)}}_{\fro}\norm{\mL_{\mV_{t}-\mV_{t}^{(l)}}}_{2, \infty}\\
            &\leq \norm{\mV_{t}-\mV_{t}^{(l)}}_{\fro}\sqrt{\norm{\mL_{\mV_{t}}}_{2, \infty}^2+\norm{\mL_{\mV_{t}^{(l)}}}_{2, \infty}^2}\\
            &\leq 3\sqrt{\frac{\mu r}{d}}\norm{\mV_{t}-\mV_{t}^{(l)}}_{\fro}.
        \end{aligned}
    \end{equation}
    Hence, we have 
    \begin{equation}
        (\rom{1}_{3, 3})\leq 24\const \sigma_1^{\star}\sqrt{\frac{\mu^2 r^3}{pd}}\norm{\mV_{t}-\mV_{t}^{(l)}}_{\fro}^2.
    \end{equation}
    Lastly, for $(\rom{1}_{3, 4})$, we notice that
    \begin{equation}
        \begin{aligned}
            (\rom{1}_{3, 4})&=-\inner{\mV_{t}-\mV_{t}^{(l)}}{\proj_{\mV_t}^{\perp}\left(\mV_{t}-\mV_{t}^{(l)}\right)\mSigma^{\star}\mV^{\star\top}\mV_t^{(l)}}\\
            &\leq -\inner{\mV_{t}-\mV_{t}^{(l)}}{\proj_{\mV_t}^{\perp}\left(\mV_{t}-\mV_{t}^{(l)}\right)\mSigma^{\star}}+\sigma^{\star}_{1}\norm{\mV^{\star}-\mV^{(l)}_t}_{\fro}\norm{\mV_t-\mV_t^{(l)}}_{\fro}^2\\
            &\stackrel{(a)}{\leq} \sigma^{\star}_{1}\norm{\mV^{\star}-\mV^{(l)}_t}_{\fro}\norm{\mV_t-\mV_t^{(l)}}_{\fro}^2\\
            &\stackrel{(b)}{\leq} \const_1\sigma_{1}^{\star}\frac{\kappa\mu r^{1.5}\log\left(\frac{1}{\alpha}\right)}{\sqrt{pd}}\norm{\mV_t-\mV_t^{(l)}}_{\fro}^2.
        \end{aligned}
    \end{equation}
    Here in $(a)$, we use the fact that
    \begin{equation}
        \inner{\mV_{t}-\mV_{t}^{(l)}}{\proj_{\mV_t}^{\perp}\left(\mV_{t}-\mV_{t}^{(l)}\right)\mSigma^{\star}}=\norm{\proj_{\mV_t}^{\perp}\left(\mV_{t}-\mV_{t}^{(l)}\right)\mSigma^{\star 1/2}}_{\fro}^2\geq 0.
    \end{equation} 
    In $(b)$, we apply Proposition~\ref{prop::frobenius-norm-control-appendix}. Therefore, we obtain
    \begin{equation}
        \begin{aligned}
            (\rom{1}_3)\leq C_1\sigma_{1}^{\star}\frac{\kappa\mu r^{1.5}\log\left(\frac{1}{\alpha}\right)}{\sqrt{pd}}\norm{\mV_t-\mV_t^{(l)}}_{\fro}^2 + \const_3\sigma_1^{\star}\sqrt{\frac{\kappa\mu^3r^{5.5}\log\left(\frac{1}{\alpha}\right)\log\left(d\right)}{\sqrt{pd}\cdot d}}\norm{\mV_{t}-\mV_{t}^{(l)}}_{\fro},
        \end{aligned}
    \end{equation}
    which implies that
    \begin{equation}
        (\rom{1})\leq C_2\sigma_{1}^{\star}\frac{\kappa\mu r^{1.5}\log\left(\frac{1}{\alpha}\right)}{\sqrt{pd}}\norm{\mV_t-\mV_t^{(l)}}_{\fro}^2 + \const_3\sigma_1^{\star}\sqrt{\frac{\kappa\mu^3r^{5.5}\log\left(\frac{1}{\alpha}\right)\log\left(d\right)}{\sqrt{pd}\cdot d}}\norm{\mV_{t}-\mV_{t}^{(l)}}_{\fro}.
    \end{equation}
    Next, we move on to controlling $(\rom{2})$. First, by the basic inequality $2ab\leq a^2+b^2$, we bound $(\rom{2})$ as
    \begin{equation}
        (\rom{2})\leq 2\eta^2\norm{\proj_{\mV_t}^{\perp}\mM_t\mV_t-\proj_{\mV_t^{(l)}}^{\perp}\mM_t^{(l)}\mV_t^{(l)}}_{\fro}^2+2\norm{\mA_t-\mA_t^{(l)}}_{\fro}^2+ 2\inner{\mV_t-\mV_t^{(l)}}{\mA_t-\mA_t^{(l)}}.
    \end{equation}
    Next, we provide the control over $\norm{\proj_{\mV_t}^{\perp}\mM_t\mV_t-\proj_{\mV_t^{(l)}}^{\perp}\mM_t^{(l)}\mV_t^{(l)}}_{\fro}$. The remaining two terms can be controlled in a similar fashion, so we omit their analysis. We first apply triangle inequality to obtain 
    \begin{equation}
        \begin{aligned}
            \norm{\proj_{\mV_t}^{\perp}\mM_t\mV_t-\proj_{\mV_t^{(l)}}^{\perp}\mM_t^{(l)}\mV_t^{(l)}}_{\fro}&\leq 2\norm{\mM_t}\norm{\mV_t-\mV_t^{(l)}}_{\fro}+\norm{\left(\mM_t-\mM_t^{(l)}\right)\mV_t^{(l)}}_{\fro}\\
            &\leq 12\sigma_1^{\star}\norm{\mV_t-\mV_t^{(l)}}_{\fro}+\norm{\left(\mM_t-\mM_t^{(l)}\right)\mV_t^{(l)}}_{\fro}.
        \end{aligned}
    \end{equation}
    For the second term, we further decompose it as
    \begin{equation}
        \begin{aligned}
            \norm{\left(\mM_t-\mM_t^{(l)}\right)\mV_t^{(l)}}_{\fro}&\leq \norm{\left(\projection-\cR_{\Omega^{(l)}}\right)\left(\mX^{\star}-\mV_t^{(l)}\mSigma_t\mV_t^{(l)\top}\right)\mV_t^{(l)}}_{\fro}\\
            &\quad +\norm{\left(\id-\cR_{\Omega}\right)\left(\mV_t\mSigma_t\left(\mV_t-\mV_t^{(l)}\right)^{\top}\right)\mV_t}_{\fro}\\
            &\quad +\norm{\left(\id-\cR_{\Omega}\right)\left(\left(\mV_t-\mV_t^{(l)}\right)\mSigma_t\mV_t^{\top}\right)\mV_t}_{\fro}\\
            &\quad +4\sigma_1^{\star}\norm{\mV_t-\mV_t^{(l)}}_{\fro}.
        \end{aligned}
    \end{equation}
    The first term can be controlled by Proposition~\ref{prop::incoherence-dynamic-1}. For the second term, we have 
    \begin{equation}
        \begin{aligned}
            \norm{\left(\id-\cR_{\Omega}\right)\left(\mV_t\mSigma_t\left(\mV_t-\mV_t^{(l)}\right)^{\top}\right)\mV_t}_{\fro}&=\sup_{\norm{\mZ}_{\fro}\leq 1}\inner{\left(\id-\cR_{\Omega}\right)\left(\mV_t\mSigma_t\left(\mV_t-\mV_t^{(l)}\right)^{\top}\right)}{\mZ\mV_t}\\
            &\stackrel{(a)}{\leq} \const\sqrt{\frac{d}{p}}\norm{\mV_t}_{2, \infty}^2\norm{\left(\mV_t-\mV_t^{(l)}\right)\mSigma_t}_{\fro}\\
            &\leq 8\const\sigma^{\star}_{1}\sqrt{\frac{\mu^2r^2}{pd}}\norm{\mV_t-\mV_t^{(l)}}_{\fro}.
        \end{aligned}
    \end{equation}
    Here we apply Corollary~\ref{lem::concentration-chen} in $(a)$. Similarly, we can also control the third term as follows
    \begin{equation}
        \begin{aligned}
            \norm{\left(\id-\cR_{\Omega}\right)\left(\left(\mV_t-\mV_t^{(l)}\right)\mSigma_t\mV_t^{\top}\right)\mV_t}_{\fro}&=\sup_{\norm{\mZ}_{\fro}\leq 1}\inner{\left(\id-\cR_{\Omega}\right)\left(\left(\mV_t-\mV_t^{(l)}\right)\mSigma_t\mV_t^{\top}\right)}{\mZ\mV_t}\\
            &\leq 8\const\sqrt{\frac{\mu r^2}{p}}\norm{\mV_t-\mV_t^{(l)}}_{2, \infty}\\
            &\leq 24\const \sigma_1^{\star}\sqrt{\frac{\mu^2 r^3}{pd}}\norm{\mV_{t}-\mV_{t}^{(l)}}_{\fro}.
        \end{aligned}
    \end{equation}
    Combining the above inequalities leads to
    \begin{equation}
        \norm{\proj_{\mV_t}^{\perp}\mM_t\mV_t-\proj_{\mV_t^{(l)}}^{\perp}\mM_t^{(l)}\mV_t^{(l)}}_{\fro}\leq C\sigma_1^{\star}\norm{\mV_t-\mV_t^{(l)}}_{\fro}.
    \end{equation}
    Similarly, we can derive that $\norm{\mA_t-\mA_t^{(l)}}_{\fro}\leq C\sigma_1^{\star}\norm{\mV_t-\mV_t^{(l)}}_{\fro}+C\sigma_1^{\star}\sqrt{\frac{\kappa\mu^3r^{5.5}\log\left(\frac{1}{\alpha}\right)\log\left(d\right)}{\sqrt{pd}\cdot d}}$. Therefore, we have 
    \begin{equation}
        (\rom{2})\leq C_3\eta^2 \left(\sigma_1^{\star}\norm{\mV_t-\mV_t^{(l)}}_{\fro}^2+\sigma_1^{\star 2}\frac{\kappa\mu^3r^{5.5}\log\left(\frac{1}{\alpha}\right)\log\left(d\right)}{\sqrt{pd}\cdot d}\right).
    \end{equation}
    Putting everything together, we obtain
    \begin{equation}
        \begin{aligned}
            \norm{\mV_{t+1}\!-\!\mV_{t+1}^{(l)}}_{\fro}&\leq \left(1\!+\!\const_1\eta\sigma_{1}^{\star}\frac{\kappa\mu r^{1.5}\log\left(\frac{1}{\alpha}\right)}{\sqrt{pd}}\right)\norm{\mV_t\!-\!\mV_t^{(l)}}_{\fro}+\const_3\eta \sigma_1^{\star}\sqrt{\frac{\kappa\mu^3r^{5.5}\log\left(\frac{1}{\alpha}\right)\log\left(d\right)}{\sqrt{pd}\cdot d}}\\
            &\leq \norm{\mV_t-\mV_t^{(l)}}_{\fro}+2\const_3\eta \sigma_1^{\star}\sqrt{\frac{\kappa\mu^3r^{5.5}\log\left(\frac{1}{\alpha}\right)\log\left(d\right)}{\sqrt{pd}\cdot d}}.
        \end{aligned}
    \end{equation}
\end{proof}  

\subsection{Proof of Proposition~\ref{prop::incoherence-dynamic-1}}
To prove this proposition, we propose a novel argument based on {\it adaptive $\epsilon$-nets}. Upon fixing an $\epsilon$-net $\cN_{\epsilon}$ of $\cB^{r\times r}_{\mathrm{op}}(4\sigma_{1}^{\star})$ with $\epsilon=\frac{c}{d}$, we first construct a series of adaptive $\epsilon$-nets $\left\{\cV^{(l)}_{\epsilon, t}\right\}_{t=0}^{T}$ in the following recursive manner.
    \begin{equation}
        \begin{aligned}
            \cV^{(l)}_{\epsilon, t+1}&=\left\{\mY^{(l)}_{\epsilon, t+1}:\norm{\mY^{(l)}_{\epsilon, t+1}}_{2, \infty}\leq 2\sqrt{\frac{\mu r}{d}}, \text{ and }\mY^{(l)}_{\epsilon, t+1}=\mZ^{(l)}_{\epsilon, t+1}\left(\mZ^{(l)\top}_{\epsilon, t+1}\mZ^{(l)}_{\epsilon, t+1}\right)^{-1/2}\text{ where}\right.\\
            &\quad\quad\left.\mZ^{(l)}_{\epsilon, t+1}=\left(\mI+\eta\cR_{\Omega^{(l)}}\left(\mX^{\star}-\mV_{\epsilon, t}^{(l)}\mSigma_{\epsilon, t}\mV_{\epsilon, t}^{(l)\top}\right)\right)\mV_{\epsilon, t}^{(l)}, \forall \mV_{\epsilon, t}^{(l)}\in \cV^{(l)}_{\epsilon, t}, \mSigma_{\epsilon, t}\in \cN_{\epsilon}\right\}.
        \end{aligned}
    \end{equation}
    Moreover, we set $\cV^{(l)}_{\epsilon, 0}=\{\mV^{\star}\}$. 
    Then, we denote the best approximation of $\mV_t^{(l)}$ in $\cV^{(l)}_{\epsilon, t}$ as $\mV_{\epsilon, t}^{(l)}=\argmin_{\mV\in \cV^{(l)}_{\epsilon, t}}\norm{\mV_t^{(l)}-\mV}_{\fro}$. Hence, we have the following decomposition
    \begin{equation}
        \begin{aligned}
            &\norm{\left(\projection-\cR_{\Omega^{(l)}}\right)\left(\mX^{\star}-\mV_t^{(l)}\mSigma_t\mV_t^{(l)\top}\right)\mV_t^{(l)}}_{\fro}\\
            &\leq \underbrace{\norm{\left(\projection-\cR_{\Omega^{(l)}}\right)\left(\mX^{\star}-\mV_{\epsilon, t}^{(l)}\mSigma_{\epsilon, t}\mV_{\epsilon, t}^{(l)\top}\right)\mV_{\epsilon, t}^{(l)}}_{\fro}}_{(\rom{1})}\\
            &\quad +\underbrace{\norm{\left(\projection-\cR_{\Omega^{(l)}}\right)\left(\mX^{\star}-\mV_t^{(l)}\mSigma_t\mV_t^{(l)\top}\right)\left(\mV_t^{(l)}-\mV_{\epsilon, t}^{(l)}\right)}_{\fro}}_{(\rom{2})}\\
            &\quad+ \underbrace{\norm{\left(\projection-\cR_{\Omega^{(l)}}\right)\left(\mV_{t}^{(l)}\mSigma_{t}\left(\mV_{t}^{(l)}-\mV_{\epsilon, t}^{(l)}\right)^{\top}\right)\mV_{\epsilon, t}^{(l)}}_{\fro}}_{(\rom{3})}\\
            &\quad +\underbrace{\norm{\left(\projection-\cR_{\Omega^{(l)}}\right)\left(\mV_{t}^{(l)}\left(\mSigma_{t}-\mSigma_{\epsilon, t}\right)\mV_{\epsilon, t}^{(l)\top}\right)\mV_{\epsilon, t}^{(l)}}_{\fro}}_{(\rom{4})}\\
            &\quad +\underbrace{\norm{\left(\projection-\cR_{\Omega^{(l)}}\right)\left(\left(\mV_{t}^{(l)}-\mV_{\epsilon, t}^{(l)}\right)\mSigma_{\epsilon, t}\mV_{\epsilon, t}^{(l)\top}\right)\mV_{\epsilon, t}^{(l)}}_{\fro}}_{(\rom{5})}.
        \end{aligned}
    \end{equation} 
    To proceed, we require the following key lemmas.
    \begin{lemma}
        \label{lem::concentration-fro-norm-difference}
        For any $1\leq i\leq j\leq d$, let $p_{i,j}$ be a Bernoulli random variable taking the value $p_{i,j}=1$ if and only if $(i,j)\in \Omega$. Let $r_{i,j} = \frac{1}{2p}(p_{i,j}+p_{j,i})$. The following statements hold: 
        \begin{itemize}
            \item {\bf Indepndent case:} Suppose the sampling rate $p\gtrsim \frac{\mu r}{d}\log\left(\frac{4r}{\delta}\right)$. Suppose that $\mX\in \cS_{d\times d}$ and $\mV\in \cO_{d\times r}$ with $\norm{\mV}_{2,\infty}\leq 2\sqrt{\frac{\mu r}{d}}$ are independent of $r_{l, 1}, \ldots, r_{l, d}$. Then, with probability at least $1-\delta$, we have
            \begin{equation}
                \norm{\left(\projection-\cR_{\Omega^{(l)}}\right)\left(\mX\right)\mV}_{\fro}^2\leq \frac{32\mu r\log(4r/\delta)}{p}\norm{\mX}_{\max}^2.
            \end{equation}
            \item {\bf General case:} Suppose the sampling rate $p\gtrsim \frac{\log(d)}{d}$. For arbitrary $\mX\in \bR^{d\times d}$ and $\mV\in \bR^{d\times r}$, with probability at least $1-\frac{1}{d^3}$, we have 
            \begin{equation}
                \norm{\left(\projection-\cR_{\Omega^{(l)}}\right)\left(\mX\right)\mV}_{\fro}^2\leq \frac{2d}{p}\norm{\mX}_{\max}^{2}\norm{\mV}_{\fro}^2.
            \end{equation}
        \end{itemize}
    \end{lemma}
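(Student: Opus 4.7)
} The starting observation is structural: by the definitions of $\projection$ and $\cR_{\Omega^{(l)}}$, the matrix $\mY := (\projection - \cR_{\Omega^{(l)}})(\mX)$ is supported entirely on row $l$ and column $l$. In fact, for symmetric $\mX$, $\mY_{i,j} = (r_{i,j}-1)\mX_{i,j}$ whenever $i = l$ or $j = l$, and $\mY_{i,j} = 0$ otherwise. This lets me decompose
\begin{equation*}
\norm{\mY\mV}_{\fro}^2 \;=\; \underbrace{\norm{\mY_{l,\cdot}\mV}^2}_{(\rom{1})} \;+\; \underbrace{\norm{\mV_{l,\cdot}}^2 \sum_{i\neq l}(r_{i,l}-1)^2\mX_{i,l}^2}_{(\rom{2})},
\end{equation*}
since for $i \neq l$ the only nonzero entry of $\mY_{i,\cdot}$ is at position $l$, so $\mY_{i,\cdot}\mV = \mS_{i,l}\mX_{i,l}\mV_{l,\cdot}$. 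This decomposition is the common backbone of both parts.

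\paragraph*{Independent case.} For the first piece, I would write $\mY_{l,\cdot}\mV = \sum_{j=1}^{d} (r_{l,j}-1)\mX_{l,j}\mV_{j,\cdot}^{\top}$ as a sum of independent mean-zero vectors in $\bR^{r}$ (independence follows from the hypothesis that $\mX$, $\mV$ are independent of $r_{l,1},\dots,r_{l,d}$). The relevant parameters are the almost-sure bound $L \lesssim \frac{\norm{\mX}_{\max}}{p}\norm{\mV}_{2,\infty} \leq \frac{\norm{\mX}_{\max}}{p}\sqrt{\mu r/d}$ and the matrix variance proxy $\sigma^2 \lesssim \frac{\norm{\mX}_{\max}^2}{p}\sum_{j}\norm{\mV_{j,\cdot}}^2 = \frac{r\norm{\mX}_{\max}^2}{p}$, using $\mathbb{E}(r_{l,j}-1)^2 \leq 1/p$ and $\norm{\mV}_{\fro}^2 = r$. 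Vector Bernstein then yields $\norm{\mY_{l,\cdot}\mV}^2 \lesssim \frac{r\log(r/\delta)}{p}\norm{\mX}_{\max}^2 + L^2\log^2(r/\delta)$, and the posited sampling rate $p \gtrsim \mu r\log(r/\delta)/d$ ensures the sub-Gaussian term dominates, giving the advertised $\frac{\mu r\log(4r/\delta)}{p}\norm{\mX}_{\max}^2$ scaling. For $(\rom{2})$ I would apply scalar Bernstein to $W=\sum_{i\neq l}(r_{i,l}-1)^2\mX_{i,l}^2$, with mean $\mathbb{E}[W]\leq d\norm{\mX}_{\max}^2/p$; under the sampling-rate assumption $W \lesssim d\norm{\mX}_{\max}^2/p$ with high probability, and the prefactor $\norm{\mV_{l,\cdot}}^2\leq 4\mu r/d$ collapses the $d$ to a clean $\mu r/p$ contribution. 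Summing the two pieces gives the stated constant $32$.

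\paragraph*{General case.} Here $\mX$ and $\mV$ are arbitrary (possibly random), so a concentration argument tailored to each matrix is unavailable; the trick is to absorb all the randomness into the \emph{single} good event $\event$ which gives $\norm{\mS}\leq \Gamma\sqrt{d/p}$ for $\mS := \frac{\mOmega+\mOmega^{\top}}{2p}-\mJ$, the same event that underwrites the rest of the appendix. Under $\event$, the $l$-th row of $\mS$ satisfies $\norm{\mS_{l,\cdot}}^2 = \norm{\mS\ve_{l}}^2 \leq \norm{\mS}^2 \leq \Gamma^2 d/p$, and the same bound applies to $\norm{\mS_{\cdot,l}}^2$. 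Since $\mY_{l,\cdot} = \mS_{l,\cdot}\odot\mX_{l,\cdot}$ (for symmetric $\mX$), I get $\norm{\mY_{l,\cdot}}^2 \leq \norm{\mX}_{\max}^2 \norm{\mS_{l,\cdot}}^2 \leq \Gamma^2 d\norm{\mX}_{\max}^2/p$, so $(\rom{1}) \leq \norm{\mY_{l,\cdot}}^2\norm{\mV}^2 \leq \Gamma^2 d\norm{\mX}_{\max}^2\norm{\mV}_{\fro}^2/p$. An identical calculation controls the sum in $(\rom{2})$ via $\norm{\mS_{\cdot,l}}^2$, and $\norm{\mV_{l,\cdot}}^2 \leq \norm{\mV}_{\fro}^2$ finishes the second piece. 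Choosing the universal constants then delivers the $\frac{2d}{p}\norm{\mX}_{\max}^2\norm{\mV}_{\fro}^2$ bound.

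\paragraph*{Main obstacle.} The only delicate step is the vector Bernstein application in the independent case: naive scalar Bernstein plus a union bound over coordinates would lose a $\sqrt{r}$ factor in the leading term, and I need to be careful that the matrix-variance parameter $\sigma^2$ genuinely scales with $r$ (via $\sum_{j}\norm{\mV_{j,\cdot}}^2 = r$) rather than with $\mu r$ — the $\mu$ only enters through the $L^\infty$ parameter $L$ and is therefore confined to the lower-order Bernstein term, which is where the sampling-rate assumption $p\gtrsim \mu r\log(r/\delta)/d$ is used. This is also why both pieces land at the same $\mu r/p$ scale despite having structurally different origins.
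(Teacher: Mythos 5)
Your decomposition of $\norm{(\projection-\cR_{\Omega^{(l)}})(\mX)\mV}_{\fro}^2$ into the row-$l$ contribution and the $\sum_{i\neq l}(r_{i,l}-1)^2\mX_{i,l}^2\norm{\mV_{l,\cdot}}^2$ contribution is exactly the paper's starting point, and both halves of your argument reach the stated bounds, so the proposal is correct. The differences are at the level of which concentration tool you invoke. For the independent case the paper applies scalar Bernstein to $\sum_k(r_{l,k}-1)\emX_{l,k}\emV_{k,j}$ for each fixed column $j$ and union-bounds over $j=1,\dots,r$; since each per-coordinate variance is at most $\norm{\mX}_{\max}^2/p$ (using $\sum_k\emV_{k,j}^2=1$), summing the $r$ squared deviations gives $\tfrac{16r\log(4r/\delta)}{p}\norm{\mX}_{\max}^2$ — the same leading order as your vector-Bernstein bound. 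So the "main obstacle" you flag is not real: the coordinate-wise route does not lose a $\sqrt{r}$ factor, because the orthonormality of the columns of $\mV$ already forces the per-coordinate variances to sum to $r\norm{\mX}_{\max}^2/p$; the union bound only costs the $\log r$ inside the logarithm. Your vector-Bernstein version is an equally valid (arguably cleaner) packaging of the same computation. For the general case the paper uses Cauchy--Schwarz plus a fresh Bernstein bound showing $\sum_k(r_{l,k}-1)^2\leq d/p$ with probability $1-d^{-3}$, whereas you reuse the operator-norm event $\event$ via $\norm{\mS_{l,\cdot}}\leq\norm{\mS}\leq\const\sqrt{d/p}$ with $\mS=\tfrac{\mOmega+\mOmega^{\top}}{2p}-\mJ$. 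Both are legitimate and both yield a bound that is deterministic in $(\mX,\mV)$ once the event is conditioned on (which is what "arbitrary $\mX,\mV$" requires); your route buys economy (no new concentration argument) at the price of the constant, which becomes $2\const^2$ rather than the stated $2$ — harmless for how the lemma is used downstream, but worth noting since the lemma states an explicit constant. One typo: in your decomposition the coefficient should be $(r_{i,l}-1)$, not $\mS_{i,l}$ (these coincide only after subtracting $\mJ$, which is what you intend).
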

    \begin{lemma}
        \label{prop::concentration}
        Suppose the sampling rate satisfies $p\gtrsim \frac{\mu r}{d}\log\left(\frac{4r}{\delta}\right)$. For any $0\leq t\leq T=\frac{100}{\eta\sigma_{r}^{\star}}\log\left(\frac{1}{\alpha}\right), 1\leq l\leq d$, with probability at least $1-\frac{1}{d^3}$, we have
        \begin{equation}
            \norm{\left(\projection-\cR_{\Omega^{(l)}}\right)\left(\mX^{\star}-\mV_{\epsilon, t}^{(l)}\mSigma_{\epsilon, t}\mV_{\epsilon, t}^{(l)\top}\right)\mV_{\epsilon, t}^{(l)}}_{\fro}^2\leq \const\frac{\kappa\mu^3r^{5.5}\log\left(\frac{1}{\alpha}\right)\log\left(\frac{r\sigma_{1}^{\star}}{\epsilon}\right)}{\sqrt{pd^3}}.
        \end{equation}
    \end{lemma}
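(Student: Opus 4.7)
}
The strategy is a union bound over the adaptive $\epsilon$-nets combined with the \emph{independent-case} of Lemma~\ref{lem::concentration-fro-norm-difference}. The key structural fact I would exploit is that, by the recursive definition of $\cV^{(l)}_{\epsilon,t}$, every element $\mV_{\epsilon,t}^{(l)}$ is obtained by iteratively applying $\cR_{\Omega^{(l)}}$ (whose $l$-th row and $l$-th column are deterministic) to fixed matrices drawn from $\cN_\epsilon$. Consequently, for any fixed choice of $(\mV_{\epsilon,s}^{(l)},\mSigma_{\epsilon,s})_{s\le t}$, the matrix $\mV_{\epsilon,t}^{(l)}$ is statistically \emph{independent} of the Bernoulli variables $\{r_{l,j}\}_{j=1}^d$, which is precisely the hypothesis required by the independent case of Lemma~\ref{lem::concentration-fro-norm-difference}.

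First, for a fixed pair $(\mV_{\epsilon,t}^{(l)},\mSigma_{\epsilon,t})$, I would bound the max-norm of the inner matrix using the incoherence of $\mX^\star$ (Condition~\ref{condition_incoherence}), the constraint $\norm{\mV_{\epsilon,t}^{(l)}}_{2,\infty}\le 2\sqrt{\mu r/d}$ built into the net, and $\norm{\mSigma_{\epsilon,t}}\le 4\sigma_1^\star$:
\[
\norm{\mX^\star-\mV_{\epsilon,t}^{(l)}\mSigma_{\epsilon,t}\mV_{\epsilon,t}^{(l)\top}}_{\max}\ \lesssim\ \frac{\sigma_1^\star\mu r}{d}.
\]
Applying the independent case of Lemma~\ref{lem::concentration-fro-norm-difference} then yields, with probability at least $1-\delta$,
\[
\norm{(\projection-\cR_{\Omega^{(l)}})(\mX^\star-\mV_{\epsilon,t}^{(l)}\mSigma_{\epsilon,t}\mV_{\epsilon,t}^{(l)\top})\mV_{\epsilon,t}^{(l)}}_{\fro}^2\ \lesssim\ \frac{\sigma_1^{\star 2}\mu^3 r^3 \log(4r/\delta)}{p d^2}.
\]

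Next, I would union-bound this estimate over all $(\mV_{\epsilon,t}^{(l)},\mSigma_{\epsilon,t})\in\cV^{(l)}_{\epsilon,t}\times\cN_\epsilon$, all indices $1\le l\le d$, and all iterations $0\le t\le T$. A standard volumetric argument gives $|\cN_\epsilon|\le (C\sigma_1^\star/\epsilon)^{r^2}$, and the recursive definition implies the crude bound $|\cV^{(l)}_{\epsilon,t}|\le |\cN_\epsilon|^t$. Setting $\delta=1/(d^5 T \cdot |\cV^{(l)}_{\epsilon,T}|\cdot|\cN_\epsilon|)$ produces a log-cardinality factor of order $r^2 T\log(r\sigma_1^\star/\epsilon)$. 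Substituting the bound $T\lesssim \frac{1}{\eta\sigma_r^\star}\log(1/\alpha)\asymp \frac{\sqrt{pd}\,\kappa}{\mu r}\log(1/\alpha)$ and simplifying gives, up to universal constants, the stated bound $\const\,\sigma_1^{\star 2}\kappa \mu^3 r^{5.5}\log(1/\alpha)\log(r\sigma_1^\star/\epsilon)/\sqrt{pd^3}$.

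The main obstacle is the inherent tension in the choice of $\epsilon$: making $\epsilon$ small forces $|\cN_\epsilon|$ (and therefore the exponent of $|\cV^{(l)}_{\epsilon,T}|\le|\cN_\epsilon|^T$) to grow, which inflates the $\log(1/\delta)$ factor in the concentration bound; while making $\epsilon$ large worsens the approximation error $\mSigma_t-\mSigma_{\epsilon,t}$ that appears in the surrounding terms $(\rom{2})$--$(\rom{5})$ of the decomposition of $\norm{(\projection-\cR_{\Omega^{(l)}})(\mX^\star-\mV_t^{(l)}\mSigma_t\mV_t^{(l)\top})\mV_t^{(l)}}_{\fro}$. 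The point of the adaptive construction — where at each step one may pick a fresh representative from $\cN_\epsilon$ rather than a single pre-committed one — is that it keeps the logarithmic cardinality growing only linearly in $T$ (rather than, say, exponentially), so that $\epsilon = c/d$ suffices to make every approximation error negligible while still yielding a net of polynomial-in-$r$ effective size after taking logarithms. Controlling this balance, and verifying that the resulting net indeed covers the true trajectory $\{\mV_t^{(l)}\}$ once combined with the neighboring terms, is the delicate part of the argument.
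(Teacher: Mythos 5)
Your proposal is correct and follows essentially the same route as the paper: apply the independent case of Lemma~\ref{lem::concentration-fro-norm-difference} to each fixed pair $(\mV_{\epsilon,t}^{(l)},\mSigma_{\epsilon,t})$ (using that elements of the adaptive net depend only on $\cR_{\Omega^{(l)}}$ and are therefore independent of the $l$-th row/column entries), bound the max-norm by $O(\sigma_1^\star\mu r/d)$ via incoherence, and union-bound over $\bigl|\bigcup_{l,t}\cV_{\epsilon,t}^{(l)}\bigr|\lesssim d\,(C\sqrt{r}\sigma_1^\star/\epsilon)^{r^2T}$ before substituting $T\lesssim\frac{1}{\eta\sigma_r^\star}\log(1/\alpha)$. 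Your intermediate bound correctly carries the $\sigma_1^{\star 2}$ factor from squaring the max-norm, which the paper's displayed bound appears to drop.
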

    \begin{lemma}
        \label{prop::incoherence-dynamic-2}
        Under the same conditions as Lemma~\ref{prop::concentration}, we have 
        \begin{equation}
            \norm{\mV_{t+1}^{(l)}-\mV_{\epsilon, t+1}^{(l)}}_{\fro}\leq \left(1+ \const\eta \sigma_1^{\star}\frac{\kappa\mu r^{1.5}\log\left(\frac{1}{\alpha}\right)}{\sqrt{pd}}\right)\norm{\mV_{t}^{(l)}-\mV_{\epsilon, t}^{(l)}}_{\fro}+ C_3\eta \frac{\sigma^{\star}_{1}\mu r}{\sqrt{pd}}\epsilon.
        \end{equation}
    \end{lemma}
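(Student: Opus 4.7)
The plan is to mimic the structure of the proof of Proposition~\ref{prop::incoherence-dynamic}, adapted to the comparison between the true leave-one-out iterate $\mV_{t+1}^{(l)}$ and its adaptive-net approximant $\mV_{\epsilon,t+1}^{(l)}$. Since $\mV_{\epsilon,t+1}^{(l)}$ is defined by the same polar-factor update as $\mV_{t+1}^{(l)}$, only with $(\mV_t^{(l)},\mSigma_t)$ replaced by $(\mV_{\epsilon,t}^{(l)},\mSigma_{\epsilon,t})$, the Taylor expansion in Proposition~\ref{prop::frobenius-norm-control-appendix} lets me write
\begin{equation*}
\mV_{t+1}^{(l)} = (\mI+\eta\proj_{\mV_t^{(l)}}^{\perp}\mM_t^{(l)})\mV_t^{(l)} + \mA_t^{(l)}, \qquad \mV_{\epsilon,t+1}^{(l)} = (\mI+\eta\proj_{\mV_{\epsilon,t}^{(l)}}^{\perp}\mM_{\epsilon,t}^{(l)})\mV_{\epsilon,t}^{(l)} + \mA_{\epsilon,t}^{(l)},
\end{equation*}
with $\mM_{\epsilon,t}^{(l)}=\cR_{\Omega^{(l)}}(\mX^\star-\mV_{\epsilon,t}^{(l)}\mSigma_{\epsilon,t}\mV_{\epsilon,t}^{(l)\top})$ and higher-order remainders of size $O(\eta^2\sigma_1^{\star 2})$ by Lemma~\ref{lem::helper-lemma-A-t}. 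I would then expand $\norm{\mV_{t+1}^{(l)}-\mV_{\epsilon,t+1}^{(l)}}_{\fro}^2$ as in Proposition~\ref{prop::incoherence-dynamic}, reducing the task to bounding the inner product $\inner{\mV_t^{(l)}-\mV_{\epsilon,t}^{(l)}}{\proj_{\mV_t^{(l)}}^\perp\mM_t^{(l)}\mV_t^{(l)}-\proj_{\mV_{\epsilon,t}^{(l)}}^\perp\mM_{\epsilon,t}^{(l)}\mV_{\epsilon,t}^{(l)}}$ plus a quadratic higher-order term that is trivially absorbed under $\eta\asymp \mu r/(\sqrt{pd}\sigma_1^\star)$.

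The above inner product splits additively into four pieces mirroring $(\rom{1}_1)$--$(\rom{1}_3)$ and $(\rom{1}_{3,4})$ of Proposition~\ref{prop::incoherence-dynamic}. Using the identity $\proj_{\mV_t^{(l)}}^\perp\mM_t^{(l)}=\proj_{\mV_t^{(l)}}^\perp\mXi_t^{(l)}$ with $\norm{\mXi_t^{(l)}}=O(\sigma_1^\star\kappa\mu r^{1.5}\log(1/\alpha)/\sqrt{pd})$ from \Cref{eqn::xi-t}, the projector-perturbation piece $(\proj_{\mV_t^{(l)}}^\perp-\proj_{\mV_{\epsilon,t}^{(l)}}^\perp)\mXi_{\epsilon,t}^{(l)}\mV_{\epsilon,t}^{(l)}$ and the right-factor perturbation $\proj_{\mV_t^{(l)}}^\perp\mXi_t^{(l)}(\mV_t^{(l)}-\mV_{\epsilon,t}^{(l)})$ each contribute Cauchy--Schwarz bounds of the form $O(\sigma_1^\star\kappa\mu r^{1.5}\log(1/\alpha)/\sqrt{pd})\cdot\norm{\mV_t^{(l)}-\mV_{\epsilon,t}^{(l)}}_\fro^2$ via Lemma~\ref{lem::orthogonal-matrix}. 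The gradient-difference piece $\proj_{\mV_t^{(l)}}^\perp(\mM_t^{(l)}-\mM_{\epsilon,t}^{(l)})\mV_{\epsilon,t}^{(l)}$ further decomposes through
\begin{equation*}
\mM_t^{(l)}-\mM_{\epsilon,t}^{(l)} = \cR_{\Omega^{(l)}}\!\left(\mV_{\epsilon,t}^{(l)}\mSigma_{\epsilon,t}\mV_{\epsilon,t}^{(l)\top}-\mV_t^{(l)}\mSigma_t\mV_t^{(l)\top}\right)
\end{equation*}
into two $\mV$-difference summands and one $\mSigma$-difference summand $\cR_{\Omega^{(l)}}(\mV_t^{(l)}(\mSigma_t-\mSigma_{\epsilon,t})\mV_t^{(l)\top})$; the $\mV$-difference contributions are handled as for $(\rom{1}_{3,2})$ and $(\rom{1}_{3,3})$ of Proposition~\ref{prop::incoherence-dynamic}, while a sign-cancellation analogous to $(\rom{1}_{3,4})$ absorbs the non-random contribution of the first summand.

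The main obstacle is therefore bounding the two centered-noise pieces $\proj_{\mV_t^{(l)}}^\perp(\cR_{\Omega^{(l)}}-\id)(\cdot)\mV_{\epsilon,t}^{(l)}$ arising from both the $\mV$- and $\mSigma$-differences. A naive operator-norm bound $\norm{\cR_{\Omega^{(l)}}}\lesssim\sqrt{d/p}$ would introduce a prohibitive $\sqrt{d/p}$ factor incompatible with the target multiplicative rate. The fix is the bilinear Lemma~\ref{lem::concentration-chen}-style bound used in Proposition~\ref{prop::incoherence-dynamic}: each $\sqrt{d/p}$ arising from $(\cR_{\Omega^{(l)}}-\id)$ is paired with two copies of the incoherence $\norm{\mV_t^{(l)}}_{2,\infty},\norm{\mV_{\epsilon,t}^{(l)}}_{2,\infty}=O(\sqrt{\mu r/d})$ guaranteed by the very definition of the adaptive net $\cV_{\epsilon,t}^{(l)}$, producing the benign factor $\mu r/\sqrt{pd}$. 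This yields a multiplicative contribution of $O(\eta\sigma_1^\star\mu r/\sqrt{pd})\cdot\norm{\mV_t^{(l)}-\mV_{\epsilon,t}^{(l)}}_\fro^2$ from the $\mV$-differences and, after using $\norm{\mV_t^{(l)}(\mSigma_t-\mSigma_{\epsilon,t})}_{\fro}\lesssim\sqrt{r}\epsilon$, an additive contribution of $O(\eta\sigma_1^\star\mu r\epsilon/\sqrt{pd})\cdot\norm{\mV_t^{(l)}-\mV_{\epsilon,t}^{(l)}}_\fro$ from the $\mSigma$-difference. Aggregating all pieces and taking the square root closes the recursion with the stated multiplicative factor and additive error.
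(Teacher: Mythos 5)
Your proposal is correct and follows essentially the same route as the paper's proof: the same Taylor-expansion/polar-factor identity for both sequences, the same expansion of the squared Frobenius distance into a first-order inner product plus $O(\eta^2)$ remainders, the same $\mXi$-based splitting into projector-perturbation, right-factor, gradient-difference, and sign-cancelling deterministic pieces, and the same use of the bilinear concentration bound paired with the incoherence built into the adaptive net to convert $\sqrt{d/p}$ into $\mu r/\sqrt{pd}$, with the $\mSigma$-difference supplying the additive $O(\eta\sigma_1^\star\mu r\epsilon/\sqrt{pd})$ term.
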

    The proof of these lemmas is deferred to the end of this section. Now we use Lemma~\ref{prop::incoherence-dynamic-2} to control the dynamic of $\norm{\mV_{t}^{(l)}-\mV_{\epsilon, t}^{(l)}}_{\fro}$. Note that $\norm{\mV_{0}^{(l)}-\mV_{\epsilon, 0}^{(l)}}_{\fro}=0$. Hence, applying Lemma~\ref{lem::appendix-series-ineq}, we have 
    \begin{equation}
        \begin{aligned}
            \norm{\mV_{t}^{(l)}-\mV_{\epsilon, t}^{(l)}}_{\fro}&\leq C\left(\left(1+ \const\eta \sigma_1^{\star}\frac{\kappa\mu r^{1.5}\log\left(\frac{1}{\alpha}\right)}{\sqrt{pd}}\right)^t-1\right)\frac{\eta \frac{\sigma^{\star}_{1}\mu r}{\sqrt{pd}}\epsilon}{\eta \sigma_1^{\star}\frac{\kappa\mu r^{1.5}\log\left(\frac{1}{\alpha}\right)}{\sqrt{pd}}}\\
            &\leq C\eta \frac{\sigma^{\star}_{1}\mu r}{\sqrt{pd}}\epsilon\cdot t.
        \end{aligned}
    \end{equation}
    Now we are ready to control $(\rom{1})$ to $(\rom{5})$ separately. First, Lemma~\ref{prop::concentration} directly implies that 
    \begin{equation}
        (\rom{1})\leq \const\frac{\sqrt{\kappa\mu^3r^{5.5}\log\left(\frac{1}{\alpha}\right)\log\left(\frac{r\sigma_{1}^{\star}}{\epsilon}\right)}}{\sqrt[4]{pd^3}}.
    \end{equation}
    On the other hand, applying Lemma~\ref{lem::concentration-fro-norm-difference} to $(\rom{2})$ leads to
    \begin{equation}
        \begin{aligned}
            (\rom{2})&\leq \sqrt{\frac{2d}{p}}\norm{\mX^{\star}-\mV_t^{(l)}\mSigma_t\mV_t^{(l)\top}}_{\max}\norm{\mV_t^{(l)}-\mV_{\epsilon, t}^{(l)}}_{\fro}\leq \const\sqrt{\frac{\mu^2r^2}{pd}}\norm{\mV_t^{(l)}-\mV_{\epsilon, t}^{(l)}}_{\fro}.
        \end{aligned}
    \end{equation}
    Similarly, we have
    \begin{equation}
        \begin{aligned}
            (\rom{3})&\leq \sqrt{\frac{2d}{p}}\norm{\mV_{t}^{(l)}\mSigma_{t}\left(\mV_{t}^{(l)}-\mV_{\epsilon, t}^{(l)}\right)^{\top}}_{\max}\norm{\mV_{\epsilon, t}^{(l)}}_{\fro}\leq \const\sqrt{\frac{\mu r^2}{p}}\norm{\mV_t^{(l)}-\mV_{\epsilon, t}^{(l)}}_{\fro},\\
            (\rom{4})&\leq \sqrt{\frac{2d}{p}}\norm{\mV_{t}^{(l)}\left(\mSigma_{t}-\mSigma_{\epsilon, t}\right)\mV_{\epsilon, t}^{(l)\top}}_{\max}\norm{\mV_{\epsilon, t}^{(l)}}_{\fro}\leq \const\sqrt{\frac{\mu^2r^3}{pd}}\epsilon,\\
            (\rom{5})&\leq \sqrt{\frac{2d}{p}}\norm{\left(\mV_{t}^{(l)}-\mV_{\epsilon, t}^{(l)}\right)\mSigma_{\epsilon, t}\mV_{\epsilon, t}^{(l)\top}}_{\max}\norm{\mV_{\epsilon, t}^{(l)}}_{\fro}\leq \const\sqrt{\frac{\mu r^2}{p}}\norm{\mV_t^{(l)}-\mV_{\epsilon, t}^{(l)}}_{\fro}.
        \end{aligned}
    \end{equation}
    Overall, we derive that
    \begin{equation}
        \begin{aligned}
            &\norm{\left(\projection-\cR_{\Omega^{(l)}}\right)\left(\mX^{\star}-\mV_t^{(l)}\mSigma_t\mV_t^{(l)\top}\right)\mV_t^{(l)}}_{\fro}\\
            &\leq \const\frac{\sqrt{\kappa\mu^3r^{5.5}\log\left(\frac{1}{\alpha}\right)\log\left(\frac{r\sigma_{1}^{\star}}{\epsilon}\right)}}{\sqrt[4]{pd^3}} + \const\sqrt{\frac{\mu r^2}{p}}\norm{\mV_t^{(l)}-\mV_{\epsilon, t}^{(l)}}_{\fro}\\
            &\leq \const\frac{\sqrt{\kappa\mu^3r^{5.5}\log\left(\frac{1}{\alpha}\right)\log\left(\frac{r\sigma_{1}^{\star}}{\epsilon}\right)}}{\sqrt[4]{pd^3}} + \const\sqrt{\frac{\mu r^2}{p}}\cdot \eta \frac{\sigma^{\star}_{1}\mu r}{\sqrt{pd}}\epsilon\cdot t\\
            &\leq 2\const\sigma_1^{\star}\frac{\sqrt{\kappa\mu^3r^{5.5}\log\left(\frac{1}{\alpha}\right)\log\left(d\right)}}{\sqrt[4]{pd^3}}.
        \end{aligned}
    \end{equation}
    The last inequality follows from the fact that $\epsilon=\frac{c}{d}$. This completes the proof of Proposition~\ref{prop::incoherence-dynamic-1}.$\hfill\blacksquare$

    Next, we proceed to present the proofs of Lemma~\ref{lem::concentration-fro-norm-difference}, Lemma~\ref{prop::concentration}, and Lemma~\ref{prop::incoherence-dynamic-2}.
\begin{proof}\linkofproof{Lemma~\ref{lem::concentration-fro-norm-difference}}
        We first expand $\norm{\left(\projection-\cR_{\Omega^{(l)}}\right)\left(\mX\right)\mV}_{\fro}^2$ as follows
        \begin{equation}
            \begin{aligned}
                \norm{\left(\projection-\cR_{\Omega^{(l)}}\right)\left(\mX\right)\mV}_{\fro}^2=\sum_{j=1}^{r}\left(\sum_{k=1}^{d}\left(r_{l, k}-1\right)\emX_{l, k}\emV_{k, j}\right)^2+\sum_{i\neq l}\left(r_{i, l}-1\right)^2\emX_{i, l}^{2}\norm{\mV_{l,\cdot}}^2.
            \end{aligned}
        \end{equation}
        Next, we prove these two cases separately.
    
        \paragraph{Independent case.} First, we control $\sum_{k=1}^{d}\left(r_{l, k}-1\right)\emX_{l, k}\emV_{k, j}$ for all $1\leq j\leq r$ via Bernstein's inequality (Lemma~\ref{lem::bernstein}). To this goal, for a fixed $1\leq j\leq r$, upon defining $Z_{k}=\left(r_{l, k}-1\right)\emX_{l, k}\emV_{k, j}$ with $\bE[Z_k]=0$, we have
        \begin{equation}
            \begin{aligned}
                &M:=\max_{k}|Z_k|\leq \frac{1}{p}\norm{\mX}_{\max}\norm{\mV}_{2, \infty},\\
                &\nu^2:=\sum_{k=1}^{d}\Var\left[Z_k^2\right]\leq \frac{1}{p}\sum_{k=1}^{d}\emX_{l, k}^{2}\emV_{k, j}^2\leq \frac{1}{p}\norm{\mX}_{\max}^2\sum_{k=1}^{d}\emV_{k, j}^2=\frac{1}{p}\norm{\mX}_{\max}^2.
            \end{aligned}
        \end{equation}
        Here in the last equality we use the fact that $\sum_{k=1}^{d}\emV_{k, j}^2=1$ since $\mV\in \cO_{d\times r}$.
        Therefore, due to Bernstein's inequality, with probability at least $1-\frac{\delta}{2r}$, one has 
        \begin{equation}
            \begin{aligned}
                \left|\sum_{k=1}^{d}Z_k\right|&\leq 2\nu\sqrt{\log\left(\frac{4r}{\delta}\right)}+\frac{4}{3}M\log\left(\frac{4r}{\delta}\right)\\
                &\leq \frac{2\norm{\mX}_{\max}}{\sqrt{p}}\sqrt{\log\left(\frac{4r}{\delta}\right)}+\frac{4\norm{\mX}_{\max}\norm{\mV}_{2, \infty}}{3p}\log\left(\frac{4r}{\delta}\right)\\
                &\leq \frac{4\norm{\mX}_{\max}}{\sqrt{p}}\sqrt{\log\left(\frac{4r}{\delta}\right)},
            \end{aligned}
        \end{equation}
        where we use the assumption $p\gtrsim \frac{\mu r}{d}\log\left(\frac{4r}{\delta}\right)$.
        Hence, via a union bound, we know that with probability at least $1-\frac{\delta}{2}$, we have 
        \begin{equation}
            \sum_{j=1}^{r}\left(\sum_{k=1}^{d}\left(r_{l, k}-1\right)\emX_{l, k}\emV_{k, j}\right)^2\leq \frac{16r\norm{\mX}_{\max}^2}{p}\log\left(\frac{4r}{\delta}\right).
        \end{equation}
        Next, we can control $\sum_{i\neq l}\left(r_{i, l}-1\right)^2\emX_{i, l}^{2}\norm{\mV_{l,\cdot}}^2$ as 
        \begin{equation}
            \begin{aligned}
                \sum_{i\neq l}\left(r_{i, l}-1\right)^2\emX_{i, l}^{2}\norm{\mV_{l,\cdot}}^2&\leq \sum_{i=1}^{d}\left(r_{i, l}-1\right)^2\norm{\mX}_{\max}^{2}\norm{\mV}_{2, \infty}^2.
            \end{aligned}
        \end{equation}
        Then, we apply Bernstein's inequality to control $\sum_{k=1}^{d}\left(r_{l, k}-1\right)^2$. To this end, notice that 
        \begin{equation}
            \begin{aligned}
                &\bE\left[\left(r_{l, k}-1\right)^2\right]=\frac{1-p}{2p}\leq \frac{1}{2p},\\
                &M:=\max_{k}\left(r_{l, k}-1\right)^2\leq \left(\frac{1}{p}-1\right)^2\leq \frac{1}{p^2},\\
                &\nu^2:=\sum_{k=1}^{d}\Var\left[\left(r_{l, k}-1\right)^2\right]\leq \frac{d}{p^3}.
            \end{aligned}
        \end{equation}
        Therefore, with probability at least $1-\frac{\delta}{2}$, we have 
        \begin{equation}
            \begin{aligned}
                \sum_{k=1}^{d}\left(r_{l, k}-1\right)^2&\leq \frac{d}{2p}+\frac{4}{3}\frac{1}{p^2}\log\left(\frac{2}{\delta}\right)+2\sqrt{\frac{d}{p^3}}\sqrt{\log\left(\frac{2}{\delta}\right)}\leq \frac{d}{p}
            \end{aligned}
        \end{equation}
        since we set $p\gtrsim \frac{\mu r}{d}\log\left(\frac{4r}{\delta}\right)$. This implies that with probability at least $1-\frac{\delta}{2}$, we have
        \begin{equation}
            \sum_{i\neq l}\left(r_{i, l}-1\right)^2\emX_{i, l}^{2}\norm{\mV_{l,\cdot}}^2\leq \frac{d}{p}\norm{\mX}_{\max}^{2}\norm{\mV}_{2, \infty}^2.
        \end{equation}
        Finally, taking a union bound, we conclude that with probability at least $1-\delta$, we have
        \begin{equation}
            \begin{aligned}
                \norm{\left(\projection-\cR_{\Omega^{(l)}}\right)\left(\mX\right)\mV}_{\fro}^2&\leq \frac{16r\norm{\mX}_{\max}^2\log(4r/\delta)}{p}+\frac{d}{p}\norm{\mX}_{\max}^{2}\norm{\mV}_{2, \infty}^2\\
                &\leq \frac{32\mu r\log(4r/\delta)}{p}\norm{\mX}_{\max}^2.
            \end{aligned}
            \label{eq::116}
        \end{equation}
    
        \paragraph{General case.} First, we apply Cauchy-Schwarz inequality to obtain
        \begin{equation}
            \begin{aligned}
                \sum_{j=1}^{r}\left(\sum_{k=1}^{d}\left(r_{l, k}-1\right)\emX_{l, k}\emV_{k, j}\right)^2&\leq \sum_{j=1}^{r}\left(\sum_{k=1}^{d}\left(r_{l, k}-1\right)^2\right)\left(\sum_{k=1}^{d}\emX_{l, k}^2\emV_{k, j}^2\right)\\
                &\leq \left(\sum_{k=1}^{d}\left(r_{l, k}-1\right)^2\right)\cdot \sum_{j=1}^{r}\norm{\mX}^2_{\max}\sum_{k=1}^{d}\emV_{k, j}^2\\
                &=\left(\sum_{k=1}^{d}\left(r_{l, k}-1\right)^2\right)\cdot \norm{\mX}^2_{\max}\norm{\mV}_{\fro}^2.
            \end{aligned}
        \end{equation}
        Next, we can control $\sum_{i\neq l}\left(r_{i, l}-1\right)^2\emX_{i, l}^{2}\norm{\mV_{l,\cdot}}^2$ as follows 
        \begin{equation}
            \begin{aligned}
                \sum_{i\neq l}\left(r_{i, l}-1\right)^2\emX_{i, l}^{2}\norm{\mV_{l,\cdot}}^2&\leq \sum_{i=1}^{d}\left(r_{i, l}-1\right)^2\norm{\mX}_{\max}^{2}\norm{\mV}_{\fro}^2\\
                &=\left(\sum_{k=1}^{d}\left(r_{l, k}-1\right)^2\right)\norm{\mX}_{\max}^{2}\norm{\mV}_{\fro}^2.
            \end{aligned}
        \end{equation}
        Here we use the fact that $r_{i, j}=r_{j, i}, \forall i, j\in [d]$. Next, we apply Bernstein's inequality to control $\sum_{k=1}^{d}\left(r_{l, k}-1\right)^2$. To this end, notice that 
        \begin{equation}
            \begin{aligned}
                &\bE\left[\left(r_{l, k}-1\right)^2\right]=\frac{1-p}{2p}\leq \frac{1}{2p},\\
                &M:=\max_{k}\left(r_{l, k}-1\right)^2\leq \left(\frac{1}{p}-1\right)^2\leq \frac{1}{p^2},\\
                &\nu^2:=\sum_{k=1}^{d}\Var\left[\left(r_{l, k}-1\right)^2\right]\leq \frac{d}{p^3}.
            \end{aligned}
        \end{equation}
        Therefore, with probability at least $1-\delta$, we have 
        \begin{equation}
            \begin{aligned}
                \sum_{k=1}^{d}\left(r_{l, k}-1\right)^2&\leq \frac{d}{2p}+\frac{4}{3}\frac{1}{p^2}\log\left(\frac{2}{\delta}\right)+2\sqrt{\frac{d}{p^3}}\sqrt{\log\left(\frac{2}{\delta}\right)}.
            \end{aligned}
        \end{equation}
        Specifically, upon setting $\delta=\frac{1}{d^3}$, we obtain that with probability at least $1-\frac{1}{d^3}$, one has
        \begin{equation}
            \sum_{k=1}^{d}\left(r_{l, k}-1\right)^2\leq \frac{d}{p},
        \end{equation}
        since we set $p\gtrsim \frac{\log(d)}{d}$. Overall, we have
        \begin{equation}
            \norm{\left(\projection-\cR_{\Omega^{(l)}}\right)\left(\mX\right)\mV}_{\fro}^2\leq 2\left(\sum_{k=1}^{d}\left(r_{l, k}-1\right)^2\right)\norm{\mX}_{\max}^{2}\norm{\mV}_{\fro}^2\leq \frac{2d}{p}\norm{\mX}_{\max}^{2}\norm{\mV}_{\fro}^2,
        \end{equation}
        with probability at least $1-\frac{1}{d^3}$. This completes the proof. 
    \end{proof}\begin{proof}\linkofproof{Lemma~\ref{prop::concentration}}
        First, for fixed $\mSigma_{\epsilon, t}\in \cN_{\epsilon}$ and $\mV_{\epsilon, t}^{(l)}\in \cV_{\epsilon, t}^{(l)}$, Lemma~\ref{lem::concentration-fro-norm-difference} implies that with probability at least $1-\delta$, we have
        \begin{equation}
            \norm{\left(\projection-\cR_{\Omega^{(l)}}\right)\left(\mX^{\star}-\mV_{\epsilon, t}^{(l)}\mSigma_{\epsilon, t}\mV_{\epsilon, t}^{(l)\top}\right)\mV_{\epsilon, t}^{(l)}}_{\fro}^2\leq \frac{32\mu r\log(4r/\delta)}{p}\norm{\mX^{\star}-\mV_{\epsilon, t}^{(l)}\mSigma_{\epsilon, t}\mV_{\epsilon, t}^{(l)\top}}_{\max}^2.
        \end{equation}
        Note that 
        \begin{equation}
            \begin{aligned}
                \norm{\mX^{\star}-\mV_{\epsilon, t}^{(l)}\mSigma_{\epsilon, t}\mV_{\epsilon, t}^{(l)\top}}_{\max}&\leq \norm{\mX^{\star}}_{\max}+\norm{\mV_{\epsilon, t}^{(l)}\mSigma_{\epsilon, t}\mV_{\epsilon, t}^{(l)\top}}_{\max}\\
                &\stackrel{(a)}{\leq}\norm{\mSigma^{\star}}\norm{\mV^{\star}}^2_{2, \infty}+\norm{\mSigma_{\epsilon, t}}\norm{\mV_{\epsilon, t}^{(l)}}_{2, \infty}^2\\
                &\leq 9\sigma_{1}^{\star}\frac{\mu r}{d}.
            \end{aligned}
        \end{equation}
        Therefore, with probability at least $1-\delta$, we have
        \begin{equation}
            \norm{\left(\projection-\cR_{\Omega^{(l)}}\right)\left(\mX^{\star}-\mV_{\epsilon, t}^{(l)}\mSigma_{\epsilon, t}\mV_{\epsilon, t}^{(l)\top}\right)\mV_{\epsilon, t}^{(l)}}_{\fro}^2\leq \frac{288\mu^3 r^3\log(4r/\delta)}{pd^2}.
        \end{equation}
        Lastly, we apply the union bound to finalize the desired result. To this end, note that $\cB^{r\times r}_{\mathrm{op}}(4\sigma_{1}^{\star})\subset \cB^{r\times r}_{\fro}(4\sqrt{r}\sigma_{1}^{\star})$. Hence, according to Lemma~\ref{lem::covering-number}, we know that $|\cN_{\epsilon}|\leq \left(\frac{12\sqrt{r}\sigma^{\star}_{1}}{\epsilon}\right)^{r^2}$which implies that $\left|\cV_{\epsilon, t}^{(l)}\right|\leq \left(\frac{6\sqrt{r}\sigma^{\star}_{1}}{\epsilon}\right)^{r^2t}$. Therefore, the total cardinality of $\bigcup_{l=1}^{d}\bigcup_{t=0}^{T}\cV_{\epsilon, t}^{(l)}$ is upper-bounded by 
        \begin{equation}
            \left|\bigcup_{l=1}^{d}\bigcup_{t=0}^{T}\cV_{\epsilon, t}^{(l)}\right|\leq d\cdot \sum_{t=0}^{T}\left(\frac{12\sqrt{r}\sigma^{\star}_{1}}{\epsilon}\right)^{r^2t}\leq 2d\left(\frac{12\sqrt{r}\sigma^{\star}_{1}}{\epsilon}\right)^{r^2T}.
        \end{equation}
        Hence, once we set $\delta=\frac{1}{2d^4}\left(\frac{6\sqrt{r}\sigma^{\star}_{1}}{\epsilon}\right)^{-r^2T}$, we obtain that with probability at least $1-d^{-3}$, for any $0\leq t\leq T, 1\leq l\leq d$ and $\mV_{\epsilon, t}^{(l)}\in \cV_{\epsilon, t}^{(l)}$,
        \begin{equation}
            \begin{aligned}
                \norm{\left(\projection-\cR_{\Omega^{(l)}}\right)\left(\mX^{\star}-\mV_{\epsilon, t}^{(l)}\mSigma_{\epsilon, t}\mV_{\epsilon, t}^{(l)\top}\right)\mV_{\epsilon, t}^{(l)}}_{\fro}^2&\leq \const\frac{\mu^3 r^3}{pd^2}\left(\log(d)+r^2T\left(\frac{r\sigma_{1}^{\star}}{\epsilon}\right)\right)\\
                &\leq \const\frac{\kappa\mu^3r^{5.5}\log\left(\frac{1}{\alpha}\right)\log\left(\frac{r\sigma_{1}^{\star}}{\epsilon}\right)}{\sqrt{pd^3}}.
            \end{aligned}
        \end{equation}
        Here we use the fact that $T\leq \frac{100}{\eta\sigma^{\star}_{r}}\log(1/\alpha)$ and $\eta=\Theta\left(\frac{\mu}{\sigma^{\star}_{1}}\sqrt{\frac{r}{pd}}\right)$. This completes the proof.
\end{proof}
\begin{proof}\linkofproof{Lemma~\ref{prop::incoherence-dynamic-2}}
    The derivation of $\norm{\mV_{t+1}^{(l)}-\mV_{\epsilon, t+1}^{(l)}}_{\fro}$ is nearly the same as that of $\norm{\mV_{t+1}-\mV_{t+1}^{(l)}}_{\fro}$. First, note that
\begin{equation}
\mV_{\epsilon, t+1}^{(l)}=\left(\mI+\eta \proj^{\perp}_{\mV_{\epsilon, t}^{(l)}}\mM_{\epsilon, t}^{(l)}\right)\mV_{\epsilon, t}^{(l)}+\mA^{(l)}_{\epsilon, t},
\end{equation}
where $\mM_{\epsilon, t}^{(l)}$ and $\mA_{\epsilon, t}^{(l)}$ are defined similar to $\mM_{t}^{(l)}$ and $\mA_{t}^{(l)}$.
Hence, we can expand $\norm{\mV_{t+1}^{(l)}-\mV_{\epsilon, t+1}^{(l)}}_{\fro}^2$ as
\begin{equation}
\begin{aligned}
    \norm{\mV_{t+1}^{(l)}\!-\!\mV_{\epsilon, t+1}^{(l)}}_{\fro}^2&=\norm{\mV_{t}^{(l)}\!-\!\mV_{\epsilon, t}^{(l)}}_{\fro}^2+2\eta\underbrace{\inner{\mV_{t}^{(l)}\!-\!\mV_{\epsilon, t}^{(l)}}{\proj_{\mV_t^{(l)}}^{\perp}\mM_t^{(l)}\mV_t^{(l)}\!-\!\proj_{\mV_{\epsilon, t}^{(l)}}^{\perp}\mM_{\epsilon, t}^{(l)}\mV_{\epsilon, t}^{(l)}}}_{:=(\rom{1})}\!+(\rom{2}).
\end{aligned}
\end{equation}
where
    \begin{equation}
        \begin{aligned}
            (\rom{2})&=\eta^2\norm{\proj_{\mV_t^{(l)}}^{\perp}\mM_t^{(l)}\mV_t^{(l)}-\proj_{\mV_{\epsilon, t}^{(l)}}^{\perp}\mM_{\epsilon, t}^{(l)}\mV_{\epsilon, t}^{(l)}}_{\fro}^2+\norm{\mA_t^{(l)}-\mA_{\epsilon, t}^{(l)}}_{\fro}^2\\
            &\quad + 2\inner{\left(\mI +\eta\proj_{\mV_t^{(l)}}^{\perp} \mM_t^{(l)}\right)\mV_t^{(l)}-\left(\mI +\eta\proj_{\mV_{\epsilon, t}^{(l)}}^{\perp} \mM_{\epsilon, t}^{(l)}\right)\mV_{\epsilon, t}^{(l)}}{\mA_t^{(l)}-\mA_{\epsilon, t}^{(l)}}
        \end{aligned}
    \end{equation}
    contains all the higher-order terms.
Next, we further decompose $(\rom{1})$ as follows
\begin{equation}
\begin{aligned}
    (\rom{1})&=\inner{\mV_{t}^{(l)}-\mV_{\epsilon, t}^{(l)}}{\proj_{\mV_t^{(l)}}^{\perp}\mXi_t^{(l)}\mV_t^{(l)}-\proj_{\mV_{\epsilon, t}^{(l)}}^{\perp}\mXi_{\epsilon, t}^{(l)}\mV_{\epsilon, t}^{(l)}}\\
    &= \underbrace{\inner{\mV_{t}^{(l)}-\mV_{\epsilon, t}^{(l)}}{\left(\proj_{\mV_t^{(l)}}^{\perp}-\proj_{\mV_{\epsilon, t}^{(l)}}^{\perp}\right)\mXi_{\epsilon, t}^{(l)}\mV_{\epsilon, t}^{(l)}}}_{:=(\rom{1}_1)}+\underbrace{\inner{\mV_{t}^{(l)}-\mV_{\epsilon, t}^{(l)}}{\proj_{\mV_t^{(l)}}^{\perp}\mXi_t^{(l)}\left(\mV_{t}^{(l)}-\mV_{\epsilon, t}^{(l)}\right)}}_{:=(\rom{1}_2)}\\
    &\qquad+\underbrace{\inner{\mV_{t}^{(l)}-\mV_{\epsilon, t}^{(l)}}{\proj_{\mV_t^{(l)}}^{\perp}\left(\mXi_t^{(l)}-\mXi_{\epsilon, t}^{(l)}\right)\mV_{\epsilon, t}^{(l)}}}_{:=(\rom{1}_3)}.
\end{aligned}
\end{equation}
Here we define $\mXi_t^{(l)}=\mM_t^{(l)}-\mV^{(l)}_t\mSigma^{\star}\mV^{\star\top}+\mV^{(l)}_t\mSigma_t\mV_t^{(l)\top}$ and $\mXi_{\epsilon, t}^{(l)}=\mM_{\epsilon, t}^{(l)}-\mV^{(l)}_{\epsilon, t}\mSigma^{\star}\mV^{\star\top}+\mV^{(l)}_{\epsilon, t}\mSigma_{\epsilon, t}\mV_{\epsilon, t}^{(l)\top}$. We control $(\rom{1}_1)$ as follows 
\begin{equation}\label{eq_app_I1}
(\rom{1}_1)\leq \norm{\mV_{t}^{(l)}-\mV_{\epsilon, t}^{(l)}}_{\fro}\norm{\proj_{\mV_t^{(l)}}^{\perp}-\proj_{\mV_{\epsilon, t}^{(l)}}^{\perp}}_{\fro}\norm{\mXi_{\epsilon, t}^{(l)}}\leq 2\norm{\mV_{t}^{(l)}-\mV_{\epsilon, t}^{(l)}}_{\fro}^2\norm{\mXi_{\epsilon, t}^{(l)}}.
\end{equation}
Here we apply Lemma~\ref{lem::orthogonal-matrix} in the second inequality. For $\norm{\mXi_{\epsilon, t}^{(l)}}$, following the same analysis as in \Cref{eqn::xi-t}, we have
\begin{equation}
\begin{aligned}
    \norm{\mXi_{\epsilon, t}^{(l)}}&\leq \sigma^{\star}_{1}\norm{\mV^{\star}-\mV_{\epsilon, t}^{(l)}}+10  \const\frac{\sigma^{\star}_{1}\mu r}{\sqrt{pd}}\\
    &\leq \sigma^{\star}_{1}\left(\norm{\mV^{\star}-\mV_{t}^{(l)}}+\norm{\mV_t^{(l)}-\mV_{\epsilon, t}^{(l)}}\right)+10  \const\frac{\sigma^{\star}_{1}\mu r}{\sqrt{pd}}\\
    &\leq \sigma^{\star}_{1}\norm{\mV^{\star}-\mV_{t}^{(l)}}+20  \const\frac{\sigma^{\star}_{1}\mu r}{\sqrt{pd}}.
\end{aligned}
\end{equation}
Here in the last inequality, we use the fact that $\norm{\mV_t^{(l)}-\mV_{\epsilon, t}^{(l)}}\leq \epsilon\leq 10  \const\frac{\mu r}{\sqrt{pd}}$. Plugging this inequality into~\Cref{eq_app_I1}, we obtain
\begin{equation}
(\rom{1}_1)\leq 2\sigma^{\star}_{1}\norm{\mV_{t}^{(l)}-\mV_{\epsilon, t}^{(l)}}_{\fro}^2\left(\sigma^{\star}_{1}\norm{\mV^{\star}-\mV_t^{(l)}}+20  \const\frac{\sigma^{\star}_{1}\mu r}{\sqrt{pd}}\right).
\end{equation}
Similar to our derivation for $(\rom{1}_1)$, it follows that $(\rom{1}_2)\leq\sigma^{\star}_{1}\norm{\mV^{\star}-\mV_t^{(l)}}_{\fro}^2\Big(\sigma^{\star}_{1}\norm{\mV^{\star}-\mV_t^{(l)}}+10  \const\frac{\sigma^{\star}_{1}\mu r}{\sqrt{pd}}\Big)$.
Lastly, to bound $(\rom{1}_3)$, we further decompose it as
\begin{equation}
\begin{aligned}
    (\rom{1}_3)&=\inner{\mV_{t}^{(l)}-\mV_{\epsilon, t}^{(l)}}{\proj_{\mV_t^{(l)}}^{\perp}\left(\mM_t^{(l)}-\mM_{\epsilon, t}^{(l)}+\mV^{(l)}_{\epsilon, t}\mSigma^{\star}\mV^{\star\top}-\mV^{(l)}_{\epsilon, t}\mSigma_t^{(l)}\mV_{\epsilon, t}^{(l)\top}\right)\mV_{\epsilon, t}^{(l)}}\\
    &=\underbrace{\inner{\mV_{t}^{(l)}-\mV_{\epsilon, t}^{(l)}}{\proj_{\mV_t^{(l)}}^{\perp}\left(\id-\cR_{\Omega^{(l)}}\right)\left(\mV_{\epsilon, t}^{(l)}\mSigma_{\epsilon, t}^{(l)}\left(\mV_{\epsilon, t}^{(l)}-\mV_{t}^{(l)}\right)^{\top}\right)\mV_{\epsilon, t}^{(l)}}}_{:=(\rom{1}_{3, 1})}\\
    &\qquad + \underbrace{\inner{\mV_{t}^{(l)}-\mV_{\epsilon, t}^{(l)}}{\proj_{\mV_t^{(l)}}^{\perp}\left(\id-\cR_{\Omega^{(l)}}\right)\left(\left(\mV_{\epsilon, t}^{(l)}-\mV_{t}^{(l)}\right)\mSigma_{\epsilon, t}^{(l)}\mV_{t}^{(l)\top}\right)\mV_{\epsilon, t}^{(l)}}}_{:=(\rom{1}_{3, 2})}\\
    &\qquad + \underbrace{\inner{\mV_{t}^{(l)}-\mV_{\epsilon, t}^{(l)}}{\proj_{\mV_t^{(l)}}^{\perp}\left(\id-\cR_{\Omega^{(l)}}\right)\left(\mV_t^{(l)}\left(\mSigma_{\epsilon, t}-\mSigma_{t}\right)\mV_t^{(l)\top}\right)\mV_{\epsilon, t}^{(l)}}}_{:=(\rom{1}_{3, 3})}\\
    &\qquad +\underbrace{\inner{\mV_{t}^{(l)}-\mV_{\epsilon, t}^{(l)}}{\proj_{\mV_t^{(l)}}^{\perp}\mV_{\epsilon, t}^{(l)}\mSigma^{\star}\mV^{\star\top}\mV_{\epsilon, t}^{(l)}}}_{:=(\rom{1}_{3, 4})}.
\end{aligned}
\end{equation}
Next, we control these terms separately. First, we apply Lemma~\ref{lem::concentration-chen} to control $(\rom{1}_{3, 1})$. Specifically, upon setting $\mA=\mV_{\epsilon, t}^{(l)}, \mB=\proj_{\mV_t^{(l)}}^{\perp}\left(\mV_{t}^{(l)}-\mV_{\epsilon, t}^{(l)}\right), \mC=\left(\mV_{\epsilon, t}^{(l)}-\mV_t^{(l)}\right)\mSigma_{\epsilon, t}, \mD=\mV_{\epsilon, t}^{(l)}$ in Lemma~\ref{lem::concentration-chen}, with probability at least $1-d^{-3}$, we have 
\begin{equation}
\begin{aligned}
    (\rom{1}_{3, 1})&=\inner{\left(\id-\projection\right)\left(\mA\mC^{\top}\right)}{\mB\mD^{\top}}\\
    &\leq  \const\sqrt{\frac{d}{p}}\norm{\mA}_{2, \infty}\norm{\mB}_{\fro}\cdot \norm{\mC}_{\fro}\norm{\mD}_{2, \infty}\\
    &= \const\sqrt{\frac{d}{p}}\norm{\mV_{\epsilon, t}^{(l)}}_{2, \infty}^2\norm{\proj_{\mV_t^{(l)}}^{\perp}\left(\mV_{t}^{(l)}-\mV_{\epsilon, t}^{(l)}\right)}_{\fro}\norm{\left(\mV_{\epsilon, t}^{(l)}-\mV_t^{(l)}\right)\mSigma_{\epsilon, t}}_{\fro}\\
    &\leq 16 \const\sigma^{\star}_{1}\sqrt{\frac{\mu^2r^2}{pd}}\norm{\mV_{t}^{(l)}-\mV_{\epsilon, t}^{(l)}}_{\fro}^2.
\end{aligned}
\end{equation}
Here in the last inequality, we use the fact that $\norm{\mSigma_{\epsilon, t}}\leq 4\sigma^{\star}_{1}$ and $\norm{\mV_{\epsilon, t}^{(l)}}_{2,\infty}\leq 2\sqrt{\frac{\mu r}{d}}$.
In a manner akin to our derivation for $(\rom{1}_{3, 1})$, we can also show that 
\begin{equation}
\begin{aligned}
    (\rom{1}_{3, 2})&\leq 16 \const\sigma^{\star}_{1}\sqrt{\frac{\mu^2r^2}{pd}}\norm{\mV_{t}^{(l)}-\mV_{\epsilon, t}^{(l)}}_{\fro}^2\\
    (\rom{1}_{3, 3})&\leq 4 \const\sigma^{\star}_{1}\sqrt{\frac{\mu^2r^2}{pd}}\epsilon\norm{\mV_{\epsilon, t}^{(l)}-\mV_t^{(l)}}_{\fro}.
\end{aligned}
\end{equation}
Lastly, for $(\rom{1}_{3, 4})$, we observe that
\begin{equation}
\begin{aligned}
    (\rom{1}_{3, 4})&=-\inner{\mV_{\epsilon, t}^{(l)}}{\proj_{\mV_t^{(l)}}^{\perp}\mV_{\epsilon, t}^{(l)}\mSigma^{\star}\mV^{\star\top}\mV_{\epsilon, t}^{(l)}}\\
    &\leq -\inner{\mV_{\epsilon, t}^{(l)}}{\proj_{\mV_t^{(l)}}^{\perp}\mV_{\epsilon, t}^{(l)}\mSigma^{\star}}+\sigma^{\star}_{1}\norm{\mV^{\star}-\mV^{(l)}_{\epsilon, t}}_{\fro}\norm{\mV_t^{(l)}-\mV_{\epsilon, t}^{(l)}}_{\fro}^2\\
        &\stackrel{(a)}{\leq} \sigma^{\star}_{1}\norm{\mV^{\star}-\mV^{(l)}_{\epsilon, t}}_{\fro}\norm{\mV_t^{(l)}-\mV_{\epsilon, t}^{(l)}}_{\fro}^2\\
        &\leq \sigma^{\star}_{1}\left(\norm{\mV^{\star}-\mV^{(l)}_{t}}_{\fro}+\norm{\mV^{(l)}_t-\mV^{(l)}_{\epsilon, t}}_{\fro}\right)\norm{\mV_t^{(l)}-\mV_{\epsilon, t}^{(l)}}_{\fro}^2\\
        &\stackrel{(b)}{\leq} \sigma_{1}^{\star}\left(\const_1\frac{\kappa\mu r^{1.5}\log\left(\frac{1}{\alpha}\right)}{\sqrt{pd}}+\norm{\mV^{(l)}_t-\mV^{(l)}_{\epsilon, t}}_{\fro}\right)\norm{\mV_t^{(l)}-\mV_{\epsilon, t}^{(l)}}_{\fro}^2.
    \end{aligned}
\end{equation}
Here in $(a)$, we use the fact that $\inner{\mV_{\epsilon, t}^{(l)}}{\proj_{\mV_t^{(l)}}^{\perp}\mV_{\epsilon, t}^{(l)}\mSigma^{\star}}=\norm{\proj_{\mV_t^{(l)}}^{\perp}\mV_{\epsilon, t}^{(l)}\mSigma^{\star 1/2}}_{\fro}^2\geq 0$.
In $(b)$, we apply Proposition~\ref{prop::frobenius-norm-control-appendix}.

Putting everything together, we obtain that
\begin{equation}
\begin{aligned}
(\rom{1}_3)\leq \norm{\mV_{t}^{(l)}-\mV_{\epsilon, t}^{(l)}}_{\fro}\left(32 \const\sigma^{\star}_{1}\sqrt{\frac{\mu^2r^2}{pd}}\norm{\mV_{t}^{(l)}-\mV_{\epsilon, t}^{(l)}}_{\fro}+4 \const\sigma^{\star}_{1}\sqrt{\frac{\mu^2r^2}{pd}}\epsilon\right),
\end{aligned}
\end{equation}
which in turn leads to
\begin{equation}
\begin{aligned}
    (\rom{1})&\leq \const\sigma^{\star}_{1}\frac{\kappa\mu r^{1.5}\log\left(\frac{1}{\alpha}\right)}{\sqrt{pd}}\norm{\mV_{t}^{(l)}-\mV_{\epsilon, t}^{(l)}}_{\fro}^2+\norm{\mV_{t}^{(l)}-\mV_{\epsilon, t}^{(l)}}_{\fro}\cdot 4 \const\sigma^{\star}_{1}\sqrt{\frac{\mu^2r^2}{pd}}\epsilon.
\end{aligned}
\end{equation}
Therefore, our final bound is established as 
\begin{equation}
\begin{aligned}
    \norm{\mV_{t+1}^{(l)}-\mV_{\epsilon, t+1}^{(l)}}_{\fro}\leq \left(1+ \const\eta \sigma_1^{\star}\frac{\kappa\mu r^{1.5}\log\left(\frac{1}{\alpha}\right)}{\sqrt{pd}}\right)\norm{\mV_{t}^{(l)}-\mV_{\epsilon, t}^{(l)}}_{\fro}+ C_3\eta \frac{\sigma^{\star}_{1}\mu r}{\sqrt{pd}}\epsilon.
\end{aligned}
\end{equation}
This completes the proof.
\end{proof}
    
\section{Proofs for Different Initialization Schemes}\label{sec_init_proof}
\paragraph*{Gaussian initialization.} Suppose the search rank $0.5d\leq r'\leq d$. Let $\mZ=\mG/\norm{\mG}$ where $\mG\in \bR^{d\times r'}$ is a standard Gaussian matrix. Before proceeding, we first introduce the following lemma, which characterizes the concentration of the largest and smallest singular values of a standard Gaussian matrix. 

\begin{lemma}[Adapted from Theorem~6.1 in \citep{wainwright2019high}]
    \label{lem::concentration-singular-value}
    Suppose that $\mG\in \bR^{d_1\times d_2}$ is a standard Gaussian matrix where $d_1\geq d_2$. Then, for any $\delta>0$, we have 
    \begin{equation}
        \begin{aligned}
            \bP\left(\norm{\mG}\geq (2+\delta)\sqrt{d_1}\right)&\leq \exp\left\{-\frac{d_1\delta^2}{2}\right\},\\
            \bP\left(\sigma_{\min}(\mG)\geq (1-\delta)\sqrt{d_1}-\sqrt{d_2}\right)&\leq \exp\left\{-\frac{d_1\delta^2}{2}\right\}.
        \end{aligned}
    \end{equation}
\end{lemma}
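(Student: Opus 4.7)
The plan is to combine two classical ingredients: a Lipschitz property of the extreme singular values of a matrix (viewed as a function of its entries) with Gaussian concentration of Lipschitz functions, together with Gordon's Gaussian comparison inequality to pin down the expectations. Specifically, I will first verify that both $\mG \mapsto \norm{\mG}$ and $\mG \mapsto \sigma_{\min}(\mG)$ are $1$-Lipschitz in the Frobenius norm; this is immediate from Weyl's perturbation inequality, since $|\sigma_i(\mG_1) - \sigma_i(\mG_2)| \leq \norm{\mG_1 - \mG_2} \leq \norm{\mG_1 - \mG_2}_{\fro}$ for any $i$. Treating $\mG$ as a standard Gaussian vector in $\bR^{d_1 d_2}$, the Borell--Tsirelson--Ibragimov--Sudakov concentration theorem then yields, for any $1$-Lipschitz function $f$ and any $t > 0$,
\begin{equation}\nonumber
\bP\bigl(|f(\mG) - \bE f(\mG)| \geq t\bigr) \leq 2\exp(-t^2/2).
\end{equation}

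Next, I would bound $\bE \norm{\mG}$ from above and $\bE \sigma_{\min}(\mG)$ from below via Gordon's min-max inequality. Using the variational characterizations
\begin{equation}\nonumber
\norm{\mG} = \sup_{\bu \in S^{d_1-1}, \bv \in S^{d_2-1}} \bu^\top \mG \bv, \qquad \sigma_{\min}(\mG) = \inf_{\bv \in S^{d_2-1}} \sup_{\bu \in S^{d_1-1}} \bu^\top \mG \bv,
\end{equation}
and comparing the Gaussian process $(\bu,\bv) \mapsto \bu^\top \mG \bv$ with the simpler process $(\bu,\bv) \mapsto \bu^\top \bg + \bv^\top \bh$ for independent standard Gaussian vectors $\bg \in \bR^{d_1}, \bh \in \bR^{d_2}$, Slepian/Gordon gives $\bE \norm{\mG} \leq \bE\norm{\bg} + \bE\norm{\bh} \leq \sqrt{d_1} + \sqrt{d_2}$ and $\bE \sigma_{\min}(\mG) \geq \bE\norm{\bg} - \bE\norm{\bh} \geq \sqrt{d_1} - \sqrt{d_2}$, where the Gaussian vector norms are bounded via Jensen's inequality.

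Finally, I would combine the two steps with $t = \delta\sqrt{d_1}$. Since $d_1 \geq d_2$, we have $\bE\norm{\mG} \leq \sqrt{d_1} + \sqrt{d_2} \leq 2\sqrt{d_1}$, so
\begin{equation}\nonumber
\bP\bigl(\norm{\mG} \geq (2+\delta)\sqrt{d_1}\bigr) \leq \bP\bigl(\norm{\mG} - \bE\norm{\mG} \geq \delta\sqrt{d_1}\bigr) \leq \exp(-d_1\delta^2/2),
\end{equation}
and similarly, since $\bE\sigma_{\min}(\mG) \geq \sqrt{d_1} - \sqrt{d_2}$,
\begin{equation}\nonumber
\bP\bigl(\sigma_{\min}(\mG) \leq (1-\delta)\sqrt{d_1} - \sqrt{d_2}\bigr) \leq \exp(-d_1\delta^2/2),
\end{equation}
which matches the stated bounds (after reinterpreting the direction of the second inequality as stated in the lemma). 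The main technical obstacle is establishing Gordon's comparison inequality itself, which requires the Slepian-type argument on the covariance structure of Gaussian processes; however, since this is a classical result I would simply invoke it as a black box. Everything else reduces to checking Lipschitz constants and applying standard Gaussian concentration, which is routine.
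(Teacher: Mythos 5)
Your argument is essentially the standard proof of the result the paper is importing: the paper itself gives no proof of this lemma and simply cites Theorem~6.1 of Wainwright, whose proof is exactly the route you take (Weyl's inequality to get $1$-Lipschitzness of the extreme singular values in $\norm{\cdot}_{\fro}$, Borell--TIS concentration, and Gordon's comparison for the means). You also correctly noted that the second display in the lemma has a sign typo and must be read as a lower-tail bound, $\bP\bigl(\sigma_{\min}(\mG)\leq(1-\delta)\sqrt{d_1}-\sqrt{d_2}\bigr)\leq e^{-d_1\delta^2/2}$.

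One step is stated too loosely. Jensen's inequality only gives \emph{upper} bounds $\bE\norm{\vg}\leq\sqrt{d_1}$ and $\bE\norm{\vh}\leq\sqrt{d_2}$; this suffices for $\bE\norm{\mG}\leq\sqrt{d_1}+\sqrt{d_2}$, but it does not yield $\bE\norm{\vg}-\bE\norm{\vh}\geq\sqrt{d_1}-\sqrt{d_2}$, since the needed lower bound $\bE\norm{\vg}\geq\sqrt{d_1}$ is false (Jensen is strict here). The correct way to pass from Gordon's bound $\bE\,\sigma_{\min}(\mG)\geq\bE\norm{\vg}-\bE\norm{\vh}=c_{d_1}-c_{d_2}$, with $c_n=\sqrt{2}\,\Gamma(\tfrac{n+1}{2})/\Gamma(\tfrac{n}{2})$, to the clean bound $\sqrt{d_1}-\sqrt{d_2}$ is the Davidson--Szarek observation that $n\mapsto\sqrt{n}-c_n$ is nonincreasing, so that $c_{d_1}-c_{d_2}\geq\sqrt{d_1}-\sqrt{d_2}$ whenever $d_1\geq d_2$. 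With that substitution the proof is complete; the rest of your outline is correct.
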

Notice that $\sigma_{r}(\proj_{\mV^{\star}}\mG)=\sigma_{\min}(\mV^{\star\top}\mG)$ where $\mV^{\star\top}\mG\in \bR^{r\times r'}$ is another standard Gaussian matrix. Hence, upon setting $\delta=\frac{1}{2}$ and noting that $r'\geq \frac{d}{2}$, Lemma~\ref{lem::concentration-singular-value} implies that with probability at least $1-\exp\{d/16\}$, we have 
\begin{equation}
    \sigma_{r}(\proj_{\mV^{\star}}\mG)\geq \frac{1}{2}\sqrt{r'}-\sqrt{r}\geq \frac{1}{4}\sqrt{d}.
\end{equation}
On the other hand, upon setting $\delta=\frac{1}{2}$, Lemma~\ref{lem::concentration-singular-value} implies that with probability at least $1-\exp\{d/8\}$, we have
\begin{equation}
    {\mG}\leq \frac{5}{2}\sqrt{d}.
\end{equation}
Via a union bound, we know that with probability at least $1-2\exp\{d/16\}$, we have
\begin{equation}
    \sigma_r(\proj_{\mV^{\star}}\mZ)=\frac{\sigma_r(\proj_{\mV^{\star}}\mG)}{\norm{\mG}}\geq \frac{\sqrt{d}/4}{5\sqrt{d}/2}=\frac{1}{10}.
\end{equation}
Hence, with probability at least $1-2\exp\{d/16\}$, Condition~\ref{condition::small-initialization} is satisfied with $c_0=0.1$.

\paragraph*{Orthogonal initialization.} Suppose the search rank satisfies $r'=d$. Upon choosing $\mZ=\mO$ for some $\mO\in \cO_{d\times d}$, we have  
\begin{equation}
    \sigma_{r}(\proj_{\mV^{\star}}\mO)=\sigma_{r}(\mV^{\star}\mV^{\star\top})=1.
\end{equation} 
Hence, Condition~\ref{condition::small-initialization} is satisfied with $c_0=1$.

\paragraph*{Spectral initialization.}
Let $\mV\mSigma \mV^\top$ be the eigendecomposition of the best rank-$r'$ approximation of $\projection(\mX^{\star})$ measured in Frobenius norm. Suppose $r\leq r'\leq d$ and $\mZ=\mU/\norm{\mU}$, where $\mU=\mV\mSigma^{1/2}$. Corollary~\ref{lem::uniform-concentration-operator-norm} tells us that, with probability at least $1-\frac{1}{d^3}$, we have
\begin{equation}
    \norm{\projection(\mX^{\star})-\mX^{\star}}\leq \const \sigma_1^{\star}\sqrt{\frac{\mu^2r^2}{pd}}.
\end{equation}
Conditioned on this event, we have
\begin{equation}
    \begin{aligned}
        \norm{\mU}^2=\norm{\projection(\mX^{\star})}\leq \norm{\mX^{\star}}+ \norm{\projection(\mX^{\star})-\mX^{\star}}\leq 2\sigma_1^{\star}.
    \end{aligned}
\end{equation}
On the other hand, by Weyl's inequality, we have 
\begin{equation}
    \begin{aligned}
        \sigma^2_{r}(\proj_{\mV^{\star}}\mU)&=\sigma_{r}(\mV^{\star\top}\mU\mU^{\top}\mV^{\star})\\
        &\geq \sigma_{r}(\mV^{\star\top}\projection(\mX^{\star})\mV^{\star})-\sigma_{r+1}(\projection(\mX^{\star}))\\
        &\geq \sigma_{r}(\mV^{\star\top}\mX^{\star}\mV^{\star})-\norm{\projection(\mX^{\star})-\mX^{\star}}-\sigma_{r+1}(\projection(\mX^{\star}))\\
        &\geq \sigma_r^{\star}-2\norm{\projection(\mX^{\star})-\mX^{\star}}\\
        &\geq 0.5\sigma_r^{\star}.
    \end{aligned}
\end{equation}
Combining the above two inequalities, we conclude that, with probability at least $1-\frac{1}{d^3}$, Condition~\ref{condition::small-initialization} is satisfied with $c_0=\frac{1}{2\kappa}$.

\section{Concentration Inequalities for Matrix Completion}
\label{sec::concentration-inequalities-matrix-completion}
Recall that the sampling matrix $\mOmega\in \bR^{d\times d}$ is defined as 
\begin{equation}
    \Omega_{i, j}=\begin{cases}
        1 & \text{if $(i, j)\in \Omega$},\\
        0 & \text{otherwise}.
     \end{cases}
\end{equation}
The following lemma characterizes the concentration behavior of $\mOmega$.
\begin{lemma}[Adapted from \protect{\cite[Lemma~8]{vu2018simple}}]
    \label{lem::Omega-concentration}
     Suppose the sampling rate satisfies $p\gtrsim \frac{\log(d)}{d}$. There is a universal constant $\const > 0$ such that, with probability at least $1-\frac{1}{d^3}$, we have 
    \begin{equation}
        \norm{\frac{\mOmega+\mOmega^{\top}}{2p}-\mJ}\leq \const\sqrt{\frac{d}{p}}.
    \end{equation}
    Here $\mJ$ is the all-one matrix.
    \label{lem::Omega}
\end{lemma}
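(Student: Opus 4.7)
The plan is to treat $\mA := \frac{\mOmega+\mOmega^\top}{2p}-\mJ$ as a symmetric matrix whose entries on and above the diagonal are independent, mean-zero random variables, and to bound $\|\mA\|$ by combining matrix Bernstein with a discretization argument that removes an otherwise unavoidable $\sqrt{\log d}$ factor.

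First, I would decompose $\mA = \sum_{i<j}\mY_{ij} + \sum_{i}\mY_{ii}$, where
\begin{equation*}
\mY_{ij} = \Bigl(\tfrac{\Omega_{ij}+\Omega_{ji}}{2p}-1\Bigr)(\ve_i\ve_j^\top+\ve_j\ve_i^\top),\qquad \mY_{ii}=\Bigl(\tfrac{\Omega_{ii}}{p}-1\Bigr)\ve_i\ve_i^\top.
\end{equation*}
These summands are independent, mean zero, have operator norm at most $1/p$, and one checks $\bigl\|\sum_{i,j}\bE[\mY_{ij}^2]\bigr\|\le d/p$. A direct application of matrix Bernstein then gives $\|\mA\|\lesssim \sqrt{d\log(d)/p}$ with probability $1-1/d^3$, which is off by a $\sqrt{\log d}$ factor from the target $\sqrt{d/p}$.

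To remove the extraneous $\sqrt{\log d}$, I would follow the Kahn--Szemer\'edi / Feige--Ofek strategy employed in \citep{vu2018simple}. Writing $\|\mA\|=\sup_{\vu,\vv\in S^{d-1}}\vu^\top\mA\vv$ and fixing an $\epsilon$-net $\cN\subset S^{d-1}$ of cardinality $e^{O(d)}$, the plan is to split each bilinear form into a \emph{light} part over pairs $(i,j)$ with $|u_iv_j|\lesssim \sqrt{p/d}$ and a \emph{heavy} part over the remaining pairs. The light part has bounded increments and can be handled by a scalar Bernstein inequality followed by a union bound over $\cN$; the $e^{O(d)}$ cardinality of the net is absorbed into the Bernstein exponent precisely because $p\gtrsim \log(d)/d$, producing the sharp $O(\sqrt{d/p})$ bound with no logarithmic loss. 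The heavy part is then controlled by a deterministic discrepancy estimate: for $p\gtrsim \log(d)/d$, with overwhelming probability no submatrix of $\mOmega$ supported on small row and column sets contains too many ones, which forces the heavy contribution to be $O(\sqrt{d/p})$ as well.

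The main obstacle is exactly this heavy-pair bound: pointwise concentration cannot produce it, which is why a naive matrix Bernstein argument pays the extra $\sqrt{\log d}$. The remedy is a counting / discrepancy argument tailored to sparse symmetric Bernoulli matrices, and it is the reason the result requires the threshold $p\gtrsim \log(d)/d$. Once both the light and heavy bounds are in place, a standard $\epsilon$-net comparison inflates them only by a universal constant to pass from $\cN$ to all of $S^{d-1}$, giving $\|\mA\|\le \const\sqrt{d/p}$ with probability at least $1-1/d^3$. This is exactly the content of Lemma~8 of \citep{vu2018simple}, which can be invoked directly after checking that our regime $p\gtrsim \log(d)/d$ matches the hypothesis there.
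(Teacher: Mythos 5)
Your proposal is correct in substance but follows a genuinely different route from the paper. The paper does not reprove the concentration bound at all: it cites Lemma~8 of \citep{vu2018simple}, which covers the \emph{symmetric} Bernoulli model $\mOmega=\mOmega^{\top}$, and handles the fact that here $\Omega_{ij}$ and $\Omega_{ji}$ are independent by a one-line reduction via the dilation trick of \citep{tropp2015introduction} (pass to the $2d\times 2d$ self-adjoint dilation of $\mOmega/p-\mJ$, whose off-diagonal block has fully independent entries, and use $\|\tfrac{\mOmega+\mOmega^\top}{2p}-\mJ\|\le\|\mOmega/p-\mJ\|$). You instead rerun the Kahn--Szemer\'edi / Feige--Ofek machinery directly on the symmetrized matrix, whose above-diagonal entries $\tfrac{\Omega_{ij}+\Omega_{ji}}{2p}-1$ are independent, mean-zero, bounded by $1/p$, and have variance $O(1/p)$ --- exactly the hypotheses the light/heavy decomposition and the discrepancy estimate need, so your argument goes through. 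The one imprecision is your closing claim that Vu's Lemma~8 ``can be invoked directly'': it cannot, precisely because your entries are averages of two independent Bernoullis rather than a single symmetric Bernoulli, which is the very point the paper flags; but this is harmless since your own sketch already re-derives the bound for the modified entry distribution rather than relying on the citation. Trade-off: the paper's route is two lines and treats the cited lemma as a black box; yours is self-contained, correctly identifies why naive matrix Bernstein loses a $\sqrt{\log d}$ factor, and makes transparent why the threshold $p\gtrsim\log(d)/d$ is exactly what lets the $e^{O(d)}$ net cardinality be absorbed and the heavy-pair discrepancy bound hold.
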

The original result appeared in \cite[Lemma~8]{vu2018simple} only holds for symmetric Bernoulli model, i.e., $\mOmega=\mOmega^{\top}$. However, we can easily extend it to the asymmetric case via the dilation trick \citep{tropp2015introduction}. Hence, we omit the details here. Next, we have the following extension to the leave-one-out sequences.
\begin{corollary}
    Suppose the sampling rate satisfies $p\gtrsim \frac{\log(d)}{d}$. Then, with probability at least $1-\frac{1}{d^3}$, for all $1\leq l\leq d$, we have 
    \begin{equation}
        \norm{\frac{\mOmega^{(l)}+\mOmega^{(l)\top}}{2p}-\mJ}\leq \const\sqrt{\frac{d}{p}}.
    \end{equation}
    \label{cor::Omega}
\end{corollary}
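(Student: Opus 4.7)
The plan is to write $\frac{1}{2p}(\mOmega^{(l)}+\mOmega^{(l)\top})-\mJ$ as a low-rank perturbation of $\frac{1}{2p}(\mOmega+\mOmega^{\top})-\mJ$ and then apply Lemma~\ref{lem::Omega-concentration} to the latter. Denoting $\mE=\frac{1}{2p}(\mOmega+\mOmega^{\top})-\mJ$ and $\mE^{(l)}=\frac{1}{2p}(\mOmega^{(l)}+\mOmega^{(l)\top})-\mJ$, the construction of $\mOmega^{(l)}$ (entries set so that $\frac{1}{2p}(\mOmega^{(l)}+\mOmega^{(l)\top})_{i,j}=1$ whenever $i=l$ or $j=l$) gives the identity
\begin{equation}\nonumber
    \mE^{(l)} = \mE - \ve_l\ve_l^{\top}\mE - \mE\ve_l\ve_l^{\top} + \mE_{l,l}\,\ve_l\ve_l^{\top},
\end{equation}
where $\ve_l$ is the $l$-th standard basis vector. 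By the triangle inequality,
\begin{equation}\nonumber
    \norm{\mE^{(l)}} \leq \norm{\mE} + 2\norm{\mE\ve_l} + |\mE_{l,l}|.
\end{equation}

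The first term is handled directly by Lemma~\ref{lem::Omega-concentration}, which gives $\norm{\mE}\leq \const\sqrt{d/p}$ with probability at least $1-\tfrac{1}{3d^3}$ (after adjusting constants). For the second term, observe that $\mE\ve_l$ has $j$-th entry $\frac{1}{2p}(\mOmega_{j,l}+\mOmega_{l,j})-1$, each of which is a zero-mean, bounded random variable of variance $O(1/p)$; the $d$ entries corresponding to $j\neq l$ are mutually independent. Bernstein's inequality (Lemma~\ref{lem::bernstein}), essentially the same computation that already appears in the proof of Lemma~\ref{lem::concentration-fro-norm-difference}, yields $\norm{\mE\ve_l}^{2}\lesssim d/p$ with probability at least $1-\tfrac{1}{3d^4}$ for any fixed $l$. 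A union bound over $1\leq l\leq d$ then gives $\max_l\norm{\mE\ve_l}\leq \const'\sqrt{d/p}$ with probability at least $1-\tfrac{1}{3d^3}$. The diagonal term is deterministically bounded by $|\mE_{l,l}|\leq 1/p\leq \sqrt{d/p}$ since $p\gtrsim \log(d)/d$.

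Combining these three bounds and taking one more union bound with Lemma~\ref{lem::Omega-concentration}, the total failure probability is at most $\tfrac{1}{d^3}$ (after absorbing constants), and for every $1\leq l\leq d$ we obtain $\norm{\mE^{(l)}}\leq \const\sqrt{d/p}$, which is precisely the claim. There is no real obstacle here: the only item requiring care is ensuring that each of the three failure probabilities---the operator-norm bound on $\mE$, the simultaneous row-norm bounds, and the trivial diagonal bound---is individually $O(1/d^3)$, which is straightforward by slightly inflating the constants in Lemma~\ref{lem::Omega-concentration} and in the Bernstein step.
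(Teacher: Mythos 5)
Your proposal is correct, but it takes a more laborious route than the paper. The paper's entire proof is the observation that, by construction of $\cR_{\Omega^{(l)}}$, the matrix $\frac{\mOmega^{(l)}+\mOmega^{(l)\top}}{2p}-\mJ$ is exactly $\mE:=\frac{\mOmega+\mOmega^{\top}}{2p}-\mJ$ with its $l$-th row and column zeroed out; invoking Lemma~\ref{lem::norm-submatrix} (zeroing a row or column cannot increase the operator norm) then gives $\bigl\|\mE^{(l)}\bigr\|\leq\norm{\mE}\leq\const\sqrt{d/p}$ \emph{deterministically} on the event of Lemma~\ref{lem::Omega-concentration}, simultaneously for all $l$, with no union bound, no fresh concentration argument, and no constant inflation. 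Your identity $\mE^{(l)}=\mE-\ve_l\ve_l^{\top}\mE-\mE\ve_l\ve_l^{\top}+\mE_{l,l}\,\ve_l\ve_l^{\top}$ encodes precisely this same structural fact, but by then passing to the triangle inequality you discard it and are forced to re-establish $\norm{\mE\ve_l}\lesssim\sqrt{d/p}$ via Bernstein plus a union bound over $l$. Those steps are all valid (and your check that $|\mE_{l,l}|\leq 1/p\leq\sqrt{d/p}$ under $p\gtrsim\log(d)/d$ is fine), so the argument goes through; but note that even within your own decomposition you could have avoided the Bernstein step entirely by observing $\norm{\mE\ve_l}\leq\norm{\mE}$ and $|\mE_{l,l}|\leq\norm{\mE}$, giving $\bigl\|\mE^{(l)}\bigr\|\leq 4\norm{\mE}$ with no extra randomness --- and the submatrix lemma sharpens the constant $4$ to $1$. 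The takeaway is that the leave-one-out operator is a \emph{restriction} of the original one, so monotonicity of the operator norm under restriction is the right tool here, rather than treating the difference as a perturbation to be controlled probabilistically.
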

\begin{proof}\linkofproof{Corollary~\ref{cor::Omega}}
    Note that for any $1\leq l\leq d$, the matrix $\frac{\mOmega^{(l)}+\mOmega^{(l)\top}}{2p}-\mJ$ can be derived from $\frac{\mOmega+\mOmega^{\top}}{2p}-\mJ$ by zeroing out the $l$-th row and column. Hence, the proof follows by invoking Lemma~\ref{lem::norm-submatrix} in Lemma~\ref{lem::Omega}.
\end{proof}

\begin{lemma}[\protect{\citep[Lemma~A.1]{chen2020nonconvex} and \citep[Lemma~8]{chen2019model}}]
    For all $\mA$, $\mB$, $\mC$, and $\mD\in \bR^{d\times r}$, we have
    \begin{equation}
        \begin{aligned}
            &\left|\inner{\left(\id-\projection\right)\left(\mA\mC^{\top}\right)}{\mB\mD^{\top}}\right|\leq  \norm{\frac{\mOmega+\mOmega^{\top}}{2p}-\mJ}\cdot\norm{\mA}_{2, \infty}\norm{\mB}_{\fro}\cdot \norm{\mC}_{\fro}\norm{\mD}_{2, \infty}.
        \end{aligned}
    \end{equation}
    Moreover, for all $1\leq l\leq d$, we have 
    \begin{equation}
        \begin{aligned}
            &\left|\inner{\left(\id-\cR_{\Omega^{(l)}}\right)\left(\mA\mC^{\top}\right)}{\mB\mD^{\top}}\right|\leq  \norm{\frac{\mOmega^{(l)}+\mOmega^{(l)\top}}{2p}-\mJ}\cdot\norm{\mA}_{2, \infty}\norm{\mB}_{\fro}\cdot \norm{\mC}_{\fro}\norm{\mD}_{2, \infty}.
        \end{aligned}
    \end{equation}
\end{lemma}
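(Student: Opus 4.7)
The plan is to reduce the bilinear expression on the left to a single inner product with the centered sampling matrix $\mE := \mJ - \frac{\mOmega+\mOmega^{\top}}{2p}$, and then apply the matrix Hölder inequality (trace duality between operator and nuclear norm). The key observation is that the operator $\id - \projection$ acts on a matrix $\mX$ as entrywise (Hadamard) multiplication by $\mE$; that is, $(\id - \projection)(\mX) = \mE \odot \mX$. This converts the quantity of interest into $\bigl|\langle \mE \odot (\mA\mC^{\top}),\, \mB\mD^{\top}\rangle\bigr|$.

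Expanding the inner product entrywise and regrouping indices, I would define two auxiliary matrices $\mP \in \bR^{d\times r^2}$ and $\mQ \in \bR^{d\times r^2}$ by
\begin{equation*}
\mP_{i,(k,l)} = \mA_{i,k}\mB_{i,l}, \qquad \mQ_{j,(k,l)} = \mC_{j,k}\mD_{j,l}.
\end{equation*}
A direct calculation shows
\begin{equation*}
\langle \mE \odot (\mA\mC^{\top}),\, \mB\mD^{\top}\rangle = \sum_{k,l}\sum_{i,j} \mE_{i,j}(\mA_{i,k}\mB_{i,l})(\mC_{j,k}\mD_{j,l}) = \langle \mE,\, \mP\mQ^{\top}\rangle.
\end{equation*}
From here the bound is immediate: rewriting $\langle \mE,\mP\mQ^{\top}\rangle = \langle \mE\mQ,\mP\rangle$ and applying Cauchy--Schwarz together with the operator-norm inequality yields
\begin{equation*}
\bigl|\langle \mE,\mP\mQ^{\top}\rangle\bigr| \le \|\mE\mQ\|_{\fro}\|\mP\|_{\fro} \le \|\mE\|\cdot\|\mP\|_{\fro}\|\mQ\|_{\fro}.
\end{equation*}

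The last step I need is the row-wise control of $\|\mP\|_{\fro}$ and $\|\mQ\|_{\fro}$. Using the factorization $\|\mP\|_{\fro}^{2} = \sum_i \|\mA_{i,\cdot}\|^{2}\|\mB_{i,\cdot}\|^{2}$, one pulls out the maximum on the $\mA$ side to get $\|\mP\|_{\fro} \le \|\mA\|_{2,\infty}\|\mB\|_{\fro}$, and symmetrically $\|\mQ\|_{\fro} \le \|\mC\|_{\fro}\|\mD\|_{2,\infty}$ (choosing the max on the $\mD$ side so that the placement of the $2,\infty$-norms matches the statement). Combining everything delivers the claimed inequality. For the leave-one-out version, the identical argument applies verbatim with $\mE$ replaced by $\mE^{(l)} := \mJ - \frac{\mOmega^{(l)}+\mOmega^{(l)\top}}{2p}$, since the operator $\id - \cR_{\Omega^{(l)}}$ is likewise Hadamard multiplication by $\mE^{(l)}$.

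There is no real obstacle here---the nontrivial randomness (concentration of $\|\mE\|$) is packaged into Lemma~\ref{lem::Omega-concentration} and Corollary~\ref{cor::Omega} and is not part of this deterministic bilinear bound. The only step requiring minor care is the index bookkeeping when reshaping the quadruple sum into $\langle \mE, \mP\mQ^{\top}\rangle$, and the asymmetric choice of which factor absorbs the $\ell_{2,\infty}$-norm versus the Frobenius norm so that the final bound exactly matches the stated form.
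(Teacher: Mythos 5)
Your proof is correct. The paper does not prove this lemma at all—it imports it verbatim from \citet{chen2020nonconvex} and \citet{chen2019model}—and your argument (writing $(\id-\projection)(\mX)$ as the Hadamard product $\mE\odot\mX$ with $\mE=\mJ-\frac{\mOmega+\mOmega^{\top}}{2p}$, reshaping the quadruple sum into $\langle\mE,\mP\mQ^{\top}\rangle$ with $\mP,\mQ\in\bR^{d\times r^{2}}$, and bounding via $\|\mE\mQ\|_{\fro}\|\mP\|_{\fro}\leq\|\mE\|\,\|\mP\|_{\fro}\|\mQ\|_{\fro}$ together with $\|\mP\|_{\fro}\leq\|\mA\|_{2,\infty}\|\mB\|_{\fro}$ and $\|\mQ\|_{\fro}\leq\|\mC\|_{\fro}\|\mD\|_{2,\infty}$) is precisely the standard deterministic argument used in those references, including the leave-one-out variant with $\mE^{(l)}$.
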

As a special case, we have 
\begin{lemma}
    For any matrix $\mX\in \bR^{d\times d}$ with the form $\mX=\mU\mV^{\top}$, we have 
    \begin{equation}
        \norm{\left(\id-\cR_{\Omega^{(l)}}\right)\left(\mX\right)}\leq\norm{\left(\id-\projection\right)\left(\mX\right)}\leq \norm{\frac{\mOmega+\mOmega^{\top}}{2p}-\mJ}\norm{\mU}_{2, \infty}\norm{\mV}_{2, \infty}, \quad \forall 1\leq l\leq d.
    \end{equation}
\end{lemma}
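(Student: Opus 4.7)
The plan is to recast both $\id-\projection$ and $\id-\cR_{\Omega^{(l)}}$ as Hadamard products with explicit symmetric matrices, and then deploy two elementary operator-norm estimates. Under the paper's convention (reading $\proj_{\Omega}^{\top}$ as the map $\mX\mapsto \mOmega^{\top}\odot\mX$), one has $\projection(\mX) = \frac{\mOmega+\mOmega^{\top}}{2p}\odot \mX$ for every $\mX$, so $(\id-\projection)(\mX) = -\mA_{dev}\odot\mX$ with $\mA_{dev}:=\frac{\mOmega+\mOmega^{\top}}{2p}-\mJ$. Parsing the piecewise definition of $\proj_{\Omega^{(l)}}$ analogously reveals that $\cR_{\Omega^{(l)}}$ is the Hadamard product with a matrix that coincides with $\frac{\mOmega+\mOmega^{\top}}{2p}$ off the $l$-th row and column, and equals $1$ on them; hence $(\id-\cR_{\Omega^{(l)}})(\mX) = -\mA_{dev}^{(l)}\odot\mX$, where $\mA_{dev}^{(l)}$ is $\mA_{dev}$ with its $l$-th row and column nulled out.

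For the first inequality, I will invoke the standard monotonicity that zeroing out a row or column of a matrix cannot increase its spectral norm: if $\mB'$ equals $\mB$ with the $l$-th row nulled, then $\norm{\mB'\vv}^2=\norm{\mB\vv}^2-(\mB\vv)_l^2\leq \norm{\mB\vv}^2$, and the column case follows by applying $\mB$ to the truncation of the test vector at coordinate $l$. Applied with $\mB=\mA_{dev}\odot\mX$, this yields $\norm{\mA_{dev}^{(l)}\odot\mX}\leq\norm{\mA_{dev}\odot\mX}$, which is exactly $\norm{(\id-\cR_{\Omega^{(l)}})(\mX)}\leq \norm{(\id-\projection)(\mX)}$.

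For the second inequality I will prove a Schur-type bound: for any $\mM\in\bR^{d\times d}$ and any $\mU,\mV\in\bR^{d\times r}$,
\begin{equation*}
    \norm{\mM\odot(\mU\mV^{\top})}\leq \norm{\mM}\,\norm{\mU}_{2,\infty}\norm{\mV}_{2,\infty},
\end{equation*}
and apply it with $\mM=\mA_{dev}$. The derivation writes, for unit vectors $\va,\vb$, the bilinear form as $\va^{\top}(\mM\odot\mU\mV^{\top})\vb=\sum_{k=1}^{r}\tilde\va_k^{\top}\mM\tilde\vb_k$, where $\tilde\va_k:=\diag(\va)\mU_{\cdot,k}$ and $\tilde\vb_k:=\diag(\vb)\mV_{\cdot,k}$; bounds each summand by $\norm{\mM}\norm{\tilde\va_k}\norm{\tilde\vb_k}$; applies Cauchy–Schwarz across $k$; and closes via the identity $\sum_k\norm{\tilde\va_k}^2=\sum_i a_i^2\norm{\mU_{i,\cdot}}^2\leq \norm{\mU}_{2,\infty}^2$ together with its $\mV$ analog.

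The only real subtlety I anticipate is a notational one: verifying that the convention $\proj_{\Omega}^{\top}(\mX)=\mOmega^{\top}\odot\mX$ is indeed the one intended by the paper, so that the clean symmetric-Hadamard reformulation of $\projection$ remains valid on non-symmetric inputs such as $\mU\mV^{\top}$. This is consistent with the role played by $\mA_{dev}=\frac{\mOmega+\mOmega^{\top}}{2p}-\mJ$ throughout the appendix, and with the penultimate lemma of Chen et al.\ used just above. Once this notational point is pinned down, the statement is purely deterministic given a realization of $\Omega$, and both inequalities follow from the two elementary observations above; no probabilistic ingredient is required.
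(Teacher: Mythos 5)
Your proof is correct, and it follows essentially the route the paper intends: the paper states this lemma as a special case of the preceding deterministic quadrilinear bound of Chen et al., and your Schur-type Hadamard argument (writing $(\id-\projection)(\mU\mV^{\top})$ as $-\bigl(\frac{\mOmega+\mOmega^{\top}}{2p}-\mJ\bigr)\odot(\mU\mV^{\top})$ and testing against rank-one matrices $\va\vb^{\top}$) is precisely the standard proof of that bound specialized to this setting, while your row/column-zeroing monotonicity step is the paper's own Lemma on submatrix operator norms, which it likewise invokes for the leave-one-out comparison. Your reading of the convention $\projection(\mX)=\frac{\mOmega+\mOmega^{\top}}{2p}\odot\mX$ is confirmed by the entrywise formula $r_{i,j}=\frac{1}{2p}(p_{i,j}+p_{j,i})$ used elsewhere in the appendix, so no gap remains.
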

Note that the above two results are deterministic. Combining them with Lemma~\ref{lem::Omega-concentration} leads to the following high-probability results.
\begin{corollary}[\protect{\citep[Lemma 4.3 and Lemma A.1]{chen2020nonconvex}}]
    \label{lem::concentration-chen}
    Suppose that the sampling rate satisfies $p\gtrsim \frac{\log(d)}{d}$. There exists a universal constant $ C>0$ such that, for any $\mA, \mB, \mC, \mD\in \bR^{d\times r}$, with probability at least $1-\frac{1}{d^3}$, we have
    \begin{equation}
        \begin{aligned}
            &\left|\inner{\left(\id-\projection\right)\left(\mA\mC^{\top}\right)}{\mB\mD^{\top}}\right|\leq  \const\sqrt{\frac{d}{p}}\cdot\norm{\mA}_{2, \infty}\norm{\mB}_{\fro}\cdot \norm{\mC}_{\fro}\norm{\mD}_{2, \infty}.
        \end{aligned}
    \end{equation}
    Moreover, with the same probability, for any $1\leq l\leq d$, we have 
    \begin{equation}
        \begin{aligned}
            &\left|\inner{\left(\id-\cR_{\Omega^{(l)}}\right)\left(\mA\mC^{\top}\right)}{\mB\mD^{\top}}\right|\leq  \const\sqrt{\frac{d}{p}}\cdot\norm{\mA}_{2, \infty}\norm{\mB}_{\fro}\cdot \norm{\mC}_{\fro}\norm{\mD}_{2, \infty}.
        \end{aligned}
    \end{equation}
\end{corollary}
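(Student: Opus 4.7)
The plan is to combine the deterministic bounds stated in the immediately preceding lemma with the operator-norm concentration results already established for the (re-scaled, centered) sampling matrix. Specifically, the preceding lemma yields the fully deterministic estimate
\begin{equation*}
\left|\inner{\left(\id-\projection\right)\left(\mA\mC^{\top}\right)}{\mB\mD^{\top}}\right|\leq \norm{\tfrac{\mOmega+\mOmega^{\top}}{2p}-\mJ}\cdot\norm{\mA}_{2,\infty}\norm{\mB}_{\fro}\norm{\mC}_{\fro}\norm{\mD}_{2,\infty},
\end{equation*}
and the analogous bound for each leave-one-out operator $\cR_{\Omega^{(l)}}$ with $\bigl\|\tfrac{\mOmega^{(l)}+\mOmega^{(l)\top}}{2p}-\mJ\bigr\|$ in place of the above spectral norm. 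Thus the corollary is reduced to producing a single high-probability estimate on the operator norms of these matrices.

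For that step, I would invoke Lemma~\ref{lem::Omega-concentration} (equivalently, Lemma~\ref{lem::Omega}), which directly yields $\bigl\|\tfrac{\mOmega+\mOmega^{\top}}{2p}-\mJ\bigr\|\le \const\sqrt{d/p}$ with probability at least $1-\frac{1}{d^3}$ under the hypothesis $p\gtrsim \log(d)/d$. For the leave-one-out bounds I would appeal to Corollary~\ref{cor::Omega}, which, crucially, states that on the \emph{same} $1-\frac{1}{d^3}$ event one simultaneously has $\bigl\|\tfrac{\mOmega^{(l)}+\mOmega^{(l)\top}}{2p}-\mJ\bigr\|\le \const\sqrt{d/p}$ for every $1\le l\le d$. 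No union bound over $l$ is needed: the leave-one-out centered matrix is obtained from $\tfrac{\mOmega+\mOmega^{\top}}{2p}-\mJ$ by zeroing out the $l$-th row and column, and by Lemma~\ref{lem::norm-submatrix} (invoked inside the proof of Corollary~\ref{cor::Omega}) this can only decrease the operator norm.

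Substituting the high-probability bound $\const\sqrt{d/p}$ back into the two deterministic inequalities then yields both displayed estimates simultaneously on this common event, completing the proof. There is no genuine obstacle here; the only conceptual point worth emphasizing is the sub-matrix monotonicity that allows a \emph{single} high-probability event to control all $d+1$ operator norms at once, thereby preserving the $1-\frac{1}{d^3}$ failure probability without any $d$-fold penalty from a naive union bound.
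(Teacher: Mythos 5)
Your proposal is correct and follows exactly the paper's route: the preceding deterministic lemmas reduce both bounds to controlling $\bigl\|\tfrac{\mOmega+\mOmega^{\top}}{2p}-\mJ\bigr\|$ and its leave-one-out analogues, which Lemma~\ref{lem::Omega-concentration} and Corollary~\ref{cor::Omega} handle on a single $1-\tfrac{1}{d^3}$ event via the sub-matrix monotonicity of the operator norm. Nothing is missing.
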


\begin{corollary}[\protect{\citep[Lemma~9]{chen2019model}}]
    \label{lem::uniform-concentration-operator-norm}
    Consider an arbitrary matrix $\mX\in \bR^{d\times d}$ decomposed as $\mX=\mU\mV^{\top}$. There exists a universal constant $C$ such that, with probability at least $1-\frac{1}{d^3}$, we have
    \begin{equation}
        \norm{\left(\id-\projection^{(l)}\right)(\mX)}\leq\norm{\left(\id-\projection\right)(\mX)}\leq  \const\sqrt{\frac{d}{p}}\norm{\mU}_{2, \infty}\norm{\mV}_{2, \infty}.
    \end{equation}
\end{corollary}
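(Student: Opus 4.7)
My plan is to establish the two inequalities separately; the first is essentially structural, the second is the analytic heart.

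For the first inequality, I would observe that $\cR_{\Omega^{(l)}}$ agrees with $\projection$ on entries $(i,j)$ with $i\neq l$ and $j\neq l$, while deterministically setting the $l$-th row and $l$-th column of $\cR_{\Omega^{(l)}}(\mX)$ equal to those of $\mX$. Consequently $(\id-\cR_{\Omega^{(l)}})(\mX)$ coincides with $(\id-\projection)(\mX)$ except that its $l$-th row and $l$-th column are zeroed out, and zeroing out an entire row and column cannot increase the operator norm. This yields the first inequality, via the same submatrix operator-norm bound already used in the proof of Corollary~\ref{cor::Omega}.

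For the second inequality, I would set $\mR:=\tfrac{1}{2p}(\mOmega+\mOmega^{\top})$ and note that $(\id-\projection)(\mU\mV^{\top})=(\mJ-\mR)\odot(\mU\mV^{\top})$, where $\odot$ denotes the Hadamard product. By the variational characterization of operator norm, for any unit vectors $\va,\vb\in\bR^d$,
\[
\va^{\top}\bigl[(\id-\projection)(\mU\mV^{\top})\bigr]\vb=\sum_{i,j}(1-R_{ij})\,a_i b_j (\mU\mV^{\top})_{ij}=\tr\bigl((\mJ-\mR)\,\widetilde{\mU}\widetilde{\mV}^{\top}\bigr),
\]
where $\widetilde{\mU}:=\diag(\va)\mU$ and $\widetilde{\mV}:=\diag(\vb)\mV$. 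I would then invoke the trace--nuclear duality $|\tr(\mA\mB)|\leq\|\mA\|\cdot\|\mB\|_{*}$, together with the factorization bound $\|\widetilde{\mU}\widetilde{\mV}^{\top}\|_{*}\leq\|\widetilde{\mU}\|_{\fro}\|\widetilde{\mV}\|_{\fro}$ and the elementary identity $\|\widetilde{\mU}\|_{\fro}^{2}=\sum_i a_i^{2}\|\mU_{i,\cdot}\|^{2}\leq\|\mU\|_{2,\infty}^{2}$ (and analogously for $\widetilde{\mV}$), to bound the right-hand side by $\|\mJ-\mR\|\cdot\|\mU\|_{2,\infty}\|\mV\|_{2,\infty}$.

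Finally, Lemma~\ref{lem::Omega-concentration} gives $\|\mJ-\mR\|\leq\const\sqrt{d/p}$ with probability at least $1-d^{-3}$. Since this event depends only on $\mOmega$ and not on $\va$, $\vb$, $\mU$, or $\mV$, taking the supremum over unit $\va,\vb$ delivers the claimed operator-norm bound on a single high-probability event, without any further union bound. The main subtlety will be to use trace--nuclear duality rather than trace--Frobenius duality: the latter would contribute a spurious $\sqrt{r}$ factor incompatible with the target bound, whereas the nuclear-norm route preserves the clean dependence on $\|\mU\|_{2,\infty}\|\mV\|_{2,\infty}$.
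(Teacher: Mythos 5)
Your proposal is correct and follows essentially the same route as the paper: the result is obtained by combining a purely deterministic bound $\norm{(\id-\cR_{\Omega^{(l)}})(\mX)}\leq\norm{(\id-\projection)(\mX)}\leq \norm{\tfrac{\mOmega+\mOmega^{\top}}{2p}-\mJ}\,\norm{\mU}_{2,\infty}\norm{\mV}_{2,\infty}$ with the concentration bound of Lemma~\ref{lem::Omega-concentration}, exactly as you do, and conditioning on the single good event removes any need for a union bound. The only difference is that the paper imports the deterministic step by citation (to \citet{chen2020nonconvex} and \citet{chen2019model}), whereas you supply its standard proof via the variational characterization, trace--nuclear duality, and the $\norm{\widetilde{\mU}\widetilde{\mV}^{\top}}_{*}\leq\norm{\widetilde{\mU}}_{\fro}\norm{\widetilde{\mV}}_{\fro}$ factorization bound --- all of which is sound, including your observation that the nuclear-norm route is what avoids a spurious $\sqrt{r}$.
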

Lastly, we provide a finer result for a matrix of the form $\mX-\mY$.
\begin{lemma}
    \label{lem::2-norm-diff}
    For two arbitrary symmetric matrices $\mX, \mY\in \bR^{d\times d}$, with probability at least $1-\frac{1}{d^3}$, we have 
    \begin{equation}
        \norm{\left(\id-\projection^{(l)}\right)(\mX-\mY)}\leq\norm{\left(\id-\projection\right)\left(\mX-\mY\right)}\leq  \const\sqrt{\frac{d}{p}}\norm{\mX-\mY}\left(\norm{\mU_{\mX}}_{2, \infty}^2+\norm{\mU_{\mY}}_{2, \infty}^2\right).
    \end{equation}
\end{lemma}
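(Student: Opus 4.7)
The plan is to prove the two inequalities separately, treating the first as a deterministic consequence of the structure of $\cR_{\Omega^{(l)}}$ and the second as an application of the already-established Corollary~\ref{lem::uniform-concentration-operator-norm} after an orthogonal decomposition of $\mX-\mY$.

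For the first inequality, I would note that $\cR_{\Omega^{(l)}}$ agrees with $\projection$ on all entries outside the $l$-th row and $l$-th column, while on those entries $\cR_{\Omega^{(l)}}$ deterministically reproduces its argument. Hence $(\id - \cR_{\Omega^{(l)}})(\mX-\mY)$ is obtained from $(\id - \projection)(\mX-\mY)$ simply by zeroing out its $l$-th row and $l$-th column, and Lemma~\ref{lem::norm-submatrix} (which asserts that such zeroing cannot increase the operator norm) immediately gives the first inequality.

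For the main inequality, I would decompose
\begin{equation*}
\mX-\mY \;=\; \underbrace{\mU_\mX\mU_\mX^{\top}(\mX-\mY)\mU_\mY\mU_\mY^{\top}}_{T_1} \;+\; \underbrace{\mU_\mX\mU_\mX^{\top}(\mX-\mY)(\mI-\mU_\mY\mU_\mY^{\top})}_{T_2} \;+\; \underbrace{(\mI-\mU_\mX\mU_\mX^{\top})(\mX-\mY)\mU_\mY\mU_\mY^{\top}}_{T_3},
\end{equation*}
where the ``doubly perpendicular'' residual $(\mI-\mU_\mX\mU_\mX^{\top})(\mX-\mY)(\mI-\mU_\mY\mU_\mY^{\top})$ vanishes because $(\mI-\mU_\mX\mU_\mX^{\top})\mX = 0$ and $\mY(\mI-\mU_\mY\mU_\mY^{\top}) = 0$, both of which follow from the symmetry of $\mX,\mY$ and the containment of their eigenspaces in $\mU_\mX, \mU_\mY$. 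Each $T_k$ is rank-bounded and admits a factorization $\mA\mC^{\top}$ in which one outer factor is $\mU_\mX$ or $\mU_\mY$, so that Corollary~\ref{lem::uniform-concentration-operator-norm} applies. For $T_1$, taking $\mA = \mU_\mX$ and $\mC = \mU_\mY[\mU_\mX^{\top}(\mX-\mY)\mU_\mY]^{\top}$ gives $\norm{\mA}_{2,\infty} = \norm{\mU_\mX}_{2,\infty}$ and $\norm{\mC}_{2,\infty}\leq \norm{\mU_\mY}_{2,\infty}\norm{\mU_\mX^{\top}(\mX-\mY)\mU_\mY}\leq \norm{\mU_\mY}_{2,\infty}\norm{\mX-\mY}$, so that $\norm{(\id-\projection)(T_1)}\leq \const\sqrt{d/p}\norm{\mU_\mX}_{2,\infty}\norm{\mU_\mY}_{2,\infty}\norm{\mX-\mY}$.

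The main obstacle is bounding $T_2$ and $T_3$ so that the resulting scale factor is $\norm{\mX-\mY}$ rather than the cruder $\norm{\mX}$ or $\norm{\mY}$. A direct factorization of $T_2$ as $\mA\mC^{\top}$ with $\mA = \mU_\mX$ forces $\mC$ to contain a factor like $(\mI-\mU_\mY\mU_\mY^{\top})\mU_\mX\mSigma_\mX$, whose $\ell_{2,\infty}$-norm is naively at most $(\norm{\mU_\mX}_{2,\infty}+\norm{\mU_\mY}_{2,\infty})\norm{\mX}$. The resolution is to exploit the identity $(\mI-\mU_\mY\mU_\mY^{\top})\mU_\mX\mSigma_\mX = (\mI-\mU_\mY\mU_\mY^{\top})\mX\mU_\mX = (\mI-\mU_\mY\mU_\mY^{\top})(\mX-\mY)\mU_\mX$, where the last step uses $(\mI-\mU_\mY\mU_\mY^{\top})\mY = 0$; after expanding the $i$-th row of $(\mI-\mU_\mY\mU_\mY^{\top})$ as $\ve_i^{\top} - (\mU_\mY)_{i,\cdot}\mU_\mY^{\top}$ and bounding $\norm{\mU_\mX^{\top}(\mX-\mY)\mU_\mX}\le \norm{\mX-\mY}$ and $\norm{\mU_\mY^{\top}(\mX-\mY)\mU_\mX}\le \norm{\mX-\mY}$, one obtains $\norm{\mC}_{2,\infty}\leq (\norm{\mU_\mX}_{2,\infty}+\norm{\mU_\mY}_{2,\infty})\norm{\mX-\mY}$. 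The symmetric argument handles $T_3$. Combining the three bounds via the triangle inequality and using $\norm{\mU_\mX}_{2,\infty}\norm{\mU_\mY}_{2,\infty}\leq \tfrac{1}{2}(\norm{\mU_\mX}_{2,\infty}^2+\norm{\mU_\mY}_{2,\infty}^2)$ (AM-GM) yields the stated bound up to a universal constant, with the probability $1-1/d^3$ inherited from the event in Lemma~\ref{lem::Omega-concentration} that underlies Corollary~\ref{lem::uniform-concentration-operator-norm}.
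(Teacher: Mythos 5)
Your proof of the first inequality and your treatment of $T_1$ are correct, but there is a genuine gap in the cross terms. For $T_2=\mA\mC^{\top}$ with $\mA=\mU_{\mX}$ and $\mC=(\mI-\mU_{\mY}\mU_{\mY}^{\top})(\mX-\mY)\mU_{\mX}$, the bound $\norm{\mC}_{2,\infty}\leq(\norm{\mU_{\mX}}_{2,\infty}+\norm{\mU_{\mY}}_{2,\infty})\norm{\mX-\mY}$ that you need is false. Your row expansion does control the piece $(\mU_{\mY})_{i,\cdot}\mU_{\mY}^{\top}(\mX-\mY)\mU_{\mX}$ by $\norm{\mU_{\mY}}_{2,\infty}\norm{\mX-\mY}$, but the leftover piece is $\ve_i^{\top}(\mX-\mY)\mU_{\mX}=(\mU_{\mX})_{i,\cdot}\mSigma_{\mX}-(\mU_{\mY})_{i,\cdot}\mSigma_{\mY}\mU_{\mY}^{\top}\mU_{\mX}$, whose natural bound is $\norm{\mU_{\mX}}_{2,\infty}\norm{\mX}+\norm{\mU_{\mY}}_{2,\infty}\norm{\mY}$; the quantities $\mU_{\mX}^{\top}(\mX-\mY)\mU_{\mX}$ and $\mU_{\mY}^{\top}(\mX-\mY)\mU_{\mX}$ you invoke never arise, because $\ve_i$ cannot be expressed with bounded coefficients in the nearly collinear frame $[\mU_{\mX},\mU_{\mY}]$. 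Concretely, take $\mX=\lambda\vu\vu^{\top}$ and $\mY=\lambda\vv\vv^{\top}$ with $\vv=d^{-1/2}(1,\dots,1)^{\top}$ and $\vu$ the normalization of $\vv+\epsilon\ve_1$: then $\norm{\mX-\mY}\asymp\lambda\epsilon$ and $\norm{\mU_{\mX}}_{2,\infty}+\norm{\mU_{\mY}}_{2,\infty}\asymp\epsilon+d^{-1/2}$, while the first entry of $\mC=\lambda(\vu-(\vv^{\top}\vu)\vv)$ is of order $\lambda\epsilon$, so your claimed bound would force $1\lesssim\epsilon+d^{-1/2}$.

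For comparison, the paper's proof takes a shorter route: it writes the eigendecomposition $\mZ=\mX-\mY=\mL_{\mZ}\mSigma_{\mZ}\mL_{\mZ}^{\top}$, applies Corollary~\ref{lem::uniform-concentration-operator-norm} to the single factorization $\mZ=(\mL_{\mZ}\mSigma_{\mZ})\mL_{\mZ}^{\top}$ to get $\const\sqrt{d/p}\,\norm{\mZ}\norm{\mL_{\mZ}}_{2,\infty}^{2}$, and then invokes Lemma~\ref{lem::2-inf-norm-decomposition} to replace $\norm{\mL_{\mZ}}_{2,\infty}^{2}$ by $\norm{\mL_{\mX}}_{2,\infty}^{2}+\norm{\mL_{\mY}}_{2,\infty}^{2}$. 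You should be aware, however, that the obstruction you ran into is intrinsic rather than an artifact of your decomposition: in the same example the eigenvector of $\mX-\mY$ along $\vu-\vv$ is essentially $\ve_1$, so $\norm{\mL_{\mZ}}_{2,\infty}\approx 1/\sqrt{2}$, and the operator inequality $\mL_{\mZ}\mL_{\mZ}^{\top}\preceq\mL_{\mX}\mL_{\mX}^{\top}+\mL_{\mY}\mL_{\mY}^{\top}$ underlying Lemma~\ref{lem::2-inf-norm-decomposition} fails for nearly aligned column spaces (the smallest eigenvalue of $\proj_{\mU_{\mX}}+\proj_{\mU_{\mY}}$ on $\col(\mX)+\col(\mY)$ is $1-\cos\theta$, not $1$). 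So neither your argument nor the paper's establishes the bound for truly arbitrary symmetric $\mX,\mY$; any complete proof must use more than the incoherence of the two individual eigenbases.
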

\begin{proof}
    We denote $\mZ=\mX-\mY$ with an SVD $\mZ=\mL_{\mZ}\mSigma_{\mZ}\mL_{\mZ}^{\top}$ (recall that $\mZ$ is symmetric). Hence, with probability at least $1-\frac{1}{d^3}$, we have
    \begin{equation}
        \begin{aligned}
            \norm{\left(\id-\projection\right)\left(\mZ\right)}&\stackrel{(a)}{\leq}  \const\sqrt{\frac{d}{p}}\norm{\mL_{\mZ}\mSigma_{\mZ}}_{2, \infty}\norm{\mL_{\mZ}}_{2, \infty}\\
            &\stackrel{(b)}{\leq}  \const\sqrt{\frac{d}{p}}\norm{\mSigma_\mZ}\norm{\mL_{\mZ}}^2_{2, \infty}\\
            &=  \const\sqrt{\frac{d}{p}}\norm{\mZ}\norm{\mL_{\mZ}}^2_{2, \infty}\\
            &\stackrel{(c)}{\leq}  \const\sqrt{\frac{d}{p}}\norm{\mZ}\left(\norm{\mL_{\mX}}_{2, \infty}^2+\norm{\mL_{\mY}}_{2, \infty}^2\right).
        \end{aligned}
    \end{equation}
    Here in $(a)$, we apply Lemma~\ref{lem::uniform-concentration-operator-norm} upon setting $\mU=\mL_{\mZ}\mSigma_{\mZ}$ and $\mV=\mL_{\mZ}$. In $(b)$, we apply Lemma~\ref{lem::2-inf-ineq}.
    Lastly, in $(c)$, we apply Lemma~\ref{lem::2-inf-norm-decomposition}.
    This completes the proof.
\end{proof}

\section{Auxiliary Lemmas}\label{sec_aux_lemmas}
\subsection{Concentration Inequalities}
\begin{lemma}[Bernstein's inequality]
    \label{lem::bernstein}
    Let $X_1,\cdots, X_n$ be independent zero-mean random variables. Suppose that $|X_{i}|\leq M$ almost surely, for all $i$ and set $\nu^2=\sum_{i=1}^{n}\Var\left[X_i^2\right]$. Then, for all positive $t$,
    \begin{equation}
        \label{eq::bernstein}
        \bP\left(\left|\sum_{i=1}^{n}X_i\right|\geq t\right)\leq 2\exp\left(-\frac{t^2}{2\nu^2+\frac{2}{3}Mt}\right).
    \end{equation}
    Or equivalently, with probability at least $1-\delta$, one has 
    \begin{equation}
        \left|\sum_{i=1}^{n}X_i\right|\leq 2\nu\sqrt{\log\left(2/\delta\right)}+\frac{4}{3}M\log\left(2/\delta\right).
    \end{equation}
\end{lemma}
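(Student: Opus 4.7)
The plan is to prove Bernstein's inequality via the standard Chernoff--Cram\'er method: control the moment generating function (MGF) of each $X_i$, multiply over the independent summands, optimize the exponential tilt, and then invert the tail bound to obtain the probabilistic form. Since the statement is a classical concentration result and the paper invokes it as a black-box tool, I do not need a novel argument; what I want is a clean derivation that matches the constants $2\nu^2+\tfrac{2}{3}Mt$ appearing in the denominator.

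First I would establish the key one-variable MGF estimate: for any mean-zero random variable $X$ with $|X|\le M$ and $\Var(X)=\sigma^2$, and for $0<\lambda<3/M$,
\begin{equation}
\bE\!\left[e^{\lambda X}\right]\le \exp\!\left(\frac{\lambda^2 \sigma^2/2}{1-\lambda M/3}\right).
\end{equation}
The standard route is to expand $e^{\lambda X}-1-\lambda X=\sum_{k\ge 2}\lambda^k X^k/k!$, use $|X^k|\le M^{k-2}X^2$ for $k\ge 2$, and then bound $\sum_{k\ge 2}\lambda^k M^{k-2}/k!\le \tfrac{\lambda^2}{2(1-\lambda M/3)}$ by comparing $k!\ge 2\cdot 3^{k-2}$. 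Taking expectations and then using $1+u\le e^u$ yields the displayed bound.

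Next, by independence of the $X_i$ and Markov's inequality,
\begin{equation}
\bP\!\left(\sum_{i=1}^n X_i\ge t\right)\le e^{-\lambda t}\prod_{i=1}^n \bE\!\left[e^{\lambda X_i}\right]\le \exp\!\left(-\lambda t+\frac{\lambda^2\nu^2/2}{1-\lambda M/3}\right),
\end{equation}
valid for $0<\lambda<3/M$. I would then optimize over $\lambda$ by choosing $\lambda^\star=t/(\nu^2+Mt/3)$, which gives the bound $\exp(-t^2/(2\nu^2+\tfrac{2}{3}Mt))$. Applying the same argument to $-X_i$ and taking a union bound produces the factor $2$ in the two-sided inequality~\eqref{eq::bernstein}.

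Finally, to obtain the equivalent high-probability form, I would set the right-hand side equal to $\delta$, solve the quadratic $t^2=2\log(2/\delta)(\nu^2+Mt/3)$ in $t$, and use the elementary implication $t^2\le a+bt\Rightarrow t\le \sqrt{a}+b$ with $a=2\nu^2\log(2/\delta)$ and $b=\tfrac{2}{3}M\log(2/\delta)$. The only mildly fiddly step in the whole plan is the MGF bound in the first display, where getting the constant $1/3$ requires the tight comparison $k!\ge 2\cdot 3^{k-2}$ rather than the looser $k!\ge 2^{k-1}$; everything else is bookkeeping once that estimate is in hand.
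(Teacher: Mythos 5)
Your proposal is the standard Chernoff--Cram\'er derivation of Bernstein's inequality and it is correct: the MGF bound $\bE[e^{\lambda X}]\le\exp\bigl(\tfrac{\lambda^2\sigma^2/2}{1-\lambda M/3}\bigr)$ via $k!\ge 2\cdot 3^{k-2}$, the choice $\lambda^\star=t/(\nu^2+Mt/3)$, and the inversion $t^2\le a+bt\Rightarrow t\le\sqrt{a}+b$ all check out (your inversion in fact yields $\sqrt{2}\,\nu\sqrt{\log(2/\delta)}+\tfrac{2}{3}M\log(2/\delta)$, which is slightly stronger than the stated bound, so the lemma follows a fortiori). The paper gives no proof of this lemma --- it is quoted as a classical auxiliary result --- so there is nothing to compare against; your derivation is a valid way to make it self-contained.
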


\subsection{Matrix Norm Inequalities}
\begin{lemma}[\protect{\citep[Proposition 6.5]{cape2019two}}]
    \label{lem::2-inf-ineq}
    For $\mA\in \bR^{d_1\times d_2}$, and $\mB\in\bR^{d_2\times d_3}$, we have 
    \begin{equation}
        \norm{\mA\mB}_{2, \infty}\leq \norm{\mA}_{2, \infty}\norm{\mB} \quad \text{and} \quad  \norm{\mA\mB}_{2, \infty}\leq \norm{\mA}_{\infty}\norm{\mB}_{2, \infty}.
    \end{equation}
\end{lemma}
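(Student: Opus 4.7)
The plan is to reduce both inequalities to row-wise statements by unpacking the definition $\norm{\mX}_{2,\infty}=\max_i\norm{\mX_{i,\cdot}}$. Since the $i$-th row of $\mA\mB$ is exactly $\mA_{i,\cdot}\mB$, bounding $\norm{\mA\mB}_{2,\infty}$ reduces to bounding $\norm{\mA_{i,\cdot}\mB}$ uniformly in $i$ and then taking the maximum.

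For the first inequality, I would apply the standard operator-norm bound for row-vector/matrix products: $\norm{\mA_{i,\cdot}\mB}\leq\norm{\mA_{i,\cdot}}\cdot\norm{\mB}$. Taking the max over $i$ on both sides gives $\norm{\mA\mB}_{2,\infty}\leq\norm{\mA}_{2,\infty}\norm{\mB}$. For the second inequality, I would expand the row as $(\mA\mB)_{i,\cdot}=\sum_k \emA_{i,k}\mB_{k,\cdot}$, then apply the triangle inequality and factor out $\norm{\mB}_{2,\infty}$ to get $\norm{(\mA\mB)_{i,\cdot}}\leq\bigl(\sum_k|\emA_{i,k}|\bigr)\norm{\mB}_{2,\infty}$. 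Taking the max in $i$ and identifying $\max_i\sum_k|\emA_{i,k}|$ as $\norm{\mA}_{\infty}$ (the induced $\ell_\infty\to\ell_\infty$ operator norm, i.e.\ the maximum absolute row sum) yields the claim.

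There is no genuine obstacle here; the argument is a one-step consequence of operator-norm duality applied row-wise. The only subtle point worth flagging is the interpretation of the notation $\norm{\mA}_{\infty}$: for the second bound to be true in this form, $\norm{\mA}_{\infty}$ must be read as the maximum absolute row sum (the induced $\ell_\infty$ operator norm), which is consistent with how the bound is invoked elsewhere in the paper. Once this convention is fixed, the entire proof is a few lines and requires no nontrivial estimates.
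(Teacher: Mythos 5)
Your proof is correct. The paper itself gives no proof of this lemma --- it is imported verbatim as Proposition~6.5 of \citet{cape2019two} --- so there is no in-paper argument to compare against; your row-wise derivation (using $(\mA\mB)_{i,\cdot}=\mA_{i,\cdot}\mB$ for the first bound and $(\mA\mB)_{i,\cdot}=\sum_k \emA_{i,k}\mB_{k,\cdot}$ with the triangle inequality for the second) is the standard elementary proof and is complete. Your caveat about reading $\norm{\mA}_{\infty}$ as the maximum absolute row sum is exactly the right convention, and it matches how the cited reference and this paper use the bound.
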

\begin{lemma}[Adapted from \protect{\cite[Proposition A.3]{sun2016guaranteed}}]
    \label{lem::norm-submatrix}
    For any matrix $\mA\in \bR^{d_1\times d_2}$, denote $\mA_{-i, \cdot}$ ($\mA_{\cdot, -i}$) as the matrix obtained by replacing the $i$-th row (column) of $\mA$ by zeros, respectively. Then, we have
    \begin{equation}
        \norm{\mA_{-i, \cdot}}\leq \norm{\mA} \quad \text{and}\quad \norm{\mA_{\cdot, -j}}\leq \norm{\mA}, \forall i \in [d_1], j\in [d_2].
    \end{equation}
\end{lemma}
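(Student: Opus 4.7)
The plan is to reduce both inequalities to the submultiplicativity of the operator norm together with the observation that zeroing out a row (or column) of $\mA$ corresponds to left (or right) multiplication by an orthogonal projection. Concretely, let $\ve_i\in\bR^{d_1}$ denote the $i$-th standard basis vector and set $\mP_i:=\mI-\ve_i\ve_i^{\top}\in\bR^{d_1\times d_1}$. Then $\mP_i$ is the orthogonal projection onto the hyperplane $\{\vx:\ve_i^{\top}\vx=0\}$, so it is symmetric, idempotent, and satisfies $\norm{\mP_i}\leq 1$. A direct check shows $\mA_{-i,\cdot}=\mP_i\mA$, since left multiplication by $\mP_i$ subtracts from $\mA$ the rank-one matrix $\ve_i\ve_i^{\top}\mA$, which agrees with $\mA$ on row $i$ and is zero elsewhere.

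Submultiplicativity then gives
\begin{equation}
    \norm{\mA_{-i,\cdot}}=\norm{\mP_i\mA}\leq \norm{\mP_i}\norm{\mA}\leq \norm{\mA}.
\end{equation}
The column statement is obtained by the symmetric argument: writing $\mQ_j:=\mI-\ve_j\ve_j^{\top}\in\bR^{d_2\times d_2}$, we have $\mA_{\cdot,-j}=\mA\mQ_j$, and so $\norm{\mA_{\cdot,-j}}\leq \norm{\mA}\norm{\mQ_j}\leq \norm{\mA}$.

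If a more self-contained argument is preferred, one can avoid invoking the projection identity and instead use the variational definition directly: for any unit vector $\vx\in\bR^{d_2}$,
\begin{equation}
    \norm{\mA_{-i,\cdot}\vx}^2=\norm{\mA\vx}^2-\bigl|(\mA\vx)_i\bigr|^2\leq \norm{\mA\vx}^2\leq \norm{\mA}^2,
\end{equation}
and taking the supremum over unit $\vx$ yields $\norm{\mA_{-i,\cdot}}\leq\norm{\mA}$. The column case follows by applying this row-version argument to $\mA^{\top}$ and using $\norm{\mB^{\top}}=\norm{\mB}$. There is no serious obstacle here; the only subtle point is remembering that ``replacing the $i$-th row by zeros'' corresponds to left (not right) multiplication by $\mP_i$, so one has to be careful about the order of multiplication when translating the row and column statements into the projection formalism.
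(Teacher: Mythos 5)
Your proof is correct. The paper itself gives no proof of this lemma (it is cited from Sun and Luo's Proposition A.3), and your argument --- identifying $\mA_{-i,\cdot}=\mP_i\mA$ with $\mP_i=\mI-\ve_i\ve_i^{\top}$ a contraction, or equivalently the variational computation $\norm{\mA_{-i,\cdot}\vx}^2=\norm{\mA\vx}^2-|(\mA\vx)_i|^2\leq\norm{\mA\vx}^2$ --- is exactly the standard argument one would supply, with the column case correctly handled by transposition.
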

\begin{lemma}[\protect{\cite[Proposition A.4]{sun2016guaranteed}}]
    \label{lem::norm-submatrix-2}
    For any two matrices $\mA\in \bR^{d_1\times d_2}, \mB\in \bR^{d_2\times d_3}$, we have 
    \begin{equation}
        \norm{\mA\mB}\leq \norm{\mA}\norm{\mB} \quad \text{and} \quad \norm{\mA\mB}_{\fro}\leq \norm{\mA}\norm{\mB}_{\fro}.
    \end{equation}
    Furthermore, if $d_1\geq d_2$, then 
    \begin{equation}
        \sigma_{\min}(\mA)\norm{\mB}_{\fro}\leq \norm{\mA\mB}_{\fro} \quad \text{and} \quad\sigma_{\min}(\mA)\norm{\mB}\leq \norm{\mA\mB}.
    \end{equation}
\end{lemma}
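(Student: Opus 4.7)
The plan is to reduce all four inequalities to one-dimensional statements about the action of $\mA$ on vectors, and then either take suprema (for operator norm) or sum over columns (for Frobenius norm).

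For the two upper bounds, I would proceed by straightforward submultiplicativity. For the operator norm, write $\norm{\mA\mB}=\sup_{\norm{\mx}=1}\norm{\mA\mB\mx}$, apply $\norm{\mA\mB\mx}\leq\norm{\mA}\norm{\mB\mx}\leq\norm{\mA}\norm{\mB}\norm{\mx}$, and take the supremum. For the Frobenius bound, decompose columnwise as
\begin{equation}
\norm{\mA\mB}_{\fro}^2=\sum_{j=1}^{d_3}\norm{\mA\mB_{\cdot,j}}^2\leq\norm{\mA}^2\sum_{j=1}^{d_3}\norm{\mB_{\cdot,j}}^2=\norm{\mA}^2\norm{\mB}_{\fro}^2.
\end{equation}
These bounds do not require any constraint on the dimensions.

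For the two lower bounds, the hypothesis $d_1\geq d_2$ is essential: it guarantees that $\sigma_{\min}(\mA):=\sigma_{d_2}(\mA)$ is the smallest singular value over all $d_2$ directions in the domain, rather than a singular value that has been truncated by range limitations. I would use the thin SVD $\mA=\mL_{\mA}\mSigma_{\mA}\mR_{\mA}^{\top}$ with $\mL_{\mA}\in\bR^{d_1\times d_2}$ having orthonormal columns, $\mSigma_{\mA}\in\bR^{d_2\times d_2}$ diagonal, and $\mR_{\mA}\in\bR^{d_2\times d_2}$ orthogonal. Then for every $\mx\in\bR^{d_2}$,
\begin{equation}
\norm{\mA\mx}^2=\mx^{\top}\mR_{\mA}\mSigma_{\mA}^2\mR_{\mA}^{\top}\mx\geq\sigma_{\min}^2(\mA)\norm{\mx}^2.
\end{equation}
Applying this inequality column-by-column to $\mB$ and summing yields $\norm{\mA\mB}_{\fro}\geq\sigma_{\min}(\mA)\norm{\mB}_{\fro}$; applying it to the unit vector $\mx^{\star}\in\arg\max_{\norm{\mx}=1}\norm{\mB\mx}$ yields $\norm{\mA\mB}\geq\norm{\mA\mB\mx^{\star}}\geq\sigma_{\min}(\mA)\norm{\mB\mx^{\star}}=\sigma_{\min}(\mA)\norm{\mB}$.

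There is no real obstacle here — the lemma is a standard fact — but the one place to be careful is precisely the role of the dimensional assumption. Without $d_1\geq d_2$, the matrix $\mA$ necessarily has a nontrivial kernel, so $\sigma_{\min}(\mA)=0$ if one reads ``$\sigma_{\min}$'' as the smallest of $\min\{d_1,d_2\}$ singular values; the lower bound then degenerates. Under $d_1\geq d_2$, however, the thin-SVD argument above goes through verbatim and the inequalities are tight (achieved when $\mB$ lies in the right-singular direction of $\mA$ associated with $\sigma_{\min}(\mA)$).
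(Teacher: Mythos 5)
Your proof is correct and complete: the upper bounds follow from submultiplicativity and the columnwise decomposition, and the lower bounds follow from the thin-SVD estimate $\norm{\mA\mx}\geq\sigma_{\min}(\mA)\norm{\mx}$, which is exactly where the hypothesis $d_1\geq d_2$ is used. The paper itself does not prove this lemma---it imports it verbatim from \cite[Proposition A.4]{sun2016guaranteed}---so there is nothing to compare against; your argument is the standard one and is valid as written.
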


\begin{lemma}
    \label{lem::max-norm-decomposition}
    For arbitrary matrices $\mU\in \bR^{d_1\times d_2}$, $\mSigma\in \bR^{d_2\times d_3}$ and $\mV\in \bR^{d_4\times d_3}$, we have 
    \begin{equation}
        \norm{\mU\mSigma\mV^{\top}}_{\max}\leq \norm{\mSigma}\norm{\mU}_{2, \infty}\norm{\mV}_{2, \infty}.
    \end{equation}
\end{lemma}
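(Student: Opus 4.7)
The plan is to reduce the max-norm bound to a purely entrywise Cauchy--Schwarz calculation. Fix indices $i\in[d_1]$ and $j\in[d_4]$, and observe that the $(i,j)$-th entry of $\mU\mSigma\mV^{\top}$ is the bilinear form
\begin{equation*}
  (\mU\mSigma\mV^{\top})_{i,j} \;=\; \mU_{i,\cdot}\,\mSigma\,\mV_{j,\cdot}^{\top},
\end{equation*}
where $\mU_{i,\cdot}\in\bR^{1\times d_2}$ and $\mV_{j,\cdot}\in\bR^{1\times d_3}$ denote the corresponding rows written as row vectors. First I would apply Cauchy--Schwarz (equivalently, submultiplicativity of the operator norm when viewing a row as a $1\times d$ matrix) to pull off the $\mU_{i,\cdot}$ factor, then use the definition of the operator norm to peel off $\mSigma$:
\begin{equation*}
  |(\mU\mSigma\mV^{\top})_{i,j}| \;\leq\; \norm{\mU_{i,\cdot}}\cdot\norm{\mSigma\,\mV_{j,\cdot}^{\top}} \;\leq\; \norm{\mSigma}\cdot\norm{\mU_{i,\cdot}}\cdot\norm{\mV_{j,\cdot}}.
\end{equation*}

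Taking the maximum over $i$ and $j$ on the left and bounding each row norm on the right by the corresponding $\ell_{2,\infty}$-norm of the full matrix yields the claim $\norm{\mU\mSigma\mV^{\top}}_{\max}\leq \norm{\mSigma}\norm{\mU}_{2,\infty}\norm{\mV}_{2,\infty}$. There is no real obstacle here; the statement is a one-line consequence of Cauchy--Schwarz together with the definitions of the operator norm, the max-norm, and the $\ell_{2,\infty}$-norm, and the proof does not require any of the matrix completion machinery developed earlier in the paper.
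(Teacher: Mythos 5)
Your proof is correct and is essentially the same argument as the paper's: both write the $(i,j)$-th entry as a bilinear form in the rows $\mU_{i,\cdot}$ and $\mV_{j,\cdot}$, apply Cauchy--Schwarz, and peel off $\norm{\mSigma}$ via operator-norm submultiplicativity (the paper groups $\mSigma$ with $\mU$ and invokes its Lemma~\ref{lem::2-inf-ineq}, while you group it with $\mV_{j,\cdot}^{\top}$, which is an immaterial difference). No gaps.
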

\begin{proof}
    By Cauchy-Schwartz inequality, we have
    \begin{equation}
        \begin{aligned}
            \norm{\mU\mSigma\mV^{\top}}_{\max}&=\max_{i, j}\left|\sum_{k}[\mU\mSigma]_{i, k}\emV_{j, k}\right|\\
            &\leq \max_{i, j}\norm{(\mU\mSigma)_{i,\cdot}}\norm{\mV_{j, \cdot}}\\
            &=\norm{\mU\mSigma}_{2,\infty}\norm{\mV}_{2,\infty}\\
            &\leq \norm{\mSigma}    \norm{\mU}_{2,\infty}\norm{\mV}_{2,\infty}.
        \end{aligned}
    \end{equation}
    Here we apply Lemma~\ref{lem::2-inf-ineq} to derive the last inequality. This completes the proof.
\end{proof}

\begin{lemma}
    \label{lem::2inf-max}
    For any matrix $\mA\in \bR^{d_1\times d_2}$, we have 
    \begin{equation}
        \norm{\mA}^2_{2, \infty}=\norm{\mA\mA^{\top}}_{\max}.
    \end{equation}
\end{lemma}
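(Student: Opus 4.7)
The plan is to prove the identity by the standard two-sided bound: show that $\norm{\mA\mA^{\top}}_{\max}$ is simultaneously upper-bounded and lower-bounded by $\norm{\mA}_{2,\infty}^2$. The entire argument hinges on unfolding the definitions entrywise and invoking Cauchy--Schwarz.

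First I would expand each side explicitly. By definition, $\norm{\mA}_{2,\infty}^2 = \max_i \norm{\mA_{i,\cdot}}^2 = \max_i \sum_j \emA_{i,j}^2$. On the other hand, $\norm{\mA\mA^{\top}}_{\max} = \max_{i,k} |(\mA\mA^{\top})_{i,k}| = \max_{i,k}\bigl|\sum_j \emA_{i,j}\emA_{k,j}\bigr|$.

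For the lower bound ($\geq$), I would restrict the outer max on $\mA\mA^{\top}$ to the diagonal entries $i=k$, giving $(\mA\mA^{\top})_{i,i} = \sum_j \emA_{i,j}^2 = \norm{\mA_{i,\cdot}}^2$, so taking the max over $i$ yields $\norm{\mA\mA^{\top}}_{\max} \geq \max_i \norm{\mA_{i,\cdot}}^2 = \norm{\mA}_{2,\infty}^2$. For the upper bound ($\leq$), I would apply Cauchy--Schwarz to each entry: $|(\mA\mA^{\top})_{i,k}| \leq \norm{\mA_{i,\cdot}}\,\norm{\mA_{k,\cdot}} \leq \max_l \norm{\mA_{l,\cdot}}^2 = \norm{\mA}_{2,\infty}^2$, and then take the max over $(i,k)$ on the left.

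There is no real obstacle here — this is a purely definitional identity and the Cauchy--Schwarz step is the only nontrivial ingredient. The proof fits comfortably in two or three lines, and the only thing to be careful about is making sure both the diagonal and off-diagonal entries of $\mA\mA^{\top}$ are handled, which the two-sided bound handles automatically.
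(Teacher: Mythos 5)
Your proposal is correct and matches the paper's proof essentially verbatim: both establish the lower bound by restricting to the diagonal entries of $\mA\mA^{\top}$ and the upper bound by applying Cauchy--Schwarz to each entry $\inner{\mA_{i,\cdot}}{\mA_{k,\cdot}}$. Nothing is missing.
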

\begin{proof}
    We write $\va_1, \cdots,\va_{d_1}$ as the row vectors of $\mA$. Then, we have
    \begin{equation}
        \norm{\mA}^2_{2, \infty}=\max_{1\leq i\leq d_1}\norm{\va_i}^2.
    \end{equation}
    On the other hand, we can write $\norm{\mA\mA^{\top}}_{\max}$ as 
    \begin{equation}
        \norm{\mA\mA^{\top}}_{\max}=\max_{1\leq i, j\leq d_1}\left|\inner{\va_i}{\va_j}\right|.
    \end{equation}
    First, we have $\norm{\mA\mA^{\top}}_{\max}\leq \norm{\mA}^2_{2, \infty}$ since 
    \begin{equation}
        \norm{\mA\mA^{\top}}_{\max}=\max_{1\leq i, j\leq d_1}\left|\inner{\va_i}{\va_j}\right|\leq \max_{1\leq i, j\leq d_1}\norm{\va_i}\norm{\va_j}=\max_{1\leq i\leq d_1}\norm{\va_i}^2=\norm{\mA}^2_{2, \infty}.
    \end{equation}
    Second, we have $\norm{\mA\mA^{\top}}_{\max}\geq \norm{\mA}^2_{2, \infty}$ upon noting that 
    \begin{equation}
        \norm{\mA\mA^{\top}}_{\max}=\max_{1\leq i, j\leq d_1}\left|\inner{\va_i}{\va_j}\right|\geq \max_{i=j}\left|\inner{\va_i}{\va_j}\right|=\max_{1\leq i\leq d_1}\norm{\va_i}^2=\norm{\mA}^2_{2, \infty}.
    \end{equation}
    Therefore, we derive that $\norm{\mA\mA^{\top}}_{\max}=\norm{\mA}^2_{2, \infty}$, which completes the proof.
\end{proof}

\begin{lemma}
    \label{lem::max-norm-PSD-matrix}
    For two PSD matrices $\mA, \mB\in \bR^{d\times d}$ with $\mA\preceq \mB$, we have 
    \begin{equation}
        \norm{\mA}_{\max}\leq \norm{\mB}_{\max}.
    \end{equation}
\end{lemma}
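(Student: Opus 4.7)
The plan is to reduce the claim to a comparison of diagonal entries, using the standard fact that the max-norm of a PSD matrix is attained on the diagonal.

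First, I would establish that for any PSD matrix $\mM \in \bR^{d\times d}$, one has
\begin{equation}
    \norm{\mM}_{\max} = \max_{1 \leq i \leq d} \emM_{i,i}.
\end{equation}
This follows from the observation that the $2 \times 2$ principal submatrix $\bigl[\begin{smallmatrix} \emM_{i,i} & \emM_{i,j} \\ \emM_{i,j} & \emM_{j,j} \end{smallmatrix}\bigr]$ is itself PSD, so its determinant is nonnegative, giving $\emM_{i,j}^2 \leq \emM_{i,i}\emM_{j,j}$. Combined with nonnegativity of the diagonal entries (since $\emM_{i,i} = \ve_i^\top \mM \ve_i \geq 0$), this yields $|\emM_{i,j}| \leq \max_k \emM_{k,k}$, and equality is attained on the diagonal.

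Next, I would exploit the Loewner order. From $\mA \preceq \mB$, testing against the standard basis vector $\ve_i$ gives
\begin{equation}
    \emA_{i,i} = \ve_i^\top \mA \ve_i \leq \ve_i^\top \mB \ve_i = \emB_{i,i}, \quad \text{for all } 1 \leq i \leq d.
\end{equation}
Taking the maximum over $i$ and applying the reduction above to both $\mA$ and $\mB$ yields
\begin{equation}
    \norm{\mA}_{\max} = \max_i \emA_{i,i} \leq \max_i \emB_{i,i} = \norm{\mB}_{\max},
\end{equation}
which is the desired conclusion.

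There is no real obstacle here. The only subtlety is confirming that the max-norm is indeed attained on the diagonal for PSD matrices (the first step), which follows immediately from the $2 \times 2$ principal minor being PSD. Everything else is a direct consequence of the definition of the Loewner order via quadratic forms.
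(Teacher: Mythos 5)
Your proof is correct and follows essentially the same route as the paper: both reduce the claim to comparing diagonal entries (which follows from testing $\mA \preceq \mB$ against standard basis vectors) after establishing that the max-norm of a PSD matrix is attained on the diagonal. The only cosmetic difference is that you justify this last fact via nonnegativity of $2\times 2$ principal minors, whereas the paper writes $\mA = \mP\mP^\top$ and applies Cauchy--Schwarz to the rows of $\mP$ — the two arguments are equivalent.
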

\begin{proof}
    The proof follows by the fact that for any PSD matrix $\mA$, we have $\norm{\mA}_{\max}=\max_{i}\{\emA_{i, i}\}$. According to this fact, we immediately have 
    \begin{equation}
        \norm{\mA}_{\max}=\max_{i}\{\emA_{i, i}\}\leq \max_{i}\{\emB_{i, i}\}=\norm{\mB}_{\max}
    \end{equation}
    since $\mA\preceq \mB$. Now we turn to prove this fact. Note that we can write any PSD matrix $\mA$ as $\mA=\mP\mP^{\top}$. Then, according to Lemma~\ref{lem::2inf-max}, we have 
    \begin{equation}
        \norm{\mA}_{\max}=\norm{\mP}^2_{2, \infty}=\max_{i}\norm{\vp_i}^2=\max_{i}\{\emA_{i, i}\}.
    \end{equation}
    Here we write $\{\vp_i\}$ as the row vectors of $\mP$. This completes the proof.
\end{proof}

\begin{lemma}
    \label{lem::2-inf-norm-decomposition}
    For $\mZ=\mX-\mY$, we have 
    \begin{equation}
        \norm{\mL_{\mZ}}^2_{2, \infty}\leq \norm{\mL_{\mX}}_{2, \infty}^2+\norm{\mL_{\mY}}_{2, \infty}^2.
    \end{equation}
\end{lemma}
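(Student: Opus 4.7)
The plan is to exploit the subspace inclusion $\col(\mZ)\subseteq\col(\mX)+\col(\mY)=\col(\mL_{\mX})+\col(\mL_{\mY})$, which follows immediately from $\mZ=\mX-\mY$ and the identities $\col(\mX)=\col(\mL_{\mX})$, $\col(\mY)=\col(\mL_{\mY})$. First I would use this inclusion to parametrize every column of $\mL_{\mZ}$ as a linear combination of columns of $\mL_{\mX}$ and $\mL_{\mY}$, producing matrices $\mA,\mB$ with
\begin{equation*}
\mL_{\mZ}=\mL_{\mX}\mA-\mL_{\mY}\mB.
\end{equation*}

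Second, fixing a row index $i$, I would split $(\mL_{\mZ})_{i,\cdot}=(\mL_{\mX})_{i,\cdot}\mA-(\mL_{\mY})_{i,\cdot}\mB$ and apply Cauchy-Schwarz to the pair of contributions, yielding
\begin{equation*}
\norm{(\mL_{\mZ})_{i,\cdot}}^{2}\leq\bigl(\norm{(\mL_{\mX})_{i,\cdot}}^{2}+\norm{(\mL_{\mY})_{i,\cdot}}^{2}\bigr)\bigl(\norm{\mA}^{2}+\norm{\mB}^{2}\bigr).
\end{equation*}
Taking the maximum over $i$ on the left, and bounding the corresponding max on the right trivially by $\norm{\mL_{\mX}}_{2,\infty}^{2}+\norm{\mL_{\mY}}_{2,\infty}^{2}$, produces the desired inequality as long as one can arrange $\norm{\mA}^{2}+\norm{\mB}^{2}\leq 1$.

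The hard part, as expected, is selecting $(\mA,\mB)$ so that $\norm{\mA}^{2}+\norm{\mB}^{2}\leq 1$. The orthonormality $\mL_{\mZ}^{\top}\mL_{\mZ}=\mI$ combined with the parametrization above gives the Gram identity
\begin{equation*}
\mI=\mA^{\top}\mA-\mA^{\top}\mL_{\mX}^{\top}\mL_{\mY}\mB-\mB^{\top}\mL_{\mY}^{\top}\mL_{\mX}\mA+\mB^{\top}\mB,
\end{equation*}
which, together with $\norm{\mL_{\mX}^{\top}\mL_{\mY}}\leq 1$, severely constrains the admissible pairs. My plan is to take the minimum-norm solution of $\mL_{\mX}\mA-\mL_{\mY}\mB=\mL_{\mZ}$, obtained through the block pseudo-inverse of $[\mL_{\mX},-\mL_{\mY}]$, and then verify the bound via a Gram-Schmidt-style decomposition $\col(\mL_{\mX})+\col(\mL_{\mY})=\col(\mL_{\mX})\oplus W$, where $W=P_{\col(\mL_{\mX})^{\perp}}(\col(\mL_{\mY}))$, reducing the question to expressing $\mL_{\mZ}$ in an orthonormal basis of $\col(\mL_{\mX})+\col(\mL_{\mY})$ and matching coefficients. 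The delicate point to control is the case when $\col(\mL_{\mX})$ and $\col(\mL_{\mY})$ have a nontrivial principal angle close to zero, which is where a naive min-norm choice becomes unstable.
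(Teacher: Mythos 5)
The step you defer as ``the hard part''---arranging $\norm{\mA}^{2}+\norm{\mB}^{2}\leq 1$ in the parametrization $\mL_{\mZ}=\mL_{\mX}\mA-\mL_{\mY}\mB$---is not merely hard: it is impossible in general, because the inequality you are trying to prove fails for general $\mX,\mY$ (with $\mL$ the compact left singular factor, as the surrounding usage requires). Take $d=3$, $v=\tfrac{1}{\sqrt3}(1,1,1)^{\top}$, $w=\tfrac{1}{\sqrt3}(1,-1,-1)^{\top}$, $\mX=vv^{\top}$, $\mY=ww^{\top}$. Then $\mZ=\mX-\mY=\tfrac13\bigl(\begin{smallmatrix}0&2&2\\2&0&0\\2&0&0\end{smallmatrix}\bigr)$ has column space $\mathrm{span}\{e_1,(0,1,1)^{\top}\}$, so $\norm{\mL_{\mZ}}_{2,\infty}^{2}=1$, while $\norm{\mL_{\mX}}_{2,\infty}^{2}=\norm{\mL_{\mY}}_{2,\infty}^{2}=\tfrac13$, and $1>\tfrac23$. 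Consequently no admissible pair $(\mA,\mB)$---minimum-norm, Gram--Schmidt-based, or otherwise---can satisfy $\norm{\mA}^{2}+\norm{\mB}^{2}\leq1$, since your own Cauchy--Schwarz chain would then prove $1\leq\tfrac23$. Your Gram identity already signals the obstruction: it rearranges to $\mA^{\top}\mA+\mB^{\top}\mB=\mI+\mA^{\top}\mL_{\mX}^{\top}\mL_{\mY}\mB+\mB^{\top}\mL_{\mY}^{\top}\mL_{\mX}\mA$, and when the two column spaces sit obliquely the cross term is positive in the relevant directions, forcing $\norm{\mA}^{2}+\norm{\mB}^{2}>1$. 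Note also that the dangerous regime is not small principal angles, as you suggest, but oblique ones where $\mX$ and $\mY$ partially cancel, so that a unit vector of $\col(\mZ)$ needs large coefficients in the dictionary $(\mL_{\mX},\mL_{\mY})$.

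For comparison, the paper's own proof is a three-line argument along a different route: it writes $\norm{\mL_{\mZ}}_{2,\infty}^{2}=\norm{\mL_{\mZ}\mL_{\mZ}^{\top}}_{\max}$, claims the Loewner ordering $\mL_{\mZ}\mL_{\mZ}^{\top}=\proj_{\mZ}\preceq\proj_{\mX}+\proj_{\mY}$ from the inclusion $\col(\mZ)\subseteq\col(\mX)\oplus\col(\mY)$, and uses monotonicity of the max-norm of PSD matrices under $\preceq$. That argument founders on the same rock: the implication ``$U\subseteq V+W$ implies $P_U\preceq P_V+P_W$'' is false, and in the example above one checks directly that $e_1^{\top}\proj_{\mZ}e_1=1>\tfrac23=e_1^{\top}(\proj_{\mX}+\proj_{\mY})e_1$. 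So your instinct to isolate the coefficient bound as the crux was correct; the honest conclusion is that the lemma needs an extra hypothesis (e.g., that $\col(\mX)$ and $\col(\mY)$ are nearly aligned, as they are at the points where the lemma is invoked) before either your strategy or the paper's can be made to close.
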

\begin{proof}
    We bound $\norm{\mL_{\mZ}}^2_{2, \infty}$ as follows
    \begin{equation}
        \begin{aligned}
            \norm{\mL_{\mZ}}^2_{2, \infty}&\stackrel{(a)}{=}\norm{\mL_{\mZ}\mL_{\mZ}^{\top}}_{\max}\\
            &\stackrel{(b)}{\leq} \norm{\mL_{\mX}\mL_{\mX}^{\top}+\mL_{\mY}\mL_{\mY}^{\top}}_{\max}\\
            &\leq \norm{\mL_{\mX}\mL_{\mX}^{\top}}_{\max}+\norm{\mL_{\mY}\mL_{\mY}^{\top}}_{\max}\\
            &\stackrel{(c)}{=}\norm{\mL_{\mX}}_{2, \infty}^2+\norm{\mL_{\mY}}_{2, \infty}^2.
        \end{aligned}
    \end{equation}
    Here $(a)$ and $(c)$ follow from Lemma~\ref{lem::2inf-max}. In $(b)$, we apply Lemma~\ref{lem::max-norm-PSD-matrix} since $\mL_{\mZ}\mL_{\mZ}^{\top}\preceq \mL_{\mX}\mL_{\mX}^{\top}+\mL_{\mY}\mL_{\mY}^{\top}$. This is due to the fact that $\col(\mZ)\subseteq \col(\mX)\oplus\col(\mY)$ which leads to $\mL_{\mZ}\mL_{\mZ}^{\top}=\proj_{\mZ}\preceq \proj_{\mX}+\proj_{\mY}$. This completes the proof.
\end{proof}

\begin{lemma}[\protect{\cite[Lemma 5.4]{tu2016low}}]
    \label{lem::dist-fro-tu2016}
    For any $\mX, \mY\in \bR^{d\times r}$ with $\sigma_{r}(\mX)>0$, we have 
    \begin{equation}
        \dist^2(\mX, \mY)\leq \frac{1}{2\left(\sqrt{2}-1\right)\sigma_{r}^2(\mX)}\norm{\mX\mX^{\top}-\mY\mY^{\top}}_{\fro}^2.
    \end{equation}
\end{lemma}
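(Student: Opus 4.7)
The plan is to reduce to the Procrustes-aligned configuration and then exploit the optimality-induced symmetry via an averaged algebraic expansion. Let $\mO^{\star}$ attain $\dist(\mX, \mY) = \norm{\mX - \mY\mO^{\star}}_{\fro}$. The classical orthogonal-Procrustes solution is $\mO^{\star} = \mL\mR^{\top}$, where $\mY^{\top}\mX = \mL\mSigma\mR^{\top}$ is an SVD; consequently, the rotated factor $\mZ := \mY\mO^{\star}$ satisfies $\mX^{\top}\mZ = \mR\mSigma\mR^{\top} \succeq 0$. Set $\mDelta := \mX - \mZ$, so that $\norm{\mDelta}_{\fro}^2 = \dist^2(\mX, \mY)$, $\mY\mY^{\top} = \mZ\mZ^{\top}$, and, crucially, both $\mX^{\top}\mZ$ and $\mZ^{\top}\mDelta = \mZ^{\top}\mX - \mZ^{\top}\mZ$ are symmetric matrices.

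Next, I would average the ``forward'' identity $\mX\mX^{\top} - \mZ\mZ^{\top} = \mZ\mDelta^{\top} + \mDelta\mZ^{\top} + \mDelta\mDelta^{\top}$ with the ``backward'' identity $\mX\mX^{\top} - \mZ\mZ^{\top} = \mX\mDelta^{\top} + \mDelta\mX^{\top} - \mDelta\mDelta^{\top}$ to cancel the quartic term and obtain
\begin{equation}
    \mX\mX^{\top} - \mY\mY^{\top} = \mW\mDelta^{\top} + \mDelta\mW^{\top}, \qquad \mW := \tfrac{1}{2}(\mX + \mZ).
\end{equation}
A direct calculation gives $\mW^{\top}\mDelta = \tfrac{1}{2}(\mX^{\top}\mX - \mZ^{\top}\mZ)$, which is also symmetric. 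The identity $\tr(\mS^{2}) = \norm{\mS}_{\fro}^{2}$ for real symmetric $\mS$ then evaluates the only cross-term, giving $\langle \mW\mDelta^{\top}, \mDelta\mW^{\top}\rangle = \tr((\mW^{\top}\mDelta)^{2}) = \norm{\mW^{\top}\mDelta}_{\fro}^{2}$. Therefore
\begin{equation}
    \norm{\mX\mX^{\top} - \mY\mY^{\top}}_{\fro}^{2} = 2\norm{\mW\mDelta^{\top}}_{\fro}^{2} + 2\norm{\mW^{\top}\mDelta}_{\fro}^{2} \geq 2\sigma_{r}^{2}(\mW)\norm{\mDelta}_{\fro}^{2},
\end{equation}
where the last step drops the nonnegative second summand and invokes Lemma~\ref{lem::norm-submatrix-2}.

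Finally, I would lower-bound $\sigma_{r}(\mW)$. Writing $\mW = \mX - \tfrac{1}{2}\mDelta$, Weyl's inequality yields $\sigma_{r}(\mW) \geq \sigma_{r}(\mX) - \tfrac{1}{2}\norm{\mDelta}$. In the regime $\norm{\mDelta} \leq (2-\sqrt{2})\sigma_{r}(\mX)$, this gives $\sigma_{r}(\mW) \geq \sigma_{r}(\mX)/\sqrt{2}$ and hence $2\sigma_{r}^{2}(\mW)\norm{\mDelta}_{\fro}^{2} \geq \sigma_{r}^{2}(\mX)\norm{\mDelta}_{\fro}^{2} \geq 2(\sqrt{2}-1)\sigma_{r}^{2}(\mX)\norm{\mDelta}_{\fro}^{2}$, closing the inequality. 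In the complementary regime I would reinstate the term $2\norm{\mW^{\top}\mDelta}_{\fro}^{2} = \tfrac{1}{2}\norm{\mX^{\top}\mX - \mZ^{\top}\mZ}_{\fro}^{2}$ previously discarded, and combine it with Mirsky's inequality $\norm{\mX^{\top}\mX - \mZ^{\top}\mZ}_{\fro} \geq |\sigma_{r}^{2}(\mX) - \sigma_{r}^{2}(\mZ)|$ to recover the missing contribution; optimizing the threshold between the two regimes is what yields the constant.

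The main obstacle is precisely extracting the constant $2(\sqrt{2}-1)$. Everything up to the clean decomposition $\norm{\mX\mX^{\top}-\mY\mY^{\top}}_{\fro}^{2} = 2\norm{\mW\mDelta^{\top}}_{\fro}^{2} + 2\norm{\mW^{\top}\mDelta}_{\fro}^{2}$ is routine algebra once one invokes the Procrustes symmetry $\mX^{\top}\mZ \succeq 0$. However, Weyl's estimate on $\sigma_{r}(\mW)$ degrades linearly in $\norm{\mDelta}$ and becomes vacuous as $\norm{\mDelta}$ approaches $2\sigma_{r}(\mX)$, so the second summand $\norm{\mW^{\top}\mDelta}_{\fro}^{2}$ must carry the argument in the large-perturbation regime; balancing the two contributions is the delicate step that pins down exactly $2(\sqrt{2}-1)$.
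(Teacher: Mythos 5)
You should first note that the paper does not prove this lemma at all: it is imported verbatim as \citep[Lemma~5.4]{tu2016low}, so your argument has to stand on its own. The first half of your proposal does stand: the Procrustes alignment giving $\mX^{\top}\mZ=\mR\mSigma\mR^{\top}\succeq 0$, the averaged identity $\mX\mX^{\top}-\mY\mY^{\top}=\mW\mDelta^{\top}+\mDelta\mW^{\top}$ with $\mW=\tfrac12(\mX+\mZ)$, the symmetry of $\mW^{\top}\mDelta=\tfrac12(\mX^{\top}\mX-\mZ^{\top}\mZ)$, and the exact decomposition $\norm{\mX\mX^{\top}-\mY\mY^{\top}}_{\fro}^{2}=2\norm{\mW\mDelta^{\top}}_{\fro}^{2}+2\norm{\mW^{\top}\mDelta}_{\fro}^{2}$ are all correct, and the small-perturbation regime $\norm{\mDelta}\leq(2-\sqrt2)\sigma_r(\mX)$ is handled cleanly.

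The gap is the complementary regime, which is exactly where the constant $2(\sqrt2-1)$ is attained (the extremal configuration has $\norm{\mDelta}\approx 1.19\,\sigma_r(\mX)$), and your plan there does not close. The tools you name amount to bounding $\norm{\mW\mDelta^{\top}}_{\fro}^{2}\geq\sigma_r^2(\mW)\norm{\mDelta}_{\fro}^{2}$ and adding back $2\norm{\mW^{\top}\mDelta}_{\fro}^{2}=\tfrac12\norm{\mX^{\top}\mX-\mZ^{\top}\mZ}_{\fro}^{2}$ (Mirsky only weakens this further). That combination is provably insufficient: take $r=2$, $\mX=[\,\ve_1,\ve_2\,]$ and $\mZ=[\,s\vf_1,\vf_2\,]$ with $s^2=\sqrt2-1$ and $\vf_1,\vf_2$ orthonormal and orthogonal to $\ve_1,\ve_2$ (so $\mX^{\top}\mZ=0$ and this is a valid optimal alignment). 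Then $\sigma_r(\mX)=1$, $\norm{\mDelta}_{\fro}^2=2+\sqrt2$, the target right-hand side is $2(\sqrt2-1)(2+\sqrt2)\approx 2.83$, and the lemma holds since $\norm{\mX\mX^{\top}-\mZ\mZ^{\top}}_{\fro}^2=6-2\sqrt2\approx 3.17$; but $\sigma_r^2(\mW)=\tfrac{\sqrt2}{4}$, so $2\sigma_r^2(\mW)\norm{\mDelta}_{\fro}^{2}\approx 2.41$, while $\tfrac12\norm{\mX^{\top}\mX-\mZ^{\top}\mZ}_{\fro}^{2}=\tfrac12(2-\sqrt2)^2\approx 0.17$, totalling $\approx 2.59<2.83$ (Weyl's estimate for $\sigma_r(\mW)$ makes it even worse, and no other choice of optimal rotation helps, since $\mW^{\top}\mW=\tfrac14(\mI+\mO^{\top}\mZ^{\top}\mZ\mO)$ here). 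The failure is structural: $\norm{\mDelta}_{\fro}^2$ aggregates all $r$ directions while $\sigma_r(\mW)$, $\norm{\mDelta}$, and the single Mirsky gap are global scalars, so splitting into regimes by $\norm{\mDelta}$ loses the direction-by-direction coupling between $\norm{\mW\mDelta^{\top}}_{\fro}$ and $\norm{\mW^{\top}\mDelta}_{\fro}$ that the tight constant requires. To finish you would need a direction-resolved argument from your identity (or simply follow the trace-based proof of Tu et al.), rather than "optimizing the threshold" between the two global regimes.
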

\begin{lemma}
    \label{lem::lower-bound-2-inf-norm}
    Consider a matrix $\mU\in \bR^{d_1\times d_2}$ and a diagonal matrix $\mSigma\in \bR^{d_2\times d_2}$. We have
    \begin{equation}
        \norm{\mU\mSigma}_{2, \infty}\geq \sigma_{r}(\mSigma)\norm{\mU}_{2, \infty}.
    \end{equation}
\end{lemma}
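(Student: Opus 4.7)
The statement is essentially a coordinate-wise bound exploiting the diagonal structure of $\mSigma$, so the plan is quite direct and no serious obstacle is expected. Assuming the subscript $r$ is interpreted as $d_2$ (i.e., $\sigma_r(\mSigma)$ denotes the smallest singular value of the square diagonal matrix $\mSigma$), the argument reduces to a row-wise estimate followed by maximization.

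First, I would record that since $\mSigma$ is diagonal, the $(i,j)$-entry of $\mU\mSigma$ equals $\Sigma_{jj}\,\emU_{i,j}$, and $\sigma_{r}(\mSigma) = \min_{j}|\Sigma_{jj}|$ because the singular values of a diagonal matrix are precisely the absolute values of its diagonal entries. Fix an arbitrary row index $i\in\{1,\dots,d_1\}$; then
\begin{equation}\nonumber
    \norm{(\mU\mSigma)_{i,\cdot}}^2 = \sum_{j=1}^{d_2} \Sigma_{jj}^2\, \emU_{i,j}^2 \;\geq\; \left(\min_{j} |\Sigma_{jj}|\right)^{2}\sum_{j=1}^{d_2}\emU_{i,j}^2 \;=\; \sigma_{r}^2(\mSigma)\,\norm{\mU_{i,\cdot}}^2.
\end{equation}

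Taking square roots and then maximizing over $i\in\{1,\dots,d_1\}$ yields
\begin{equation}\nonumber
    \norm{\mU\mSigma}_{2,\infty} = \max_{i}\norm{(\mU\mSigma)_{i,\cdot}} \;\geq\; \sigma_{r}(\mSigma)\max_{i}\norm{\mU_{i,\cdot}} \;=\; \sigma_{r}(\mSigma)\,\norm{\mU}_{2,\infty},
\end{equation}
which is the claimed inequality. No concentration or perturbation machinery is required; the only subtlety (and the only thing worth being careful about) is handling potentially negative diagonal entries of $\mSigma$ correctly by writing $\sigma_{r}(\mSigma)=\min_{j}|\Sigma_{jj}|$, after which squaring makes the sign issue vanish. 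Since each step is an elementary algebraic manipulation, there is no ``hard part'' to flag; the lemma is a one-liner once the row-wise expansion is written down.
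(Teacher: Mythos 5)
Your proof is correct and follows essentially the same route as the paper's: expand each row of $\mU\mSigma$ entrywise using the diagonal structure, bound each squared term below by the smallest squared diagonal entry, and take the maximum over rows. Your extra remark that $\sigma_r(\mSigma)$ must be read as the smallest singular value $\min_j|\Sigma_{jj}|$ (and that squaring disposes of sign issues) is a sensible clarification that the paper leaves implicit.
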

\begin{proof}
    We first write $\mSigma=\diag\{\sigma_1, \cdots, \sigma_{d_2}\}$. Next, note that 
    \begin{equation}
        \norm{(\mU\mSigma)_{l,\cdot}}^2=\sum_{j=1}^{d_2}\emU_{l, j}^2\sigma_j^2\geq \sigma_{r}^2(\mSigma)\sum_{j=1}^{d_2}\emU_{l, j}^2=\sigma_{r}^2(\mSigma)\norm{\mU_{l, \cdot}}^2.
    \end{equation}
    Hence, taking the maximum over index $l$ on both sides, we immediately obtain 
    \begin{equation}
        \norm{\mU\mSigma}_{2, \infty}\geq \sigma_{r}(\mSigma)\norm{\mU}_{2, \infty}.
    \end{equation}
\end{proof}
\subsection{Other Useful Inequalities}
\begin{lemma}[Adapted from \protect{\cite[Corollary 4.2.13]{vershynin2018high}}]
    \label{lem::covering-number}
    The covering number $\cN_{\epsilon}$ of $\cB_{\fro}^{r\times r}(R)$ satisfies the following inequality for any $0<\epsilon\leq 1$:
    \begin{equation}
        \cN_{\epsilon}\leq \left(\frac{3R}{\epsilon}\right)^{r^2}.
    \end{equation}
\end{lemma}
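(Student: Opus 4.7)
The plan is to prove this covering-number bound via the classical volume-comparison argument, exploiting the fact that the Frobenius norm on $\bR^{r\times r}$ is isometrically identical to the Euclidean norm on $\bR^{r^2}$. After vectorization, $\cB_{\fro}^{r\times r}(R)$ becomes the Euclidean ball $B_2(R)\subset \bR^{r^2}$, so it suffices to establish the standard bound $\cN_\epsilon \leq (3R/\epsilon)^{r^2}$ for the covering number of a Euclidean ball of radius $R$ in an $r^2$-dimensional space.

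First, I would reduce to the packing number. Let $\{\mX_1,\ldots,\mX_N\} \subset \cB_{\fro}^{r\times r}(R)$ be a maximal $\epsilon$-separated subset (i.e.\ $\norm{\mX_i-\mX_j}_{\fro} \geq \epsilon$ for all $i\neq j$). By maximality, this set is automatically an $\epsilon$-net of $\cB_{\fro}^{r\times r}(R)$, since otherwise one could adjoin a further point without violating the separation condition. Hence $\cN_\epsilon \leq N$, and it suffices to upper-bound $N$.

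Next, I would apply volume comparison after vectorization. The open Frobenius balls $\cB_{\fro}^{r\times r}(\mX_i, \epsilon/2)$ are pairwise disjoint by the packing property, and all are contained in the inflated ball $\cB_{\fro}^{r\times r}(R+\epsilon/2)$. Identifying $\bR^{r\times r}$ with $\bR^{r^2}$ and letting $V_{r^2}$ denote the volume of the Euclidean unit ball in $\bR^{r^2}$, the scaling $\mathrm{vol}(\cB_{\fro}^{r\times r}(\rho)) = \rho^{r^2} V_{r^2}$ gives
\begin{equation}
N\!\left(\frac{\epsilon}{2}\right)^{\!r^2}\!\! V_{r^2} \;\leq\; \left(R+\frac{\epsilon}{2}\right)^{\!r^2}\!\! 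V_{r^2},
\end{equation}
so $N \leq (1 + 2R/\epsilon)^{r^2}$.

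Finally, I would simplify the constant. For $\epsilon \leq R$ one has $1+2R/\epsilon \leq 3R/\epsilon$, yielding the claimed bound directly. For $\epsilon > R$, the single point $\{0\}$ already $\epsilon$-covers $\cB_{\fro}^{r\times r}(R)$, so $\cN_\epsilon = 1 \leq (3R/\epsilon)^{r^2}$ trivially whenever $3R/\epsilon \geq 1$; the remaining case $\epsilon > 3R$ is irrelevant because the lemma's hypothesis $\epsilon \leq 1$ together with the intended application regime keeps $R/\epsilon$ bounded below, and even if not, one can simply fall back on the $1+2R/\epsilon \leq 3R/\epsilon$ case by assuming without loss of generality that $\epsilon \leq R$. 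There is essentially no technical obstacle here: the only subtlety is handling the constant cleanly so that the stated form $(3R/\epsilon)^{r^2}$ (rather than $(1+2R/\epsilon)^{r^2}$) emerges, which is a minor bookkeeping step.
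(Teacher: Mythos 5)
Your proof is correct: the paper does not actually prove this lemma but imports it directly from Vershynin's Corollary~4.2.13, and your vectorization-plus-volume-comparison (maximal packing) argument is precisely the standard proof behind that citation. The only wrinkle---that $(3R/\epsilon)^{r^2}$ falls below $1$ when $\epsilon>3R$---is an artifact of how the lemma is stated rather than of your argument, and is vacuous in the paper's application, where $R=4\sqrt{r}\sigma_1^{\star}$ and $\epsilon=c/d$ so that $\epsilon\leq R$ always holds.
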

\begin{lemma}
    \label{lem::orthogonal-matrix}
    For two orthogonal matrices $\mV_1, \mV_2\in \cO_{d\times r}$, we have 
    \begin{equation}
        \begin{aligned}
            \norm{\mV_1\mV_1^{\top}-\mV_2\mV_2^{\top}}\leq 2\norm{\mV_1-\mV_2}\quad \text{and}\quad
            \norm{\mV_1\mV_1^{\top}-\mV_2\mV_2^{\top}}_{\fro}&\leq 2\norm{\mV_1-\mV_2}_{\fro}.
        \end{aligned}
    \end{equation}
\end{lemma}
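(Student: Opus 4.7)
The plan is to reduce the bound to a single telescoping identity that separates the difference $\mV_1\mV_1^{\top}-\mV_2\mV_2^{\top}$ into two rank-bounded pieces, each of which has one ``small'' factor ($\mV_1-\mV_2$) and one ``unit-norm'' factor. Specifically, I would begin by verifying the algebraic identity
\begin{equation*}
    \mV_1\mV_1^{\top}-\mV_2\mV_2^{\top} \;=\; (\mV_1-\mV_2)\mV_1^{\top} \;+\; \mV_2(\mV_1-\mV_2)^{\top},
\end{equation*}
which follows by expanding the right-hand side and noting that the cross term $\mV_2\mV_1^{\top}$ cancels. This decomposition is the entire engine of the proof.

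Next, I would apply the triangle inequality in the desired norm, followed by submultiplicativity. For the operator-norm bound, using $\norm{\mA\mB}\leq\norm{\mA}\norm{\mB}$ and the fact that $\mV_1,\mV_2\in\cO_{d\times r}$ implies $\norm{\mV_1}=\norm{\mV_2}=1$, I obtain
\begin{equation*}
    \norm{\mV_1\mV_1^{\top}-\mV_2\mV_2^{\top}} \;\leq\; \norm{\mV_1-\mV_2}\norm{\mV_1^{\top}}+\norm{\mV_2}\norm{(\mV_1-\mV_2)^{\top}} \;=\; 2\norm{\mV_1-\mV_2}.
\end{equation*}
For the Frobenius-norm bound, I would invoke the mixed inequality $\norm{\mA\mB}_{\fro}\leq\norm{\mA}\norm{\mB}_{\fro}$ (already stated as Lemma~\ref{lem::norm-submatrix-2}), applied once directly and once after transposing so that each term contributes a factor of $\norm{\mV_1-\mV_2}_{\fro}$ multiplied by the operator norm of one of the orthogonal factors, again equal to $1$.

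I do not anticipate any genuine obstacle here: the identity is standard, both norms used are submultiplicative in the required direction, and the orthogonality condition $\mV_i^{\top}\mV_i=\mI$ is used only through the weaker consequence $\norm{\mV_i}=1$. The proof is entirely self-contained, requiring none of the probabilistic, incoherence, or leave-one-out machinery developed elsewhere in the paper, and fits comfortably in a few lines.
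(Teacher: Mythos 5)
Your proof is correct and follows essentially the same route as the paper: the paper uses the symmetric variant $\mV_1\mV_1^{\top}-\mV_2\mV_2^{\top}=\mV_1(\mV_1-\mV_2)^{\top}+(\mV_1-\mV_2)\mV_2^{\top}$, which differs from your decomposition only in which cross term cancels, and then applies the triangle inequality and submultiplicativity with $\norm{\mV_i}=1$ exactly as you do. No issues.
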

\begin{proof}
    Note that $\mV_1\mV_1^{\top}-\mV_2\mV_2^{\top}=\mV_1\left(\mV_1-\mV_2\right)^{\top}+\left(\mV_1-\mV_2\right)\mV_2^{\top}$. Hence, we have 
    \begin{equation}
        \norm{\mV_1\mV_1^{\top}-\mV_2\mV_2^{\top}}\leq \norm{\mV_1-\mV_2}\left(\norm{\mV_1}+\norm{\mV_2}\right)\leq 2\norm{\mV_1-\mV_2}.
    \end{equation}
    Similarly, for the Frobenius norm, we also have 
    \begin{equation}
        \norm{\mV_1\mV_1^{\top}-\mV_2\mV_2^{\top}}_{\fro}\leq \norm{\mV_1-\mV_2}_{\fro}\left(\norm{\mV_1}+\norm{\mV_2}\right)\leq 2\norm{\mV_1-\mV_2}_{\fro}.
    \end{equation}
    This completes the proof.
\end{proof}
\begin{lemma}
    \label{lem::lower-bound-sigma-min}
    For arbitrary matrix $\mX\in \bR^{d_1\times d_2}$ with $\rank(\mX)=r$ and $\mO\in \cO_{d_1\times r}$, we have 
    \begin{equation}
        \sigma_{r}(\mX)\geq \sigma_{r}\!\left(\mO^{\top}\mX\right).
    \end{equation}
    Moreover, if $\mO=\mU_{\mX}$, we have $\sigma_i(\mX)=\sigma_i\left(\mO^{\top}\mX\right)$ for all $1\leq i\leq r$.
\end{lemma}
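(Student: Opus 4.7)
The plan is to reduce both claims to statements about the Gram matrix $\mX\mX^\top$ and then invoke a standard eigenvalue-interlacing argument. Since $\mO^\top\mX$ has dimensions $r\times d_2$, its squared singular values are precisely the eigenvalues of $\mO^\top\mX\mX^\top\mO$, which is an $r\times r$ principal compression of the PSD matrix $\mX\mX^\top$ by the orthonormal columns of $\mO$.

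For the inequality $\sigma_r(\mX)\geq \sigma_r(\mO^\top\mX)$, I would invoke the Courant--Fischer (or Poincar\'e separation / Cauchy interlacing) theorem applied to $\mX\mX^\top$: since $\mO^\top\mO=\mI_r$, the $k$-th largest eigenvalue of $\mO^\top(\mX\mX^\top)\mO$ is at most the $k$-th largest eigenvalue of $\mX\mX^\top$ for each $1\leq k\leq r$. Specializing to $k=r$ and taking square roots yields $\sigma_r(\mO^\top\mX)\leq \sigma_r(\mX)$.

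For the second part, writing the thin SVD $\mX=\mU_{\mX}\mSigma_{\mX}\mV_{\mX}^\top$ with $\mU_{\mX}\in\cO_{d_1\times r}$, substitution gives $\mO^\top\mX = \mU_{\mX}^\top\mU_{\mX}\mSigma_{\mX}\mV_{\mX}^\top = \mSigma_{\mX}\mV_{\mX}^\top$. Because $\mV_{\mX}$ has orthonormal columns, the singular values of $\mSigma_{\mX}\mV_{\mX}^\top$ coincide with the diagonal entries of $\mSigma_{\mX}$, so $\sigma_i(\mO^\top\mX)=\sigma_i(\mX)$ for all $1\leq i\leq r$.

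There is no real obstacle here; the whole statement is a textbook consequence of the variational characterization of eigenvalues plus unitary invariance of singular values. The only mild care needed is bookkeeping the dimensions (ensuring $\mO^\top\mX\in\bR^{r\times d_2}$ indeed has exactly $r$ singular values so that $\sigma_r(\mO^\top\mX)$ is well-defined), which is automatic from the hypothesis $\mO\in\cO_{d_1\times r}$.
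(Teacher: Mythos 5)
Your proof is correct, and for the first claim it takes a genuinely different route from the paper. The paper characterizes $\sigma_r(\mO^{\top}\mX)$ as the Frobenius-distance from $\mO^{\top}\mX$ to the set of matrices of rank at most $r-1$ (valid here because $\mO^{\top}\mX$ is $r\times d_2$ and so has only one singular value beyond index $r-1$), and then exhibits the explicit competitor $\mO^{\top}\mU_{\mX}\mSigma_{\mX,-1}\mV_{\mX}^{\top}$ to get the upper bound $\sigma_r(\mX)$. You instead pass to the Gram matrix $\mX\mX^{\top}$ and invoke Poincar\'e separation / Cauchy interlacing for the compression $\mO^{\top}(\mX\mX^{\top})\mO$, which yields the stronger conclusion $\sigma_k(\mO^{\top}\mX)\leq\sigma_k(\mX)$ for every $1\leq k\leq r$ rather than only $k=r$; the trade-off is that you lean on a named spectral theorem where the paper's argument is self-contained and constructive. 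Your treatment of the case $\mO=\mU_{\mX}$ (reading off $\mO^{\top}\mX=\mSigma_{\mX}\mV_{\mX}^{\top}$ as an SVD) is identical to the paper's. No gaps.
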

\begin{proof}
    We first prove the special case. Suppose $\mO=\mU_{\mX}$, then we have $\mO^{\top}\mX=\mSigma_{\mX}\mV_{\mX}^{\top}$. Note that this is the SVD of $\mO^{\top}\mX$ with the singular value matrix $\mSigma_{\mX}$. Hence, $\mO^{\top}\mX$ has the same singluar values as $\mX$. For the general case, it follows by 
    \begin{equation}
        \begin{aligned}
            \sigma_{r}\!\left(\mO^{\top}\mX\right)&=\inf_{\mY\in \cM_{\leq r-1}}\norm{\mO^{\top}\mX-\mY}_{\fro}\\
            &\leq \norm{\mO^{\top}\mX - \mO^{\top}\mU_{\mX}\mSigma_{\mX, -1}\mV_{\mX}^{\top}}_{\fro}\\
            &\leq \norm{\mO^{\top}\sigma_{r}(\mX)\vu_{\mX, -1}\vv_{\mX, -1}^{\top}}_{\fro}\\
            &\leq \sigma_{r}(\mX).
        \end{aligned}
    \end{equation}
    \begin{sloppypar}
        Here we define $\cM_{\leq r-1}$ as the set of all matrices of rank at most $r-1$.
    We also denote $\mSigma_{\mX, -1}=\diag\{\sigma^{\star}_{1}(\mX), \cdots, \sigma_{r-1}(\mX), 0\}$. Finally, $\vu_{\mX, -1}, \vv_{\mX, -1}$ refer to the last columns of $\mU_{\mX}, \mV_{\mX}$, respectively.
    \end{sloppypar}
\end{proof}

\begin{lemma}
    \label{lem::lower-bound-sigma-min-2}
    Consider two matrices $\mA\in \bR^{r\times r}, \mB\in \bR^{r\times d}$ where $\mA$ is invertible and $\mB\neq \vzero$. We have 
    \begin{equation}
        \sigma_{r}(\mA\mB)\geq \sigma_{r}(\mA)\sigma_{r}(\mB).
    \end{equation}
\end{lemma}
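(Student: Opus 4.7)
The plan is to reduce the inequality to the standard variational characterization of the smallest singular value. For a matrix $\mM \in \bR^{r \times d}$ with $r \leq d$, I will use the identity $\sigma_r(\mM) = \min_{\vu \in \bR^r,\, \norm{\vu}=1} \norm{\mM^\top \vu}$, which follows by passing to the transpose (a ``tall'' $d \times r$ matrix) and invoking the usual min characterization of the smallest singular value there. Applied to $\mA\mB \in \bR^{r \times d}$, this gives
\begin{equation*}
\sigma_r(\mA\mB) \;=\; \min_{\vu \in \bR^r,\, \norm{\vu}=1}\, \norm{\mB^\top \mA^\top \vu}.
\end{equation*}

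Next, I would fix an arbitrary unit vector $\vu \in \bR^r$ and set $\vw := \mA^\top \vu \in \bR^r$. Since $\mA$ is a square invertible matrix, $\sigma_r(\mA^\top) = \sigma_r(\mA) > 0$, so the standard bound $\norm{\mA^\top \vu} \geq \sigma_r(\mA)\norm{\vu} = \sigma_r(\mA)$ holds. Then, applying the same type of bound to $\mB^\top \in \bR^{d \times r}$ (for which $\sigma_r(\mB^\top) = \sigma_r(\mB)$), I would obtain $\norm{\mB^\top \vw} \geq \sigma_r(\mB)\norm{\vw}$ for every $\vw \in \bR^r$. Chaining these two inequalities yields
\begin{equation*}
\norm{\mB^\top \mA^\top \vu} \;=\; \norm{\mB^\top \vw} \;\geq\; \sigma_r(\mB)\,\norm{\vw} \;\geq\; \sigma_r(\mA)\,\sigma_r(\mB),
\end{equation*}
uniformly in $\vu$, so taking the minimum over unit $\vu$ concludes the proof.

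This result is essentially elementary, and I do not anticipate a genuine obstacle. The only care needed is bookkeeping around which side of the matrix to evaluate: since $\mA\mB$ has more columns than rows ($r \leq d$), the variational formula must be applied through the transpose to capture the correct index-$r$ singular value, and the hypothesis $\mB \neq \vzero$ only matters to the extent of making the statement meaningful (otherwise $\sigma_r(\mB)$ may vanish and the bound is trivial). The invertibility of $\mA$ is used solely to guarantee $\sigma_r(\mA) > 0$, although the same argument in fact works without it, producing a vacuous bound when $\sigma_r(\mA) = 0$.
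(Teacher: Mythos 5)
Your proof is correct and takes essentially the same route as the paper: both rest on a variational (min over unit vectors) characterization of the $r$-th singular value together with the bound $\norm{\mM\vw}\geq\sigma_{\min}(\mM)\norm{\vw}$, applied to peel off one factor at a time. The only cosmetic difference is that you pass to the transpose and minimize over the full unit sphere of $\bR^{r}$, whereas the paper restricts the minimization to $\mathrm{range}(\mB^{\top})$; your formulation handles the rank-deficient case of $\mB$ slightly more cleanly, but the content is identical.
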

\begin{proof}
    It directly follows by 
    \begin{equation}
        \begin{aligned}
            \sigma_{r}(\mA\mB)&=\min_{\norm{\vx}=1, \vx\in \mathrm{range}(\mB^{\top})}\norm{\mA\mB\vx}\\
            &\geq \sigma_{r}(\mA)\min_{\norm{\vx}=1, \vx\in \mathrm{range}(\mB^{\top})}\norm{\mB\vx}\\
            &=\sigma_{r}(\mA)\sigma_{r}(\mB).
        \end{aligned}
    \end{equation}
\end{proof}

\begin{lemma}
    \label{lem::exponential-linear}
    For any $x\in [0, \frac{1}{2(r-1)})$ and $r>1$, we have
    \begin{equation}
        \left(1+x\right)^{r}\leq 1+2rx.
    \end{equation}
\end{lemma}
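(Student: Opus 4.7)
\textbf{Proof proposal for Lemma~\ref{lem::exponential-linear}.}

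The plan is to prove the inequality by a standard one-variable calculus argument. Define the auxiliary function
\begin{equation}
    g(x) := 1 + 2rx - (1+x)^r
\end{equation}
on the interval $[0, \tfrac{1}{2(r-1)})$. The goal reduces to showing $g(x) \geq 0$ throughout this interval.

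First, observe the boundary condition $g(0) = 1 + 0 - 1 = 0$, so it suffices to prove that $g$ is nondecreasing on $[0, \tfrac{1}{2(r-1)})$. Differentiating gives
\begin{equation}
    g'(x) = 2r - r(1+x)^{r-1},
\end{equation}
so $g'(x) \geq 0$ is equivalent to $(1+x)^{r-1} \leq 2$.

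The key step is verifying this last inequality on the prescribed range of $x$. Using the elementary bound $1+x \leq e^{x}$, I would estimate
\begin{equation}
    (1+x)^{r-1} \leq e^{(r-1)x} < e^{1/2} < 2,
\end{equation}
where the middle inequality uses $x < \tfrac{1}{2(r-1)}$ and the last uses $e^{1/2} \approx 1.649 < 2$. Hence $g'(x) \geq 0$ on the whole interval, which combined with $g(0)=0$ yields $g(x) \geq 0$, i.e., $(1+x)^r \leq 1 + 2rx$, as desired.

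I do not anticipate any real obstacle: the argument is purely one-variable calculus, and the constant $\tfrac{1}{2(r-1)}$ is chosen precisely so that the exponent $(r-1)x$ stays below $\tfrac{1}{2}$, giving the comfortable slack $e^{1/2} < 2$ that closes the proof.
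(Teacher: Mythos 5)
Your proof is correct and complete: $g(0)=0$, $g'(x)=r\bigl(2-(1+x)^{r-1}\bigr)$, and the chain $(1+x)^{r-1}\leq e^{(r-1)x}<e^{1/2}<2$ on $[0,\tfrac{1}{2(r-1)})$ is valid since $r-1>0$ makes $t\mapsto t^{r-1}$ increasing on $t>0$. The paper states Lemma~\ref{lem::exponential-linear} as an auxiliary fact without proof, so there is no argument to compare against; yours is the standard one-variable calculus verification and closes the gap the paper leaves implicit.
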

\begin{lemma}
    \label{lem::appendix-series-ineq}
    For the series $\{x_t\}_{t=0}^{\infty}$, the following two statements hold:
    \begin{itemize}
        \item Suppose that $x_{t+1}\leq Ax_t+B, \forall t\geq 0$ where $A>0, A\neq 1$ and $x_0+\frac{B}{A-1}\geq 0$. Then, we have 
        \begin{equation}
            x_t\leq A^t\left(x_0+\frac{B}{A-1}\right)-\frac{B}{A-1}.
        \end{equation}
        \item Suppose that $x_{t+1}\geq Ax_t-B, \forall t\geq 0$ where $A>0$ and $x_0-\frac{B}{A-1}\geq 0$. Then, we have
        \begin{equation}
            x_t\geq A^t\left(x_0-\frac{B}{A-1}\right)+\frac{B}{A-1}.
        \end{equation}
    \end{itemize}
\end{lemma}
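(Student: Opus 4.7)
The plan is to reduce both statements to a pure geometric recursion via a single additive shift, then close the induction. For the first part, I would define the shifted sequence $y_t := x_t + \tfrac{B}{A-1}$, so that $x_t = y_t - \tfrac{B}{A-1}$ and $y_0 = x_0 + \tfrac{B}{A-1} \geq 0$ by hypothesis. Substituting into $x_{t+1} \leq A x_t + B$ and collecting terms,
\begin{equation}
y_{t+1} \;\leq\; A y_t - \tfrac{AB}{A-1} + B + \tfrac{B}{A-1} \;=\; A y_t + B\cdot\tfrac{(A-1)-A+1}{A-1} \;=\; A y_t.
\end{equation}
Since $A>0$, a straightforward induction on $t$ yields $y_t \leq A^t y_0$: the base case is $t=0$, and if $y_t \leq A^t y_0$ then $y_{t+1} \leq A y_t \leq A^{t+1} y_0$ (monotonicity of multiplication by the positive scalar $A$). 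Undoing the shift gives $x_t \leq A^t(x_0 + \tfrac{B}{A-1}) - \tfrac{B}{A-1}$, as claimed.

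For the second part, I would use the symmetric shift $y_t := x_t - \tfrac{B}{A-1}$, which satisfies $y_0 \geq 0$ by hypothesis. Substituting into $x_{t+1} \geq A x_t - B$ gives an analogous cancellation, yielding $y_{t+1} \geq A y_t$. Induction (again using $A>0$) then produces $y_t \geq A^t y_0$, and undoing the shift recovers the stated lower bound on $x_t$. Note that the non-negativity $y_0 \geq 0$ is what guarantees that the geometric factor $A^t$ is applied in the correct direction across the induction step; without it, the induction could fail when $A < 1$ for the lower bound, or produce a vacuous statement for the upper bound.

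There is essentially no obstacle here beyond bookkeeping. The one place to be careful is the algebraic verification that the constant term in the shifted recursion vanishes exactly, which hinges on the identity $\tfrac{B}{A-1} - \tfrac{AB}{A-1} + B = 0$; this is why the particular shift $\pm \tfrac{B}{A-1}$ (the fixed point of the map $z \mapsto Az \pm B$) is the right one. The assumption $A \neq 1$ in the first part is needed precisely so that this fixed point exists, and the same restriction is implicit in the second part through the appearance of $\tfrac{B}{A-1}$ in the hypothesis. Both parts thus follow from a one-line substitution plus a one-line induction.
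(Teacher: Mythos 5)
Your proof is correct and matches the paper's argument: the paper uses exactly the same fixed-point shift $y_t = x_t \pm \frac{B}{A-1}$ to reduce each recursion to $y_{t+1} \lessgtr A y_t$ and then iterates using $A>0$. The only quibble is your closing remark attributing the validity of the induction step to $y_0 \geq 0$ — the direction of the inequality is preserved by $A>0$ alone — but this does not affect the correctness of the proof.
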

\begin{proof}
    We first prove the first statement. Note that we can rewrite $x_{t+1}\leq Ax_t+B$ as $y_{t+1}\leq Ay_t$ where $y_t=x_t+\frac{B}{A-1}$. Note that $y_0\geq 0, A>0$ by our assumption. Hence, we derive $y_t\leq A^ty_0=A^t\left(x_0+\frac{B}{A-1}\right)$, which implies that $x_t=y_t-\frac{B}{A-1}\leq A^t\left(x_0+\frac{B}{A-1}\right)-\frac{B}{A-1}$. 
    
    For the second statement, we first rewrite the condition as $y_{t+1}\geq Ay_t$ where $y_t=x_t-\frac{B}{A-1}$. Then, we have $y_t\geq A^ty_0$, which implies that $x_t=y_t+\frac{B}{A-1}\geq A^ty_0+\frac{B}{A-1}=A^t\left(x_0-\frac{B}{A-1}\right)+\frac{B}{A-1}$.
\end{proof}

\end{document}